\newtheorem{Theorem}{Theorem}
\newtheorem*{Theorem*}{Theorem}
\newtheorem{Definition}{Definition}
\newtheorem{Lemma}{Lemma}
\newtheorem*{Remark}{Remark}
\newtheorem*{Proof}{proof}
\newtheorem{claim}{Claim}
\newcommand{\tu}{\mathbf{u}}
\newcommand{\tv}{\mathbf{v}}
\newcommand{\ty}{\mathbf{y}}
\newcommand{\tw}{\mathbf{w}}
\newcommand{\tp}{\mathbf{p}}
\newcommand{\ts}{\mathbf{s}}
\newcommand{\e}{\mathbf{e}}
\newcommand{\w}{\mathbf{w}}
\newcommand{\poly}{\mathrm{poly}}
\newcommand{\act}{h}
\title{Making Method of Moments Great Again? -- How can GANs learn distributions}
\author{Yuanzhi Li$^1$, Zehao Dou$^2$\\
{$^1$ Machine Learning Department, Carnegie Mellon University}\\
{\texttt{yuanzhil@andrew.cmu.edu}}\\
{$^2$ Department of Statistics and Data Science, Yale University}\\
{\texttt{zehao.dou@yale.edu}}
}
\date{}
\begin{document}
\maketitle
\allowdisplaybreaks[4]
\begin{abstract}
Generative Adversarial Networks (GANs) are widely used models to learn complex real-world distributions. In GANs, the training of the generator usually stops when the discriminator can no longer distinguish the generator's output from the set of training examples. A central question of GANs is that when the training stops, whether the generated distribution is actually close to the target distribution, and how the training process reaches to such configurations efficiently? In this paper, we established a theoretical results towards understanding this generator-discriminator training process. We empirically observe that during the earlier stage of the GANs training, the discriminator is trying to force the generator to match the low degree moments between the generator's output and the target distribution. Moreover, only by matching these empirical moments over polynomially many training examples, we prove that the generator can already learn notable class of distributions, including those that can be generated by two-layer neural networks.

%In GANs, the training of the generator usually stops when the discriminator can no longer distinguish the generator's output from the set of training examples. A central question of GANs is that when the training stops, whether the generated distribution is actually close to the target distribution.  Previously, it was found that such closeness can only be achieved when there is a strict capacity trade-off between the generator and discriminator: Neither of the two models can be too powerful than the other. In this paper, we established one of the first theoretical results in explaining this trade-off. We show that when the generator is a class of two-layer neural networks, then it is necessary and sufficient for the discriminator to be a one-layer network with ReLU-type activation functions. With this trade-off, using polynomially many training examples, when the training stops, the generator will indeed output a distribution that is inverse-polynomially close to the target. Our result also sheds light on how GANs training can find such a generator efficiently. 

\end{abstract}
\newpage
\tableofcontents
\newpage

% \begin{keywords}%
%   Method of Moments, GAN, Convergence, Distribution, Tensor Decomposition.
% \end{keywords}
% \vspace{-6mm}
\section{Introduction}
%\vspace{-3mm}
Generative Adversarial Network (GAN)~\cite{goodfellow2014generative} is one of the most popular models for generating real-life data. Due to its great success, many variants have been proposed to improve the training and generalization of GANs~\cite{gulrajani2017improved,odena2017conditional,karras2017progressive,miyato2018spectral,zhang2017stackgan,radford2015unsupervised,zhang2018self}. The goal of the GANs is to train a generator (usually a deep neural network) $G$ whose input $x$ follows from the standard Gaussian distribution, and whose output $G(\omega)$ is (in distribution) as close to the target distribution $\mathcal{D}^\star$ as possible. To achieve the goal, a discriminator network $\mathcal{D}$ is simultaneously trained to distinguish  $G(\omega)$ from $\mathcal{D}^\star$. The training process stops when a sufficiently trained discriminator can not distinguish $G(\omega)$ from $\mathcal{D}^\star$ better than random guessing. In this case, we also call the generator $G(\omega)$ wins the game~\cite{arora2017generalization}.

In this paper we focus on a popular variant of GANs called the Wasserstein GAN~\cite{arjovsky2017wasserstein}. The Wasserstein distance is a measurement between two probability distributions $p, q$, which is defined as: $W_{\mathcal{F}}(p, q) = \sup_{f \in \mathcal{F}} |\mathbb{E}_{x \sim p} f(x) - \mathbb{E}_{x \sim q} f(x)|$
where $\mathcal{F}$ is some certain set of functions. In Wasserstein GAN, the set of functions $\mathcal{F}$ is usually taken to be a set of certain structured neural networks (the discriminators $ \mathcal{D}$). Therefore, the generator will win the game when: $$ \sup_{\mathcal{D} \in \mathcal{F}} |\mathbb{E}_{\omega \sim N(0, I)} \mathcal{D}(G(\omega)) - \mathbb{E}_{X \sim \mathcal{\mathcal{D}^\star}} \mathcal{D}(X)| \approx 0$$

Despite the great empirical success of Wasserstein GANs, the following fundamental question still does not have a satisfying theoretical answer: 
% \vspace{-2mm}
\begin{center}
\emph{What does the generator $G(\cdot)$ learn after it wins the game? How can it reach this winning configuration efficiently?}
\end{center}

Towards answering this question, many theoretical works have been proposed. Notably, the works~\cite{arora2018gans,arora2017generalization,bai2018approximability} studied the learnt distribution of the generator when the generator and discriminator reach the global optimal solution (or a global Nash Equilibrium). However, the on the training side, the theoretical works~\cite{nagarajan2017gradient,heusel2017gans,mescheder2017numerics,daskalakis2018last,daskalakis2018limit,gidel2018negative,liang2018interaction,mokhtari2019unified,lin2019gradient} have only established the local convergence property of the GANs' training process, or the convergence to such a global optimal solution when the objective is convex-concave, leaving a huge gap between the \textbf{local} convergence results of GANs and the learnt distribution of GANs after \textbf{global} convergence. Moreover, to the best of our knowledge, up to now, there remains no theoretical results supporting the quality of a local optimal solution of the GANs' training objective. Even worse, it is a well-known empirical fact that there can be an enormous amount of bad local optimal solutions for the GANs' training objective, and a careful balance between the generator and discriminator must be maintained during the training to arrive at a good final solution. On the other hand, unlike in supervised training where the training objective involving a (over-parameterized) deep neural network can be approximately convex~\cite{als18dnn,li2018learning,al19-rnngen,als18,du2018gradient,arora2019finegrained,arora2019exact,zou2018stochastic,du2018gradient2,dfs16,jacot2018neural,ghorbani2019linearized,li2019towards,hanin2019finite,yang2019scaling,cao2019generalization,zou2019improved,cao2019generalization}, to the best of our knowledge, there are also no theoretical results showing that the GANs' training objective can be convex-concave, when neural networks with non-linear activation functions are involved.

To gradually bridge this gap between the convergence analysis of GANs' training and how can GANs reach a good final solution, in this paper, instead of directly studying the equilibrium conditions at the late stage of the training, we propose to first \emph{study the early phase of the GANs' training} -- when the generator and discriminator are not sufficiently trained yet. In particular, we ask the following fundamental question:
\begin{center}
\emph{What are the initial signals pick up by the generator $G(\omega)$ at the beginning of the training? What are the initial functions learnt by the discriminator to distinguish $G(\omega)$ from the true distribution?}
\end{center}

``Well begun is half done.'' - Aristotle.

The recent advance in deep learning theory~\cite{all18,arora2019finegrained} indicates that during the earlier phase of supervised training, a (over-parameterized) neural network is capable of learning a specific class of functions: The class of \textbf{low degree polynomials}. Motivated by this line of research, we consider the case when the discriminator falls into this function class. When the discriminators are the class of low degree polynomials, the GANs' learning process reduces to one of the most famous distribution learning framework: \textbf{The Method of Moments Learning}. Specifically, the discriminator now seeks for a mismatch between the low degree moments of the generator's output and that of the target distribution. The important observation is that:
%\vspace{-8mm}
\begin{claim}[Method of Moment Learning]
For every integer $q \geq 0$, when $\mathcal{F}$ is the class of degree $q$ polynomials with bounded coefficients,
$\sup_{\mathcal{D} \in \mathcal{F}} |\mathbb{E}_{\omega \sim N(0, I)} \mathcal{D}(G(\omega)) - \mathbb{E}_{X \sim \mathcal{\mathcal{D}^\star}} \mathcal{D}(X)| = 0$
if and only if for all non-negative integers $c \leq q$:
$\mathbb{E}_{\omega \sim N(0, I)} [G(\omega)^{\otimes c}] = \mathbb{E}_{X \sim \mathcal{\mathcal{D}^\star}} [X^{\otimes c}]$.
\end{claim}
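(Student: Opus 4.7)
The plan is to exploit the duality between polynomial coefficients and moment tensors, reducing the whole statement to a finite-dimensional linear algebra identity. A degree-$q$ polynomial $\mathcal{D}$ on $\mathbb{R}^d$ can be written as $\mathcal{D}(y) = \sum_{c=0}^q \langle A_c, y^{\otimes c}\rangle$, where each $A_c$ is a $c$-th order coefficient tensor (symmetric without loss of generality). By linearity of expectation,
\[
\mathbb{E}_{\omega}[\mathcal{D}(G(\omega))] - \mathbb{E}_X[\mathcal{D}(X)] \;=\; \sum_{c=0}^q \langle A_c, M_c\rangle,\qquad M_c := \mathbb{E}_{\omega}[G(\omega)^{\otimes c}] - \mathbb{E}_X[X^{\otimes c}].
\]
The whole question therefore reduces to: when does the linear functional $(A_0,\ldots,A_q)\mapsto \sum_c \langle A_c, M_c\rangle$ vanish on the bounded-coefficient set $\mathcal{F}$?

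The $(\Leftarrow)$ direction is immediate. If every $M_c$ is the zero tensor for $c\leq q$, then each summand above is zero, so the discrepancy vanishes for every $\mathcal{D}\in\mathcal{F}$ and the supremum is zero. For $(\Rightarrow)$, I would test the supremum on single monomials. For any multi-index $\alpha=(\alpha_1,\ldots,\alpha_d)$ with $|\alpha|=c\leq q$, the polynomial $y^\alpha$ has a coefficient vector of norm one, hence lies in $\mathcal{F}$. Therefore the hypothesis forces $\mathbb{E}[G(\omega)^\alpha] = \mathbb{E}[X^\alpha]$. Since the $(i_1,\ldots,i_c)$-entry of $M_c$ is precisely $\mathbb{E}[G_{i_1}\cdots G_{i_c}] - \mathbb{E}[X_{i_1}\cdots X_{i_c}]$, and this equals the $y^\alpha$ moment difference for $\alpha_j = \#\{k: i_k=j\}$, every entry of every $M_c$ (for $c\leq q$) is zero.

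The main obstacle is less conceptual than notational: one needs to fix a convention for ``bounded coefficients'' that guarantees each unit monomial $y^\alpha$ lies in $\mathcal{F}$, so that the monomial-extraction step in $(\Rightarrow)$ actually applies; any reasonable normalization (e.g.\ the $\ell_\infty$ norm of the coefficient vector being at most one) suffices. A secondary, purely combinatorial point is that an unsymmetric coefficient tensor $A_c$ and its symmetrization produce the same polynomial and the same inner product against the symmetric tensor $M_c$, so restricting to symmetric $A_c$ causes no loss. After fixing these two bookkeeping items, the claim is a one-line consequence of linearity of expectation in each direction.
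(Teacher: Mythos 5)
Your proof is correct and is precisely the argument the claim is implicitly relying on; the paper itself states this as an ``important observation'' without an explicit proof, so there is no separate approach to compare against. Both directions are as you describe: linearity of expectation gives $\mathbb{E}_\omega[\mathcal{D}(G(\omega))]-\mathbb{E}_X[\mathcal{D}(X)]=\sum_{c\le q}\langle A_c,M_c\rangle$, from which $(\Leftarrow)$ is immediate, and $(\Rightarrow)$ follows by testing against unit monomials $y^\alpha$ with $|\alpha|\le q$, whose expectations read off exactly the entries of $M_c$. Your two caveats are the right ones and both are benign: the normalization of ``bounded coefficients'' merely needs to include each monomial (which any scale-invariant or $\ell_\infty$-type bound does, since the sup being zero is a scale-free statement), and replacing $A_c$ by its symmetrization changes neither the polynomial nor $\langle A_c,M_c\rangle$ because $M_c$ is symmetric. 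No gap.
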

Following this framework, we ask the following fundamental question:
% \vspace{-2mm}
\begin{center}
\emph{What signals can the generator network pick by simply matching the moments of the true distribution?}
\end{center}
% \vspace{-2mm}
\begin{figure}[hbt!]
\centering
\includegraphics[width=0.45\linewidth]{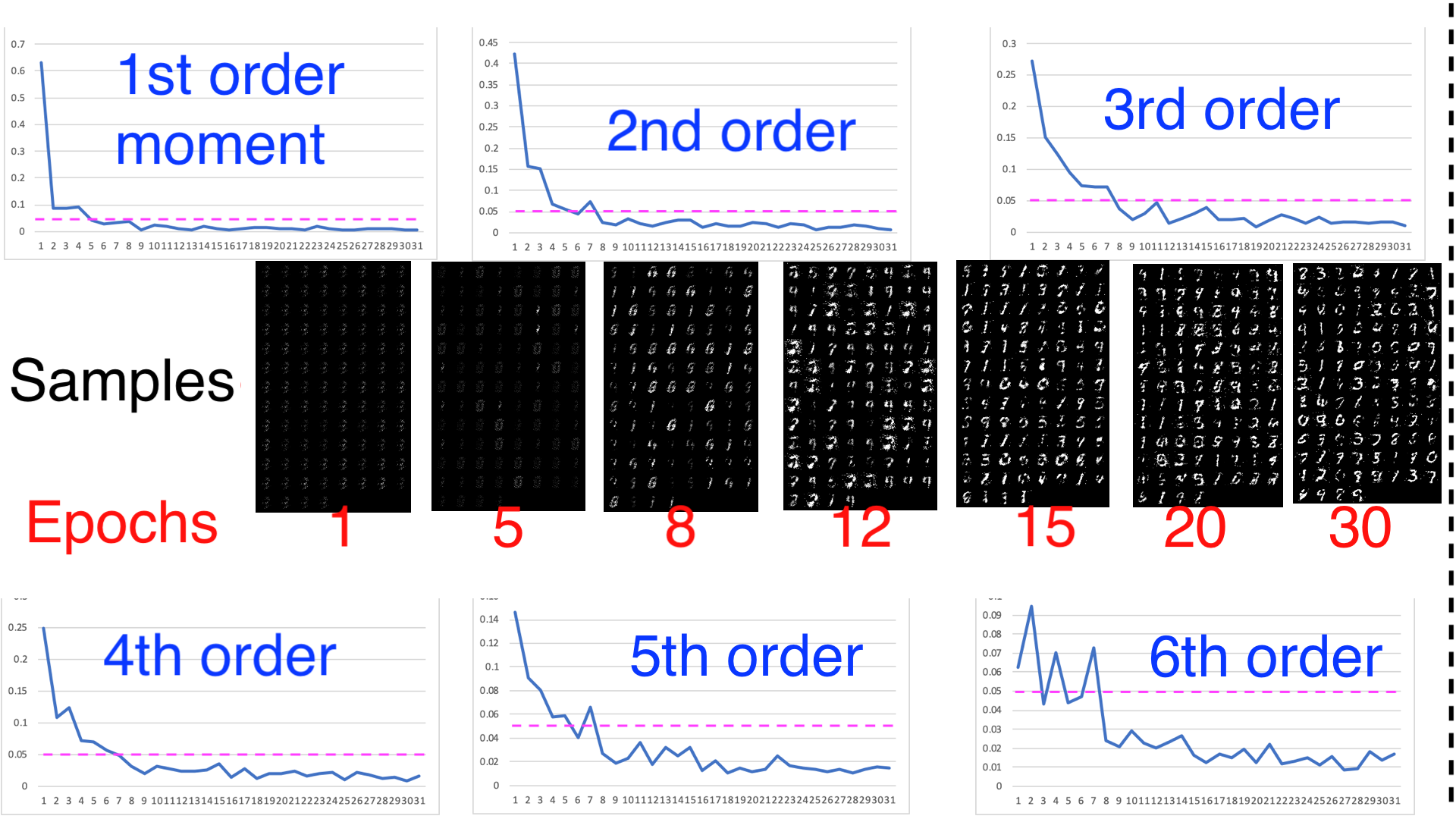}
\includegraphics[width=0.45\linewidth]{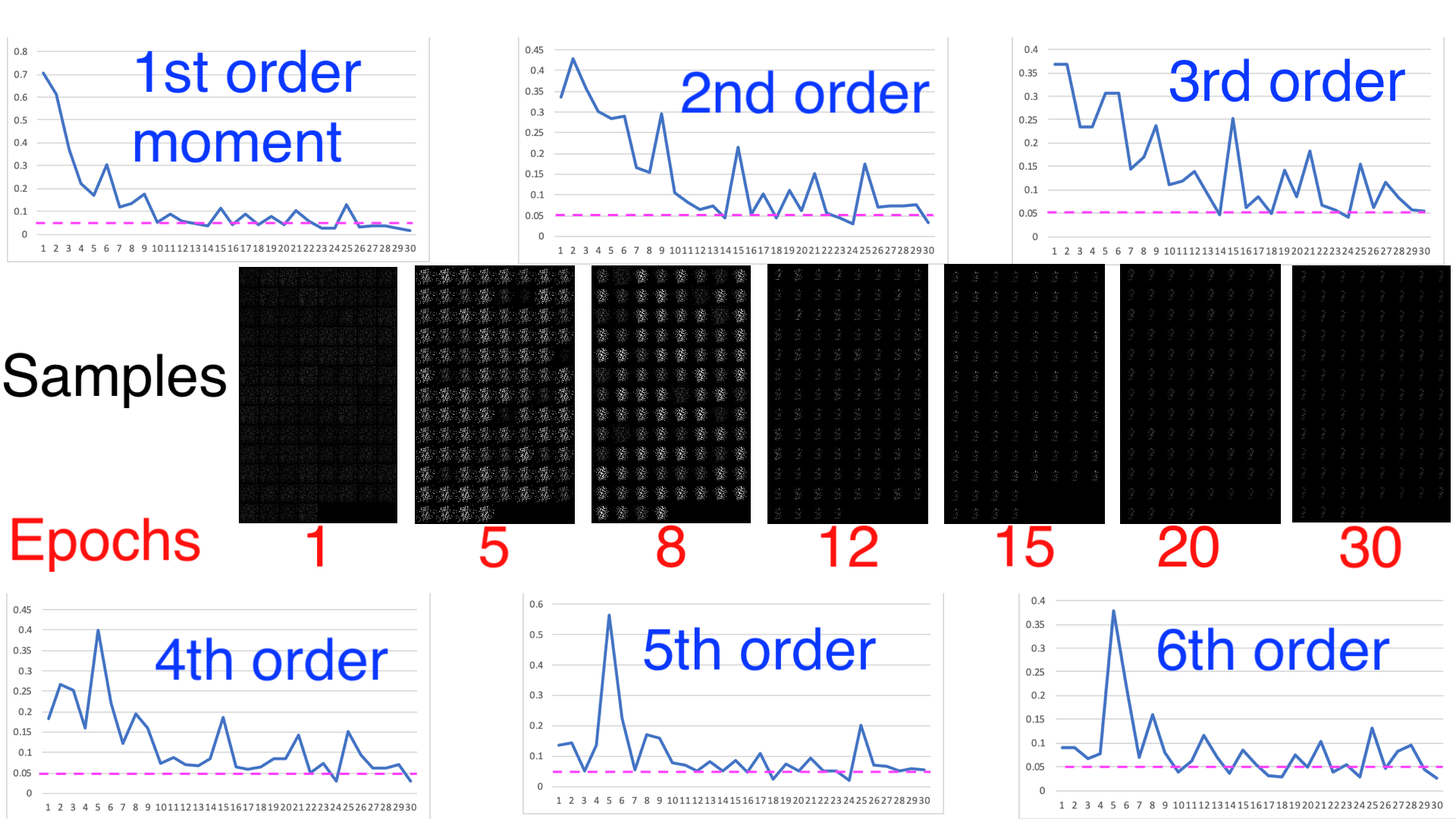}
% \vspace{-2mm}
\caption{Illustration of how the generator learn to match the moments between the output and the true distribution on MNIST data set. X-axis: The average mismatch ($\ell_2$ error) between the moments. Y-axis: The number of training epochs. Top: A successful training where the generator indeed learn to match the moments. Bottom: A unsuccessful training where the generator get stuck and fail to match the moments. The two cases are generated under the same training configuration and different random seeds. Purple line: the error reaches $0.05$.}
\label{fig:1}
% \vspace{-8mm}
\end{figure}

This question has been widely studied in the case when $c = 2$. In this case, the classical frameworks of PCA and CCA~\cite{allen2017doubly,allen2016lazysvd,allen2017first} indicate that the generator will actually learn to \emph{match the variance of its output comparing to the target distribution, in every single direction}. However, for many more complicated distributions, matching the variance would be insufficient to recover the  true target. Towards this end, the case when $c > 2$ has also received a notable amount of attention, and the tensor decomposition framework~\cite{anandkumar2014tensor} is proposed to show that matching moments of $c > 2$ can \emph{efficiently learn} certain classes of target distributions given by \emph{linear generative} models, such as sparse coding, non-negative matrix factorization etc. In this work, we extend the scope of method of moments to formally study the signals learnt by a generator as a \emph{non-linear neural network}. In particular, we show the following theory:
% \vspace{-2mm}
\begin{Theorem*}[Main, Method of Moments, Sketched]
Suppose the target distribution $\mathcal{D}^\star$ is generated by $\mathcal{D}^\star = G_{\#}^*\mathcal N(0,I_d)$ for an unknown (poly-weight) two-layer neural network $G^\star$ with polynomial activations, then for a learner two-layer neural network $G$ with the same or larger width, for every $\varepsilon > 0$, given $N = \poly(d/\varepsilon)$ training examples $X_1, \ldots,X_N \sim \mathcal{D}^\star $, by matching the first $O(1) $ order moments between the empirical observations and $G_{\#}(\mathcal{N}(0, I_d))$, it ensures that the learned network $G$ must satisfy:
\[W(G_{\#}\mathcal N(0,I_d), \mathcal{D}^\star) \leq \varepsilon.\]

Moreover, such a network $G$ can be found efficiently by a tensor decomposition algorithm.
\end{Theorem*}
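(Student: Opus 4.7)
The plan is to reduce the matching of output moments to a tensor decomposition problem in the network parameters, use a robust tensor decomposition algorithm to recover a network approximating $G^\star$, and finally lift parameter closeness back to a bound on the Wasserstein distance of the pushed-forward distributions.

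First, I would exploit the fact that $\sigma$ is a polynomial of constant degree $p$. Writing $G^\star(\omega) = A^\star \sigma(W^\star \omega)$ with polynomial $\sigma$, the random vector $G^\star(\omega)$ is a vector-valued polynomial of degree $\le p$ in the Gaussian input $\omega$. Expanding $\sigma$ in the Hermite basis and applying standard Wick-type identities for products of Hermite polynomials at correlated Gaussian arguments, each population moment tensor $T_c^\star = \mathbb{E}[G^\star(\omega)^{\otimes c}]$ becomes an explicit polynomial in $(A^\star,W^\star)$. By taking appropriate linear combinations of the moments up to order $Q = O(p) = O(1)$, one isolates tensors of symmetric low-rank form such as $\sum_i \alpha_i^\star\, (a_i^\star)^{\otimes k}\otimes (w_i^\star)^{\otimes \ell}$. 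These are exactly the objects that tensor decomposition can handle.

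Second, I would apply a robust tensor decomposition procedure in the spirit of Jennrich's algorithm or Anandkumar--Ge--Hsu--Kakade, possibly after a whitening step built from a lower-order moment. Under a mild non-degeneracy assumption on $W^\star$ (bounded condition number, or Kruskal-type genericity), the rank-$r^\star$ symmetric decomposition is unique up to permutation and sign symmetries. Because the learner $G$ is allowed to be at least as wide as $G^\star$, the target decomposition sits inside the learner's parameter space (extra neurons carry zero output weights), so matching empirical moments forces the learner tensors into this same low-rank structure. Robust decomposition guarantees that if the empirical tensors $\widehat{T}_c$ differ from $T_c^\star$ by $\delta$ in Frobenius norm, then the recovered $(A,W)$ are $\poly(d,1/\kappa)\cdot\delta$-close to $(A^\star,W^\star)$ after matching symmetries, where $\kappa$ encodes the conditioning of $W^\star$ and of the linear map from parameters to moments.

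Third, I would close the loop on sample complexity and distribution closeness. Each coordinate of $G^\star(\omega)^{\otimes c}$ is a degree $\le cp$ polynomial of a standard Gaussian with polynomially bounded coefficients; by hypercontractivity it has polynomially bounded higher moments, and concentration (matrix/tensor Bernstein plus union bound over the constantly many $c\le Q$) gives $\|\widehat{T}_c - T_c^\star\|_F \le \delta$ with high probability whenever $N = \poly(d/\delta)$. Setting $\delta$ small enough that the recovered parameters are within $\varepsilon/\poly(d)$ of the target, I invoke polynomial Lipschitz continuity of $\omega \mapsto A\sigma(W\omega)$ in its parameters (valid because $\sigma$ is low-degree polynomial, weights are poly-bounded, and $\omega$ has sub-Gaussian tails) to couple $G_\#\mathcal{N}(0,I_d)$ with $G^\star_\#\mathcal{N}(0,I_d)$ through the shared $\omega$, yielding $W(G_\#\mathcal N(0,I_d),\mathcal{D}^\star)\le\varepsilon$.

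The main obstacle, and where I expect the bulk of the technical work, is the quantitative robustness of the tensor decomposition step: making sure that the composite condition number connecting moment perturbations to parameter perturbations is genuinely $\poly(d)$ rather than super-polynomial. Polynomial activations produce higher-order symmetric tensors whose Jennrich-style recovery depends on both the spectrum of $W^\star$ and the conditioning of the Hermite-combination linear system used to isolate the rank-one factors. Establishing that these conditioning quantities remain polynomial under the stated assumptions, while simultaneously accommodating an over-parameterized learner without introducing spurious low-rank decompositions, is the core difficulty; the sample-complexity, Lipschitz, and algorithmic pieces are then relatively routine.
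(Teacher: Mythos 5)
Your high-level outline---turn moment matching into a tensor decomposition problem, apply a robust Jennrich-style algorithm, concentrate empirical moments via polynomial tail bounds, and close with a parameter-to-Wasserstein Lipschitz coupling---agrees with the paper's strategy. However, the detailed route you describe diverges from the paper's proof in several substantive ways, and some of what you treat as routine is exactly where the paper's technical work is concentrated.

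First, the model is not the shared-hidden-layer network $A^\star\sigma(W^\star\omega)$ that your Hermite expansion presupposes. The paper's generator is $G_k^\star(\omega)=\sum_{i=1}^r \alpha_k^{\star(i)} h(\tv_k^{\star(i)}\cdot\omega)$ with a \emph{separate orthonormal frame} $\{\tv_k^{\star(i)}\}_i$ per output coordinate $k$ and a monomial activation $h(z)=z^p$. Consequently, the cross-coordinate moments do not factor into the clean low-rank objects $\sum_i \alpha_i^\star\,(a_i^\star)^{\otimes k}\otimes(w_i^\star)^{\otimes\ell}$ you posit; instead they encode the pairwise inner products $P_{ab}=\tv_i^{(a)}\cdot\tv_j^{(b)}$ in a scrambled form. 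The paper never attempts to decompose the ambient $D$- or $d$-dimensional moment tensors.

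Second, and more importantly, your key step---``taking appropriate linear combinations of the moments up to order $O(p)$, one isolates tensors of symmetric low-rank form''---is precisely the gap the paper closes with a bespoke construction. The paper proceeds in two stages. The intro-component analysis (Theorem 1) uses the \emph{single-coordinate} even moments $M^{2n}(G_k)$ for $n=1,\dots,r$ together with Newton's identities, Vieta's theorem, and a nontrivial combinatorial lemma showing a certain alternating sum $S_n$ never vanishes, to recover the scalar weights $\alpha_k^{(i)}$ up to permutation. Then the inter-component analysis (Theorem 2) considers moments of the specific form $\mathbb{E}[G_i^K G_j]$ for odd $K$ up to $2K_r-1$ and writes them as $\ts=\tp\cdot CE[\alpha^{(1)},\dots,\alpha^{(r)}]$, where $CE[\cdot]$ is a $K_r\times K_r$ ``Cubic Expectation Matrix.'' Inverting this matrix exposes the rank-$r$ tensor $\sum_j\beta^{(j)}P_{\cdot j}^{\otimes 3}\in\mathbb{R}^{r\times r\times r}$, to which Jennrich's algorithm is applied. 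Note the decomposition takes place in a tiny $r\times r\times r$ space, not the ambient space. The invertibility of $CE[\cdot]$ is the paper's ``generic condition,'' which is verified only numerically for $r\le 50$ and whose conditioning under smoothed analysis (Gaussian $\alpha$'s plus anti-concentration of the determinant) drives the sample complexity. Your ``Hermite-combination linear system'' does not map onto this mechanism, and the problem of deconfounding cross terms in $\mathbb{E}[G^\star(\omega)^{\otimes c}]$ (which mix distinct hidden units and distinct coordinates) is left unresolved in your sketch.

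Third, your appeal to over-parameterization (``extra neurons carry zero output weights, so matching moments forces the learner into the same low-rank structure'') is not something the paper establishes: every theorem in the paper takes the learner's rank equal to the target's $r$, and the identifiability argument is an exact dimension count (the map $\gamma$ in the proof of Theorem 3 is a local bijection from $\mathbb{R}_+^r$ to $\mathbb{R}^r$). Handling a strictly wider learner would require a separate argument ruling out spurious decompositions, which neither you nor the paper supplies. Finally, the paper's Wasserstein coupling is not by naive parameter Lipschitzness: it uses the orthonormality assumption together with a lemma on right-side orthogonal equivalence classes (Lemma~\ref{orthogonal_distance}) and Gaussian rotational invariance to account for the fact that the directions are only recovered up to a global rotation per coordinate.

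In short, the abstract pipeline you describe matches, but the concrete lemmas you would need---the $S_n\neq 0$ claim, the construction and invertibility of the CE-matrix, the reduction to an $r\times r\times r$ tensor, and the orthogonal-equivalence coupling---are not present in your proposal, and the Hermite-expansion route you do sketch targets a different model and leaves the central deconfounding step unsubstantiated.
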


% \vspace{-2mm}
In other words, the theorem says that the learner generator can \emph{sample efficiently} recover the true distribution generated by an unknown two-layer neural network $G^\star$, by matching the low-order moments between $G(\omega)$ and $\mathcal{D}^\star$. Moreover, motivated by the tensor decomposition algorithm, we also include an algorithm at the end to show how such generator $G$ can be found \emph{computationally efficiently} as well. Moreover, given the recent advance in training neural network using gradient descent~\cite{li2020learning,allen2020towards,allen2020feature} to perform tensor decomposition, such generator could potentially be found by simply doing gradient descent starting from random initialization. In figure~\ref{fig:2}, we also demonstrate empirically that one-hidden layer generator with polynomial activation functions can already learn to generate non-trivial images by matching the low degree moments.

\begin{figure}[hbt!]
\centering
\includegraphics[width=0.3\linewidth]{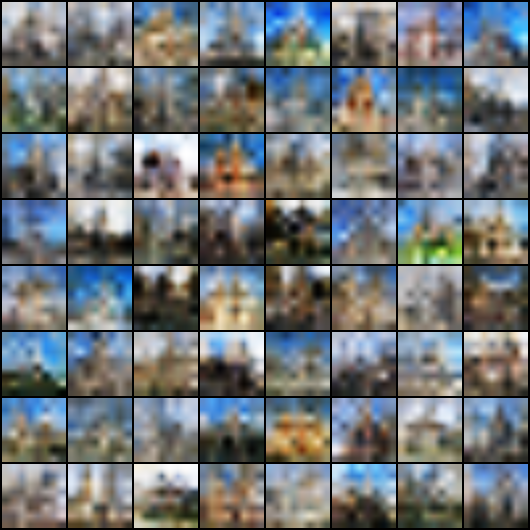} \quad
\includegraphics[width=0.3\linewidth]{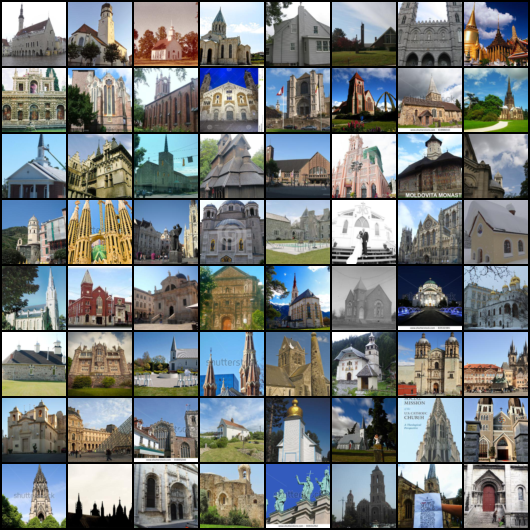}
\caption{Left: Samples generated by a two-layer generator neural network with \textbf{quadratic activation functions} trained on LSUN church data set (right). The discriminator is a three-hidden-layer network with quadratic activation functions. Thus, \textbf{the low-degree polynomial generator can already learn highly non-trivial images by merely matching the (local) moments up to order 8}.}
\label{fig:2}
% \vspace{-10mm}
\end{figure}

\section{Related works}
% \vspace{-2mm}
\paragraph{Generative Adversarial Networks}

The works~\cite{nagarajan2017gradient,heusel2017gans,mescheder2017numerics,daskalakis2018last,daskalakis2018limit,gidel2018negative,liang2018interaction,mokhtari2019unified,lin2019gradient} considered either local convergence property of the GANs' training, or the global convergence property when the objective is convex-concave However, there is no clear evidence of how the local property can be transferred to learning the real distribution $\mathcal{D}^*$, and it is also not clear when the training objective of GAN is convex-concave. The works~\cite{feizi2017understanding,lei2019sgd} studied the global convergence of Wasserstein GAN when the generator network is one-layer neural network and the discriminator  is a (specially designed) quadratic/linear function. Since the discriminators are no more than quadratic, the training objective reduces to solving an Cholesky decomposition of a matrix $M$. In our case, to learn a good generator, a high degree discriminator is required, and the training objective involves tensor decomposition.  The work~\cite{bai2018approximability} focused on the situation when the generator is invertible (or injective). In this case, the image of both $G(\omega)$ and $G^*(\omega)$ each have density functions $p$ and $q$ that cover the entire space. Thus, one can use discriminator $\mathcal{D}$ to compute $\log p - \log q$ and minimize the KL-divergence between $p, q$. In this special case, it can also minimize the Wasserstein distance. Our work does not require $G, G^*$ to be injective, so there could be no density function for distributions $G_{\#}\mathcal N(0,I_d)$ and $G^*_{\#}\mathcal N(0,I_d)$. Our result relies on a different technique to match the moments between $G_{\#}\mathcal N(0,I_d)$ and $G^*_{\#}\mathcal N(0,I_d)$ instead of minimizing the KL-divergence. The work~\cite{zhang2017discrimination} showed that one-layer neural networks with ReLU activation are dense in the space of Lipschitz functions. However, it does not provide any efficient sample complexity guarantee.

% \vspace{-2mm}
\paragraph{Learning one hidden layer neural network with Gaussian inputs}
Learning neural networks with one hidden layer under standard Gaussian inputs is a popular research direction. However, most of the existing works focus on the setting of supervised learning. We refer to~\cite{li2017convergence,glm17} and the citations therein. In particular, the recent work~\cite{li2020learning} shows that two-layer ReLU neural networks can \emph{efficiently} learn a tensor-decomposition-like objective by simply doing gradient descent from random initialization.
% \vspace{-2mm}
\paragraph{Method of Moments}

Our work is also related to the Method of Moments, a well-known approach to learn an underlying distribution. Method of Moments has been used in many other machine learning problems such as mixture of Gaussian distribution~\cite{vempala2004spectral,moitra2010settling}, topic models~\cite{anandkumar2012spectral}, hidden Markov models~\cite{anandkumar2014tensor}, dictionary learning~\cite{arora2014new}, mixture of linear regression~\cite{li2018learning} and so on, and has been used to design practical GANs~\cite{li2017mmd} as well. To the best of our knowledge, the method of moments (beyond order two) has not been explicitly studied to understand the performance of Wasserstein GANs.

%\vspace{-2mm}
\section{Preliminary}
% \vspace{-2mm}
We begin by introducing some notations: For a function $f: \mathbb{R}^d \to \mathbb{R}$, we define the Lipschitz constant of the function $f$ as:
$\| f\|_{\mathrm{Lip}} = \sup_{x \not= y} \frac{|f(x) - f(y)|}{\| x - y\|_2}$.

For random variables $X, Y$ in $\mathbb{R}^d$, we define the Wasserstein-1 distance $W_1(X, Y)$ as:
$$W_1(X, Y) = \sup_{f: \mathbb{R}^d \to R,~ \| f\|_{\mathrm{Lip}} \leq 1} \mathbb{E}[f(X)] - \mathbb{E}[f(Y)]$$

We use $G: \mathbb{R}^d \to \mathbb{R}^D$ to denote the  generator network in the learner, and $\mathcal{D}: \mathbb{R}^D \to \mathbb{R}$ to denote the discriminator network in the learner. 
We use $\mathcal{D}^*$ to denote the true distribution where the training data $X$ are sampled from. We use $N$ to denote the number of training examples. For a (finite) set $\mathcal{Z}$ of training examples, we use $X \sim \mathcal{Z}$ to denote $X$ is a uniformly at random sample from the set $\mathcal{Z}$. We use $\mathcal{N}(\mu, \Sigma)$ to denote a normal distribution with mean $\mu$ and covariance $\Sigma$. We use $I_d$ (or sometimes $I$ for simplicity) to denote the identity matrix in dimension $d$. We use $G_{\#}\mathcal N(0,I_d)$ to denote the random variable $G(\omega)$ where $\omega \sim \mathcal{N}(0, I_{d}))$. We use $\lambda(X), \sigma(X)$ to denote the set of eigenvalues and the singular values of matrix $X$ respectively. Also:
$\lambda_{\min}(X):= \min_{\lambda\in\lambda(X)}|\lambda|,~~\sigma_{\min}(X):= \min_{\sigma\in\sigma(X)}|\sigma|.$
Last but not least, we list some notations on the tensor decomposition theory. For vector $\alpha\in\mathbb{R}^{d}$, we use $\alpha^{\otimes 3}$ to denote a $d\times d\times d$ tensor $\mathcal T$ where:
$\mathcal T_{ijk}=\alpha_{i}\alpha_{j}\alpha_{k}$. Also, for a tensor $\mathcal T\in\mathbb{R}^{d\times d\times d}$ and matrix $G\in\mathbb{R}^{d\times d}$, we denote $\mathcal T'=T[G,G,G]$ as a new tensor with the same shape as $\mathcal T$ where:
$\mathcal T'_{ijk}=\sum_{x,y,z=1}^{d}\mathcal T_{xyz}G_{ix}G_{jy}G_{kz}$.

\paragraph{Our Model}

Our work focuses on the Wasserstein GAN model. In this model, the set of all Lipschitz-1 functions is replaced by a set of (norm bounded) discriminator neural networks.  Given $N$ i.i.d. samples $\mathcal{Z} = \{X_i\}_{i = 1}^N$ from distribution $\mathcal{D}^*$ and $N$ i.i.d. samples $\mathcal{S} = \{ \omega_i \}_{i = 1}^N$ from $\mathcal{N}(0, I_d)$, the Wasserstein GAN's objective function is given as:
$L(G, D) = \mathbb{E}_{X \sim \mathcal{Z}}[D(X)] -\mathbb{E}_{\omega \sim \mathcal{S} }[D(G(\omega))]. $
In this paper, we focus on the realizable case where the distribution $\mathcal{D}^*$ is realizable by a two-layer neural network, given as:
$$G^{*}_{k}(\omega)=\sum_{i=1}^{r}\alpha_{k}^{*(i)} \act(\tv_{k}^{*(i)}\cdot \omega)~~(k=1,2,\cdots,D)$$

Where $\act$ is an activation function. We mostly focus on the case when $r$ is a constant, as a standard choice of the generator as a deconvolution neural network. We shall parameterize the learner network as 
$$G_{k}(\omega)=\sum_{i=1}^{r}\alpha_{k}^{(i)} \act(\tv_{k}^{(i)}\cdot \omega)~~(k=1,2,\cdots,D)$$

Where $\alpha_k^{(i)} \in \mathbb{R}$ and $\tv_k^{(i)} \in \mathbb{R}^d$ are trainable parameters. As mentioned in the intro, we consider discriminator networks consists of $O(1)$-degree polynomials of the input. In the end, we also extend the result to the discriminator network as $D(X) = \text{ReLU}(\langle \mathbf{g}, X \rangle + b)$, where $\mathbf{g} \in \mathbb{R}^d$ and $b \in \mathbb{R}$ are trainable parameters. We shall summarize our main results in the following section.
%\vspace{-2mm}
\section{Statement of the main result}
% \vspace{-2mm}
Our main result shows that when the weights in the target generator network $\alpha_{k}^{*(i)}$ and $\tv_{k}^{*(i)}$ is sampled from a smooth distribution, %then using the aforementioned generator and discriminator as the learner networks, the Wasserstein GAN training objective can 

Then by matching the empirical moment between the samples from $\mathcal{D}^*$ and the generator's output, it minimizes the Wasserstein distance between $G(\omega)$ and $\mathcal{D}^*$ as well. In our result, we consider the special case when there is a $\sigma > 0$ such that each $V_k = (\tv_{k}^{*(1)}, \tv_{k}^{*(2)}, \cdots,\tv_{k}^{*(r)})$ is an column orthogonal matrix with $\sigma_{\min} ((V_k, V_{k'})) \geq \sigma$ for every $k \not= k'$. To formally prove our main theory, we also consider a smooth distribution of the parameters of the ground-truth network, as $\alpha_{k}^{*(i)} \sim \mathcal{N}([\alpha_{k}^{*(i)}]_0, \sigma^2)$ for $|[\alpha_{k}^{*(i)}]_0 |\in [1, \poly(d)]$ . Our analysis belongs to the seminal framework of \emph{smoothed analysis} widely used in Theorem 5.1 of \cite{SA}, Theorem 1.1 of \cite{KMEANS},  Theorem 1.4 of \cite{TENSOR},  Theorem 5 of \cite{TENSOR2}, Theorem 5 of \cite{LASSSS} and so on, where $\sigma > 0$ can be arbitrarily small. We consider this smoothing for theoretical reason to ensure the non-degenerate condition of certain matrices, as we shall demonstrate in the proof. 

Given ground truth distribution $\mathcal{D}^*=G^*_{\#}\mathcal N(0,I_d)$, our main theory can be summarized as the follows:
\vspace{-2mm}
\begin{Theorem*}[Main; Method of Moment]

For every $\sigma, \varepsilon > 0$, for every constant $r$ satisfies the $p$-generic condition, for every $h(z)$ being a degree-$p$ polynomial of $z$, for $N_0 = poly\left(\frac{1}{\delta},\frac{1}{\sigma}\right)\cdot poly(D,d,\frac{1}{\epsilon})$, given $N = \Omega(poly(N_0))$
training examples $\{X_i \}_{i = 1}^N$ sampled from $\mathcal{D}^*$, with probability at least $1-D^{2}\delta$, for every generator network $G$ with $|\alpha_k^{(i)}|, \| \tv_k^{(i)} \|_2 \leq {poly}(d)$ such that 
%$$\max_{\|\mathbf{g}\|_{1}+|b| \leqslant 1} L(G, \mathcal{D}) \leq \varepsilon$$
$$\left\|\frac{1}{N} \sum_{i \in [N]} G(\omega_i)^{\otimes r} - \frac{1}{N} \sum_{i \in [N]} X_i^{\otimes r} \right\|_F^2 \leq \frac{poly(\varepsilon)}{N_0}$$

We must have that 
$$W_1(G_{\#}\mathcal N(0,I), \mathcal{D}^*) \leq \varepsilon$$
where $G_{\#}\mathcal N(0,I)$ stands for the pushforward measure, which means the distribution sampled by $G(z),~z\sim\mathcal N(0,I)$.
\end{Theorem*}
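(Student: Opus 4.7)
The plan is to prove the theorem in three stages: promote the empirical $r$-th moment matching hypothesis to a population-level matching via concentration; apply a smoothed, stable tensor decomposition argument to recover the learner's parameters up to a harmless permutation of hidden units inside each output block; and finally translate parameter closeness into a Wasserstein-1 bound via Lipschitz continuity of $G$ on typical Gaussian inputs.

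For the first step, each coordinate of $G^*(\omega)$ and $G(\omega)$ is a degree-$p$ polynomial in $\omega \sim \mathcal{N}(0, I_d)$ with $\poly(d)$-bounded coefficients, so every entry of $X^{\otimes r}$ and $G(\omega)^{\otimes r}$ is a degree-$pr$ Gaussian polynomial with sub-exponential tails. Hypercontractivity together with a union bound over the $D^r$ tensor entries gives, with probability at least $1-\delta$,
$$\left\| \mathbb{E}[G(\omega)^{\otimes r}] - \tfrac{1}{N} \sum_{i} G(\omega_i)^{\otimes r} \right\|_F + \left\| \mathbb{E}[X^{\otimes r}] - \tfrac{1}{N} \sum_{i} X_i^{\otimes r} \right\|_F \leq \frac{\poly(D, d, 1/\delta)}{\sqrt{N}}.$$
Choosing $N = \Omega(\poly(N_0))$ with $N_0$ as in the statement, and combining with the hypothesis, upgrades the assumption to $\| \mathbb{E}[G^{\otimes r}] - \mathbb{E}[X^{\otimes r}] \|_F^2 \leq \poly(\varepsilon)/N_0$ at the population level.

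For the second step, I would open up this population tensor algebraically. Writing the Hermite expansion $\act(z) = \sum_{j=0}^p c_j He_j(z)$ and invoking Isserlis' theorem, the entry $\mathbb{E}_\omega[G^*_{k_1}(\omega) \cdots G^*_{k_r}(\omega)]$ becomes a polynomial in $\{\alpha^{*(i)}_k\}$ and the pairwise inner products $\{\tv^{*(i_a)}_{k_a} \cdot \tv^{*(i_b)}_{k_b}\}$. The $p$-generic condition singles out a leading piece with the shape of a symmetric rank-decomposition
$$T^* \;=\; \sum_{k, i} \lambda^*_{k, i}\, \bigl(u^*_{k, i}\bigr)^{\otimes r},$$
whose lifted factors $u^*_{k, i}$ are fixed polynomial functions of $\tv^{*(i)}_k$ and whose weights $\lambda^*_{k, i}$ are polynomial in $\alpha^{*(i)}_k$. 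Column-orthogonality of $V_k$ together with the $\sigma$-lower bound on $\sigma_{\min}((V_k, V_{k'}))$ makes the factors $u^*_{k, i}$ $\poly(1/\sigma)$-well-conditioned across different $k$, while the smoothed distribution on the $\alpha^*$'s guarantees, with probability at least $1 - D^2\delta$, that each $\lambda^*_{k, i}$ is bounded away from zero and that the decomposition satisfies a quantitative Kruskal/Jennrich condition in the spirit of the smoothed analyses of \cite{TENSOR,TENSOR2}. A robust tensor-decomposition theorem then yields, whenever $\| T - T^* \|_F \leq \poly(\varepsilon)/N_0$, a per-block permutation $\pi_k$ of the learner's hidden units such that
$$\max_{k, i}\,\Bigl( |\alpha_k^{(\pi_k(i))} - \alpha^{*(i)}_k| + \| \tv_k^{(\pi_k(i))} - \tv^{*(i)}_k \|_2 \Bigr) \;\leq\; \varepsilon / \poly(d, 1/\sigma).$$

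For the third step, I would couple $G_{\#} \mathcal{N}(0, I_d)$ with $\mathcal{D}^*$ through the shared latent $\omega$. Because $\act$ is a degree-$p$ polynomial with polynomially bounded coefficients, both $G$ and $G^*$ are $\poly(d)$-Lipschitz on the typical ball $\| \omega \|_2 \leq O\!\bigl(\sqrt{d \log(1/\varepsilon)}\bigr)$, and the Gaussian tail of $\|\omega\|_2$ contributes only an $O(\varepsilon)$ additive correction to $\mathbb{E}\|G(\omega) - G^*(\omega)\|_2$. Substituting the parameter bound from Step 2 and applying the triangle inequality across the $r$ hidden units of each output coordinate yields $\mathbb{E}_\omega\|G(\omega) - G^*(\omega)\|_2 \leq \varepsilon$, which by the Kantorovich--Rubinstein duality gives $W_1(G_{\#} \mathcal{N}(0, I_d),\, \mathcal{D}^*) \leq \varepsilon$ after absorbing polynomial factors into the definition of $N_0$. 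The main obstacle is Step 2: the population tensor is not a plain rank-$r$ symmetric decomposition but a Hermite-weighted mixture with parameters coupled across output coordinates, so one must carefully isolate the dominant component and quantitatively control the perturbation introduced by the lower-order Hermite terms. Stability then demands explicit lower bounds on the relevant factor-matrix singular values, which is precisely the purpose of the column-orthogonality of $V_k$, the smoothing of $\alpha^*$, and the $p$-generic condition; given this identifiability the concentration and Lipschitz-transfer steps are routine.
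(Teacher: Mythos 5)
Your Step 1 (concentration for Gaussian polynomials to promote empirical moment matching to population moment matching) is sound and matches the paper in spirit, and your Step 3 template (Lipschitz coupling through the latent) is the right shape. However, Step 2 contains a genuine gap, and it propagates into Step 3.

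The central problem is that moment matching \emph{cannot} give the per-vector bound you assert. Because $\mathcal N(0,I_d)$ is rotation-invariant, $G_{\#}\mathcal N(0,I_d)=(G\circ R)_{\#}\mathcal N(0,I_d)$ for every orthogonal $R\in\mathbb R^{d\times d}$; replacing every $\tv^{(i)}_k$ by $R^T\tv^{(i)}_k$ leaves the output distribution, and hence every moment, unchanged. Thus no finite set of population moments can pin down $\|\tv_k^{(\pi_k(i))}-\tv_k^{*(i)}\|_2$: only the Gram matrix of the full collection $\{\tv^{(i)}_k\}$ is identifiable. The paper's inter-component analysis accordingly recovers only the inner products $\tv_i^{(a)}\cdot\tv_j^{(b)}\approx\tv_i^{*(a)}\cdot\tv_j^{*(b)}$ (Theorems 2 and 4), and then invokes a dedicated lemma (``Distance between Right-side Orthogonal Equivalence Classes'') to produce orthogonal $G_1,G_2$ with $\|\mathcal V G_1-\mathcal V^*G_2\|_F$ small, and it is $\mathbb E_\omega\|G(G_1^T\omega)-G^*(G_2^T\omega)\|_2$, not $\mathbb E_\omega\|G(\omega)-G^*(\omega)\|_2$, that is bounded in Theorem 5. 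Your coupling step, as written, uses the per-vector bound that does not exist; you would need to insert the orthogonal-alignment step explicitly.

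A second, softer issue is the claimed global decomposition $T^*=\sum_{k,i}\lambda^*_{k,i}(u^*_{k,i})^{\otimes r}$ of the $D^r$-entry tensor. After Isserlis, the $(k_1,\ldots,k_r)$ entry of $\mathbb E[G^{*\otimes r}]$ is a sum over Wick pairings of products of inner products $\tv^{*(i_a)}_{k_a}\cdot\tv^{*(i_b)}_{k_b}$, which is not of symmetric rank-one form in the indices $k_1,\ldots,k_r$; also the putative factors $u^*_{k,i}$ would have to live in $\mathbb R^D$, yet you define them as polynomial functions of $\tv^{*(i)}_k\in\mathbb R^d$, so the construction is dimensionally unresolved. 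You flag this as ``the main obstacle'' but do not supply a mechanism for isolating a leading rank component. The paper sidesteps the issue entirely by never touching the full $D^r$ tensor: it uses single-coordinate scalar moments $\mathbb E[G_k^{2n}]$ ($n\leq r$) to recover the $\alpha$'s via Newton identities, and pairwise scalar moments $\mathbb E[G_i^K G_j]$ for $K$ up to $2K_r-1$ (note these are of order far above $r$) to set up a linear system whose coefficient matrix is the CE-matrix; inverting it—this is where the $p$-generic condition enters—reduces the problem to an $r\times r\times r$ Jennrich decomposition per pair $(i,j)$. This coordinate-pair reduction is the key structural move in the paper and it is absent from your plan. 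To make your approach go through you would need to either exhibit the claimed rank decomposition rigorously (unlikely to exist in the stated form), or adopt a reduction to small-dimensional tensors as the paper does.
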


We defer the definition of the generic condition to Equation (\ref{eqn:generic}) for the special case when $p = 3$. For the general version with other $p$, we defer to Section A of Appendix. The generic condition asserts invertibility for some fixed $\poly(r) \times \poly(r)$ size matrix (whose entries are absolute constants that do not depend on $G^\star$ or $G$). Moreover, we point out that we have verified numerically that the generic condition holds for every constant $p \leq 10$ and $r \leq 50$.

% \vspace{-2mm}
\paragraph{Our approach: Method of Moments and tensor decomposition}

Our approach is based on the recent technique of tensor decomposition~\cite{anandkumar2014tensor}. We first show that when the moment of one coordinate: $G_k(\omega)$ matches the moment of $G_k^\star(\omega)$, then for all $i$, the coefficient $\alpha_{k}^{(i)}$ must matches $\alpha_{k}^{*(i)}$. After that, we consider the moment between the coordinates of $G_k$ and $G_k^\star$. Using a careful reduction to tensor decomposition, we can show that when the joint moment of $G_k, G_{k'}$ matches $G_k^\star, G_{k'}^\star$, then the vectors $\tv_{k}^{(i)}$ and $\tv_{k}^{\star (i)}$ must be close as well, hence we conclude that the Wasserstein distance between $G$ and $G^\star$ is small.

% \vspace{-2mm}
\paragraph{Extensions to ReLU discriminators}

We also observe that a one-hidden-layer ReLU discriminator can be used to efficiently simulate any low degree polynomial, i.e. For any $p(z) = z^q$ and $\varepsilon > 0$, there must be weights $w_1, \cdots, w_{C}, b_1, \cdots, b_{C}$ for some value $C $ depends on $\log \frac{1}{\varepsilon}$ and $q$ such that $ \sum_i \text{ReLU}(w_i z + b_i) \in [p(z) - \varepsilon, p(z) + \varepsilon]$ for every $z$ with $|z| \leq 1$. Thus, if the generator wins against the ReLU discriminator, then the generator wins against any low degree polynomial discriminator. In this case, we then show that the generator must (approximately) match all the low degree moment between $G(\omega)$ and $G^\star(\omega)$ as well. 

% \vspace{-2mm}

\paragraph{How Wasserstein GAN minimizes Wasserstein distance}
Our method also sheds light on how Wasserstein GAN can learn the true distribution: Instead of directly learning the true distribution of $G^\star(\omega)$, the discriminator will simply try to find a mismatch of the lower order moment between $G(\omega)$ and the true distribution. Thus, the generator will simply update its weights to match all the lower order moment of $\mathcal{D}^*$. By doing so, we actually show that the generator is already learning the distribution: In fact, the generator needs to minimize a objective consists of certain tensor Frobenius norm difference between $a, \tv$ and $a^{\star}, \tv^{\star}$. Our main theorem shows that such an objective must imply that $\alpha, \tv$ is close to $\alpha^{\star}, \tv^{\star}$. Moreover,  Prior works have also shown how such a tensor Frobenius minimization problem can be solved efficiently using gradient descent in certain cases~\cite{ge2015escaping}.

%\vspace{-2mm}

\section{Warm Up: two-layer Generator with Cubic Activations}
% \vspace{-2mm}
To present the simplest form of our result, we first consider the situation where both the learner and target generators are two-layer networks with cubic activations (or cubic generators for short). Denote $G,G^{*}:\mathbb{R}^{d}\rightarrow\mathbb{R}^{D}$ be the learner and target generators with the following form:
\[G_{k}(\mathbf{\omega})=\sum_{i=1}^{r}\alpha_{k}^{(i)}(\tv_{k}^{(i)}\cdot \mathbf{\omega})^{3},~ G^{*}_{k}(\mathbf{\omega})=\sum_{i=1}^{r}\alpha_{k}^{*(i)}(\tv_{k}^{*(i)}\cdot \mathbf{\omega})^{3}~~(k=1,2,\cdots,D)\]
Here, both $\tv_{k}^{(i)}~(i=1,2,\cdots,r)$ and $\tv_{k}^{*(i)}~(i=1,2,\cdots,r)$ are orthonormal vector groups, and the weights of the ground truth generator $\alpha_{k}^{*(i)}>0$. (This assumption is reasonable since we can shift the direction of $\tv$ if its corresponding $\alpha<0$). The input vector of generators is drawn from normal Gaussian distribution: $\omega\sim \mathcal N(0,I_{d})$. Firstly, we introduce a concept of $(\tau,\mathcal A)$-robustness.
\begin{Definition}
[$(\tau,\mathcal A)$-robustness] A rank-$r$ cubic generator $G_{k}^*$ is called $(\tau,\mathcal A)$-robust if its weights $\alpha_{k}^{*(i)}~i\in [r], k\in[D]$ satisfy the following conditions:
\begin{equation*}
\begin{aligned}
% & \forall k\in[D]:~~\alpha_{k}^{*(1)}<\alpha_{k}^{*(2)}<\cdots<\alpha_{k}^{*(r)}\\
& \forall i\in [r], k\in [D]:~~\tau < \alpha_{k}^{*(i)} < \mathcal A, \quad \forall i\neq j\in [r], k\in [D]:~~ |\alpha_{k}^{*(i)}-\alpha_{k}^{*(j)}|>\tau
\end{aligned}
\end{equation*}
\end{Definition}
In the following parts, we assume that the ground truth generator satisfies the $(\tau,\mathcal A)$-robustness, and furthermore we prove that the learner generator can uniquely determine the target generator only by polynomial discriminators. We will also give the corresponding sample complexity which means the number of inputs we need to sample to make the learner generator creates a distribution that is $\epsilon$-approximated to the ground truth:
\[W_{1}\left(G_{\#}\mathcal N(0,I_{d}), ~G^{*}_{\#}\mathcal N(0,I_{d})\right)<\epsilon\]
Here, $W_{1}$ stands for the Wasserstein-1 distance between distributions and $\#$ stands for the pushforward measure.
% $\alpha_{k}^{*(i)}$ are sampled i.i.d from Gaussian distribution $\mathcal N(0,\sigma^{2})$ and $V_{k}^*=(\tv_{k}^{*(1)},\tv_{k}^{*(2)},\cdots,\tv_{k}^{*(r)})$ are sampled i.i.d from uniform distribution on the space:
% \[H^{r}_{d} = \{H\in\mathbb{R}^{d\times r}|~H^{T}H=I_{r}\}\]

% \[\alpha_{k}^{(1)}\geqslant\alpha_{k}^{(2)}\geqslant\cdots\geqslant\alpha_{k}^{(r)}\geqslant 0, ~\alpha_{k}^{*(1)}\geqslant\alpha_{k}^{*(2)}\geqslant\cdots\geqslant\alpha_{k}^{*(r)}\geqslant 0\]
% \vspace{-4mm}
\subsection{Identifiability with Polynomial Discriminator}
% \vspace{-1mm}
\subsubsection{Intro-component Moment Analysis}
% \vspace{-2mm}
\begin{Theorem}
\label{intro-component}
For $\forall k\in[D]$, if $M^{i}(G_{k})=M^{i}(G_{k}^{*})$ for $i = 2,4,\cdots,2r$, then there exists a permutation $\sigma:[r]\rightarrow[r]$ such that: $\alpha_{k}^{(\sigma(i))}=\alpha_{k}^{*(i)}$ holds for $\forall i\in[r]$.
Here: 
\[M^{i}(G_{k})=\mathop{\mathbb{E}}\limits_{\omega\sim \mathcal N(0,I)}[G_{k}(\omega)]^{i},~M^{i}(G_{k}^{*})=\mathop{\mathbb{E}}\limits_{\omega\sim \mathcal N(0,I)}[G_{k}^{*}(\omega)]^{i}\]
are the order-$i$ moments of variables $G_{k}(\omega)$ and $G_{k}^{*}(\omega)$.
\end{Theorem}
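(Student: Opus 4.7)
\emph{Proof plan.} First, I would use orthonormality of $\tv_{k}^{(1)},\ldots,\tv_{k}^{(r)}$ to observe that $y_i := \tv_{k}^{(i)}\cdot\omega$ are i.i.d.\ $\mathcal{N}(0,1)$, so that $G_k(\omega)$ is equidistributed with $\sum_{i=1}^r \alpha_k^{(i)} y_i^3$, and similarly for $G_k^*$. Writing $\alpha_i := \alpha_k^{(i)}$, $\alpha_i^* := \alpha_k^{*(i)}$, and $\beta_i := \alpha_i^2$, the claim reduces to showing that the multiset $\{\beta_i\}_{i=1}^r$ is determined by the even moments $M^{2}, M^{4},\ldots,M^{2r}$ of $Y := \sum_{i=1}^r \alpha_i y_i^3$ (odd moments are automatically $0$ since $Y$ is symmetric).

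Next, I would expand $\mathbb{E}[Y^{2m}]$ by the multinomial theorem. Since odd Gaussian moments vanish, only multi-indices $(n_1,\ldots,n_r)$ with every $n_i$ even contribute; writing $n_i=2m_i$ with $\sum m_i=m$ gives
\[
M^{2m}(G_k)\;=\;\sum_{m_1+\cdots+m_r=m}\binom{2m}{2m_1,\ldots,2m_r}\prod_{i=1}^{r}(6m_i-1)!!\,\beta_i^{m_i}.
\]
The key structural observation is a triangular decomposition: the partitions of $m$ with a single nonzero part (i.e.\ some $m_j=m$, others $0$) together contribute exactly $(6m-1)!!\,P_m(\beta)$, where $P_s(\beta):=\sum_i\beta_i^s$ is the $s$-th power sum; every remaining monomial has all exponents strictly less than $m$, hence by the fundamental theorem of symmetric polynomials lies in the ring generated by $P_1(\beta),\ldots,P_{m-1}(\beta)$. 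Consequently there exists a universal polynomial $Q_m$ (depending neither on $G$ nor on $G^*$) such that
\[
M^{2m}(G_k)\;=\;(6m-1)!!\,P_m(\beta)\;+\;Q_m\bigl(P_1(\beta),\ldots,P_{m-1}(\beta)\bigr).
\]

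Then I would induct on $m=1,2,\ldots,r$: the nonvanishing of $(6m-1)!!$ together with the hypothesis $M^{2m}(G_k)=M^{2m}(G_k^*)$ and the inductive assumption $P_s(\beta)=P_s(\beta^*)$ for $s<m$ forces $P_m(\beta)=P_m(\beta^*)$. Equality of the first $r$ power sums, by Newton's identities, implies equality of the elementary symmetric polynomials, hence of the monic degree-$r$ polynomials $\prod_i(x-\beta_i)$ and $\prod_i(x-\beta_i^*)$; so the multisets $\{\beta_i\}$ and $\{\beta_i^*\}$ coincide. Since each $\alpha_i^*>0$ and, by $(\tau,\mathcal{A})$-robustness, the $\alpha_i^*$ are pairwise distinct, each $\alpha_i^*$ is recovered as $\sqrt{\beta_i^*}$; the paper's sign convention of absorbing the sign of $\alpha_k^{(i)}$ into $\tv_k^{(i)}$ then upgrades the multiset equality to a permutation $\sigma$ with $\alpha_k^{(\sigma(i))}=\alpha_k^{*(i)}$.

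The main obstacle is the triangular decomposition in the second step: pinning down that the single "new" symmetric function generated at order $2m$ is a nonzero multiple of $P_m(\beta)$, while all other multinomial terms are polynomials in the strictly lower power sums $P_1(\beta),\ldots,P_{m-1}(\beta)$. Without this structure, one would face $r$ nonlinear moment equations in the $r$ unknowns $\beta_i$ with no a priori uniqueness guarantee; with it, the induction becomes linear upper-triangular in the $P_m$'s and the identifiability result drops out cleanly.
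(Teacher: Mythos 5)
Your proof plan follows the same general outline as the paper's — reduce to a single coordinate, use orthogonality to get i.i.d.\ Gaussians, expand the $2m$-th moment via the multinomial theorem and vanishing odd Gaussian moments, induct on $m$ to recover the power sums $P_m(\beta)$, and finish with Newton's identities and positivity of the $\alpha^{*(i)}$. However, there is a genuine gap in the crucial step where you isolate the "new" symmetric invariant at each order.

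You claim that once the single-part contribution $(6m-1)!!\,P_m(\beta)$ is split off, ``every remaining monomial has all exponents strictly less than $m$, hence by the fundamental theorem of symmetric polynomials lies in the ring generated by $P_1(\beta),\ldots,P_{m-1}(\beta)$.'' This inference is false. A homogeneous degree-$m$ symmetric polynomial all of whose monomials have exponents $<m$ need not lie in $\mathbb{Q}[P_1,\dots,P_{m-1}]$. The simplest counterexample already occurs at $m=2$: the monomial symmetric function $m_{(1,1)}(\beta)=\sum_{i<j}\beta_i\beta_j = e_2 = \tfrac12\left(P_1^2 - P_2\right)$ has all exponents equal to $1<2$, yet its expansion in power sums contains a nonzero $-\tfrac12 P_2$ term. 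So the cross terms in $\mathbb{E}[Y^{2m}]$ (multi-indices with at least two nonzero parts) do in general contribute to the $P_m(\beta)$ coefficient, and the true coefficient of $P_m$ is not $(6m-1)!!$ but a signed sum over all partitions of $m$. In the paper's notation this coefficient is
\[
S_m \;=\; \sum_{j=1}^{m}\sum_{\substack{a_1+\cdots+a_j=m\\1\leq a_1\leq\cdots\leq a_j}}(-1)^{j-1}(j-1)!\,(2m)!\left(\prod_{k=1}^{j}\frac{(6a_k-1)!!}{(2a_k)!}\right)\frac{1}{(\#1)!\cdots(\#m)!},
\]
whose $j=1$ term is exactly your $(6m-1)!!$ but whose $j\geq 2$ terms do not vanish (e.g.\ for $m=2$ one gets $S_2 = 11!! - 675 \neq 11!!$). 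Consequently, the upper-triangular induction you describe is correct in spirit, but it does not ``drop out cleanly''; one must actually prove $S_m\neq 0$, which is the content of a dedicated lemma in the paper (established by bounding $\sum_{j\geq 2}|T_j| < T_1$ using the log-convexity of $(6k-1)!!/(2k)!$ and partition-counting bounds). Your proposal is missing precisely this nonvanishing argument, and the route you suggest to avoid it (appealing to the fundamental theorem of symmetric polynomials) does not go through.
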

% \vspace{-5mm}
\begin{Remark}
The theorem above means that for any single node, if its moments of order $2,4,\cdots,2r$ are identified, then its coefficients $\alpha_{k}^{(i)}~(i=1,2,\cdots,r)$ are also uniquely determined. It's worth mentioned that we only use moments of even number order because for each odd number $i$, it's obvious that $M^{i}(G_{k})=M^{i}(G^{*}_{k})=0$ because both $[G_{k}(x)]^{i}$ and $[G_{k}^{*}(x)]^{i}$ are homogeneous polynomials with order $3i$, which is an odd number. It leads to the zero expectation of variables $G_{k}(\omega)$ and $G_{k}^{*}(\omega)$. Therefore, only when $i$ is even, the equation $M^{i}(G_{k})=M^{i}(G_{k}^{*})$ is nontrivial.
\end{Remark}
% \vspace{-5mm}
\subsubsection{Inter-component Moment Analysis}
In the section above, we conduct moment analysis for each component, which proves the identifiability of weights $\alpha_{k}^{(i)}~k\in[D],i\in[r]$. In this section, we will show the relationship between $\tv_{k}^{(i)}$ and $\tv_{k}^{*(i)}$ by calculating the inter-component moments. Before that, we introduce a concept of Cubic Expectation Matrix (CE-Matrix) and give an assumption of its invertibility. 
\begin{Definition}
[CE-Matrix] Given a dimension $r$, $r$ coefficients $\lambda_{1},\lambda_{2},\cdots,\lambda_{r}$ and $r$ independent random variables $\omega_{1},\omega_{2},\cdots,\omega_{r}$ which follows the standard Gaussian distribution. Denote: \[Q=\lambda_{1}\omega_{1}^{3}+\lambda_{2}\omega_{2}^{3}+\cdots+\lambda_{r}\omega_{r}^{3}\]
Then we list all the 3-order and 1-order monomials with variables $\omega_{1},\cdots,\omega_{r}$. It's not difficult to know that there are altogether $r+\binom{r+2}{3}=\frac{1}{6}r(r^{2}+3r+8)\triangleq K_r$ different monomials. We denote them as: $P_{1},P_{2},\cdots,P_{K_r}$. 
\[P_{1}=\omega_{1}^{3},\cdots,P_{r}=\omega_{r}^{3},P_{r+1}=\omega_{1}^{2}\omega_{2},\cdots, P_{K_r}=\omega_{r}\]
The CE-Matrix is a $K_r\times K_r$ polynomial matrix in the following form:
\[T_{ij}=\mathop\mathbb{E}\limits_{\omega\sim \mathcal N(0,1)}\left[P_{i}\cdot Q^{2j-1}\right]~~i,j\in[K_r]\]
We mark this CE-Matrix as: $T=CE[\lambda_{1},\lambda_{2},\cdots,\lambda_{r}]$.
\end{Definition}

\paragraph{Invertibility of CE-Matrix}
For any dimension $2\leqslant r\leqslant 50$, we have verified by computer that the polynomial matrix above is invertible. Or in other words, the determinant of CE-Matrix isn't equal to zero polynomial.
\begin{equation}\label{eqn:generic}
\mathrm{det}\left(CE[\lambda_{1},\lambda_{2},\cdots,\lambda_{r}]\right)\neq 0
\end{equation}
We believe that CE-Matrix is always invertible for every $r \geq 2$. When the inequality above holds, we say that this $r$ satisfies the 3-\textbf{generic condition}. 
% \vspace{-2mm}
\begin{Theorem}
According to the invertibility of CE-Matrix, once $\alpha_{k}^{(i)}=\alpha_{k}^{*(i)}$ holds for $\forall k\in[D],i\in[r]$, assume for $\forall i\neq j\in[D]$:
\[\mathop\mathbb{E}\limits_{\omega\sim\mathcal N(0,I)}\left(G_{i}(\omega)\right)^{K}G_{j}(\omega)=\mathop\mathbb{E}\limits_{\omega\sim\mathcal N(0,I)}\left(G_{i}^*(\omega)\right)^{K}G_{j}^*(\omega)\]
holds for $K=1,3,\cdots,2K_{r}-1$. Then we have:
$\tv_{i}^{(a)}\cdot\tv_{j}^{(b)}=\tv_{i}^{*(a)}\cdot\tv_{j}^{*(b)}$
holds for $a,b\in[r]$.
\end{Theorem}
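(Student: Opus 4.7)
My plan is to reduce the assumed moment equalities to a polynomial identity in a reduced Gaussian, apply the CE-Matrix invertibility hypothesis, and then extract the inner products $\tv_i^{(a)} \cdot \tv_j^{(b)}$ via symmetric tensor decomposition uniqueness. Since $\{\tv_i^{(a)}\}_{a=1}^r$ is orthonormal, the projections $u_a := \tv_i^{(a)} \cdot \omega$ form an i.i.d.\ standard Gaussian vector in $\mathbb{R}^r$, and $G_i(\omega) = Q(u) := \sum_a \alpha_i^{(a)} u_a^3$. Setting $c_{ab} := \tv_i^{(a)} \cdot \tv_j^{(b)}$ and $\xi_b := \tv_j^{(b)} \cdot \omega - \sum_a c_{ab} u_a$, the residual $\xi_b$ is a centered Gaussian of variance $1-\sum_a c_{ab}^2$ that is independent of $u$. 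Performing the same decomposition on the starred side produces $u^*,\ c_{ab}^*,\ \xi_b^*$.

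Next I would integrate out the $\xi_b$'s from $(\sum_a c_{ab} u_a + \xi_b)^3$; only the $\xi_b^0$ and $\xi_b^2$ contributions survive, giving
\[ P(u) := \mathbb{E}[G_j(\omega)\mid u] = \sum_b \alpha_j^{(b)}\left[\Bigl(\sum_a c_{ab} u_a\Bigr)^{3} + 3\Bigl(1-\sum_a c_{ab}^2\Bigr)\Bigl(\sum_a c_{ab} u_a\Bigr)\right]. \]
This is an odd polynomial in $u$ of total degree at most $3$, so it lies in the $K_r$-dimensional span of the monomials $P_1,\dots,P_{K_r}$ from the CE-Matrix definition. By the tower property and the fact that $(u_a)$ and $(u_a^*)$ have the same joint law, the hypothesis $\mathbb{E}[G_i^{2K-1} G_j] = \mathbb{E}[(G_i^*)^{2K-1} G_j^*]$ becomes
\[ \mathbb{E}\!\left[Q(u)^{2K-1}\bigl(P(u) - P^*(u)\bigr)\right] = 0 \quad (K = 1,\dots,K_r), \]
where $P^*$ is the analogous polynomial built from $c_{ab}^*$ and we have used $Q = Q^*$ thanks to the assumed matching $\alpha_i^{(a)} = \alpha_i^{*(a)}$.

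Expanding $P-P^* = \sum_{i=1}^{K_r} \nu_i P_i$ in the monomial basis turns the display above into the linear system $T\nu = 0$ with $T_{Ki} = \mathbb{E}[P_i\, Q^{2K-1}]$, i.e.\ $T = CE[\alpha_i^{(1)},\dots,\alpha_i^{(r)}]$. By the $3$-generic condition~\eqref{eqn:generic}, $T$ is invertible, so $\nu = 0$ and $P \equiv P^*$ as polynomials in $u$.

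Finally, to deduce $c_{ab} = c_{ab}^*$ for all $a,b\in[r]$, I would match homogeneous parts of $P \equiv P^*$. Writing $\tw_b := (c_{1b},\dots,c_{rb})^{\top}$, the degree-$3$ parts give the symmetric tensor identity
\[ \sum_b \alpha_j^{(b)}\, \tw_b^{\otimes 3} = \sum_b \alpha_j^{(b)}\, (\tw_b^*)^{\otimes 3} \quad \text{in } \mathbb{R}^{r\times r\times r}. \]
Since the weights $\alpha_j^{(b)}$ are distinct and strictly positive under $(\tau,\mathcal{A})$-robustness, standard uniqueness of symmetric CP decomposition (Jennrich-type) identifies each $\tw_b$ up to a signed permutation; positivity of the weights rules out the sign-flip ambiguity (since $(-\tw_b)^{\otimes 3} = -\tw_b^{\otimes 3}$ would require flipping the sign of the weight), and the permutation is already fixed by the index-wise matching of $\alpha$'s, so $\tw_b = \tw_b^*$, i.e.\ $c_{ab} = c_{ab}^*$. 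The main obstacle I anticipate is ensuring the non-degeneracy required for this tensor-uniqueness step---specifically, that the columns $\{\tw_b\}_b$ of the Gram matrix $C = V_i^{\top} V_j$ are linearly independent; in the degenerate case one would fall back on the linear part of $P\equiv P^*$, which provides the additional constraints $\sum_b \alpha_j^{(b)}(1-\|\tw_b\|^2)\tw_b = \sum_b \alpha_j^{(b)}(1-\|\tw_b^*\|^2)\tw_b^*$, to close the argument.
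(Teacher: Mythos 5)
Your overall route is the same as the paper's: reduce the inter-component moments to a linear system governed by the CE-Matrix, invoke the $3$-generic condition to conclude the polynomial identity $P \equiv P^*$, then extract the inner products by a symmetric rank-$r$ tensor identity and Jennrich-type uniqueness. The paper does the dimension reduction by an explicit orthogonal change of variables $Q$ sending $\tv^{(i)}_1 \mapsto e_i$ (so that $\tw^{(j)}_2 \mapsto (P_{1j},\dots,P_{rj},Q_j,0,\dots)$); your conditional-expectation/tower-property version is an equivalent packaging of the same computation and is fine.

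However, there is a genuine gap in your closing step. You assert that, once the cubic parts give $\sum_b \alpha_j^{(b)} \tw_b^{\otimes 3} = \sum_b \alpha_j^{(b)} (\tw_b^*)^{\otimes 3}$, ``the permutation is already fixed by the index-wise matching of $\alpha$'s.'' That does not follow. Jennrich uniqueness identifies the \emph{normalized} rank-one components: you get a permutation $\sigma$ with $\alpha_j^{(b)}\|\tw_b\|^3 = \alpha_j^{(\sigma(b))}\|\tw_{\sigma(b)}^*\|^3$ and $\hat\tw_b = \hat\tw^*_{\sigma(b)}$. Since the magnitudes $\|\tw_b\|$ and $\|\tw_b^*\|$ are unknown and need not match a priori, the distinctness of the $\alpha_j^{(b)}$ alone cannot force $\sigma = \mathrm{id}$. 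The linear (degree-one) part of $P \equiv P^*$, which after subtracting the cubic identity simplifies to $\sum_b \alpha_j^{(b)}\tw_b = \sum_b \alpha_j^{(b)}\tw_b^*$, is not a fallback for a degenerate case --- it is what actually pins down $\sigma$ in general: substituting $\tw_b = \|\tw_{\sigma(b)}^*\|\hat\tw^*_{\sigma(b)}/\|\tw_b\| \cdot \|\tw_b\| $, i.e.\ $\tw_b$ equal to $\tw^*_{\sigma(b)}$ up to scale, and comparing coefficients against the linearly independent $\{\tw_b^*\}$ (equivalently the columns of $\mathbf P^*$) forces $\alpha_j^{(b)} = \alpha_j^{(\sigma(b))}$ and hence $\sigma=\mathrm{id}$ since the weights are distinct. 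This is exactly how the paper closes the argument. Separately, you correctly flag that Jennrich uniqueness needs the $\{\tw_b\}$ (i.e.\ the columns of $\mathbf P^* = V_i^\top V_j$) to be linearly independent, but leave it unresolved; the paper proves $\mathbf P^*$ has full rank with probability $1$ over the choice of $\tv^{*(i)},\tw^{*(j)}$ by reducing to the a.s.\ nonvanishing of a determinant. Both points need to be supplied for the proof to be complete.
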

% \vspace{-4mm}
\subsection{Sample Complexity}
% \vspace{-2mm}
After the identifiability we discuss above, there is another important property which is the sample complexity. Firstly, we state our conclusion on the sample complexity of intro-component moment analysis. With loss of generality, assume $\alpha_{k}^{(1)}<\alpha_{k}^{(2)}<\cdots<\alpha_{k}^{(r)}, ~\alpha_{k}^{*(1)}<\alpha_{k}^{*(2)}<\cdots<\alpha_{k}^{*(r)}$ holds for $\forall k\in[D]$.  
% \vspace{-2mm}
\begin{Theorem}
For each $k\in[D]$, we list $r$ equations:
\[M^{i}(G_k)=\frac{1}{N}\sum_{t=1}^{N}\left(G_{k}^*(\omega^{(t)})\right)^{i}=\overline{M^{i}}(G_{k}^*)~~~(i=2,4,\cdots,2r)\]
here the $\overline{M^{i}}(G_{k}^*)$ stands for the empirical mean since we can't obtain the knowledge about the moment of the target generator in advance. After solving these equations, we can get a unique learner generator $G_{k}$ which satisfies:
\[\left(\sum_{i=1}^{r}|\alpha_{k}^{(i)}-\alpha_{k}^{*(i)}|^{2}\right)^{1/2}<\left(\frac{C_{1}(6r)^{3r}\cdot\mathcal A^{r+1}}{\tau}\right)^{3r}\cdot \sqrt{\frac{\log(Dre^{2}/\delta)}{N}}\]
holds for $\forall k\in [D]$ with probability at least $1-\delta$ over the choice of samples $\omega^{(t)}$. Here $C_{1}$ is an absolute constant.
\end{Theorem}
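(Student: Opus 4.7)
The plan is to split the proof into (i) a Gaussian polynomial concentration step that transfers the empirical $\overline{M^i}(G_k^*)$ to the population $M^i(G_k^*)$, and (ii) a deterministic stability step that inverts the polynomial map $\alpha \mapsto (M^{2}, M^{4}, \ldots, M^{2r})$ with an explicit condition-number bound. Fix $k \in [D]$. Since $\tv_k^{*(1)}, \ldots, \tv_k^{*(r)}$ are orthonormal, $z_i := \tv_k^{*(i)} \cdot \omega$ are i.i.d.\ $\mathcal{N}(0,1)$, so $G_k^*(\omega) = \sum_{i=1}^r \alpha_k^{*(i)} z_i^3$, and by the same orthogonal change of variables the moments of $G_k(\omega)$ depend only on the learner's $\alpha_k^{(i)}$. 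The theorem then reduces to: given noisy values of $M^{2j}(\alpha^*)$ for $j=1,\ldots,r$, recover $\alpha^*$ stably. For step (i), $(G_k^*(\omega))^i$ is a degree-$3i$ polynomial in standard Gaussians whose $L^2$ norm is at most $(\mathrm{poly}(r,\mathcal{A}))^{O(r)}$, and Gaussian polynomial tail bounds (hypercontractivity / Hanson--Wright) give
\[
\bigl|\overline{M^{i}}(G_k^*) - M^{i}(G_k^*)\bigr| \le (Cr\mathcal{A})^{O(r)} \sqrt{\log(Dr/\delta)/N}
\]
simultaneously for all $k \in [D]$ and all even $i \le 2r$, by a union bound.

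For step (ii), I would expand $M^{2j}(\alpha)$ via the multinomial theorem, keep only the terms whose exponents are all even (the rest vanish under the Gaussian expectation), and reparameterize by $\beta_i := \alpha_i^2$. The key identity is
\[
M^{2j}(\alpha) \;=\; (6j-1)!!\,p_j(\beta) \;+\; Q_j\bigl(p_1(\beta), \ldots, p_{j-1}(\beta)\bigr),
\]
where $p_l(\beta) = \sum_i \beta_i^l$ is the $l$-th power sum and $Q_j$ is an explicit polynomial, because $\beta_i^j$ can only arise from the single-index multinomial term $\mathbf{b} = j\mathbf{e}_i$. This triangular structure lets me inductively solve for $p_1, \ldots, p_r$ from the noisy $M^{2j}$'s, with per-step Lipschitz constant controlled by $(\mathrm{poly}(r,\mathcal{A}))^{O(r)}$. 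From $p_1, \ldots, p_r$ I reconstruct the multiset $\{\beta_i\}$ by forming the characteristic polynomial $P(x) = \prod_i (x - \beta_i)$ via Newton's identities, and invoke a standard polynomial-root perturbation bound (e.g.\ Ostrowski's): the $(\tau, \mathcal{A})$-robustness implies $|\beta_i^* - \beta_j^*| = |\alpha_i^* - \alpha_j^*|(\alpha_i^* + \alpha_j^*) > 2\tau^2$, so each root is Lipschitz in the coefficients with constant $(\mathcal{A}/\tau)^{O(r)}$. Passing back from $\beta$ to $\alpha$ costs only an extra factor $1/(2\tau)$, since $\alpha_i, \alpha_i^* \ge \tau$ forces $|\sqrt{\beta_i} - \sqrt{\beta_i^*}| \le |\beta_i - \beta_i^*|/(2\tau)$.

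Chaining these Lipschitz estimates with the concentration bound from step (i) gives the claimed $\bigl((6r)^{3r}\mathcal{A}^{r+1}/\tau\bigr)^{3r}\sqrt{\log(Dre^2/\delta)/N}$-type prefactor, with the final union bound over $k \in [D]$ absorbed in the logarithm. The main obstacle is the second inversion, from power sums to ordered roots: its condition number is essentially that of the Vandermonde matrix evaluated at $\beta_1^*, \ldots, \beta_r^*$ and genuinely degrades super-exponentially in $r$ as the minimum gap $\tau$ shrinks, which is precisely the source of the $(6r)^{3r}$-type blowup in the stated bound. By contrast, the moment-to-power-sum inversion is benign thanks to its triangular structure with nonzero diagonal $(6j-1)!!$, and the concentration step is standard once the correct degree-$O(r)$ Gaussian-polynomial tail bound is invoked.
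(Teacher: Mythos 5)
Your overall architecture matches the paper's: a Gaussian-polynomial concentration step to pass from empirical to population moments (the paper uses the Schudy--Sviridenko concentration theorem for polynomials of moment-bounded variables rather than Hanson--Wright, which is really a degree-two tool, but the estimates are of the same type), followed by a deterministic inversion of the map $\alpha \mapsto (M^2, M^4, \ldots, M^{2r})$ whose condition number is governed by a Vandermonde-type matrix under $(\tau,\mathcal{A})$-robustness. Your route through Newton's identities and root-perturbation bounds is a legitimate alternative to the paper's direct inverse-function-theorem argument with the explicit inverse-Vandermonde formula.

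However, there is a genuine error in the heart of step (ii). You claim the triangular identity
\[
M^{2j}(\alpha) \;=\; (6j-1)!!\,p_j(\beta) \;+\; Q_j\bigl(p_1(\beta),\ldots,p_{j-1}(\beta)\bigr),
\]
justifying the diagonal coefficient $(6j-1)!!$ by the observation that the monomial $\beta_i^j$ arises only from the single-index multinomial term. That observation is correct about monomials, but it does not determine the coefficient of $p_j$ in the power-sum expansion: cross-term monomials such as $\beta_{i_1}^{b_1}\beta_{i_2}^{b_2}\cdots$ also contribute to $p_j$ once you rewrite them in terms of power sums (e.g.\ $\beta_1\beta_2 = \tfrac12(p_1^2 - p_2)$ injects a $p_2$ term). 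Concretely, with $r=2$ one finds
\[
M^4 = 10395\,(\beta_1^2+\beta_2^2) + 1350\,\beta_1\beta_2 = 9720\,p_2 + 675\,p_1^2,
\]
so the coefficient of $p_2$ is $9720$, not $(6\cdot2-1)!!=10395$. The true coefficient is the alternating sum $S_n$ the paper defines, and proving $S_n\neq 0$ (so that the triangular system is in fact invertible with a quantitative diagonal bound $|S_n|\geq 1$) requires a separate combinatorial argument showing the single-index term dominates the cross-term corrections; the paper devotes an entire lemma to the inequality $T_1 > |T_2| + \cdots + |T_n|$. Your proposal silently bypasses this and therefore does not establish that the inversion is well-posed. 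This is the one step in the proof that is genuinely nontrivial and it is missing.

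A secondary, smaller imprecision: you quote a per-step Lipschitz constant $(\mathrm{poly}(r,\mathcal{A}))^{O(r)}$ for the moment-to-power-sum triangularization and blame the final $(6r)^{3r}$-type blowup entirely on the Vandermonde step. In the paper the error accumulates geometrically through the $r$ stages of the triangular recursion (Lemma on $|F_\ty[n]-F_{\ty^*}[n]|$ grows like $((6r)^{8r}\mathcal{A}^{2r})^{n-1}$), so both stages genuinely contribute exponential factors; your accounting understates the first.
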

% \vspace{-2mm}
Next, we state the sample complexity of inter-component moment analysis. 
\begin{Theorem}
For each $i\neq j\in[D]$, we list $K_{r}$ equations:
\[M^{k,1}(G_{i},G_{j})=\frac{1}{N}\sum_{t=1}^{N}(G_{i}^*(\omega^{(t)}))^{k}\cdot(G_{j}^*(\omega^{(t)}))=\overline{M^{k,1}}(G_{i}^*,G_{j}^*)~~(k=1,3,\cdots,2K_{r}-1)\]
After solving these equations, we can get a learner generator $G_{k}$, such that: with probability larger than $1-5\delta$ over the choice of $\alpha_{k}^{*(i)}, \tv_{k}^{*(i)}~k\in[D],i\in[r]$, it holds that:
\[|\tv_{i}^{(a)}\cdot\tv_{j}^{(b)}-\tv_{i}^{*(a)}\cdot\tv_{j}^{*(b)}|<\left(\frac{C'r^{23}\mathcal A^{6}}{\tau\delta\sqrt{\sigma}}\right)^{9r^6}\cdot d^{8}\sqrt{\frac{\log(e^{2}(Dr+2r^{3})/\delta)}{N}}+O\left(\frac{1}{N}\right)\triangleq \mu_{3}\]
where $C'$ is an absolute constant.
\end{Theorem}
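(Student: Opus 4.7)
The plan is to cast the moment-matching equations as a perturbed linear system whose coefficient matrix is (a block determined by) the CE-Matrix, apply the population-level identifiability theorem to identify the correct solution, and then combine (i) concentration of empirical moments with (ii) a smoothed lower bound on the smallest singular value of the CE-Matrix to control the perturbation of $\tv_i^{(a)}\cdot\tv_j^{(b)}$.

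\textbf{Step 1: Algebraic reduction to a $K_r\times K_r$ linear system.} Fix $i\neq j$ and, for $\omega\sim\mathcal N(0,I_d)$, expand $G_i^\ast(\omega)^k G_j^\ast(\omega)$ in the basis of degree-$3$ and degree-$1$ monomials in the orthonormal directions spanned by $\{\tv_i^{\ast(a)}\}\cup\{\tv_j^{\ast(b)}\}$. Using the previous theorem, the coefficients of these monomials are exactly the entries of $CE[\alpha_i^{\ast(1)},\ldots,\alpha_i^{\ast(r)}]$ multiplied against the unknowns $y_{ab}:=\tv_i^{(a)}\cdot\tv_j^{(b)}$ (up to lower-order quantities in $\alpha$ already matched by the intro-component analysis). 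So moment-matching the $K_r$ equations for $K=1,3,\ldots,2K_r-1$ takes the form
\[
T\,\vec y \;=\; \vec m^{*} + \vec\eta,\qquad T=CE[\alpha_i^{(1)},\ldots,\alpha_i^{(r)}],
\]
where $\vec m^{*}$ is the exact population moment vector for $G^\ast$ and $\vec\eta$ is the empirical fluctuation. By the previous theorem, $T\vec y^{\ast}=\vec m^{\ast}$, so $\vec y-\vec y^{\ast}=T^{-1}\vec\eta$ and it suffices to bound $\|T^{-1}\|\cdot\|\vec\eta\|$.

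\textbf{Step 2: Concentration of empirical moments.} For each odd $K\leq 2K_r-1$, the random variable $G_i^\ast(\omega)^K G_j^\ast(\omega)$ is a polynomial in the Gaussian vector $\omega$ of degree $3(K+1)=O(r)$. By Gaussian hypercontractivity / moment bounds for polynomials of Gaussians, its variance is bounded by $\poly(r,\mathcal A,d)$, and a Hanson--Wright / Bernstein-type tail bound yields
\[
\bigl|\overline{M^{K,1}}(G_i^\ast,G_j^\ast)-M^{K,1}(G_i^\ast,G_j^\ast)\bigr|\;\leq\;\poly(r,\mathcal A,d)\cdot\sqrt{\tfrac{\log(1/\delta')}{N}}
\]
with probability at least $1-\delta'$. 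A union bound over the $K_r\leq\poly(r)$ equations and all $O(D^2)$ pairs $(i,j)$ gives $\|\vec\eta\|_2\leq \poly(r,\mathcal A,d)\sqrt{\log((Dr)^{O(1)}/\delta)/N}$.

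\textbf{Step 3: Smoothed lower bound on $\sigma_{\min}(T)$.} This is the main obstacle. The invertibility hypothesis only says $\det T$ is not the zero polynomial in $(\lambda_1,\ldots,\lambda_r)$; we need a quantitative lower bound. We use smoothed analysis: since each $\alpha_k^{\ast(i)}$ is drawn from a Gaussian with variance $\sigma^2$ around its base value, the random variable $\det T$ is a nonzero polynomial of constant degree evaluated at smoothed inputs. Carbery--Wright anticoncentration then gives
\[
\Pr\bigl[\,|\det T|\leq t\,\bigr]\;\leq\;O\bigl((t/\sigma^{\deg})^{1/\deg}\bigr),
\]
so with probability $\geq 1-\delta$, $|\det T|\geq (\sigma\delta)^{O(1)}$. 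Combined with a crude upper bound $\|T\|_F\leq\poly(\mathcal A,r)$ on the other $K_r-1$ singular values, we obtain $\sigma_{\min}(T)^{-1}\leq (C'r^{23}\mathcal A^{6}/(\tau\delta\sqrt\sigma))^{O(r^6)}$, matching the shape of the claimed bound. Also, the already-estimated $\alpha^{(i)}$'s are only $\mu_1$-close to $\alpha^{\ast(i)}$'s (by the previous theorem), and a Lipschitz perturbation of $T$ in its entries adds an $O(\mu_1)$ contribution — this is the source of the $O(1/N)$ term.

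\textbf{Step 4: Conclude.} Applying $\|\vec y-\vec y^{\ast}\|_\infty\leq \|T^{-1}\|_{\mathrm{op}}\cdot\|\vec\eta\|_2$, substituting the bounds from Steps 2 and 3, and unioning over all pairs $(i,j)\in[D]^2$ and over the success event of the previous intro-component theorem (costing an additional $\delta$ in each of the four events, hence the factor $1-5\delta$), yields
\[
|\tv_i^{(a)}\cdot\tv_j^{(b)}-\tv_i^{\ast(a)}\cdot\tv_j^{\ast(b)}|\;\leq\;\Bigl(\tfrac{C'r^{23}\mathcal A^{6}}{\tau\delta\sqrt\sigma}\Bigr)^{9r^6}\!\cdot d^{8}\sqrt{\tfrac{\log(e^{2}(Dr+2r^{3})/\delta)}{N}}+O\!\Bigl(\tfrac{1}{N}\Bigr),
\]
which is the desired bound. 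The main technical difficulty is Step 3: tracking the polynomial degree in $r$ of $\det T$ carefully enough to convert Carbery--Wright's anticoncentration into the exponent $9r^6$ stated in the theorem, and ensuring the perturbation caused by plugging in empirical $\alpha^{(i)}$'s (rather than $\alpha^{\ast(i)}$'s) into $T$ does not overwhelm the $\sigma_{\min}$ lower bound.
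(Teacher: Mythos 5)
Your Steps 2--4 align with parts of the paper's argument (moment concentration via polynomial tail bounds, a smoothed lower bound on $\|(T^*)^{-1}\|$ via Gaussian polynomial anticoncentration, and union bounds yielding the $1-5\delta$ probability), but Step 1 contains a structural error that leaves a large gap.

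You claim that the moment-matching equations form a linear system $T\vec y=\vec m^*+\vec\eta$ whose unknowns $y_{ab}$ are the inner products $\tv_i^{(a)}\cdot\tv_j^{(b)}$ themselves, so that bounding $\|T^{-1}\|\|\vec\eta\|$ directly controls $|P_{ab}-P_{ab}^*|$. This is not what the algebra gives. Unpacking $\mathbb{E}[G_i(\omega)^K G_j(\omega)]$ (as in the paper's proof of Theorem~2), the linear system reads $(S_1,S_3,\ldots,S_{2K_r-1})=\tp\cdot CE[\alpha^{(1)},\ldots,\alpha^{(r)}]$, where the unknown vector $\tp$ has entries such as $\sum_j\beta^{(j)}P_{xj}P_{yj}P_{zj}$ and $\sum_j\beta^{(j)}P_{xj}Q_j^2$ --- \emph{cubic} polynomials of the inner products, not the inner products themselves. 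Inverting $T$ therefore only gives $\|\tp-\tp^*\|_2\lesssim\mu_2$, which says that the two rank-$r$ third-order tensors $L=\sum_j\beta^{(j)}(P_{1j},\ldots,P_{rj})^{\otimes 3}$ and $L^*=\sum_j\beta^{(j)}(P_{1j}^*,\ldots,P_{rj}^*)^{\otimes 3}$ are close in Frobenius norm. Extracting $|P_{ab}-P_{ab}^*|$ from $\|L-L^*\|_F$ is a tensor-decomposition perturbation problem, not a further linear-algebra step.

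The paper then spends roughly half the proof of this theorem on exactly that perturbation analysis: choosing a whitening matrix $G$ from $L^*$, bounding $\|\mathcal U-\mathcal U^*\|_F\le\|L-L^*\|_F\|G\|_F^3$, controlling eigenvector perturbation of the sliced matrices $\mathcal U_x$, and then un-whitening via $\|G^{-1}\|_2$. Crucially, bounding $\|G\|_F$ requires a \emph{second} smoothed-analysis argument --- over the random orthogonal choice of $\tv^{*(i)},\tw^{*(i)}$ rather than over $\alpha^{*(i)}$ --- to lower-bound $\sigma_{\min}(U^*)$ and $\lambda_{\min}(E)$, contributing the $d^8$ and two of the five $\delta$'s. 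Your proposal collapses this entire chain into a single linear inversion, so as written it does not establish the claimed bound on $|\tv_i^{(a)}\cdot\tv_j^{(b)}-\tv_i^{*(a)}\cdot\tv_j^{*(b)}|$, and it has no mechanism to produce the $d^8$ factor (which comes from the whitening-matrix bound, not from moment concentration or from $\|T^{-1}\|$).
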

After we analyze the intro-component and inter-component moments between learner generator and target generator, we are finally able to estimate their Wasserstein distance.
\[W_{1}(G_{\#}\mathcal N(0,I), G^*_{\#}\mathcal N(0,I))=\sup_{Lip(D)\leqslant 1}\left[\mathop\mathbb{E}_{\omega\sim\mathcal N(0,I)}D(G(\omega))-\mathop\mathbb{E}_{\omega\sim\mathcal N(0,I)}D(G^*(\omega))\right]\]
\begin{Theorem}
[Main theorem] With probability at least $1-\frac{5D^{2}}{2}\delta$ over the choice of the target generator $G^*$, we can efficiently obtain a learner generator $G$ only by polynomial discriminators, such that:
\[W_{1}(G_{\#}\mathcal N(0,I), G^*_{\#}\mathcal N(0,I))
< \sqrt{D}\cdot d^{4}\left(\frac{Cr^{23}\mathcal A^{6}}{\tau\delta\sqrt{\sigma}}\right)^{5r^6}\sqrt[4]{\frac{\log(e^{2}(Dr+2r^{3})/\delta)}{N}}+o\left(\frac{1}{N^{1/4}}\right)\]
Here, each $\alpha_{k}^{*(i)}$ is sampled independently by distribution $\mathcal N(M,\sigma)$. Each $\tv_{k}=(\tv_{k}^{*(1)}, \tv_{k}^{*(2)},\cdots,\tv_{k}^{*(r)})\in\mathbb{R}^{d\times r}$ is the first $r$ column of an arbitrary orthogonal matrix. $M$ is a constant positive integer. We also assume that the target generator satisfies the $(\tau,\mathcal A)$-robustness. This theorem also tells us that in order to make the Wasserstein distance smaller than $\epsilon$, the number of samples we need is:
\[N=\tilde{\Omega}\left(\poly\left(r,\frac{\mathcal A}{\tau},\frac{1}{\delta},\frac{1}{\sigma}\right)^{\poly(r)}\cdot d^{16}D^{2}\left(\frac{1}{\epsilon}\right)^{4}\right)\]
\end{Theorem}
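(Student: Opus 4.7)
The plan is to combine the two identifiability theorems (intro-component and inter-component moment analysis) with a coupling argument based on the rotational invariance of the standard Gaussian, and then convert the resulting parameter closeness into a Wasserstein-1 bound. First I would invoke the intro-component sample-complexity theorem to obtain, with probability at least $1-\delta$ and simultaneously for every $k\in[D]$, an $\ell_{2}$ bound on $\alpha_{k}^{(\cdot)}-\alpha_{k}^{*(\cdot)}$ in terms of $N,\tau,\mathcal A,r$. Then I would invoke the inter-component theorem to obtain, with probability at least $1-5\delta$ over the smoothing of the $\alpha^{*}$'s and the draw of the $\tv^{*}$'s, a uniform bound $|\tv_{i}^{(a)}\!\cdot\!\tv_{j}^{(b)}-\tv_{i}^{*(a)}\!\cdot\!\tv_{j}^{*(b)}|\leq \mu_{3}$ for each ordered pair $i\neq j\in[D]$ and every $a,b\in[r]$. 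A union bound over the $\binom{D}{2}$ pairs accounts exactly for the $\tfrac{5D^{2}}{2}\delta$ failure probability in the statement.

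The next step exploits the fact that a distribution of the form $\sum_{i}\alpha^{(i)}\act(\tv^{(i)}\!\cdot\!\omega)$ with $\omega\sim\mathcal N(0,I_{d})$ depends on $\{\tv^{(i)}\}$ only through their Gram matrix. Thus if the learned Gram matrix of the concatenated weight vectors is close to the target's, I can apply a single global orthogonal transformation on $\mathbb{R}^d$ so that the learner's $\tv_{k}^{(i)}$ are close in $\ell_{2}$ to $\tv_{k}^{*(i)}$. Quantitatively, the $\mu_{3}$ entry-wise bounds translate into a small Frobenius perturbation of each $2r\times 2r$ block Gram matrix $(V_{k},V_{k'})^{\top}(V_{k},V_{k'})$, and a standard matrix square-root perturbation lemma, using the hypothesis $\sigma_{\min}\bigl((V_{k},V_{k'})\bigr)\geq\sigma$, converts this into an $\ell_{2}$ bound of the form $\|\tv_{k}^{(i)}-\tv_{k}^{*(i)}\|_{2}=O(\mathrm{poly}(d,r)\cdot\mu_{3}/\sigma)$ after the global orthogonal alignment.

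With both $\alpha$ and $\tv$ now close in parameters, I would bound the Wasserstein-1 distance by coupling the two generators on the same Gaussian input and using the Lipschitz-1 duality
\[
W_{1}\!\bigl(G_{\#}\mathcal N(0,I),G^{*}_{\#}\mathcal N(0,I)\bigr)\leq \mathbb{E}_{\omega}\|G(\omega)-G^{*}(\omega)\|_{2}\leq\sqrt{\sum_{k=1}^{D}\mathbb{E}_{\omega}\bigl(G_{k}(\omega)-G_{k}^{*}(\omega)\bigr)^{2}}.
\]
For each coordinate, $G_{k}(\omega)-G_{k}^{*}(\omega)$ is a degree-$3$ polynomial in $\omega$ whose coefficients split into an ``$\alpha$-error'' piece and a ``direction-error'' piece; applying Gaussian hypercontractivity to control $\mathbb{E}_{\omega}(\tv\!\cdot\!\omega)^{6}$ bounds the per-coordinate contribution by a polynomial in $\mathcal A$, $\|\alpha-\alpha^{*}\|_{2}$, $\|\tv-\tv^{*}\|_{2}$ and $d$. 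Summing over the $D$ coordinates produces the $\sqrt{D}$ factor and substituting the two sample-complexity bounds produces the stated $\sqrt[4]{\log/N}$ rate with the enumerated $(\mathcal A,\tau,\delta,\sigma,d,r)$ prefactors. Inverting this bound to achieve $W_{1}<\epsilon$ then yields $N=\tilde{\Omega}(\mathrm{poly}(r,\mathcal A/\tau,1/\delta,1/\sigma)^{\mathrm{poly}(r)}\cdot d^{16}D^{2}\epsilon^{-4})$.

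The main obstacle I anticipate is the second step: converting the collection of Gram-matrix errors $\mu_{3}$ into an honest $\ell_{2}$ bound on the direction vectors $\tv_{k}^{(i)}-\tv_{k}^{*(i)}$ under a single common orthogonal frame valid for all $D$ blocks. One must simultaneously align all $D$ blocks, and since only pairwise Grams are controlled, the alignment lemma must be applied carefully block-by-block while the $\sigma$-lower bound on $\sigma_{\min}((V_{k},V_{k'}))$ prevents the square-root perturbation from blowing up; this is precisely where the explicit $1/\sqrt{\sigma}$ and $\mathrm{poly}(r)$-exponent prefactors in the final bound originate, and the bookkeeping that transfers the block-wise alignments into one coherent rotation is the technically delicate piece of the argument.
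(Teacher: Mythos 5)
Your high-level plan (invoke the intro- and inter-component theorems, union-bound over $\binom{D}{2}$ pairs to get the $\tfrac{5D^2}{2}\delta$ failure probability, align by a global rotation using Gaussian rotational invariance, and then bound $W_1$ by $\mathbb{E}_\omega\|G-G^*\|_2$ coordinate-wise) matches the paper's structure. The probability bookkeeping and the final coupling/duality step are exactly right.

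However, the step you yourself flag as ``technically delicate'' --- passing from the collection of inner-product errors $|\tv_i^{(a)}\!\cdot\!\tv_j^{(b)}-\tv_i^{*(a)}\!\cdot\!\tv_j^{*(b)}|\le\mu_3$ to a \emph{single} global orthogonal alignment of all $Dr$ direction vectors --- is where your proposal has a genuine gap, and the paper resolves it by a different device than the one you sketch. You propose applying a square-root perturbation lemma to each $2r\times 2r$ block Gram matrix $(V_k,V_{k'})^\top(V_k,V_{k'})$ and then somehow gluing the resulting per-block rotations together. This does not work as stated: block-wise alignments are only determined up to block-wise orthogonal symmetries, and there is no guarantee that the $\binom{D}{2}$ pairwise alignments are mutually consistent. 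The paper instead \emph{stacks all direction vectors at once} into a single $Dr\times d$ matrix $\mathcal V=(\tv_1^{(1)},\ldots,\tv_D^{(r)})^\top$, observes that $\|\mathcal V\mathcal V^\top-\mathcal V^*\mathcal V^{*\top}\|_{\infty,\infty}\le\mu_3$ (hence $\|\cdot\|_2\le Dr\mu_3$), and applies a single SVD-perturbation lemma (Lemma~\ref{orthogonal_distance}, built on Weyl's and Wedin's theorems) to produce one pair of $d\times d$ orthogonal matrices $G_1,G_2$ with $\|\mathcal V G_1-\mathcal V^*G_2\|_F\le\sqrt{Ddr\mu_3}+O(\mu_3)$. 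This one-shot alignment avoids the gluing problem entirely and is what produces the $d^4$ and $\sqrt{D}$ prefactors.

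A secondary issue: you attribute the $1/\sqrt\sigma$ in the final bound to the condition $\sigma_{\min}((V_k,V_{k'}))\ge\sigma$ and the square-root perturbation step. In the actual proof of this theorem, $\sigma$ refers to the standard deviation of the Gaussian smoothing $\alpha_k^{*(i)}\sim\mathcal N(M,\sigma)$; the $1/\sqrt\sigma$ factor is already present in $\mu_3$, entering through the anti-concentration bound on $\det\bigl(CE[\alpha^{*(1)},\ldots,\alpha^{*(r)}]\bigr)$ in Theorem~4. The alignment step contributes only $d$- and $r$-polynomial factors, not the $\sigma$-dependence.
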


% \vspace{-5mm}
\section{Extension of Cubic Activations to Higher Order Activations}
% \vspace{-2mm}
Actually, we can use very similar approaches to prove the identifiability and sample complexity of higher degree generators. Firstly, we list $r$ intro-component moments for each component and uniquely determine the parameters $\alpha_{k}^{(i)}$ by Vieta's Theorem. Then, we list $K_{r}^{(d)}$ inter-component moments for each component pair. Here, the definition of $K_{r}^{(d)}$ is similar to the one above, and:
\[K_{r}^{(d)}=\sum_{i=1}^{d}\binom{r+2i-2}{2i-1}\]
Then we can get a tensor decomposition form of the inner products of each vector pairs (like the $\tv_{k}^{(a)}$ and $\tv_{l}^{(b)}$) above. And finally, due to the uniqueness of tensor decomposition and Lemma \ref{inner_product}. We can uniquely determine the learner generator to be exactly the same as target generator.

After extending our generators to be with higher order, we consider the situation where both the learner and target generators are order-$p$ rank $r < d$ orthogonal tensor multiplication functions. Denote $G,G^{*}:\mathbb{R}^{d}\rightarrow\mathbb{R}^{D}$ be the learner and target generator functions with the following form:
\[G_{k}(\mathbf{\omega})=\sum_{i=1}^{r}\alpha_{k}^{(i)}(\tv_{k}^{(i)}\cdot \mathbf{\omega})^{p},~ G^{*}_{k}(\mathbf{\omega})=\sum_{i=1}^{r}\alpha_{k}^{*(i)}(\tv_{k}^{*(i)}\cdot \mathbf{\omega})^{p}~~(k=1,2,\cdots,D)\]
Here, both $\tv_{k}^{(i)}~(i=1,2,\cdots,r)$ and $\tv_{k}^{*(i)}~(i=1,2,\cdots,r)$ are orthonormal vector groups, and the weights of the ground truth generator $\alpha_{k}^{*(i)}>0$. The input vector of generators is drawn from normal Gaussian distribution: $\omega\sim \mathcal N(0,I_{d})$. Similarly, we assume $G^*$ satisfies the $(\tau,\mathcal A)$-robustness. With almost the same mathematical techniques, we can guarantee its identifiability with polynomial discriminators (or in other words, under moment analysis). Just like cubic generators, we can also calculate its corresponding sample complexity.

\begin{Theorem}
[Intro-component Moment Analysis] For each $k\in[D]$, we list $r$ equations:
\[M^{i}(G_k)=\frac{1}{N}\sum_{t=1}^{N}\left(G_{k}^*(\omega^{(t)})\right)^{i}=\overline{M^{i}}(G_{k}^*)~~~(i=2,4,\cdots,2r)\]
After solving these equations, we can get a unique learner generator $G_{k}$ which satisfies:
\[\left(\sum_{i=1}^{r}|\alpha_{k}^{(i)}-\alpha_{k}^{*(i)}|^{2}\right)^{1/2}<\tilde{O}\left(\frac{1}{\sqrt{N}}\cdot \poly\left(p,r,\frac{\mathcal A}{\tau},\frac{1}{\delta}\right)^{\poly(p,r)}\right)\]
holds for $\forall k\in [D]$ with probability at least $1-\delta$ over the choice of samples $\omega^{(t)}$.
\end{Theorem}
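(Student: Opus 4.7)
The plan is to reduce the $r$ moment equations to a triangular system in the power-sum symmetric polynomials of the coefficients, recover the multiset $\{\alpha_k^{(i)}\}$ via Newton's identities and Vieta, and then propagate the empirical-moment error through this chain with a careful stability analysis.

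First I would exploit that the $\tv_k^{(i)}$ are orthonormal, so the projections $z_i := \tv_k^{(i)} \cdot \omega$ are i.i.d.\ $\mathcal{N}(0,1)$ and $G_k(\omega) = \sum_{i=1}^r \alpha_k^{(i)} z_i^p$. Expanding $M^{2j}(G_k)$ via the multinomial theorem and using $\mathbb{E}[z^m] = (m-1)!!$ for $m$ even and $0$ for $m$ odd, the moment becomes a homogeneous symmetric polynomial of degree $2j$ in $\alpha_k^{(i)}$ (equivalently, of degree $j$ in $\beta_i := (\alpha_k^{(i)})^2$ when $p$ is odd, since any odd exponent on some $z_i$ kills the term). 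A short computation shows that the leading monomial-symmetric term in $M^{2j}$ is a nonzero constant multiple of the power sum $P_j(\beta) := \sum_i \beta_i^j$, with lower-order contributions involving only $P_{j'}$ for $j' < j$ whose coefficients are explicit products of double factorials. Hence the $r$ equations $M^{2}, \ldots, M^{2r}$ form a triangular invertible linear system in $P_1(\beta), \ldots, P_r(\beta)$.

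Second I would apply Newton's identities to pass from $(P_1, \ldots, P_r)$ to the elementary symmetric polynomials $(e_1, \ldots, e_r)$, then read off the multiset $\{\beta_i\}$ as the roots of $\prod_i(x-\beta_i) = \sum_{j=0}^r (-1)^j e_j x^{r-j}$. Because the ground truth satisfies $\alpha_k^{*(i)} > 0$, positive square roots then pin down $\{\alpha_k^{(i)}\}$ uniquely up to permutation, as in the cubic case. The stability of this chain is where the $\poly(p, r, \mathcal{A}/\tau, 1/\delta)^{\poly(p,r)}$ factor arises: (i) inverting the triangular moment-to-power-sum system contributes condition numbers in $\mathcal{A}$ and $p,r$ via the double-factorial coefficients; (ii) Newton's identities cost a further factor polynomial in $\mathcal{A}$; and most delicately (iii) the Vieta / root-perturbation step from $(e_1,\ldots,e_r)$ to the roots $\beta_i$ is controlled by the pairwise separation of the $\beta_i$, which is at least $\tau^2/(2\mathcal{A})$ under $(\tau,\mathcal{A})$-robustness (combining $|\alpha^{*(i)} - \alpha^{*(j)}| > \tau$ with the $\mathcal{A}$ upper bound), yielding a root-Lipschitz constant of order $(\mathcal{A}/\tau)^{\poly(r)}$. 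Cascading these three Lipschitz constants produces the claimed prefactor.

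Finally I would bound the empirical-to-population moment fluctuation $|M^{2j}(G_k^*) - \overline{M^{2j}}(G_k^*)|$. Since $(G_k^*(\omega))^{2j}$ is a degree-$2jp$ polynomial in the Gaussian vector $\omega$, Gaussian hypercontractivity yields a sub-exponential-type tail with Orlicz norm at most $(\mathcal{A}\sqrt{r})^{2j}\cdot\poly(p)^{O(jp)}$, so a Bernstein-type inequality plus a union bound over $j = 1, \ldots, r$ delivers an $\tilde{O}(\sqrt{\log(r/\delta)/N})$ rate with the appropriate polynomial prefactor, with probability $\geq 1-\delta$. Substituting this into the deterministic stability cascade above produces the stated $\tilde{O}(1/\sqrt{N})\cdot\poly(p, r, \mathcal{A}/\tau, 1/\delta)^{\poly(p,r)}$ bound. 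The main obstacle I foresee is step (iii): producing a quantitative root-perturbation bound that matches the stated $\poly^{\poly}$ form without accruing an uncontrolled $r!$-type blowup --- this is what forces the triangular basis choice in step (i) and motivates the precise form of the $(\tau, \mathcal{A})$-robustness assumption.
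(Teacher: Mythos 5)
Your plan follows the same skeleton as the paper's proof of the cubic analogue (Theorem 3): convert the $r$ even moments into an invertible triangular system in the power sums $P_j(\beta)=\sum_i\beta_i^j$ with $\beta_i:=(\alpha_k^{(i)})^2$ (valid only for odd $p$, which the paper also implicitly assumes by listing only even-order moments), recover $\{\beta_i\}$ from $(P_1,\ldots,P_r)$ using $(\tau,\mathcal A)$-robustness for conditioning, and propagate the empirical-moment fluctuation through this chain. You diverge in the two key technical lemmas. For recovery, you pass from power sums to elementary symmetrics via Newton's identities, then from coefficients to roots via Vieta plus a root-perturbation bound; the relevant separation is $|\beta_i-\beta_j|=|\alpha_i-\alpha_j|\,|\alpha_i+\alpha_j|>2\tau^2$ (stronger than your stated $\tau^2/(2\mathcal A)$; the lower bound on $\alpha_i+\alpha_j$ comes from $\alpha_i,\alpha_j>\tau$, not from the upper bound $\mathcal A$). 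The paper instead applies the inverse function theorem directly to $\gamma(x)=(\tfrac{1}{2}\sum x_i^2,\ldots,\tfrac{1}{2r}\sum x_i^{2r})$; its Jacobian is a Vandermonde-type matrix whose inverse has a closed form, yielding $\|J^{-1}\|_F\leq(\mathcal A/\tau)^{2r}\cdot\poly(r)$ in one step and bypassing your multi-stage coefficient-to-root propagation. The two routes are equivalent in spirit --- the Vandermonde-inverse entries encode exactly the root-finding condition number your separation bound controls --- but the Jacobian route avoids compounding perturbation errors over three algebraic transforms. For concentration, you invoke Gaussian hypercontractivity plus a Bernstein-type bound where the paper uses the Schudy--Sviridenko concentration inequality for polynomials of moment-bounded random variables; both give sub-Weibull tails for the degree-$2jp$ polynomial $(G_k^*(\omega))^{2j}$ and the $\tilde{O}(\sqrt{\log(r/\delta)/N})$ rate after a union bound over $j\leq r$, so this substitution is harmless.
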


\begin{Theorem}
[Inter-component Moment Analysis] For each $i\neq j\in[D]$, we list $K_{r}^{(p)}$ equations:
\[M^{k,1}(G_{i},G_{j})=\frac{1}{N}\sum_{t=1}^{N}(G_{i}^*(\omega^{(t)}))^{k}\cdot(G_{j}^*(\omega^{(t)}))=\overline{M^{k,1}}(G_{i}^*,G_{j}^*)~~(k=1,3,\cdots,2K_{r}^{(p)}-1)\]
After solving these equations, we can get a learner generator $G_{k}$, such that: with probability larger than $1-\delta$ over the choice of $\alpha_{k}^{*(i)}, \tv_{k}^{*(i)}~k\in[D],i\in[r]$, it holds that:
\[|\tv_{i}^{(a)}\cdot\tv_{j}^{(b)}-\tv_{i}^{*(a)}\cdot\tv_{j}^{*(b)}|<\tilde{O}\left(\frac{\poly(d)}{\sqrt{N}}\cdot \poly\left(p,r,\frac{\mathcal A}{\tau},\frac{1}{\delta},\frac{1}{\sigma}\right)^{\poly(p,r)}\right)\]
\end{Theorem}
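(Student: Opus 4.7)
The plan is to extend the cubic-case reduction to a linear system in the pairwise inner products $\tv_i^{(a)}\cdot\tv_j^{(b)}$, now parameterized by the general degree $p$. First, I would expand the cross-moment $\mathbb{E}_{\omega\sim\mathcal N(0,I)}[(G_i(\omega))^k G_j(\omega)]$ by writing each factor as $\sum_a \alpha_i^{(a)}(\tv_i^{(a)}\cdot\omega)^p$ and $\sum_b \alpha_j^{(b)}(\tv_j^{(b)}\cdot\omega)^p$. Conditioning on an orthonormal frame extending $\{\tv_i^{(a)}\}_a$, I would project $\omega$ onto each $\tv_j^{(b)}$ using $\tv_j^{(b)} = \sum_a (\tv_i^{(a)}\cdot\tv_j^{(b)})\,\tv_i^{(a)} + u^{\perp}_{ab}$, where $u^{\perp}_{ab}$ is orthogonal to all $\tv_i^{(a)}$. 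After applying Wick's/Isserlis' rule over the independent Gaussians $\tv_i^{(a)}\cdot\omega$ and $u^{\perp}_{ab}\cdot\omega$, the expectation becomes a homogeneous polynomial of degree $p$ (in the $j$-variables) and $kp$ (in the $i$-variables) in the unknown inner products, with coefficients explicit in $\alpha_i,\alpha_j$ and combinatorial constants.

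Next, exactly as in the cubic case, I would collect these expressions into a linear system for the odd-order symmetric monomials in $\tv_j^{(b)}\cdot\omega_{i}$-type quantities; i.e. the $K_r^{(p)}$ monomials of total odd degree up to $2p-1$ in the formal variables $\omega_1,\ldots,\omega_r$. Choosing $k=1,3,\ldots,2K_r^{(p)}-1$ produces a $K_r^{(p)}\times K_r^{(p)}$ matrix --- the analogue of $CE[\lambda_1,\ldots,\lambda_r]$ with $\lambda_a=\alpha_i^{*(a)}$ --- whose invertibility is exactly the $p$-generic condition. Solving this system expresses each ``cross-monomial moment'' as a linear combination of the observed cross-moments, and a second inversion step (the analogue of Lemma \ref{inner_product} invoked in the cubic proof) recovers each $\tv_i^{(a)}\cdot\tv_j^{(b)}$ from these cross-monomial moments. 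Composing the two inversions gives an exact identity of the form $\tv_i^{(a)}\cdot\tv_j^{(b)} = F_{a,b}(\alpha_i,\alpha_j;\{\overline{M^{k,1}}(G_i^*,G_j^*)\}_k)$.

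The remaining work is the quantitative error analysis, and this is the main obstacle. Three sources of error must be combined: (i) the empirical concentration $|\overline{M^{k,1}}-M^{k,1}| = \tilde O(\mathcal A^{O(p)}/\sqrt N)$ with a high-moment Hoeffding/Bernstein bound for polynomials of Gaussians; (ii) the perturbation from using the learner's $\alpha_i^{(a)}$, which differ from $\alpha_i^{*(a)}$ by the bound of the intro-component theorem, inside the CE-matrix; and (iii) the condition number of the generalized CE-matrix itself. Step (iii) is where smoothing is essential: since $\alpha_k^{*(i)}\sim\mathcal N([\alpha_k^{*(i)}]_0,\sigma^2)$ and the $p$-generic condition says the determinant is not the zero polynomial in the $\lambda$'s, an anti-concentration argument (Carbery--Wright applied to the determinant polynomial, exactly as cited via \cite{TENSOR,TENSOR2}) yields $|\det CE|\gtrsim \poly(\sigma/\poly(p,r))$ with probability $1-\delta$. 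Standard Cramer-style bounds then translate this into a condition-number estimate, and propagating (i)--(iii) through the two linear inversions yields the claimed $\tilde O(\poly(d)/\sqrt N)\cdot \poly(p,r,\mathcal A/\tau,1/\delta,1/\sigma)^{\poly(p,r)}$ bound.

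The hard part is the condition-number control for the generalized CE-matrix: unlike the cubic case, the matrix entries are polynomials of degree up to $2p$ in the $\alpha$'s, and the size $K_r^{(p)}$ grows with $p$, so both the Carbery--Wright exponent and the determinant lower bound must be tracked carefully to keep the final dependence polynomial in $1/\sigma$ (to the power $\poly(p,r)$) rather than worse. Once this is in hand, the rest is a bookkeeping extension of the cubic proof, and I would end by observing that the intro-component theorem supplies the required approximation of $\alpha$, so the two theorems chain together to give the Wasserstein bound in the main theorem for general $p$.
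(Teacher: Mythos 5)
Your plan captures two of the three main blocks of the paper's argument correctly: the concentration step for the empirical cross-moments, and the inversion of the generalized CE-matrix $T^*=CE[\alpha^{*(1)},\ldots,\alpha^{*(r)}]$ with its condition number controlled by Carbery--Wright anti-concentration applied to $\det T^*$, using the Gaussian smoothing of the weights. That matches the paper's Lemmas (the cubic analogues of Lemma \ref{lemma-14} and the bound on $\|(T^*)^{-1}\|_F$).

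The gap is in what you call the ``second inversion step (the analogue of Lemma \ref{inner_product}).'' After inverting $T^*$, what you recover is not anything linear in the inner products $P_{ab}=\tv_i^{(a)}\cdot\tv_j^{(b)}$: it is the vector $\tp$, whose coordinates are the symmetrized entries of the order-$p$ tensor $L=\sum_{j=1}^{r}\beta^{(j)}(P_{1j},\ldots,P_{rj})^{\otimes p}$ together with the lower-degree contractions involving $Q_j^2=1-\sum_a P_{aj}^2$. Extracting the individual $P_{ab}$ from $L$ is a genuine \emph{tensor decomposition} problem, not a linear system. The paper solves it via Jenrich's algorithm: contract $L$ with a suitably chosen vector to get a symmetric matrix, whiten, eigendecompose, and then run the perturbation analysis of eigenvalues/eigenvectors (Weyl's theorem, eigenvector perturbation, Wedin's theorem). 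Lemma \ref{inner_product} is used only to certify that one can pick a contraction vector $y$ with $|\langle \hat U_i^*,y\rangle|$ bounded below, so it has nothing to do with ``recovering $P_{ab}$ by a second linear inversion.'' Because you skip this step, you also miss the two quantitative lemmas it forces: the bound on the whitening matrix norm $\|G\|_F$ (which requires a lower bound on $\sigma_{\min}(U^*)$, proved by randomness over the orthogonal frame and contributing the $\poly(d)$ in the final rate), and the full-rank lemma for $\mathbf{P}^*$ (needed for uniqueness of the decomposition). Without the tensor decomposition step and its stability estimates, the chain from $\tp$ (or its empirical approximation) to $|\tv_i^{(a)}\cdot\tv_j^{(b)}-\tv_i^{*(a)}\cdot\tv_j^{*(b)}|$ does not close, so the proposal as written does not prove the theorem. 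The fix is straightforward in spirit --- carry the paper's whitening/Jenrich perturbation analysis over to order-$p$ tensors --- but that is precisely the nontrivial part you would need to supply.
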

\begin{Theorem}
[Main theorem for Polynomial Generators] With probability at least $1-D^{2}\delta$ over the choice of the target generator $G^*$, we can efficiently obtain a learner generator $G$ only by polynomial discriminators, such that:
\[W_{1}(G_{\#}\mathcal N(0,I), G^*_{\#}\mathcal N(0,I))
< \tilde{O}\left(\sqrt{D}\cdot \frac{\poly(d)}{N^{1/4}}\cdot \poly\left(p,r,\frac{\mathcal A}{\tau},\frac{1}{\delta},\frac{1}{\sigma}\right)^{\poly(p,r)}\right)\]
Here, just like the cubic occasion, each $\alpha_{k}^{*(i)}$ is sampled independently by distribution $\mathcal N(M,\sigma)$. Each $\tv_{k}=(\tv_{k}^{*(1)}, \tv_{k}^{*(2)},\cdots,\tv_{k}^{*(r)})\in\mathbb{R}^{d\times r}$ is the first $r$ column of an arbitrary orthogonal matrix. $M$ is a constant positive integer. We also assume that the target generator satisfies the $(\tau,\mathcal A)$-robustness. This theorem leads to the sample complexity estimation we need:
\[N=\tilde{\Omega}\left(\poly\left(p,r,\frac{\mathcal A}{\tau},\frac{1}{\delta},\frac{1}{\sigma}\right)^{\poly(p,r)}\cdot \poly\left(D,d,\frac{1}{\epsilon}\right)\right)\]
\end{Theorem}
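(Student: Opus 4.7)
The plan is to combine the two preceding results (Intro-component and Inter-component Moment Analysis) via union bounds, upgrade the pairwise inner-product information to an actual closeness of the weight vectors, and then convert parameter closeness into a Wasserstein bound through the natural Gaussian coupling. First, I would apply the Intro-component Moment Analysis to each coordinate $k\in[D]$ and take a union bound over the $D$ coordinates to conclude that, except on a probability-$D\delta$ event, the $r$ coefficients $\alpha_k^{(i)}$ match $\alpha_k^{*(i)}$ (after a permutation fixed by the ordering convention) up to error $\mu_1 = \tilde{O}(N^{-1/2})\,\poly(p,r,\mathcal{A}/\tau,1/\delta)^{\poly(p,r)}$. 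Next, conditional on $\alpha \approx \alpha^*$, I would apply the Inter-component Moment Analysis to all $\binom{D}{2}$ pairs and take another union bound, which—combined with the smooth-parameter event in the analysis of the CE-Matrix—pushes the failure probability to at most $D^2\delta$ and yields $|\tv_i^{(a)}\cdot\tv_j^{(b)}-\tv_i^{*(a)}\cdot\tv_j^{*(b)}|\leq \mu_2 = \tilde O(\poly(d)/\sqrt{N})\cdot\poly(\cdots)^{\poly(p,r)}$ for every $i\neq j$ and $a,b\in[r]$.

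The second step is to promote the pairwise inner-product bound to a genuine vector bound. Because each block $V_k=(\tv_k^{*(1)},\ldots,\tv_k^{*(r)})$ is column-orthonormal and the concatenations $(V_k,V_{k'})$ have $\sigma_{\min}\geq\sigma$, the Gram matrix of the full system is non-degenerate in a quantitative sense. Using Lemma~\ref{inner_product} (which converts an inner-product mismatch into a vector mismatch through the inverse of a well-conditioned Gram factor), I would obtain, after an innocuous orthogonal rotation of each $V_k$, that $\|\tv_k^{(i)}-\tv_k^{*(i)}\|_2 \leq \mu_3$ where $\mu_3 = \poly(1/\sigma,r)\cdot\mu_2$. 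This rotation does not affect the pushed-forward distribution since $\omega\sim\mathcal N(0,I_d)$ is orthogonally invariant.

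The third step converts the parameter closeness into a Wasserstein-1 bound via the identity coupling $\omega=\omega$ on both sides:
\[
W_1\bigl(G_{\#}\mathcal N(0,I),G^*_{\#}\mathcal N(0,I)\bigr)\leq \mathbb{E}_{\omega}\|G(\omega)-G^*(\omega)\|_2 \leq \sqrt{D}\Bigl(\max_k \mathbb{E}_{\omega}|G_k(\omega)-G^*_k(\omega)|^2\Bigr)^{1/2}.
\]
For a fixed $k$, write $G_k(\omega)-G_k^*(\omega)=\sum_i(\alpha_k^{(i)}-\alpha_k^{*(i)})(\tv_k^{(i)}\!\cdot\!\omega)^p + \sum_i \alpha_k^{*(i)}\bigl((\tv_k^{(i)}\!\cdot\!\omega)^p-(\tv_k^{*(i)}\!\cdot\!\omega)^p\bigr)$. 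Expand the second term using $a^p-b^p=(a-b)\sum_{j=0}^{p-1}a^j b^{p-1-j}$ with $a=\tv_k^{(i)}\!\cdot\!\omega$ and $b=\tv_k^{*(i)}\!\cdot\!\omega$, so that each summand is controlled by $\|\tv_k^{(i)}-\tv_k^{*(i)}\|_2\cdot\|\omega\|_2^{p}$ up to a constant depending on $p,\mathcal A$. Taking the $L^2(\mathcal N(0,I_d))$-norm uses Gaussian hypercontractivity: $\mathbb E[(g\cdot\omega)^{2p}]=O_p(\|g\|_2^{2p})$. Plugging in $\mu_1$ and $\mu_3$ and pulling out the polynomial-in-$d$ factors from $\|\omega\|_2$ moments yields the claimed $\sqrt D\cdot\poly(d)/N^{1/4}$ rate (the square root arises from passing from an $N^{-1/2}$ moment-matching bound to the $L^2$ Wasserstein bound; the $\poly(d)$ absorbs the $\|\omega\|$-moments and the factor $1/\sigma$ from the Gram inversion).

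The main obstacle will be the quantitative vector recovery step: the inter-component analysis only provides inner products, and turning a uniform $\mu_2$-bound on all pairwise inner products into a uniform $\mu_3$-bound on $\|\tv_k^{(i)}-\tv_k^{*(i)}\|_2$ requires a lower bound on $\sigma_{\min}$ of the stacked Gram matrix across different blocks, which is exactly where the smoothed-analysis assumption on $V_k$ (together with the $p$-generic CE-matrix condition) enters. The rest—union bounds, the hypercontractive polynomial moment estimates, and finally the sample-complexity inversion $N=\tilde\Omega(\poly(\cdots)\cdot\poly(D,d,1/\varepsilon))$—are routine once that bridge is in place.
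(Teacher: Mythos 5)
Your proposal follows the same high-level route as the paper: apply the intro- and inter-component moment analyses with union bounds, convert the Gram (inner-product) information into a genuine vector-level bound via matrix perturbation theory, use the orthogonal invariance of $\mathcal N(0,I_d)$ to absorb the residual orthogonal ambiguity into a coupling, and then bound $W_1$ by the $L^2$ norm of $G-G^*$ along that coupling, splitting the difference into an $\alpha$-part and a $\tv$-part and controlling each with Gaussian moment estimates. The $N^{-1/4}$ rate and the $\sqrt{D}\cdot\poly(d)$ factors appear for the same reasons in both arguments.

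Two points need correction. First, the lemma you invoke to promote inner-product closeness to vector closeness is mislabeled: Lemma~\ref{inner_product} only asserts the existence of a unit vector $y$ with $|\langle x_i,y\rangle|\geq 2/(\pi dk)$ for all given unit vectors $x_i$; it is used inside the whitening step of the tensor-decomposition analysis, not for Gram inversion. The lemma that actually performs the step you describe is Lemma~\ref{orthogonal_distance}, which uses Weyl's and Wedin's theorems to go from $\|AA^T-A^*(A^*)^T\|_2<\epsilon$ (with a lower bound on the singular values of $A^*$) to $\min_{G_1,G_2\;\text{orthogonal}}\|AG_1-A^*G_2\|_F\leq\sqrt{d\epsilon}+O(\epsilon)$. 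Second, you write ``an innocuous orthogonal rotation of each $V_k$,'' which reads as if each block $V_k=(\tv_k^{(1)},\ldots,\tv_k^{(r)})$ could be rotated independently. That would be incorrect: all $V_k$ multiply the \emph{same} latent $\omega$, so the rotation must be a single $d\times d$ orthogonal matrix applied simultaneously to the full stacked matrix $\mathcal V\in\mathbb{R}^{Dr\times d}$ (and a second single rotation for $\mathcal V^*$); the pushforward-invariance argument $G(G_1^T\omega)\overset{d}{=}G(\omega)$ only works with a global rotation. This is precisely how Lemma~\ref{orthogonal_distance} is applied in the paper. With those two corrections, your outline matches the paper's proof.
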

% \vspace{-7mm}
\section{Experimental Results}
% \vspace{-2mm}
In this section, we conduct some simple experiments to test and verify our conclusions. In our experiment, we use quadratic functions, 4-degree functions or 6-degree functions with rank 2 as both learner and target generator $G,G^*:\mathbb{R}^{4}\rightarrow\mathbb{R}^{8}$. 
\[G_{k}(\omega)=\sum_{i=1}^{r}(\alpha_{k}^{(i)}\cdot\omega)^{p},~G_{k}^*(\omega)=\sum_{i=1}^{r}(\alpha_{k}^{*(i)}\cdot\omega)^{p}~~~k\in[D]\]
In the following figures, we use linear functions, quadratic functions, 1-layer ReLU networks, 2-layer ReLU networks as discriminators respectively. The left one shows the change of Wasserstein loss $L(G,D)$ by iterations while the right one shows the change of the parameter distance between the learner generator and target generator. We calculate the actual parameter distance by directly comparing their parameters:
$d(G,G^*)=\|K^{T}K-K^{*T}K^*\|_{F}$.

Here, $K=(\alpha_{1}^{(1)},\alpha_{1}^{(2)},\cdots,\alpha_{D}^{(1)},\alpha_{D}^{(2)}),~ K^*=(\alpha_{1}^{*(1)},\alpha_{1}^{*(2)},\cdots,\alpha_{D}^{*(1)},\alpha_{D}^{*(2)})$. As our results show, the parameter distance can well approximate the actual Wasserstein distance between $G_{\#}\mathcal N(0,I)$ and $G^*_{\#}\mathcal N(0,I)$.
While training, we use the Wasserstein loss with gradient penalty as loss function. We use 1e-3 and 1e-4 as the learning rate of the generator optimization step and discriminator optimization step. From the results, we can see that, when using quadratic and cubic discriminators, the parameter distance does not converge to 0. That's because quadratic and cubic discriminators are too weak to distinguish different generators, or in other words, using moments $\leq 3$ are not enough. When using quartic functions or higher complexity function (like 1-layer and 2-layer ReLU networks) as discriminators to match higher moments, the Wasserstein loss converges and the parameter distance also converges to 0 very quickly. 

% \begin{figure}[!ht]
% \centering
% \subfigure[Wasserstein Loss]{
% \label{fig:loss_linear}
% \begin{minipage}[t]{0.27\linewidth}
% \centering
% \includegraphics[width=\linewidth]{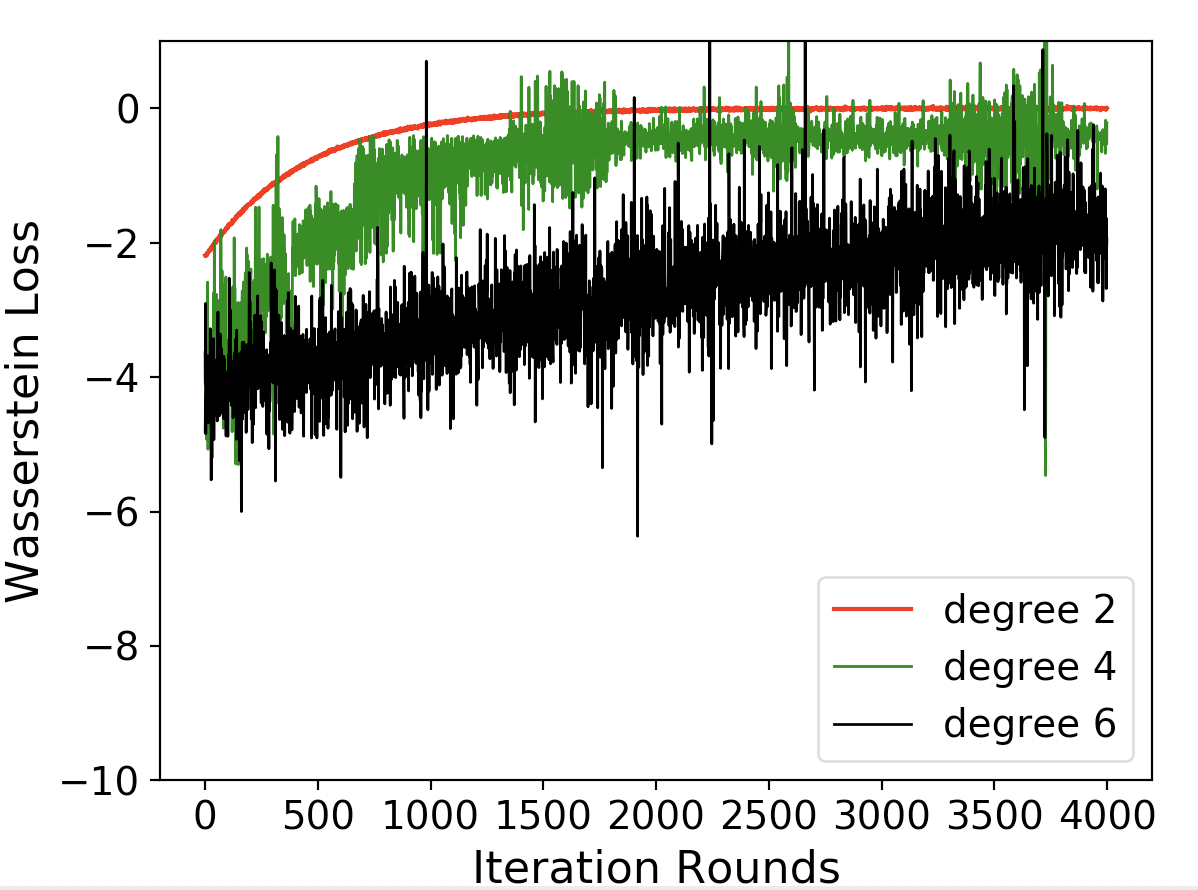}
% \end{minipage}%
% }%
% \subfigure[Parameter Distance]{
% \label{fig:distance_linear}
% \begin{minipage}[t]{0.27\linewidth}
% \centering
% \includegraphics[width=\linewidth]{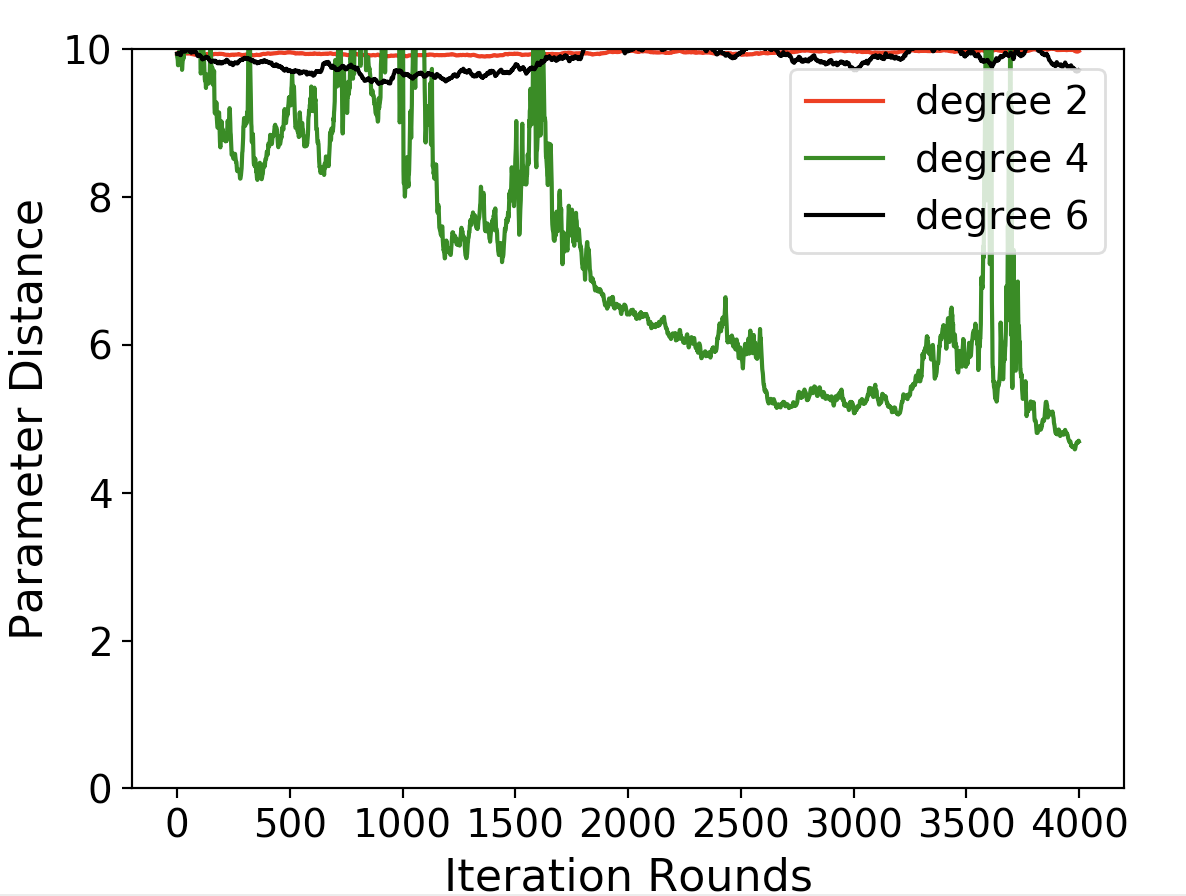}
% \end{minipage}%
% }%
% \caption{Discriminators as Linear Functions}
% \end{figure}

\begin{figure}[!ht]
\centering
\subfigure[quadratic]{
\begin{minipage}[t]{0.20\linewidth}
\centering
\includegraphics[width=\linewidth]{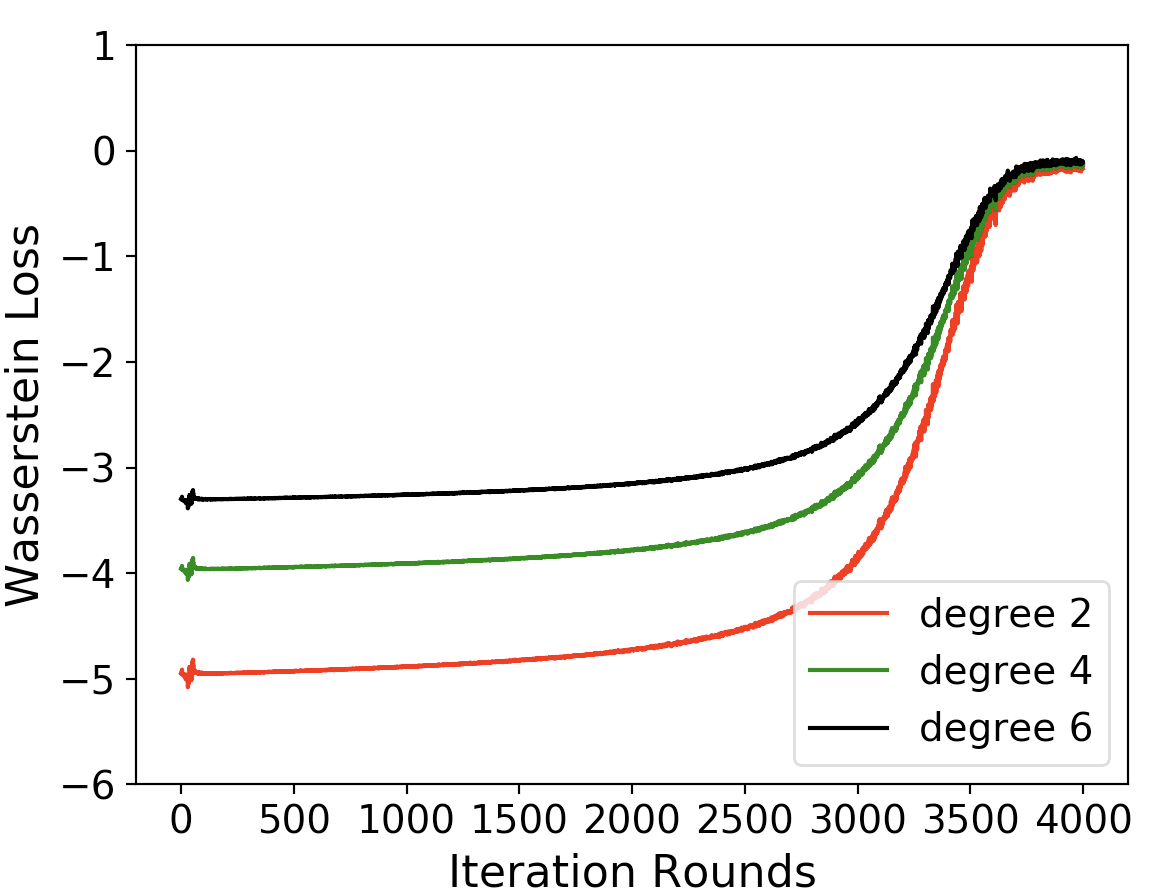}
\includegraphics[width=\linewidth]{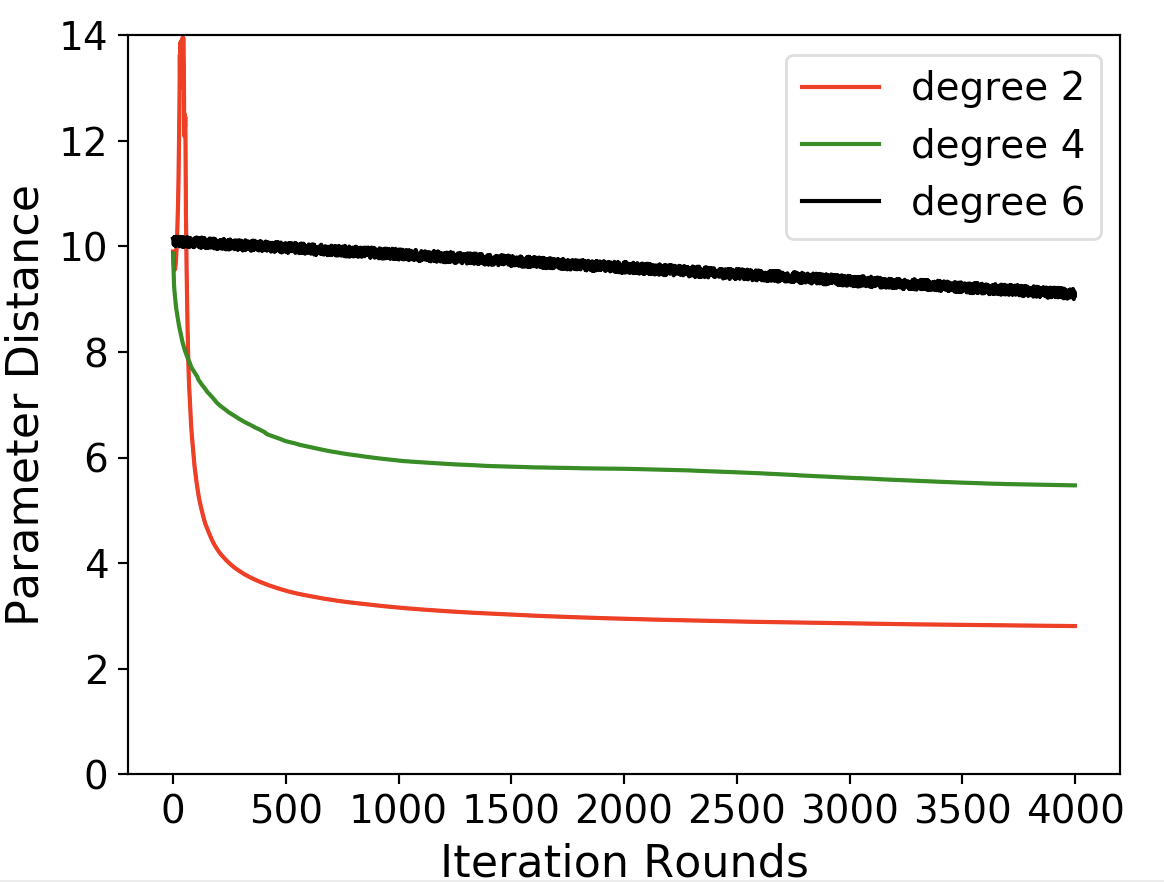}
\end{minipage}%
}%
\subfigure[cubic]{
\begin{minipage}[t]{0.20\linewidth}
\centering
\includegraphics[width=\linewidth]{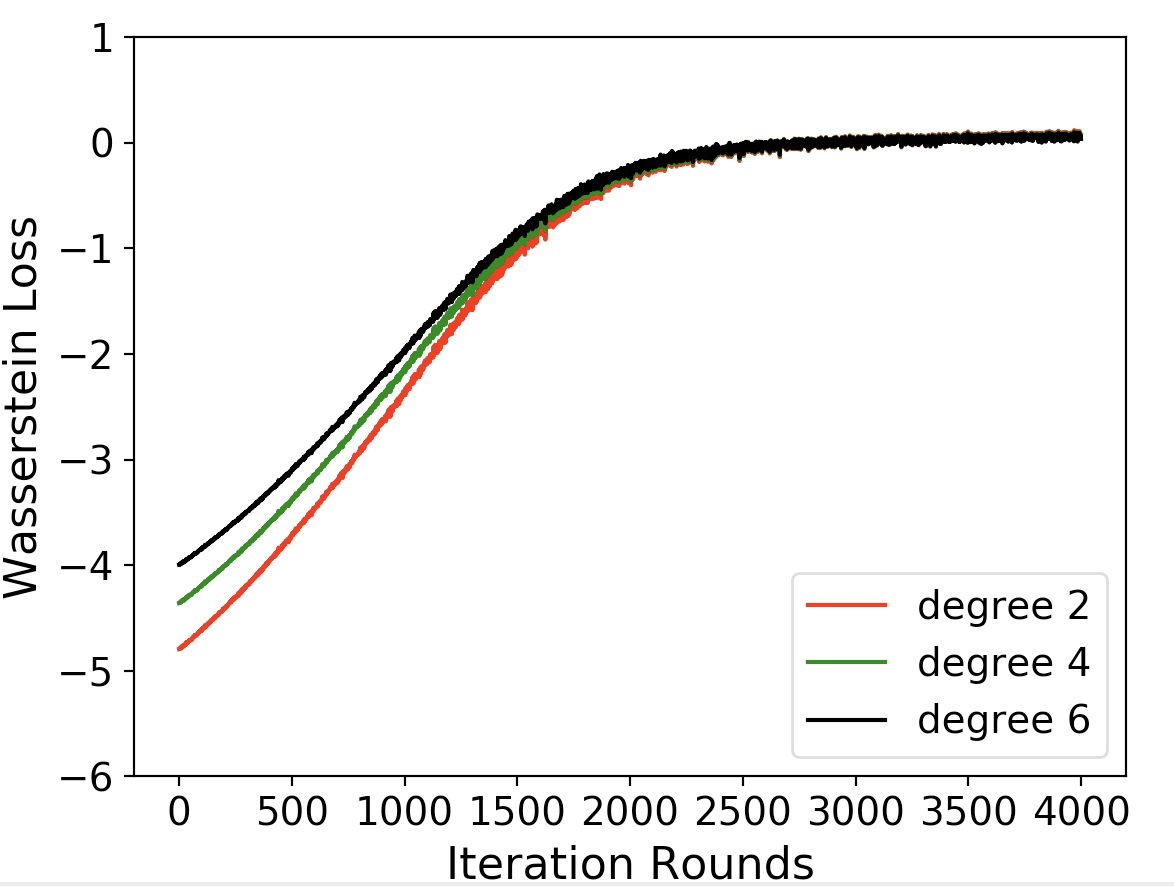}
\includegraphics[width=\linewidth]{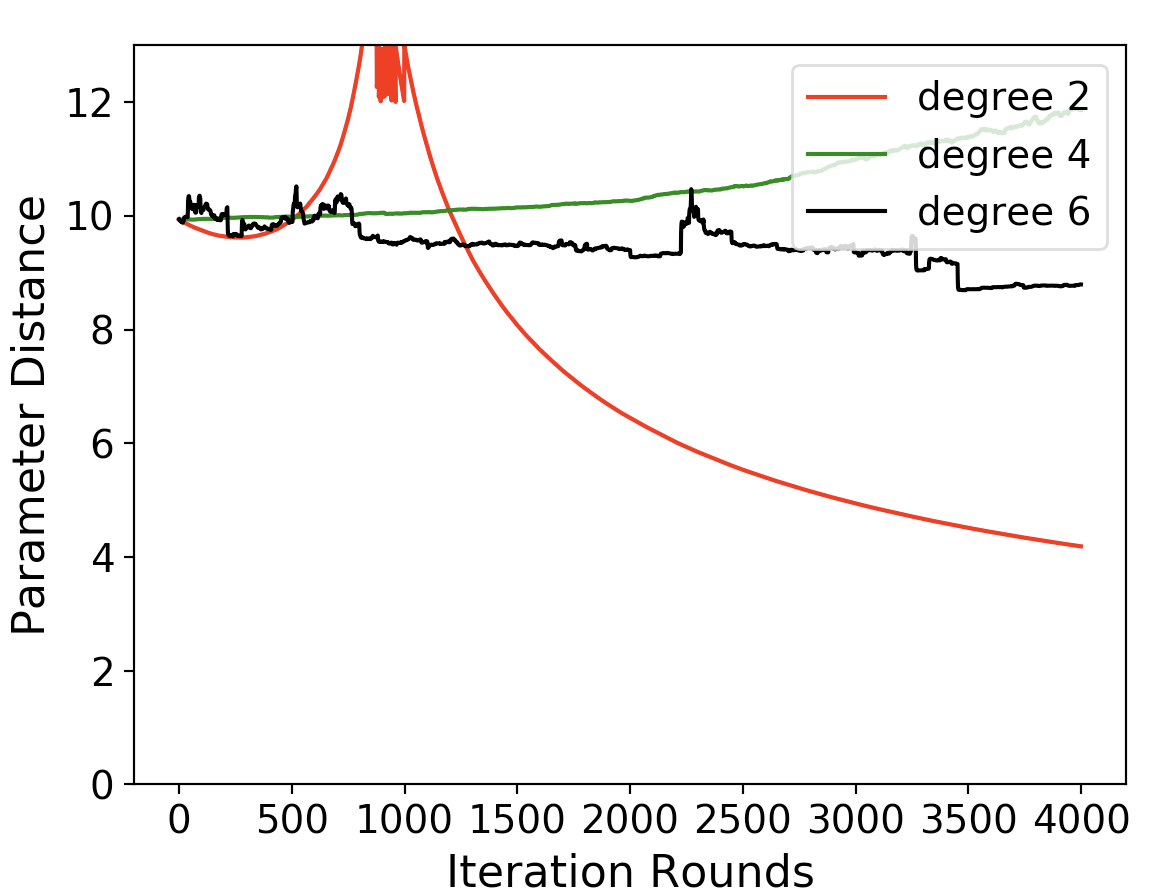}
\end{minipage}%
}%
\subfigure[quartic]{
\begin{minipage}[t]{0.20\linewidth}
\centering
\includegraphics[width=\linewidth]{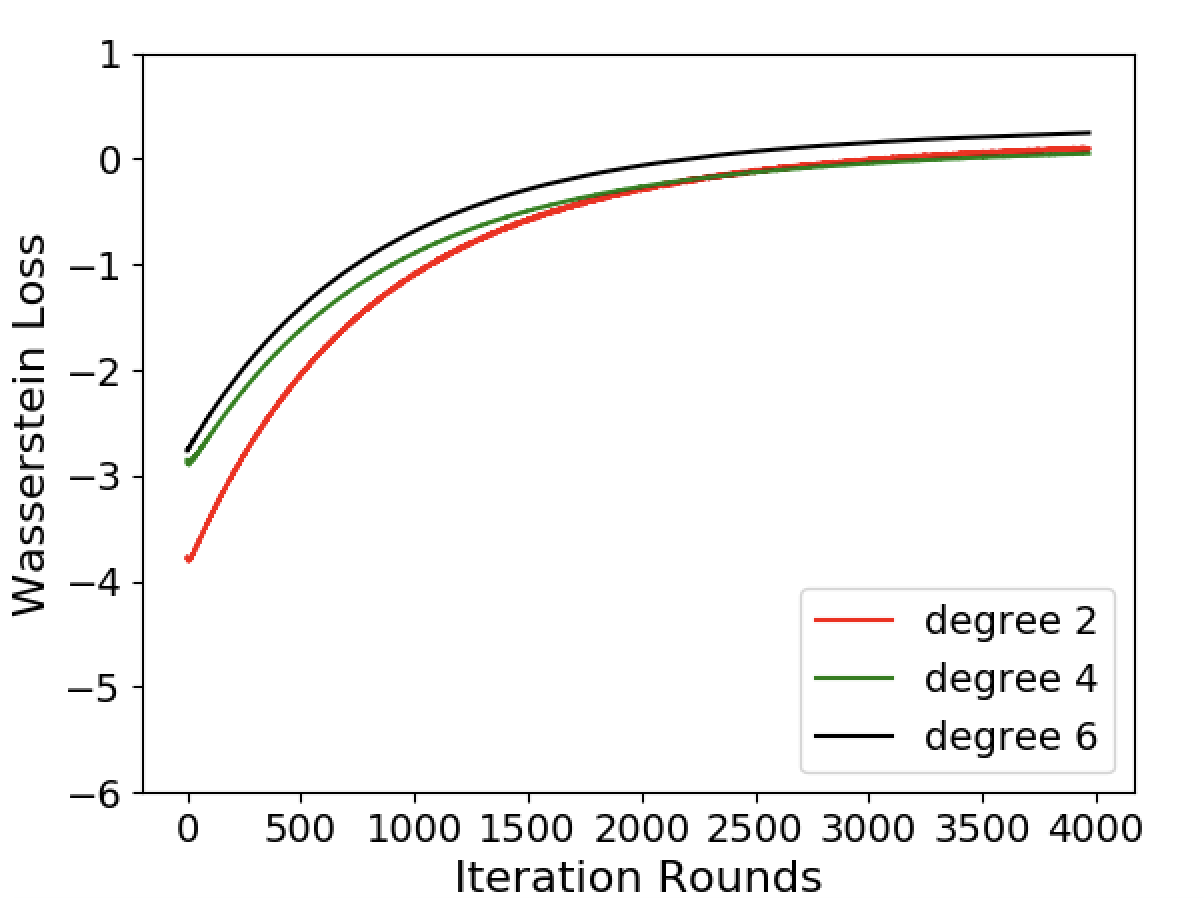}
\includegraphics[width=\linewidth]{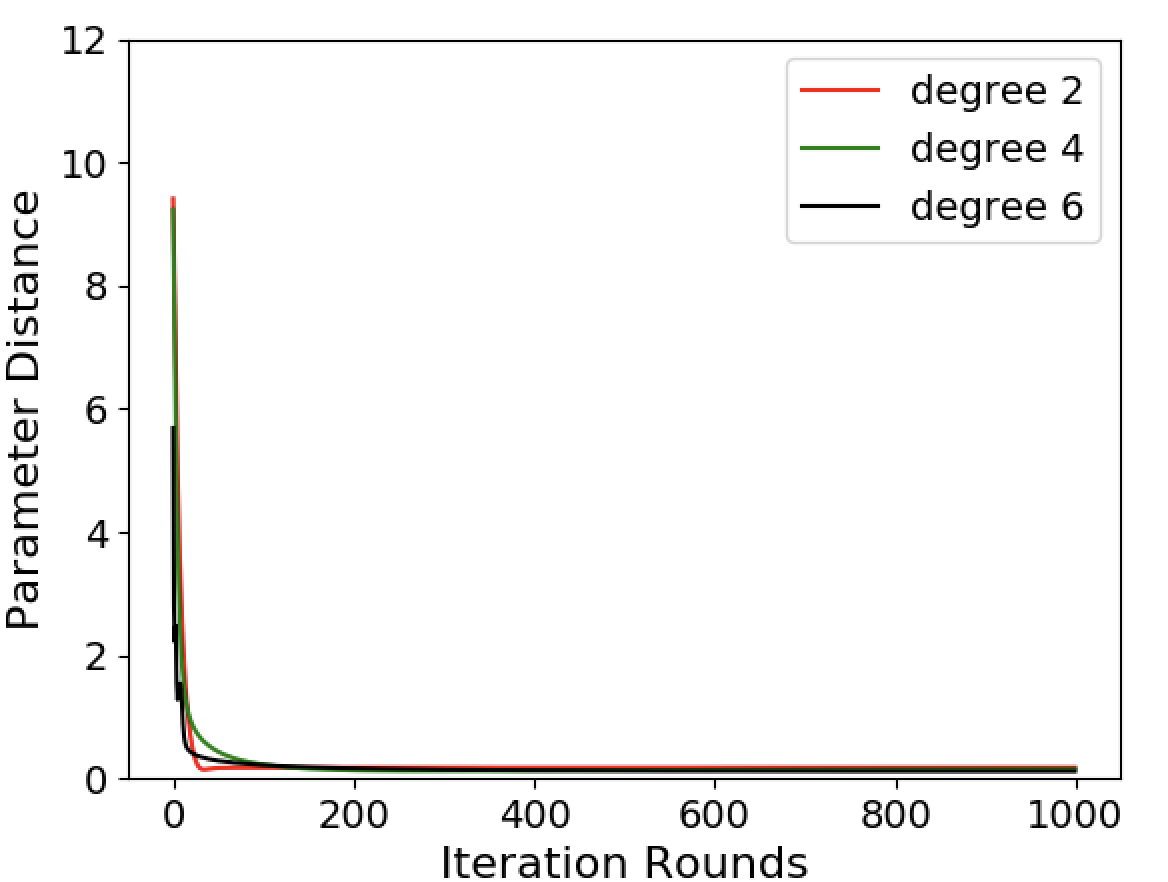}
\end{minipage}%
}%
\subfigure[1-layer ReLU]{
\begin{minipage}[t]{0.20\linewidth}
\centering
\includegraphics[width=\linewidth]{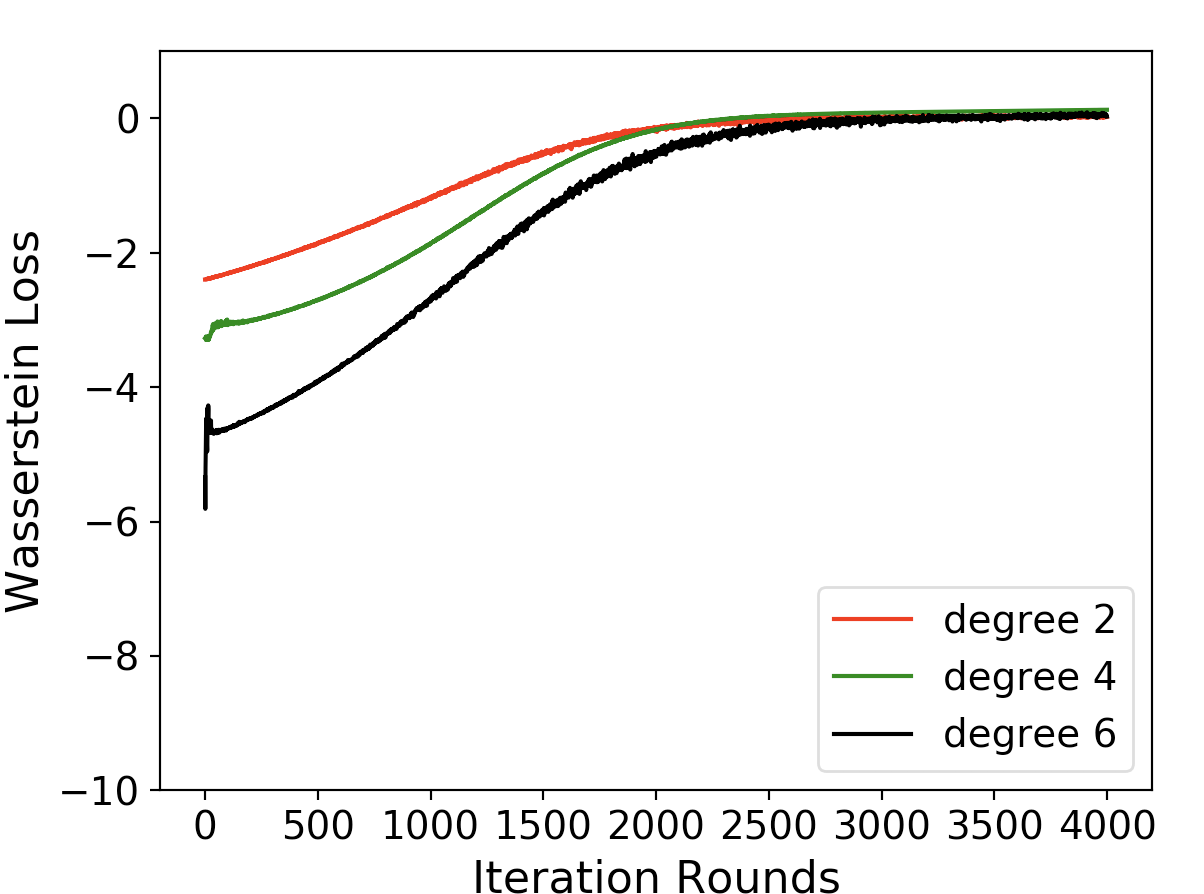}
\includegraphics[width=\linewidth]{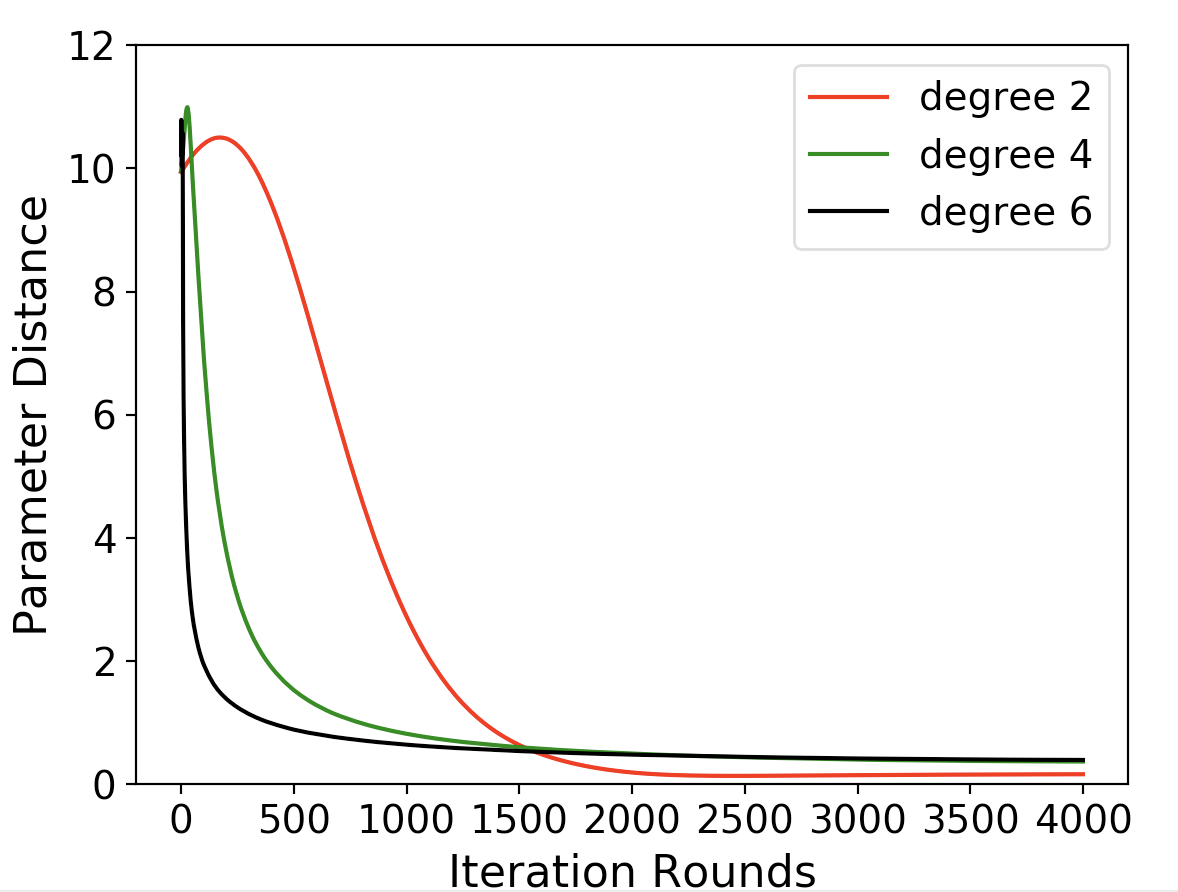}
\end{minipage}%
}%
\subfigure[2-layer ReLU]{
\begin{minipage}[t]{0.20\linewidth}
\centering
\includegraphics[width=\linewidth]{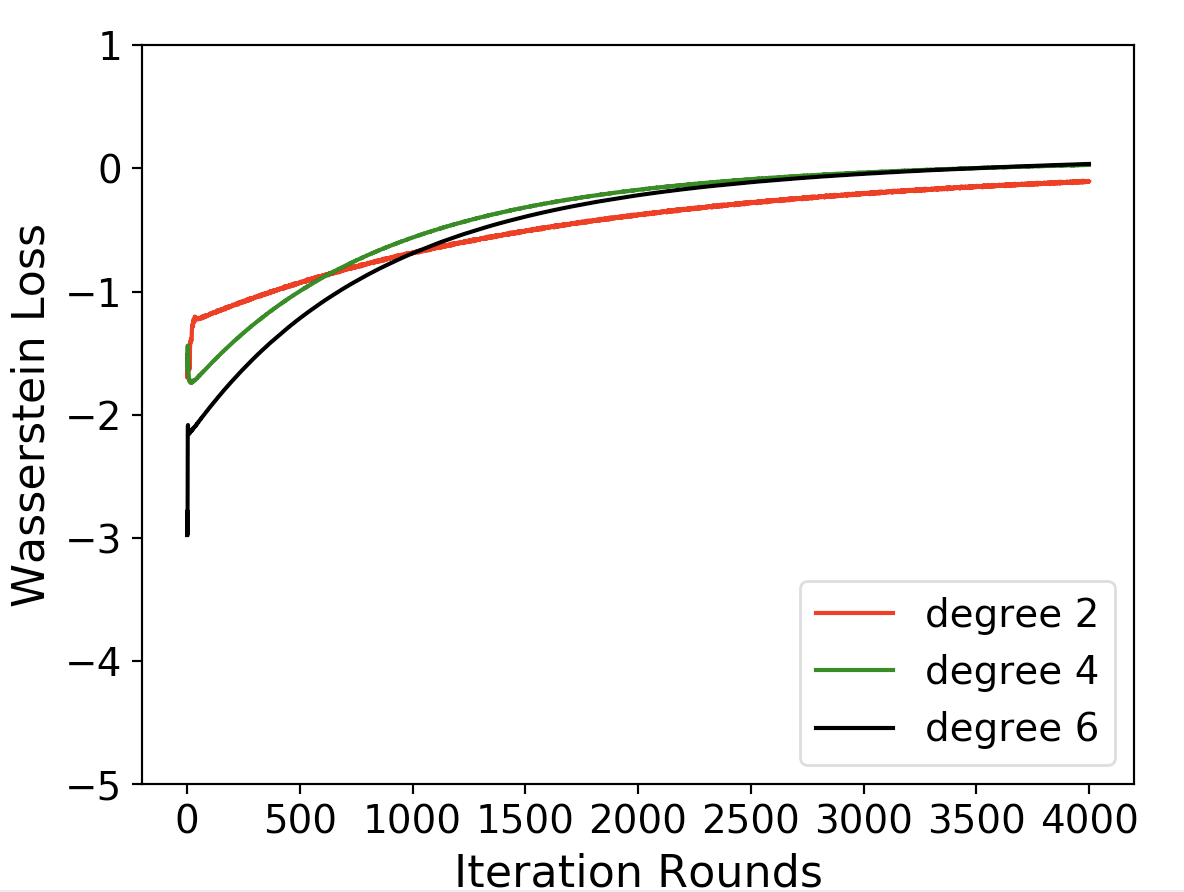}
\includegraphics[width=\linewidth]{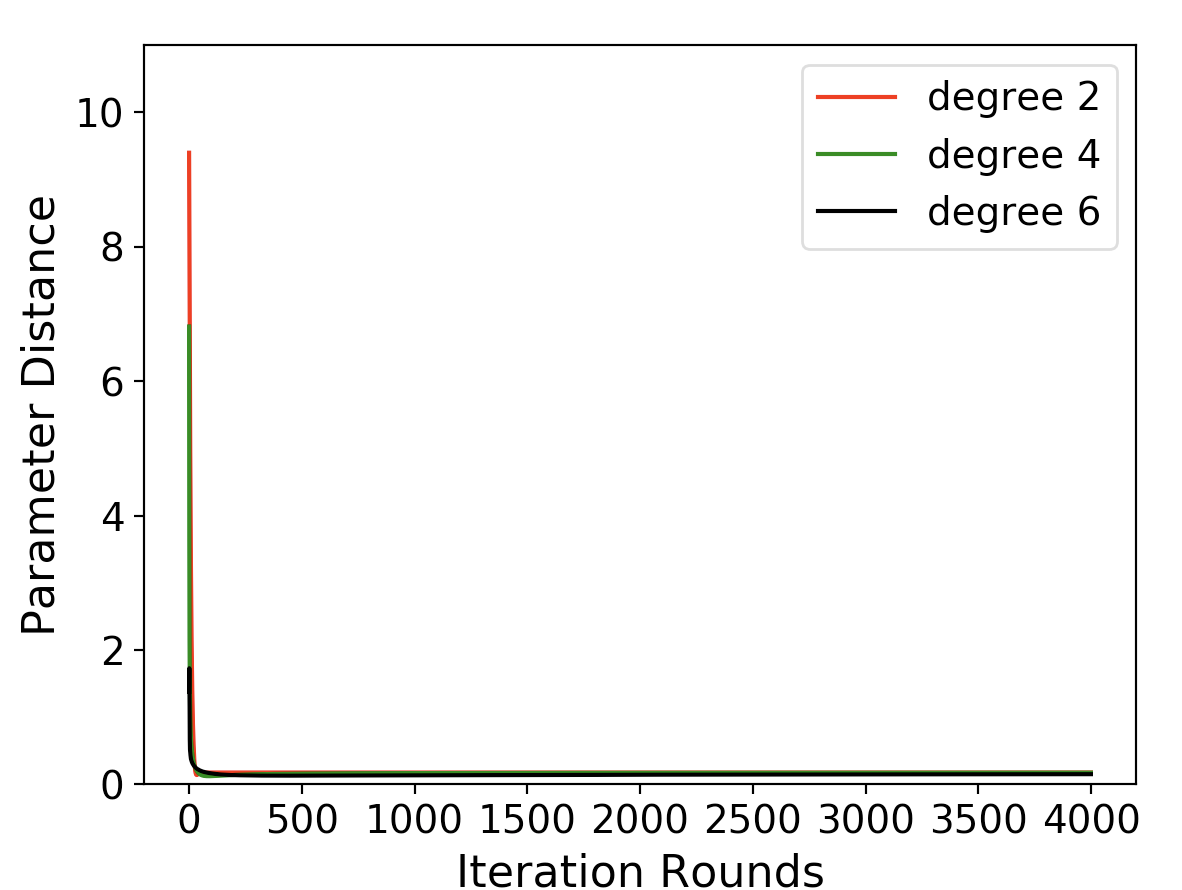}
\end{minipage}%
}%
\caption{These five figures show us the Wasserstein loss (top) and the parameter distance (bottom) curvature when discriminators are set as quadratic function, cubic function, quartic function, 1-layer ReLU network and 2-layer ReLU network respectively.}
% \vspace{-8mm}
\end{figure}

% \vspace{-5mm}
\section{Discussion}
In this paper we made a preliminary step understanding how the generator-discriminator training framework in GANs can learn the true distribution. We immediately see a sequence of open directions following our work: First, one can study the later phase of the training, when the discriminator starts to learn more complicated functions than low degree polynomials. Second, one can hope to extend our framework to generator neural networks with more layers. The recent advance in deep learning~\cite{AL2019-resnet,allen2020backward} that extends the study of two-layer neural networks~\cite{kawaguchi2016deep,soudry2016no,xie2016diversity,ge2017learning,soltanolkotabi2017theoretical,tian2017analytical,brutzkus2017globally,zhong2017recovery,li2017convergence,boob2017theoretical,li2017algorithmic,vempala2018polynomial,bakshi2018learning,zhang2018learning,li2017provable,li2016recovery} might make this direction a not-so-remote goal.  Another possible direction is to extend our study to other popular target distributions such as sparse coding/non-negative matrix factorization~\cite{arora2016latent,arora2018linear,li2016recovery,li2017provable,allen2020feature}.

\bibliographystyle{plain}
\bibliography{reference}

\newpage
\appendix
\section{Other Forms of Generators and Discriminators}
In this section, we will show that cubic generator can be extended to higher degree generator, and the polynomial-type discriminator can be extended to a more general two-layer ReLU network. 

\subsection{Extension of Discriminators to 2-layer Networks}
On the other hand, we will discuss the extension from polynomial type discriminator to two-layer ReLU discriminator. According to the existing results \cite{ma2019priori, barron1993universal}, we know that any Lipschitz function can be approximated by two-layer ReLU networks under $L_{\infty}$, which obviously include the polynomial functions with finite degree. Therefore, a more general two-layer ReLU networks can overlap the polynomial functions, which makes our conclusion verified.

\subsection{Extension of Cubic Generators to 2-layer Generators with Polynomial Activation}
In the cubic generators, we have the following form:
\[G_k^*(\omega)=\sum_{i=1}^{r}\alpha_k^{*(i)}h(\tv_k^{*(i)}\cdot \omega)~~(k=1,2,\cdots,D).\]
Here, $h(z)=z^3$ is the cubic function. In this section, we introduce a more general setting where the activation function $h$ can be any high-degree polynomial with generic condition. Under this new setting, we name our generators to be polynomial generators, which is much more general than cubic ones. For polynomial generators, we have similar results as that of cubic generators above in Section 5. 

For intro-component moment analysis, the only difference we have is the moments of random variables $Y=h(\omega)$ and $Z=\omega^3$. It will change some coefficients in the proofs of Theorem 1 and Theorem 3, but won't change the proof lines. Similarly, we have:
\begin{Theorem}
[Intro-component Moment Analysis for Polynomial Generators] For each $k\in[D]$, we list $r$ equations:
\[M^{i}(G_k)=\frac{1}{N}\sum_{t=1}^{N}\left(G_{k}^*(\omega^{(t)})\right)^{i}=\overline{M^{i}}(G_{k}^*)~~~(i=2,4,\cdots,2r)\]
where, $G_k(\omega)=\sum_{i=1}^{r}\alpha_k^{(i)}h(\tv_k^{(i)}\cdot \omega)$ is the learner generator,  $G_k^*(\omega)=\sum_{i=1}^{r}\alpha_k^{*(i)}h(\tv_k^{*(i)}\cdot \omega)$ is the target generator, and $h$ is a degree $p$ polynomial activation function. After solving these equations, we can get a unique learner generator $G_{k}$ which satisfies:
\[\left(\sum_{i=1}^{r}|\alpha_{k}^{(i)}-\alpha_{k}^{*(i)}|^{2}\right)^{1/2}<\tilde{O}\left(\frac{1}{\sqrt{N}}\cdot poly\left(\kappa,p,r,\frac{\mathcal A}{\tau},\frac{1}{\delta}\right)^{poly(\kappa, p,r)}\right)\]
holds for $\forall k\in [D]$ with probability at least $1-\delta$ over the choice of samples $\omega^{(t)}$. Here, $\kappa$ is the degree of polynomial $h$.
\end{Theorem}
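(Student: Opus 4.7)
The plan is to reduce this polynomial-activation analysis to the same one-dimensional moment-recovery problem that was handled for $h(z)=z^3$ in Theorem 3, with the cubic moments $\mathbb E[Z^{3m}]$ replaced by the more general constants $\mu_m := \mathbb E[h(Z)^m]$, $Z\sim\mathcal N(0,1)$. The crucial reduction is that since $(\tv_k^{(1)},\ldots,\tv_k^{(r)})$ and $(\tv_k^{*(1)},\ldots,\tv_k^{*(r)})$ are orthonormal and $\omega\sim\mathcal N(0,I_d)$, the scalar projections $Y_i := \tv_k^{(i)}\cdot\omega$ and $Y_i^* := \tv_k^{*(i)}\cdot\omega$ are each i.i.d.\ $\mathcal N(0,1)$. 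Thus $G_k(\omega)$ and $G_k^*(\omega)$ are equal in distribution to $\sum_i \alpha_k^{(i)} h(Y_i)$ and $\sum_i \alpha_k^{*(i)} h(Y_i)$ respectively, so the entire intro-component problem collapses to the scalar question of recovering positive weights $\alpha_k^{(i)}$ from the first $r$ even moments of a scalar mixture; the direction vectors drop out of this stage entirely.

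The four main steps are: \textbf{(A)} Expand via multinomial and independence to obtain $M^{2j}(G_k) = P_{2j}(\alpha_k^{(1)},\ldots,\alpha_k^{(r)})$ where $P_{2j}(\alpha) = \sum_{|\vec n|=2j}\binom{2j}{\vec n}\prod_i \alpha_i^{n_i}\mu_{n_i}$ is a symmetric polynomial of degree exactly $2j$ with explicit, $h$-dependent coefficients. \textbf{(B)} Show that the moment map $\Phi:(\alpha_1,\ldots,\alpha_r)\mapsto(P_2,P_4,\ldots,P_{2r})$, restricted to the ordered $(\tau,\mathcal A)$-robust cell $\{\tau<\alpha_1<\cdots<\alpha_r<\mathcal A,\ |\alpha_i-\alpha_j|>\tau\}$, has a Lipschitz inverse of norm $\poly(\mathcal A/\tau,\kappa,r)^{\poly(\kappa,r)}$; the target claim is that $\det(\partial\Phi/\partial\alpha)$ factors into a non-vanishing $h$-dependent scalar (coming from the generic condition on $h$) times a Vandermonde $\prod_{i<j}(\alpha_i^2-\alpha_j^2)$, which on the robust cell is at least $\tau^{\Omega(r^2)}$. \textbf{(C)} Concentrate the empirical moments: $\overline{M^{2j}}(G_k^*)$ is the mean of $N$ i.i.d.\ polynomials of degree $2j\kappa$ in a Gaussian vector, and after truncating $\|\omega\|_2\lesssim\sqrt{d\log(rN/\delta)}$ and applying Hoeffding one obtains $|\overline{M^{2j}}-M^{2j}(G_k^*)|=\tilde O((\mathcal A\kappa)^{\poly(\kappa,r)}/\sqrt N)$ with failure probability $\delta/r$; union-bound over $j$. \textbf{(D)} Compose (B) and (C) to lift the moment-level error $O(1/\sqrt N)$ to a weight-level error of the claimed order, with the ordering constraint on the robust cell and the positivity $\alpha_k^{*(i)}>0$ pinning down the correct permutation.

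The main obstacle is the Jacobian factorization in step (B) for general polynomial $h$ rather than the pure monomial $h(z)=z^3$. In the cubic case all odd moments $\mu_{2m+1}$ vanish, so $P_{2j}$ is a polynomial in $\{\alpha_i^2\}$ and Newton's identities reduce the Jacobian to a textbook Vandermonde in $\{\alpha_i^2\}$. For general polynomial $h$ the odd moments can be non-zero, so $P_{2j}$ contains mixed odd/even-power terms and one must compute $\partial P_{2j}/\partial\alpha_i = 2j\,\mathbb E[h(Y_i)\,S^{2j-1}]$ with $S=\sum_\ell \alpha_\ell h(Y_\ell)$ directly; expanding by independence of $Y_i$ from $S-\alpha_i h(Y_i)$ and regrouping by powers of $\alpha_i$ identifies the $i$-th column of the Jacobian as an $h$-dependent linear combination of power-vectors in $\alpha_i$, and the claimed factorization reduces to a single symbolic identity whose non-vanishing leading coefficient is exactly the $p$-generic condition with $p=\kappa$ (already verified numerically for the relevant range in the paper). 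Once (B) is in hand, the remainder of the proof is a line-by-line adaptation of the cubic argument, and the $\kappa$-dependence in the final rate enters only through the crude Gaussian-moment bound $|\mu_m|\le (\kappa m)^{O(\kappa m)}$.
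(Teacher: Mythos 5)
Your reduction to a scalar moment problem (step A) matches what the paper does via its Lemmas~\ref{gaussian-elementary} and \ref{elementary-1}, and you are right to flag that the paper's own appeal to ``the same proof lines as Theorems 1 and 3'' is suspect once $h$ has non-vanishing odd moments: the cubic argument's inductive reduction expresses $M^{2n}$ as a polynomial in the power sums $F_\ty[n]=\sum_i\alpha_i^{2n}$, which uses that all odd moments of $h(Y)$ vanish (true for odd $h$ such as $z^3$). For a general degree-$p$ polynomial $h$ already $M^2=\mu_2\sum_i\alpha_i^2+\mu_1^2\sum_{i\ne j}\alpha_i\alpha_j$ is not a function of $\{\alpha_i^2\}$, so that route breaks.

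However, your proposed fix in step (B) contains a genuine error. You assert that $\det(\partial\Phi/\partial\alpha)$ factors as a non-vanishing $h$-dependent \emph{scalar} times the Vandermonde $\prod_{i<j}(\alpha_i^2-\alpha_j^2)$; in fact the residual factor is $\alpha$-dependent and requires its own lower bound. Already for $r=2$, $h(z)=z^2$ one computes
\[
\det\!\left(\frac{\partial\Phi}{\partial\alpha}\right)=-(\alpha_1^2-\alpha_2^2)\bigl[480(\alpha_1^2+\alpha_2^2)+2112\,\alpha_1\alpha_2\bigr].
\]
The source of the error is that the $i$-th Jacobian column is \emph{not} a power vector in $\alpha_i$ alone: expanding by independence gives
\[
\frac{\partial M^{2j}}{\partial\alpha_i}=2j\sum_{m=0}^{2j-1}\binom{2j-1}{m}\alpha_i^{m}\,\mu_{m+1}\,\mathbb{E}\Bigl[\bigl(\textstyle\sum_{\ell\ne i}\alpha_\ell h(Y_\ell)\bigr)^{2j-1-m}\Bigr],
\]
and the expectations in brackets are polynomials in the other $\alpha_\ell$'s, so they contaminate the column. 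Moreover, the paper's $p$-generic condition (the $p$-th order Expectation Matrix in Appendix A) concerns the CE-matrix from the \emph{inter}-component tensor-decomposition step, not the intro-component Jacobian, so citing it here is a misattribution. Note also that the paper's cubic proof does not invert the moment-map Jacobian directly: it recovers the power sums iteratively (Theorem~\ref{intro-component} and Lemma~\ref{moment-upper-bound}) and then uses the explicit Vandermonde inverse for the power-sum map $\gamma$ in $\{\alpha_i^2\}$ only. Your step (C) (truncation plus Hoeffding) is a more elementary alternative to the paper's Schudy--Sviridenko polynomial concentration and is inessential; the real unaddressed gap is a quantitative lower bound on $|\det(\partial\Phi/\partial\alpha)|$ over the robust cell, which does not follow from the constant-times-Vandermonde factorization you propose.
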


For inter-component analysis, the situation is also very similar. Before we begin to introduce the generalized moment analysis, we extend the CE-Matrix in Section 5 to a $p$-th order expectation matrix and set up a $p$-generic condition.
\begin{Definition}[$p$-th order Expectation Matrix]
Given a dimension $r$, $r$ coefficients $\lambda_{1},\lambda_{2},\ldots,\lambda_{r}$ and $r$ independent random variables $\omega_{1},\omega_{2},\ldots,\omega_{r}$ which follows the standard Gaussian distribution. Denote: \[Q=\lambda_{1}\omega_{1}^{p}+\lambda_{2}\omega_{2}^{p}+\cdots+\lambda_{r}\omega_{r}^{p}\]
Then we list all the odd-degree monomials with variables $\omega_{1},\cdots,\omega_{r}$ whose degrees are no larger than $p$. There are altogether $K_r^{(p)}=\sum_{1\leq i\leq (p+1)/2}\binom{r+2i-2}{2i-1}$ different monomials. We denote them as: $P_{1},P_{2},\cdots,P_{K_r^{(p)}}$. 
The $p$-th order Expectation Matrix is a $K_r^{(p)}\times K_r^{(p)}$ polynomial matrix $T$ in the following form:
\[T_{ij}=\mathop\mathbb{E}\limits_{\omega\sim \mathcal N(0,1)}\left[P_{i}\cdot Q^{2j-1}\right]~~i,j\in[K_r^{(p)}].\]
If the determinant of $T$ (which is a polynomial over $\lambda_{1},\lambda_{2},\ldots,\lambda_{r}$) is not the zero polynomial, we say that $r$ satisfies the $p$-generic condition.
\end{Definition}

\begin{Theorem}
[Inter-component Moment Analysis for Polynomial Generators] We assume that $r$ satisfies the $p$-generic condition. For each $i\neq j\in[D]$, we list $K$ equations:
\[M^{k,1}(G_{i},G_{j})=\frac{1}{N}\sum_{t=1}^{N}(G_{i}^*(\omega^{(t)}))^{k}\cdot(G_{j}^*(\omega^{(t)}))=\overline{M^{k,1}}(G_{i}^*,G_{j}^*)~~(k=1,2,\cdots,K),\]
where, $K=\sum_{1\leq i\leq (p+1)/2}\binom{r+2i-2}{2i-1}$. After solving these equations, we can get a learner generator $G_{k}$, such that: with probability larger than $1-\delta$ over the choice of $\alpha_{k}^{*(i)}, \tv_{k}^{*(i)}~k\in[D],i\in[r]$, it holds that:
\[|\tv_{i}^{(a)}\cdot\tv_{j}^{(b)}-\tv_{i}^{*(a)}\cdot\tv_{j}^{*(b)}|<\tilde{O}\left(\frac{poly(d)}{\sqrt{N}}\cdot poly\left(\kappa, p,r,\frac{\mathcal A}{\tau},\frac{1}{\delta},\frac{1}{\sigma}\right)^{poly(\kappa, p,r)}\right).\]
\end{Theorem}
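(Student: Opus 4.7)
The plan is to reduce the $K$ moment-matching conditions to a linear system in the unknown cross inner products $u_{ab} := \tv_{i}^{(a)}\cdot\tv_{j}^{(b)}$, and then apply smoothed analysis to invert it stably. First, I would exploit the orthonormality of $\{\tv_{i}^{(a)}\}_{a\in[r]}$: the random variables $\omega_{a} := \tv_{i}^{(a)}\cdot\omega$ are i.i.d.\ standard Gaussians, so $G_{i}(\omega)=\sum_{a=1}^{r}\alpha_{i}^{(a)}h(\omega_{a})$ has exactly the structure of the $Q$ appearing in the $p$-th order Expectation Matrix, with $\lambda_{a}=\alpha_{i}^{(a)}$. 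For the second factor, I would decompose $\tv_{j}^{(b)}\cdot\omega = \sum_{a=1}^{r}u_{ab}\,\omega_{a}+\sqrt{1-\sum_{a}u_{ab}^{2}}\,\xi_{b}$, where $\xi_{b}$ is a standard Gaussian independent of $\omega_{1},\ldots,\omega_{r}$. Since $h$ has degree $p$, $h(\tv_{j}^{(b)}\cdot\omega)$ expands as a combination of monomials in $\omega_{1},\ldots,\omega_{r}$ (with $\xi_{b}$-moments absorbed into scalar coefficients), whose coefficients are polynomials in the $u_{ab}$. When we pair this with $G_{i}(\omega)^{k}$ for odd $k$, only the odd-degree monomials of $\omega$ survive the Gaussian expectation, and these are exactly the $P_{1},\ldots,P_{K_{r}^{(p)}}$ of the definition.

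In this way the $K=K_{r}^{(p)}$ moment equations become a linear system $T\,\mathbf{c}(u)=\mathbf{m}$, where $T$ is (up to a known diagonal rescaling) the $p$-th order Expectation Matrix $CE[\alpha_{i}^{(1)},\ldots,\alpha_{i}^{(r)}]$ and $\mathbf{c}(u)$ is the vector whose entries are the corresponding monomials in the $u_{ab}$'s, including each $u_{ab}$ itself (from $P_{s}=\omega_{a}$) and pure powers $u_{ab}^{p}$ (from $P_{s}=\omega_{a}^{p}$). By the $p$-generic condition, $\det T$ is a nonzero polynomial in $\alpha_{i}^{(1)},\ldots,\alpha_{i}^{(r)}$. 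The smoothed prior $\alpha_{i}^{(a)}\sim \mathcal{N}([\alpha_{i}^{*(a)}]_{0},\sigma^{2})$ combined with anti-concentration for polynomials in Gaussians (Carbery--Wright type) yields $|\det T|\geq \sigma^{\mathrm{poly}(p,r)}$ with probability at least $1-\delta$, which bounds $\|T^{-1}\|$ by the inverse quantity.

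Next I would control the empirical error on each moment. Each term $(G_{i}^{*}(\omega^{(t)}))^{k}G_{j}^{*}(\omega^{(t)})$ is a polynomial of degree at most $(k+1)p$ in a standard Gaussian with coefficients bounded by $\poly(\mathcal A)$ (using $|\alpha|\leq\mathcal A$ and unit-norm $\tv^{*}$); standard hypercontractive / truncated Hoeffding bounds give fluctuation $\tilde O(\poly(d)/\sqrt{N})$ per moment. Multiplying by $\|T^{-1}\|$ gives entrywise accuracy on $\mathbf{c}(u)$ of the claimed form. Finally, because $\mathbf{c}(u)$ contains $u_{ab}$ as one of its coordinates, I can read off each individual $u_{ab}$ directly, so no further inversion is needed, and its error inherits the bound above up to constants. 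Combining across all pairs $(i,j)\in[D]^{2}$ and taking a union bound absorbs the extra $D^{2}$ into the $\log(1/\delta)$ term.

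The main obstacle is the quantitative smoothed lower bound on $|\det T|$. The $p$-generic condition is only a qualitative nonvanishing statement; converting it into a bound that survives the $\poly(p,r)^{\poly(p,r)}$ blow-up requires carefully estimating the total degree of $\det T$ as a polynomial in the $\alpha$'s and invoking anti-concentration for Gaussian polynomials, which is exactly where all the $1/\sigma$ factors in the final bound originate. A secondary subtlety is making the reduction to $CE[\alpha_{i}^{(1)},\ldots,\alpha_{i}^{(r)}]$ exact rather than approximate: one must verify that the $\xi_{b}$ variables truly decouple from the $\omega_{a}$ variables after expectation, which follows from expanding $h$ in a Hermite basis and tracking parity, but needs to be done carefully to make sure no cross-terms are dropped.
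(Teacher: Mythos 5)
Your reduction to a linear system in the expectation matrix, and the smoothed‑analysis argument to lower‑bound $|\det T|$, both match the paper's strategy. However, there is a genuine gap in the last step. You claim that the coefficient vector $\mathbf{c}(u)$ (what the paper calls $\tp$) "contains $u_{ab}$ as one of its coordinates," so the individual inner products can be read off directly. This is not the case. Because $G_j(\omega)=\sum_{b=1}^{r}\beta^{(b)}h(\tv_j^{(b)}\cdot\omega)$ is a sum over $b$, each entry of $\mathbf{c}(u)$ is a \emph{sum over $b$} of monomials in the inner products; in the cubic case these are quantities like $\sum_{b}\beta^{(b)}P_{ab}P_{a'b}P_{a''b}$ and $\sum_{b}\beta^{(b)}P_{ab}Q_{b}^{2}$, never an isolated $P_{ab}$. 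You therefore cannot "read off" $u_{ab}$ at this point.

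This is precisely where the paper's proof has an additional and essential stage that your proposal omits: the entries of $\tp$ of the form $\sum_{b}\beta^{(b)}P_{a_1 b}\cdots P_{a_p b}$ assemble into a rank‑$r$ order‑$p$ tensor $\sum_{b}\beta^{(b)}(P_{1b},\ldots,P_{rb})^{\otimes p}$. One must then apply (perturbed) tensor decomposition -- whitening with a matrix $G$ built from one slice, orthogonal decomposition of a random contraction, un‑whitening -- together with the full‑rank lemma for $\mathbf{P}^{*}$ and a perturbation analysis (Wedin/eigenvector perturbation bounds, plus control of $\|G\|_F$ and $\|G^{-1}\|_2$ via the smoothed prior and Lemma~\ref{inner_product}) to convert the $\tilde O(\poly(d)/\sqrt{N})$ error on $\tp$ into an error on the individual $P_{ab}$'s. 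This contributes a substantial fraction of the $\poly(\cdot)^{\poly(\cdot)}$ and $\poly(d)$ factors in the final bound and cannot be skipped. Without it, the argument does not establish the claimed entrywise bound on $\tv_i^{(a)}\cdot\tv_j^{(b)}-\tv_i^{*(a)}\cdot\tv_j^{*(b)}$.

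A secondary issue: the exact reduction you sketch must account for the fact that when $h$ has \emph{even}-degree terms, pairing $(G_i)^k G_j$ against Gaussian expectation does not cleanly restrict to odd monomials as in the pure-monomial case $h(z)=z^p$; the correct bookkeeping (as the paper does implicitly via the $p$-th order Expectation Matrix with monomials $P_1,\ldots,P_{K_r^{(p)}}$) should be spelled out, since otherwise the linear system may not be square or invertible as you assume.
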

Finally, we obtain the main theorem on the sample complexity upper bound.
\begin{Theorem}
[Main theorem for Polynomial Generators] With probability at least $1-D^{2}\delta$ over the choice of the target generator $G^*$, we can efficiently obtain a learner generator $G$ only by polynomial discriminators, such that:
\[W_{1}(G_{\#}\mathcal N(0,I), G^*_{\#}\mathcal N(0,I))
< \tilde{O}\left(\sqrt{D}\cdot \frac{poly(d)}{N^{1/4}}\cdot poly\left(\kappa, p,r,\frac{\mathcal A}{\tau},\frac{1}{\delta},\frac{1}{\sigma}\right)^{poly(\kappa, p,r)}\right)\]
Here, just like the cubic occasion, each $\alpha_{k}^{*(i)}$ is sampled independently by distribution $\mathcal N(M,\sigma)$. Each $\tv_{k}=(\tv_{k}^{*(1)}, \tv_{k}^{*(2)},\cdots,\tv_{k}^{*(r)})\in\mathbb{R}^{d\times r}$ is the first $r$ column of a random orthogonal matrix (under Haar Measure). $M$ is a constant positive integer. We also assume that the target generator satisfies the $(\tau,\mathcal A)$-robustness. This theorem leads to the sample complexity estimation we need:
\[N=\tilde{\Omega}\left(poly\left(\kappa,p,r,\frac{\mathcal A}{\tau},\frac{1}{\delta},\frac{1}{\sigma}\right)^{poly(\kappa,p,r)}\cdot poly\left(D,d,\frac{1}{\epsilon}\right)\right)\]
\end{Theorem}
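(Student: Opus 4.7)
The plan is to combine the two moment-analysis theorems stated just above (the intro-component and inter-component analyses for polynomial generators) with a Lipschitz argument that turns parameter-level closeness into a Wasserstein bound. First I would run the intro-component analysis independently for each coordinate $k\in[D]$: matching the $r$ empirical even moments $\overline{M^{2i}}(G_k^*)$ for $i=1,\dots,r$ identifies the multiset $\{\alpha_k^{*(i)}\}_i$ up to a permutation, and the stated theorem gives the corresponding quantitative bound on $\bigl(\sum_i |\alpha_k^{(i)}-\alpha_k^{*(i)}|^2\bigr)^{1/2}$ on a failure event of probability $\le\delta$. A union bound over $k\in[D]$ costs at most a factor $D$ in the failure probability; after this step I may assume all coefficients are matched up to an error $\mu_\alpha$ that is $\tilde O(N^{-1/2})$ times the $(\kappa,p,r,\mathcal A/\tau,1/\delta)$-factor of that theorem.

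Second, with coefficients aligned, I would apply the inter-component analysis to each unordered pair $\{i,j\}\subset[D]$. Matching the $K_r^{(p)}$ empirical mixed moments $\overline{M^{k,1}}(G_i^*,G_j^*)$ reduces, via the $p$-generic condition and invertibility of the $p$-th order Expectation Matrix, to solving a linear system whose unknowns are (symmetric functions of) the inner products $\tv_i^{(a)}\!\cdot\!\tv_j^{(b)}$. This yields $|\tv_i^{(a)}\!\cdot\!\tv_j^{(b)} - \tv_i^{*(a)}\!\cdot\!\tv_j^{*(b)}| \le \mu_v$ with the rate given in the preceding theorem. Another union bound, this time over the $\binom{D}{2}$ component pairs, absorbs a factor $D^2$ into the failure probability, giving the $1-D^2\delta$ of the statement. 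The smoothed-analysis randomness in $\alpha_k^{*(i)}\sim\mathcal N(M,\sigma)$ and in the random orthogonal $V_k$ is what guarantees that all the required matrices (the expectation matrix and the Gram matrices $V_k^\top V_{k'}$) are non-degenerate with high probability, so that the inversions used in the previous two theorems are quantitatively stable.

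Third, I would convert the parameter closeness into a Wasserstein-1 bound. Since $W_1(G_{\#}\mathcal N(0,I),G^*_{\#}\mathcal N(0,I))\le \mathbb{E}_{\omega}\|G(\omega)-G^*(\omega)\|_2$ (after choosing a coupling in which both sides use the same latent $\omega$, possibly after relabeling summands by the permutation produced in step~1 and flipping signs of $\tv$'s), it suffices to bound $\mathbb{E}_\omega\|G(\omega)-G^*(\omega)\|_2^2 = \sum_k \mathbb{E}_\omega[(G_k(\omega)-G_k^*(\omega))^2]$. Expanding this square and using the polynomial form $G_k(\omega)=\sum_i\alpha_k^{(i)}h(\tv_k^{(i)}\!\cdot\!\omega)$, all Gaussian expectations that appear are polynomials in the inner products $\tv_k^{(a)}\!\cdot\!\tv_k^{(b)}$ and in the coefficients $\alpha_k^{(i)}$, so the squared per-coordinate error is controlled by $\mathrm{poly}(p,r,\mathcal A)\cdot(\mu_\alpha + \mu_v)$. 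Summing over $k\in[D]$ and taking the square root produces the $\sqrt{D}$ factor and the $N^{-1/4}$ rate (since the previous theorems have $N^{-1/2}$ rates on squared quantities and we need the square root). The sample-complexity corollary follows by inverting the resulting inequality in $N$.

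The main obstacle will be the third step: controlling $\mathbb{E}_\omega\|G(\omega)-G^*(\omega)\|_2^2$ by the inner-product differences $\tv_i^{(a)}\!\cdot\!\tv_j^{(b)}-\tv_i^{*(a)}\!\cdot\!\tv_j^{*(b)}$ rather than by the (stronger, and unavailable) direct closeness $\tv_i^{(a)}\approx\tv_i^{*(a)}$. The point is that for Gaussian $\omega$ and polynomial $h$, the moment $\mathbb{E}_\omega[h(\tv_i^{(a)}\!\cdot\!\omega)h(\tv_j^{(b)}\!\cdot\!\omega)]$ is a polynomial function of the single inner product $\tv_i^{(a)}\!\cdot\!\tv_j^{(b)}$ (plus the norms, which equal $1$ under the column-orthonormality assumption), so the Wasserstein-relevant moments are actually determined by exactly the quantities we have already controlled. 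Carrying out this reduction cleanly — and absorbing the $\mathrm{poly}(d)$ factor in the bound, which arises from converting the $W_1$ bound through the worst-case growth of $G,G^*$ on high-probability Gaussian inputs via a truncation/tail argument on $\|\omega\|_2$ — is the only delicate part; everything else is bookkeeping on the two moment theorems together with the union bounds indicated above.
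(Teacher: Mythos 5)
Your overall three-step structure (intro-component moment analysis for the $\alpha$'s, inter-component moment analysis for the inner products $\tv_i^{(a)}\!\cdot\!\tv_j^{(b)}$, then conversion to a $W_1$ bound) matches the paper's plan, and your accounting of the union bounds in steps 1--2 is correct. But step~3 has a genuine gap that your own final paragraph points at without resolving.

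The problem is the coupling. You propose to bound $W_1 \le \mathbb{E}_\omega\|G(\omega)-G^*(\omega)\|_2$ after ``relabeling summands by the permutation produced in step~1 and flipping signs of $\tv$'s,'' and then argue that $\mathbb{E}_\omega\|G(\omega)-G^*(\omega)\|_2^2$ only involves Gaussian moments that are polynomials in the controlled inner products. That is false: expanding the square in each coordinate produces cross terms of the form $\mathbb{E}_\omega\bigl[h(\tv_k^{(a)}\!\cdot\!\omega)\,h(\tv_k^{*(b)}\!\cdot\!\omega)\bigr]$, which depend on the \emph{learner--target} inner products $\tv_k^{(a)}\!\cdot\!\tv_k^{*(b)}$. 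Nothing in steps 1--2 controls these; the inter-component theorem only controls $\tv_i^{(a)}\!\cdot\!\tv_j^{(b)}$ and $\tv_i^{*(a)}\!\cdot\!\tv_j^{*(b)}$ separately, i.e.\ the two Gram matrices, not the cross Gram matrix between the learner and target vector families. Permutation and sign flips are not enough to remove this ambiguity: the identification obtained from matching Gram matrices is only up to a global orthogonal transformation of $\mathbb{R}^d$.

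The missing ingredient is precisely a matrix alignment lemma of the type the paper calls Lemma~\ref{orthogonal_distance}. One stacks all the $\tv_k^{(i)}$ (resp.\ $\tv_k^{*(i)}$) into a $Dr\times d$ matrix $\mathcal V$ (resp.\ $\mathcal V^*$); the inter-component bounds give $\|\mathcal V\mathcal V^T-\mathcal V^*\mathcal V^{*T}\|$ small; then (via a Weyl/Wedin-type perturbation argument) there exist orthogonal $G_1,G_2\in\mathbb{R}^{d\times d}$ with $\|\mathcal V G_1-\mathcal V^*G_2\|_F$ small. Only \emph{then} does one invoke rotational invariance of $\mathcal N(0,I_d)$ to write
\[
W_1\bigl(G_\#\mathcal N(0,I),\,G^*_\#\mathcal N(0,I)\bigr)\;\le\;\mathbb{E}_\omega\bigl\|G(G_1^T\omega)-G^*(G_2^T\omega)\bigr\|_2,
\]
after which all the relevant vectors $G_1^T\tv_k^{(i)}$ and $G_2^T\tv_k^{*(i)}$ are close in $\ell_2$, and the per-coordinate expectation can be bounded by routine Gaussian-moment estimates. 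You should also drop the truncation/tail argument you sketched for the $\mathrm{poly}(d)$ factor: in the paper that factor comes from the whitening and smallest-singular-value bounds inside the inter-component analysis (and from $\sqrt{Ddr}$ in the alignment lemma), not from controlling the growth of $G$ on large $\|\omega\|$.
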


\section{Omitted Proofs in Section 6}
\subsection{Some Simple Lemmas and Properties}
Before we start to prove our main theorems, we give some useful properties and lemmas first.
\begin{Lemma}
\label{gaussian-elementary}
Assume random variable $\omega$ is sampled from the standard Gaussian distribution with dimension $n$, then for any orthogonal matrix $G\in\mathbb{R}^{n\times n}$, distribution of random variable $G\cdot\omega$ is also the standard Gaussian. Or in other words,
\[G_{\#}N(\mathbf{0},I_{n})=N(\mathbf{0},I_{n})\]
Here, $\#$ is the pushforward measure.
\end{Lemma}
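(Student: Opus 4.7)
The plan is to reduce this lemma to two classical facts: (i) a Gaussian distribution on $\mathbb{R}^n$ is uniquely determined by its mean vector and covariance matrix, and (ii) any linear image of a jointly Gaussian random vector is again jointly Gaussian. With these in hand, the lemma follows from a one-line mean/covariance computation that uses the orthogonality relation $GG^T = I_n$.

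First, I would note that $\omega \sim \mathcal{N}(\mathbf{0}, I_n)$ is jointly Gaussian, because its coordinates are independent standard Gaussians. Since $G\omega$ is a linear transformation of a jointly Gaussian vector, $G\omega$ is itself jointly Gaussian. Second, I would compute the two parameters that define its distribution: by linearity of expectation, $\mathbb{E}[G\omega] = G\,\mathbb{E}[\omega] = G\cdot\mathbf{0} = \mathbf{0}$; and by the standard covariance transformation rule, $\mathrm{Cov}(G\omega) = G\cdot \mathrm{Cov}(\omega)\cdot G^T = G\cdot I_n\cdot G^T = GG^T = I_n$, invoking orthogonality in the last equality. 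Hence $G\omega$ has the same mean and covariance as $\omega$, and since a jointly Gaussian distribution is uniquely pinned down by these two quantities, $G\omega \sim \mathcal{N}(\mathbf{0}, I_n)$, which is exactly the pushforward identity $G_{\#}\mathcal{N}(\mathbf{0}, I_n) = \mathcal{N}(\mathbf{0}, I_n)$.

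If a more hands-on proof is preferred, a density-level change of variables gives the same conclusion: the density of $\omega$ is $(2\pi)^{-n/2}\exp(-\|\omega\|^2/2)$, and substituting $y = G\omega$ (so $\omega = G^T y$) contributes a Jacobian factor $|\det G^T| = 1$, while $\|G^T y\|^2 = y^T G G^T y = \|y\|^2$ preserves the quadratic form in the exponent. A third route is the characteristic function: $\varphi_{G\omega}(t) = \mathbb{E}[e^{i\langle t, G\omega\rangle}] = \mathbb{E}[e^{i\langle G^T t, \omega\rangle}] = e^{-\|G^T t\|^2/2} = e^{-\|t\|^2/2}$, which is the characteristic function of $\mathcal{N}(\mathbf{0}, I_n)$.

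There is no real obstacle here; the lemma is the well-known rotational invariance of the isotropic Gaussian, and the only substantive ingredient is the orthogonality identity $GG^T = I_n$. The only care needed is to apply this identity at the correct step, whichever of the three routes above one chooses.
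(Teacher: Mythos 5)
Your proof is correct. The paper states this lemma without proof, treating it as the well-known rotational invariance of the isotropic Gaussian, and any of your three routes (mean/covariance characterization of a jointly Gaussian vector, change of variables at the density level with unit Jacobian, or characteristic functions) establishes it cleanly; the key use of orthogonality via $GG^T = I_n$ is applied at the right step in each.
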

\begin{Lemma}
\label{elementary-1}
For orthonormal vectors $\alpha_{1},\alpha_{2},\cdots,\alpha_{k}\in\mathbb{R}^{n}~~(k\leqslant n)$, there exists an orthogonal matrix $G$, such that:
\[\forall i\in[k], ~G\alpha_{i}=e_{i}\]
Here, $e_{i}\in\mathbb{R}^{n}$ is a unit vector with the $i$-th component 1 and the others 0.
\end{Lemma}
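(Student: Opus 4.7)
The plan is to construct the desired orthogonal matrix $G$ explicitly by first extending the given orthonormal system $\alpha_1,\ldots,\alpha_k$ to an orthonormal basis of $\mathbb{R}^n$, and then reading off $G$ as the transpose of the associated change-of-basis matrix. Concretely, the first step is to invoke the standard Gram-Schmidt (or Steinitz) extension: since $\alpha_1,\ldots,\alpha_k$ are orthonormal and $k \leq n$, we can pick any basis of the orthogonal complement of their span and orthonormalize it to obtain vectors $\alpha_{k+1},\ldots,\alpha_n$ so that $\alpha_1,\ldots,\alpha_n$ form an orthonormal basis of $\mathbb{R}^n$.

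Next, I would assemble the matrix $A \in \mathbb{R}^{n\times n}$ whose $i$-th column is $\alpha_i$. Orthonormality of the columns gives $A^\top A = I_n$, so $A$ is orthogonal, and hence so is $G := A^\top$. The verification that $G$ has the required property is then a one-line computation: the $j$-th entry of $G\alpha_i = A^\top \alpha_i$ equals $\alpha_j^\top \alpha_i = \delta_{ij}$, which is exactly $e_i$. This gives $G\alpha_i = e_i$ for all $i \in [k]$ (in fact for all $i \in [n]$).

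There is no real obstacle here; the only substantive ingredient is the extension of an orthonormal system to an orthonormal basis, which is a standard consequence of the fact that $\mathbb{R}^n$ is finite-dimensional together with the Gram-Schmidt process applied to any completion to a basis. The proof is thus essentially a matter of writing down the construction and doing the matrix-product check, both of which are routine.
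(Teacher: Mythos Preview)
Your proposal is correct and follows essentially the same approach as the paper: extend $\alpha_1,\ldots,\alpha_k$ to an orthonormal basis of $\mathbb{R}^n$, place these vectors as columns of a matrix, and take $G$ to be its transpose. The paper phrases the extension as choosing an orthonormal basis $\beta_1,\ldots,\beta_{n-k}$ of $\mathrm{span}(\alpha_1,\ldots,\alpha_k)^{\perp}$, but the construction and verification are identical to yours.
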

\begin{Proof}
Let $\{\beta_{1},\beta_{2},\cdots,\beta_{n-k}\}$ be a set of orthonormal basis of linear space $[span(\alpha_{1},\alpha_{2},\cdots,\alpha_{k})]^{\perp}$. Then matrix $G = (\alpha_{1},\cdots,\alpha_{k},\beta_{1},\cdots,
\beta_{n-k})^{T}$ is orthogonal and it's easy to verify that: $G\alpha_{i}=e_{i}$ holds for all $i\in[k]$, which comes to the conclusion.
\end{Proof}
\begin{Lemma}
\label{elementary-2}
Vectors $\alpha_{i},\beta_{i}\in R^{m}~(i=1,2,\cdots,n)~(n>m)$ satisfy the following conditions:\\
(1) Both $A = (\alpha_{1},\alpha_{2},\cdots,\alpha_{n})$ and $B = (\beta_{1},\beta_{2},\cdots,\beta_{n})$ are full rank matrices.\\
(2) $\forall i,~ \|\alpha_{i}\|_{2}=\|\beta_{i}\|_{2}$\\
(3) $\forall i\neq j,~ \alpha_{i}\cdot\alpha_{j} = \beta_{i}\cdot\beta_{j}$\\
Then, there exists an orthonormal matrix $G\in\mathbb{R}^{m\times m}$, such that: $\forall i\in[n], \alpha_{i}=G\beta_{i}$.
\end{Lemma}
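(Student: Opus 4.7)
The plan is to observe that conditions (2) and (3) together are equivalent to the Gram matrix identity $A^{T}A = B^{T}B$. Indeed, the diagonal entries of these $n\times n$ matrices are $\|\alpha_i\|_2^2$ vs.\ $\|\beta_i\|_2^2$, and the off-diagonal entries are $\alpha_i\cdot\alpha_j$ vs.\ $\beta_i\cdot\beta_j$, so conditions (2) and (3) give the entrywise equality. Once we have this, the statement reduces to: if two $m\times n$ matrices have the same Gram matrix and both have rank $m$, then they differ by a left multiplication by an orthogonal $m\times m$ matrix.

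To build the orthogonal matrix $G$ explicitly, I would exploit the rank assumption to select a basis. Since $A$ has rank $m$, some subset of $m$ of its columns is linearly independent; after relabeling, assume these are $\alpha_1,\ldots,\alpha_m$. Let $\tilde A = (\alpha_1,\ldots,\alpha_m)\in\mathbb{R}^{m\times m}$ and $\tilde B = (\beta_1,\ldots,\beta_m)\in\mathbb{R}^{m\times m}$. The corresponding principal $m\times m$ submatrix of $B^{T}B$ equals that of $A^{T}A$, which is $\tilde A^{T}\tilde A$, invertible by the linear independence of the $\alpha_i$'s; hence $\tilde B^{T}\tilde B$ is invertible too and $\beta_1,\ldots,\beta_m$ are also linearly independent, so $\tilde B$ is invertible. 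Define
\[
G \;:=\; \tilde A\,\tilde B^{-1}.
\]
A direct computation gives $G^{T}G = \tilde B^{-T}\tilde A^{T}\tilde A\,\tilde B^{-1} = \tilde B^{-T}\tilde B^{T}\tilde B\,\tilde B^{-1} = I_m$, so $G$ is orthogonal. By construction, $G\beta_i = \alpha_i$ for $i=1,\ldots,m$.

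It remains to verify that $G\beta_i = \alpha_i$ holds also for the extra indices $i>m$. For any such $i$, expand $\alpha_i = \sum_{j=1}^{m} d_j\alpha_j$ and $\beta_i = \sum_{j=1}^{m} c_j\beta_j$ uniquely in the bases $\{\alpha_j\}_{j\le m}$ and $\{\beta_j\}_{j\le m}$. Taking inner products with $\alpha_k$ (resp.\ $\beta_k$) for $k\le m$ yields the linear systems
\[
(\alpha_k\cdot\alpha_i)_{k\le m} \;=\; \bigl(\alpha_k\cdot\alpha_j\bigr)_{k,j\le m}\,(d_j)_{j\le m}, \qquad (\beta_k\cdot\beta_i)_{k\le m} \;=\; \bigl(\beta_k\cdot\beta_j\bigr)_{k,j\le m}\,(c_j)_{j\le m}.
\]
Both coefficient matrices are the same invertible Gram matrix (by the Gram equality), and the right-hand sides coincide by condition (3), so $c_j=d_j$ for every $j$. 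Consequently $G\beta_i = \sum_j c_j G\beta_j = \sum_j d_j\alpha_j = \alpha_i$, which finishes the argument.

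There is essentially no hard step here: the whole argument is the standard fact that the Gram matrix determines a tuple of vectors up to a global orthogonal transformation. The only mildly delicate point is noticing that the rank-$m$ hypothesis on both $A$ and $B$ is automatically consistent once we know $A^{T}A=B^{T}B$ (so one only needs rank $m$ for one of them), which is what makes the selection of the invertible $m\times m$ submatrix $\tilde B$ legitimate.
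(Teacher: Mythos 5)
Your proof is correct, and it reaches the same matrix $G$ as the paper (namely $G = \tilde A\,\tilde B^{-1}$, which equals the paper's $X Y^{-1}$), but you verify its orthogonality more directly. The paper invokes polar decomposition of $X$ and $Y$ to exhibit $G$ as a product of two orthogonal factors, whereas you simply compute $G^{T}G = \tilde B^{-T}\tilde A^{T}\tilde A\,\tilde B^{-1} = I_m$ from the Gram identity. This is more elementary and self-contained: it avoids appealing to the existence and structure of polar decomposition. For the extra indices $i > m$, the paper works with the block identity $X^{T}A_1 = Y^{T}B_1$ and cancels $Y^{T}$, while you expand $\alpha_i$ and $\beta_i$ in the respective bases and match coefficients through a common invertible Gram system; these are the same fact, one phrased as matrix algebra and one phrased column by column. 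Your closing remark that rank $m$ for $B$ follows automatically from $A^{T}A=B^{T}B$ and $\mathrm{rank}(A)=m$ is also correct (via $\mathrm{rank}(B)=\mathrm{rank}(B^{T}B)$) and slightly sharpens the hypothesis. There are no gaps.
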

\begin{Proof}
Without loss of generality, assume that $\alpha_{1},\alpha_{2},\cdots,\alpha_{m}$ are linearly independent. Denote $A=(X,A_{1})$ and $B=(Y,B_{1})$. Here, $X,Y\in\mathbb{R}^{m\times m}, A_{1},B_{1}\in\mathbb{R}^{m\times(n-m)}$. According to condition (2) and (3), we have: $A^{T}A=B^{T}B$ and we need to prove that there exists an orthogonal matrix such that: $A=GB$. 
\begin{align}
A^{T}A=B^{T}B&\Rightarrow \left(\begin{matrix}X^{T}\\A_{1}^{T}\end{matrix}\right)(X,A_{1})=\left(\begin{matrix}Y^{T}\\B_{1}^{T}\end{matrix}\right)(Y,B_{1})\notag\\
&\Rightarrow X^{T}X=Y^{T}Y, X^{T}A_{1}=Y^{T}B_{1}
\end{align}
According to the polar decomposition of matrices, we know that there exists orthogonal matrices $G_{1}$ and $G_{2}$, such that:
\[X=G_{1}\sqrt{X^{T}X}, Y=G_{2}\sqrt{Y^{T}Y}=G_{2}\sqrt{X^{T}X}\]
which means for orthogonal matrix $G=G_{1}G_{2}^{-1}$, $X=GY$. Since we have assumed that $X$ is invertible, $Y$ is also invertible and moreover:
\[X^{T}A_{1}=Y^{T}B_{1}\Rightarrow Y^{T}G^{T}A_{1}=Y^{T}B_{1}\Rightarrow A_{1}=GB_{1}\Rightarrow A=GB\]
which comes to our conclusion.
\end{Proof}

\begin{Lemma}\label{inner_product}
Given $k$ unit vectors $x_{1},x_{2},\cdots,x_{k}\in\mathbb{R}^{d}$, then there exists another unit vector $y\in\mathbb{R}^{d}$, such that for $\forall i\in[k]$,
\[|\langle x_{i}, y\rangle|\geqslant \frac{2}{\pi dk}\]
\end{Lemma}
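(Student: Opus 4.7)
This is a pure existence statement about a unit vector $y \in S^{d-1}$ that simultaneously avoids the $k$ ``equatorial bands'' orthogonal to the given $x_i$'s. My plan is to argue probabilistically: sample $y$ uniformly at random from $S^{d-1}$, show that for any fixed $x_i$ the probability $\Pr[|\langle x_i, y\rangle| < t]$ is small, and then union-bound over $i \in [k]$ to conclude that the good event has positive probability, hence is non-empty.

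For the per-vector estimate, rotational invariance of the uniform measure on $S^{d-1}$ lets me assume without loss of generality that $x_i = e_1$, so that $\langle x_i, y \rangle = y_1$. The marginal density of $y_1$ on $[-1,1]$ is the standard formula $p(u) = (1-u^2)^{(d-3)/2}/B(1/2,(d-1)/2)$, which for $d \geq 3$ is maximized at $u = 0$. Combined with the Gautschi-type bound $\Gamma(d/2)/\Gamma((d-1)/2) \leq \sqrt{d/2}$, this yields $\Pr[|y_1| < t] \leq 2t \cdot p(0) \leq t\sqrt{2d/\pi}$. A union bound over $i \in [k]$ then gives $\Pr[\exists i : |\langle x_i, y\rangle| < t] \leq tk\sqrt{2d/\pi}$. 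Plugging in the target threshold $t = \frac{2}{\pi d k}$ reduces this to $\frac{2\sqrt{2}}{\pi^{3/2}\sqrt{d}}$, which is strictly less than $1$ for every $d \geq 1$, so a good $y$ must exist.

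I do not anticipate any real obstacle: the bound $\frac{2}{\pi d k}$ is considerably smaller than the essentially optimal $\Theta(1/(k\sqrt{d}))$ threshold that this probabilistic argument actually achieves, so there is ample slack to absorb any slop in the Gamma-function estimate. The only minor care needed is in low dimension. The case $d=1$ is trivial because $|\langle x_i,y\rangle|=1$. For $d=2$ the density $p(u) = 1/(\pi\sqrt{1-u^2})$ is not maximized at the origin, but the exact identity $\Pr[|y_1|\leq t]=(4/\pi)\arcsin(t)\leq 2t$ replaces the $d\geq 3$ estimate and the union bound goes through just as cleanly. If a constructive (rather than merely existential) statement were needed, one could instead discretize the sphere by an $\varepsilon$-net and greedily exhibit such a $y$, but the lemma only asks for existence.
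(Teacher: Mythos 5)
Your proof is correct and takes essentially the same route as the paper: bound the spherical measure of each ``bad'' band $\{y \in S^{d-1} : |\langle x_i, y\rangle| < t\}$ and union-bound over $i$, so that at $t = \tfrac{2}{\pi dk}$ the bad bands cannot exhaust the sphere. The paper estimates the band's surface area directly via a spherical-coordinates integral (yielding the looser $\tfrac{\pi}{2}td$), while you use the marginal density of one coordinate plus a Gautschi/Wendel bound on the Gamma ratio to get the sharper $t\sqrt{2d/\pi}$; both comfortably suffice, and your explicit handling of $d \in \{1,2\}$ is a small tidiness improvement over the paper's implicit $d \ge 3$ assumption (minor note: for $d=2$ the exact probability is $\tfrac{2}{\pi}\arcsin t$, not $\tfrac{4}{\pi}\arcsin t$, though this does not affect the conclusion).
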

\begin{Proof}
Denote $t=\frac{2}{\pi dk}$ and $K_{i}=\{y\in\mathbb{R}^{r}:\|y\|_{2}=1, |\langle x_{i}, y\rangle|\geqslant t\}$. Also, we denote $\e_{i}$ be the $r$-length vector with its $i$-th element 1 and others 0. Next we will calculate the surface area of $K_{i}$. 
\begin{align}
A(K_{i})&=\int_{\w\in S^{d-1},|\w\cdot x_{i}|\leqslant t}dS = \int_{\w\in S^{d-1},|\w\cdot \e_{d}|\leqslant t}dS\notag\\
&=\int_{\substack{x_{1}^{2}+\cdots+x_{d}^{2}=1,\\ |x_{d}|\leqslant t}}\frac{1}{|x_{d}|}dx_{1}dx_{2}\cdots dx_{d-1}\notag\\
&=\int_{1-t^{2}\leqslant x_{1}^{2}+\cdots+x_{d-1}^{2}\leqslant 1}\frac{1}{\sqrt{1-x_{1}^{2}-\cdots-x_{d-1}^{2}}}dx_{1}\cdots dx_{d-1}\notag\\
&=\int_{\sqrt{1-t^{2}}}^{1}d\rho\int_{0}^{\pi}d\phi_{1}\cdots\int_{0}^{2\pi}d\phi_{d-2}\frac{1}{\sqrt{1-\rho^{2}}}\rho^{d-2}\sin^{d-3}{\phi_{1}}\cdots\sin{\phi_{d-3}}\notag\\
&=\int_{\sqrt{1-t^{2}}}^{1}\frac{\rho^{d-2}}{\sqrt{1-\rho^{2}}}d\rho\int_{0}^{\pi}~\sin^{d-3}{\phi_{1}}d\phi_{1}~\cdots\int_{0}^{2\pi}d\phi_{d-2}\notag
\end{align}
Combining the surface area of $S^{d-1}$:
\begin{equation*}
\begin{aligned}
A(S^{d-1})&= \int_{0}^{1}\frac{\rho^{d-2}}{\sqrt{1-\rho^{2}}}d\rho\int_{0}^{\pi}~\sin^{d-3}{\phi_{1}}d\phi_{1}~\cdots\int_{0}^{2\pi}d\phi_{d-2}
\end{aligned}
\end{equation*}
We know that:
\begin{equation}
\begin{aligned}
\frac{A(K_{i})}{A(S_{d-1})}=\frac{\int_{\sqrt{1-t^{2}}}^{1}\frac{\rho^{d-2}}{\sqrt{1-\rho^{2}}}d\rho}{\int_{0}^{1}\frac{\rho^{d-2}}{\sqrt{1-\rho^{2}}}d\rho}\leqslant \frac{\int_{0}^{\arcsin t}(\cos\theta)^{d-2}d\theta}{\int_{0}^{1}\rho^{d-2}d\rho}\leqslant (d-1)\arcsin{t}<\frac{\pi}{2}td
\end{aligned}
\end{equation}
Therefore:
\[A(K_{1}\cup K_{2}\cup\cdots\cup K_{k})\leqslant \sum_{i=1}^{k}A(K_{i})<\frac{\pi}{2}tdk\cdot A(S_{d-1})=A(S_{d-1})\]
which means there exists a unit vector $y\in\mathbb{R}^{d}$, such that:
\[\forall i\in[k], ~~|\langle x_{i},y\rangle|\geqslant t=\frac{2}{\pi dk}\]
which comes to our conclusion.
\end{Proof}

\subsection{Concentration Inequality for Polynomials}
In this section, we introduce some definitions and properties in the field of multilinear polynomial and the concentration inequality for polynomial of independent random variables which are originally proposed in \cite{schudy2012concentration}. 
\begin{Definition}
A random variable $X$ is called moment bounded with parameter $L>0$, if for any integer $n$, it holds that:
\[\mathbb{E}(|X|^{n})\leqslant nL\cdot \mathbb{E}(|X|^{n-1})\]
\end{Definition}
Then, we show that normal Gaussian distribution is moment bounded with parameter $L=1$. 
\begin{Lemma}\label{gauss-L}
Normal Gaussian distribution $\mathcal N(0,1)$ is moment bounded with $L=1$.
\end{Lemma}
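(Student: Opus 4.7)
The plan is to combine the classical integration-by-parts recurrence for absolute Gaussian moments with the Cauchy--Schwarz inequality, rather than working directly with the Gamma-function formula for $\mathbb{E}|X|^n$.

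First, I would establish the moment recurrence. Writing
\[
\mathbb{E}|X|^m \;=\; \sqrt{2/\pi}\int_0^\infty x^m\, e^{-x^2/2}\,dx
\]
and integrating by parts with $u=x^{m-1}$ and $dv=x\,e^{-x^2/2}\,dx$, the boundary terms vanish for every $m\geq 2$, yielding the recurrence $\mathbb{E}|X|^m=(m-1)\,\mathbb{E}|X|^{m-2}$. Specializing to $m=n+1$ gives $\mathbb{E}|X|^{n+1}=n\,\mathbb{E}|X|^{n-1}$ for every integer $n\geq 1$.

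Next, I would apply Cauchy--Schwarz to the factoring $|X|^n=|X|^{(n-1)/2}\cdot |X|^{(n+1)/2}$, which yields
\[
\mathbb{E}|X|^n \;\leq\; \sqrt{\mathbb{E}|X|^{n-1}\cdot\mathbb{E}|X|^{n+1}} \;=\; \sqrt{n}\,\mathbb{E}|X|^{n-1} \;\leq\; n\,\mathbb{E}|X|^{n-1},
\]
where the middle equality uses the recurrence from the previous paragraph and the final inequality uses $\sqrt{n}\leq n$ for $n\geq 1$. This is exactly the moment-boundedness condition with $L=1$. For the base case $n=1$ one verifies directly that $\mathbb{E}|X|=\sqrt{2/\pi}<1=\mathbb{E}|X|^0$, so the inequality holds for every integer $n\geq 1$.

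There is no genuine obstacle here: the proof is elementary once one pairs the integration-by-parts recurrence with Cauchy--Schwarz. The only delicate point is the exponent split $(n-1)/2,(n+1)/2$ in the Cauchy--Schwarz step, chosen so that the recurrence converts $\sqrt{\mathbb{E}|X|^{n-1}\cdot\mathbb{E}|X|^{n+1}}$ into $\sqrt{n}\,\mathbb{E}|X|^{n-1}$; the alternative split $|X|^n=|X|^{(n-2)/2}\cdot |X|^{(n+2)/2}$ would instead produce $\sqrt{(n+1)(n-1)}\,\mathbb{E}|X|^{n-2}$, a bound on a different ratio and unsuitable for reading off the constant $L=1$.
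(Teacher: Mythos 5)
Your proof is correct, and it takes a genuinely different route from the paper's. The paper computes $\mathbb{E}|X|^n$ explicitly via the double-factorial formula (distinguishing even $n$ from odd $n$) and then bounds the two resulting ratios $\frac{\mathbb{E}|X|^{2k}}{2k\,\mathbb{E}|X|^{2k-1}}$ and $\frac{\mathbb{E}|X|^{2k+1}}{(2k+1)\,\mathbb{E}|X|^{2k}}$ directly, each requiring its own $\sqrt{\pi/2}$ or $\sqrt{2/\pi}$ bookkeeping. You instead combine the integration-by-parts recurrence $\mathbb{E}|X|^{n+1}=n\,\mathbb{E}|X|^{n-1}$ with Cauchy--Schwarz applied to $|X|^n=|X|^{(n-1)/2}\cdot|X|^{(n+1)/2}$, which is really the statement that $n\mapsto\mathbb{E}|X|^n$ is log-convex; this yields $\mathbb{E}|X|^n\leq\sqrt{n}\,\mathbb{E}|X|^{n-1}\leq n\,\mathbb{E}|X|^{n-1}$ uniformly in $n$, with no parity split and no need to ever see the $\sqrt{2/\pi}$ prefactor. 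What the paper's computation buys is a slightly sharper constant per parity class (the ratios are bounded by $\sqrt{\pi/2}/2$ and $\sqrt{2/\pi}$ respectively, both well under $1$), which is irrelevant since the lemma only needs $L=1$; what your argument buys is uniformity and a proof pattern that transfers to any distribution for which such a two-step moment recurrence holds. One minor stylistic note: your separate ``base case'' verification for $n=1$ is redundant, since the Cauchy--Schwarz step already covers $n=1$ (giving $\mathbb{E}|X|\leq\sqrt{\mathbb{E}|X|^0\cdot\mathbb{E}|X|^2}=1$), but it does no harm.
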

\begin{Proof}
Actually, we can calculate the moments:
\begin{equation*}
\mathop\mathbb{E}\limits_{X\sim\mathcal N(0,1)}|X|^{n}=
\left\{
\begin{aligned}
& (n-1)!!&~~~(n\text{~is even.})\\
& \sqrt{\frac{2}{\pi}}\cdot (n-1)!!&~~~(n\text{~is odd.})
\end{aligned}
\right.
\end{equation*}
Therefore, we have:
\begin{equation}
\begin{aligned}
&\frac{\mathbb{E}|X|^{2k}}{2k\cdot\mathbb{E}|X|^{2k-1}}= \sqrt{\frac{\pi}{2}}\cdot\frac{(2k-1)(2k-3)\cdots 1}{(2k)(2k-2)\cdots 2} \leqslant \sqrt{\frac{\pi}{2}}\cdot\frac{1}{2}<1\\
&\frac{\mathbb{E}|X|^{2k+1}}{(2k+1)\cdot\mathbb{E}|X|^{2k}}= \sqrt{\frac{2}{\pi}}\cdot\frac{(2k)(2k-2)\cdots 2}{(2k+1)(2k-1)\cdots 1}\leqslant \sqrt{\frac{2}{\pi}}<1
\end{aligned}
\end{equation}
which means that:
\[\forall n\in\mathbb{N},~~\mathbb{E}(|X|^{n})\leqslant n\cdot \mathbb{E}(|X|^{n-1})\]
and it comes to our conclusion.
\end{Proof}
Next, we further introduce some basic concepts about hypergraph and its relationship with polynomials.
\begin{Definition}~\\
~~(1) Concepts of hypergraphs:
A powered hypergraph $H$ consists of a vertex set $\mathcal V(H)$ and a powered hyperedge set $\mathcal H(H)$. A powered hyperedge $h$ consists of a set $\mathcal V(h)\subseteq \mathcal V(H)$ of $|\mathcal V(h)|=\eta(h)$ vertices and its corresponding power vector $\tau(h)$ with length $\eta(h)$. The power vector $\tau(h)$ has elements $\tau(h)_{v}=\tau_{hv}~\forall v\in\mathcal V(h)$. All these $\tau_{hv}$ are strictly positive integers. Besides, for each hyperedge $h$, define $q(h)=\sum_{v}\tau_{hv}$. For the whole hypergraph $H$, define $\Gamma = \max_{h\in
\mathcal H(H),v\in\mathcal V(H)}\tau_{hv}$ as the maximal power in $H$.\\
~~(2) Priority relationship between hypergraphs:
For powered hyperedges $h_{1},h_{2}\in \mathcal H(H)$, we write $h_{1}\succeq h_{2}$ if $\mathcal V(h_{1})\supseteq\mathcal V(h_{2})$ and $\tau_{h_{1}v}=\tau_{h_{2}v}$ holds for $\forall v\in\mathcal V(h_{2})$.\\
~~(3) Relationship with polynomials:
For powered hypergraph $H$, and real-valued weights $w_{h}$ for each hyperedge $h\in\mathcal H(H)$, we can define a polynomial $f:\mathbb{R}^{|\mathcal V(H)|}\rightarrow \mathbb{R}$.
\[f(x)=\sum_{h\in\mathcal H(H)}w_{h}\prod_{v\in\mathcal V(h)}x_{v}^{\tau_{hv}}\]
each hyperedge corresponds to a monomial with weight $w_{h}$.\\
~~(4) Order-$r$ Parameters: Assume $X_{1},\cdots,X_{|\mathcal V(H)|}$ be independent random variables. We define:
\[\mu_{r}(w,X)=\max_{h_{0}:\mathcal V(h_{0})\subseteq \mathcal V(H),q(h_{0})=r}\left(\sum_{h\in\mathcal H(H):h\succeq h_{0}}|w_{h}|\prod_{v\in \mathcal V(h)\setminus\mathcal V(h_0)}\mathbb{E}(|X_{v}|^{\tau_{hv}})\right)\]
\end{Definition}
\begin{Theorem}\label{thm-concent}
Given $n$ independent moment bounded random variables $X=(X_{1},X_{2},\cdots,X_{n})$ with the same parameter $L$. For a general polynomial $f(x)$ with total power $q$ and maximal variable power $\Gamma$, then:
\[Pr(|f(X)-\mathbb{E}f(X)|\geqslant t)\leqslant e^{2}\cdot\max\left(\max_{1\leqslant r\leqslant q}e^{-\frac{t^{2}}{\mu_{0}\mu_{r}\cdot L^{r}\Gamma^{r}\cdot R^{q}}} ,\max_{1\leqslant r\leqslant q}e^{-\left(\frac{t}{\mu_{r}L^{r}\Gamma^{r}\cdot R^{q}}\right)^{1/r}} \right)\]
where $R>1$ is an absolute constant. 
\end{Theorem}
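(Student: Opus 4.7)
The plan is to follow the standard moment-method route for polynomial concentration inequalities. The key step is to establish, for every positive integer $k$, a bound on the centered even moments $\mathbb{E}|f(X)-\mathbb{E}f(X)|^{2k}$ expressed in terms of the hypergraph parameters $\mu_0,\ldots,\mu_q$, the moment-bound parameter $L$, and the max-power $\Gamma$. Once this moment bound is in hand, the tail bound follows from Markov's inequality after optimizing $k$ as a function of the deviation $t$.

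To establish the moment bound I would induct on the total degree $q$ of $f$. The base case $q=0$ is trivial. For $q\geq 1$, decompose $f(X)-\mathbb{E}f(X)$ as a telescoping martingale with respect to the filtration $\mathcal{F}_{i}=\sigma(X_{1},\ldots,X_{i})$; each martingale difference is the portion of $f$ that genuinely depends on $X_{i}$ given $X_{1},\ldots,X_{i-1}$, namely a linear combination of hyperedges passing through vertex $i$. Isolating the factor $X_{i}^{\tau_{hi}}$ and applying the moment-bounded hypothesis $\mathbb{E}|X_{i}|^{n}\leq nL\cdot\mathbb{E}|X_{i}|^{n-1}$ removes this vertex at a cost of at most $L\Gamma$ per removal, reducing the effective degree of the residual hyperedge by at least one. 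Iterating this exposure $r$ times produces a polynomial of degree $\leq q-r$ whose residual coefficient mass is bounded by $\mu_{r}(w,X)$, directly from the definition of $\mu_r$ as a maximum over sub-hyperedges $h_{0}$ with $q(h_{0})=r$. The combinatorial blow-up from the number of orderings in which $r$ vertices can be exposed is absorbed into a factor $R^{q}$ for an absolute constant $R>1$.

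With the moment bound in hand, Markov's inequality gives $\Pr(|f(X)-\mathbb{E}f(X)|\geq t)\leq t^{-2k}\mathbb{E}|f(X)-\mathbb{E}f(X)|^{2k}$, and the right-hand side decomposes as a maximum over $r\in\{1,\ldots,q\}$ of two types of contributions. The ``sub-Gaussian'' contribution, of order $(\mu_{0}\mu_{r}L^{r}\Gamma^{r}R^{q})^{k}/t^{2k}$, is optimized by $k\asymp t^{2}/(\mu_{0}\mu_{r}L^{r}\Gamma^{r}R^{q})$ and yields $\exp(-t^{2}/(\mu_{0}\mu_{r}L^{r}\Gamma^{r}R^{q}))$. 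The ``sub-Weibull'' contribution, of order $(k^{r}\mu_{r}L^{r}\Gamma^{r}R^{q})^{k}/t^{2k}$, is optimized by $k\asymp(t/(\mu_{r}L^{r}\Gamma^{r}R^{q}))^{1/r}$ and yields $\exp(-(t/(\mu_{r}L^{r}\Gamma^{r}R^{q}))^{1/r})$. Taking the worse of the two regimes across all $r$ produces the stated bound; the leading $e^{2}$ prefactor absorbs the slack from rounding $k$ to an integer and from the two-sided tail.

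The main obstacle will be the combinatorial accounting of how $\mu_{r}(w,X)$ transforms under the inductive exposure step. When $(f-\mathbb{E}f)^{2k}$ is expanded into a sum over $2k$-tuples of hyperedges, the tuples must be clustered by the coincidence pattern of their shared vertices; each pattern corresponds to a specific sub-hyperedge $h_{0}$ with some value of $r=q(h_{0})$, and one has to show that each pattern's contribution is controlled by exactly one $\mu_{r}$ rather than by an uncontrolled sum over larger hypergraphs. Managing this $r$-indexed coincidence structure uniformly across all orderings of exposures, while keeping the combinatorial overhead bounded by a single universal constant $R$ and avoiding any dependence on the hypergraph size $|\mathcal{V}(H)|$, is the technical heart of the argument; everything else reduces either to the moment-boundedness of each $X_{v}$ or to the convex-analytic optimization of $k$.
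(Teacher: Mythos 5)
The paper does not prove this theorem: it is cited directly from Schudy and Sviridenko (\cite{schudy2012concentration}), as the sentence preceding the statement explicitly says. So there is no paper proof to compare against; you have attempted a from-scratch derivation of a substantial external result.

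Your sketch captures the correct global shape of such a proof (bound centered even moments in terms of the $\mu_r$, then apply Markov's inequality and optimize the moment order $k$), and you correctly identify where the two exponential regimes arise. But the combinatorial core that you flag as ``the main obstacle'' is genuinely the whole content of the theorem, and you have not done it. Two specific gaps. First, you do not derive why the sub-Gaussian branch carries the product $\mu_{0}\mu_{r}$ while the sub-Weibull branch carries only $\mu_{r}$; in the Schudy--Sviridenko argument this asymmetry comes out of a careful case split on whether a root sub-hyperedge $h_{0}$ is shared by all $2k$ factors or only by some of them, and it does not drop out of a naive martingale telescoping. Second, the one-vertex-at-a-time exposure you describe would, if carried out naively, incur a combinatorial overhead that depends on $|\mathcal{V}(H)|$ or scales like $k^{q}$ rather than being bounded by a single universal constant raised to the power $q$; Schudy and Sviridenko avoid this by expanding $\mathbb{E}(f-\mathbb{E}f)^{2k}$ directly as a sum over $2k$-tuples of hyperedges and clustering by the label pattern of shared vertices, which is a different bookkeeping scheme from the martingale/exposure picture and is closer to Kim--Vu than to what your plan suggests. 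Your proposal is an honest outline and a reasonable starting point, but it stops precisely where the real work begins.
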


\subsection{Properties and Conclusions about Tensor Decomposition}
In this section, we review the famous tensor decomposition algorithm, Jenrich's Algorithm, as well as the whitening and un-whitening process which are used in tensor decomposition. Assume tensor $\mathcal L\in\mathbb{R}^{r\times r\times r}$ has the following decomposition.
\[\mathcal L=\tu_{1}^{\otimes 3}+\tu_{2}^{\otimes 3}+\cdots+\tu_{r}^{\otimes 3}\]
where $\tu_{1},\tu_{2},\cdots,\tu_{r}$ are linearly independent. The following algorithm will output the $r$ vectors $\tu_{1},\cdots,\tu_{r}$ after we input the tensor $\mathcal L$.
\subsubsection{Whitening Process}
Firstly we properly sample a unit vector $x\in\mathbb{R}^{r}$, which satisfies: $x\cdot \tu_{i} > 0~~\forall i\in[r]$. Next we slice the tensor $\mathcal L$ into $r$ matrices and calculate the weighted sum of these matrices:
\[\mathcal L_{x}=x_{1}\mathcal L[1,:,:]+x_{2}\mathcal L[2,:,:]+\cdots+x_{r}\mathcal L[r,:,:]=\sum_{i=1}^{r}(\tu_{i}\cdot x)\cdot\tu_{i}^{\otimes 2}=UEU^{T}\]
Here, $U=(\tu_{1},\tu_{2},\cdots,\tu_{r})$ and $E=diag(\langle\tu_{i}, x\rangle: i\in[r])$. Since the matrix $\mathcal L_{x}$ is symmetrical, there exists an orthogonal matrix $Q$ and diagonal matrix $D$, such that:
\[\mathcal L_{x}=QDQ^{T}\]
Here, we notice that since $x\cdot \tu_{i}>0$, all of the diagonal elements of $E$ are positive. Therefore, matrix $\mathcal L_{x}$ is positive definite which means all of the diagonal elements of $D$ are positive. Finally, we calculate the matrix $G=QD^{-1/2}$. 
\begin{Lemma}
This matrix $G$ satisfies that: $G^{T}\tu_{1},G^{T}\tu_{2},\cdots,G^{T}\tu_{r}$ are orthogonal with each other.
\end{Lemma}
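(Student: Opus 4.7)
The plan is to prove this by a direct algebraic computation using both representations of $\mathcal L_x$ and the defining property of an eigendecomposition. First, I would check that the matrix $\mathcal L_x$ is positive definite, so that $D^{-1/2}$ is actually well-defined: since $E = \mathrm{diag}(\langle \tu_i,x\rangle)$ has strictly positive diagonal entries by the choice of $x$, and $U = (\tu_1,\ldots,\tu_r)$ has linearly independent columns, $\mathcal L_x = U E U^T$ is symmetric positive definite and hence all diagonal entries of $D$ in $\mathcal L_x = Q D Q^T$ are positive.

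Next I would compute $G^T \mathcal L_x G$ in two different ways. On one hand, plugging in $G = Q D^{-1/2}$ and $\mathcal L_x = Q D Q^T$ and using $Q^T Q = I$ gives
\[
G^T \mathcal L_x G \;=\; D^{-1/2} Q^T Q D Q^T Q D^{-1/2} \;=\; D^{-1/2} D D^{-1/2} \;=\; I_r.
\]
On the other hand, plugging in $\mathcal L_x = U E U^T$ yields
\[
G^T \mathcal L_x G \;=\; (G^T U)\, E\, (G^T U)^T \;=\; \sum_{i=1}^{r} \langle \tu_i, x\rangle\, (G^T \tu_i)(G^T \tu_i)^T.
\]
Equating the two expressions gives $(G^T U)\, E\, (G^T U)^T = I_r$, which is the key identity.

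From here the conclusion is immediate: letting $W = (G^T U)\, E^{1/2}$, the identity reads $W W^T = I_r$, so $W$ is an orthogonal $r \times r$ matrix. Its columns are $\sqrt{\langle \tu_i,x\rangle}\, G^T \tu_i$, which must therefore be mutually orthogonal; scaling by the positive scalars $1/\sqrt{\langle \tu_i, x\rangle}$ preserves orthogonality, and hence $G^T \tu_1, G^T \tu_2, \ldots, G^T \tu_r$ are pairwise orthogonal, as claimed. (In fact one reads off $\langle G^T \tu_i, G^T \tu_j\rangle = \delta_{ij}/\langle \tu_i,x\rangle$, so one also obtains their norms, though the lemma only records the orthogonality.)

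There is no real obstacle here; the lemma is essentially a restatement of the textbook fact that a symmetric positive definite matrix $M$ has a "whitening transform" $G$ with $G^T M G = I$, applied to the rank-one decomposition $M = \sum_i e_i \tu_i \tu_i^T$. The only thing one must be careful about is that the choice of $x$ at the start of the whitening procedure must make every $\langle \tu_i, x\rangle$ nonzero (in fact positive) so that $E$ is invertible and $\mathcal L_x$ is full rank; otherwise $D^{-1/2}$ would not exist and the argument would break down. This is precisely why the algorithm specifies sampling $x$ with $x \cdot \tu_i > 0$ for all $i$.
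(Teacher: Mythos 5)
Your proposal is correct and takes essentially the same approach as the paper: both proofs exploit the two factorizations $\mathcal L_x = UEU^T = QDQ^T$ together with the two-sidedness of inverses for square matrices, and both reduce the claim to observing that the Gram matrix $(G^T U)^T(G^T U) = E^{-1}$ is diagonal. You organize the computation as $G^T\mathcal L_x G = I_r = (G^T U)E(G^T U)^T$ followed by an orthogonal-matrix argument, whereas the paper manipulates $UEU^T = QDQ^T$ directly; these are the same argument in a slightly different order.
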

\begin{proof}
\begin{align}
&\mathcal L_{x}=UEU^{T}=QDQ^{T}\Rightarrow D=Q^{T}UEU^{T}Q \Rightarrow I=Q^{T}UEU^{T}QD^{-1}\notag\\
&~~\Rightarrow I=EU^{T}QD^{-1}Q^{T}U\Rightarrow U^{T}QD^{-1}Q^{T}U=E^{-1}\notag
\end{align}
which means: $E^{-1}=(G^{T}U)^{T}\cdot(G^{T}U)$ is also a diagonal matrix and that proves the conclusion.
\end{proof}
Then: 
\begin{equation}
\mathcal L[G,G,G]=\sum_{i=1}^{r}(G^{T}\tu_{i})^{\otimes 3}
=\sum_{i=1}^{r} \lambda_{i}\left(\frac{1}{\tilde{\lambda}_{i}}G^{T}\tu_{i}\right)^{\otimes 3}
\end{equation}
Here, $\tilde{\lambda}_{i}=\|G^{T}\tu_{i}\|_{2}, \lambda_{i}=\tilde{\lambda}_{i}^{3}>0$, and the equation above is an orthogonal tensor decomposition.
\subsubsection{Orthogonal Tensor Decomposition for Tensors}
We re-write the vector $\frac{1}{\tilde{\lambda}}G^{T}\tu_{i}\triangleq \tv_{i}$, and then $\tv_{1},\tv_{2},\cdots,\tv_{r}$ are orthonormal vectors. Then we will solve the orthogonal tensor decomposition problem:
\[\mathcal T\triangleq\mathcal L[G,G,G]=\sum_{i=1}^{r}\lambda_{i}\tv_{i}^{\otimes 3}\]
Similarly, we randomly sample a $r$-length vector $y$ and calculate the weighted sum:
\[\mathcal T_{y}=y_{1}\mathcal T[1,:,:]+y_{2}\mathcal T[2,:,:]+\cdots+y_{r}\mathcal T[r,:,:]=\sum_{i=1}^{r}\lambda_{i}(\tv_{i}\cdot y)\cdot \tv_{i}^{\otimes 2}=VD_{1}V^{T}\]
We know that with probability 1 over the choice of $y$, the diagonal elements of $D_{1}$ are distinct. Therefore, we can uniquely determine the eigenvalue matrix $D_{1}$ and unit eigenvectors regardless of their sign. Assume that the eigenvalues of $\mathcal T_{y}$ are $d_{1}>d_{2}>\cdots>d_{r}$ and their corresponding unit eigenvectors are $\tw_{1},\tw_{2},\cdots,\tw_{r}$. Then let $D_{1}=diag(d_{1},d_{2},\cdots,d_{r})$ and then $\tv_{i}=\pm\tw_{i}$. Since $d_{i}=\lambda_{i}(\tv_{i}\cdot y)$, we can determine the sign by the fact that $\lambda_{i}>0$. Therefore, we can determine the $\lambda_{i}$ and $\tv_{i}$ uniquely. 
\subsubsection{Un-whitening Process}
Finally, we need to use the results of the previous step to calculate $\tu_{1},\tu_{2},\cdots,\tu_{r}$. According to the equations above,
\begin{equation}
\begin{aligned}
G^{T}\tu_{i}=\sqrt[3]{\lambda_{i}}\tv_{i}\Rightarrow \tu_{i}=\sqrt[3]{\lambda_{i}}G^{-T}\tv_{i}=\sqrt[3]{\lambda_{i}}QD^{1/2}\tv_{i}
\end{aligned}
\end{equation}
Till now, we can imply that tensor decomposition problem has a guaranteed unique solution.

\subsection{Properties and Conclusions about Matrix Perturbation Theory}
\begin{Lemma}\label{pertub_inverse}
(Perturbation of Inverses) Given matrices $A,E\in\mathbb{R}^{k\times k}$. If $A$ is invertible, $\|A^{-1}E\|_{2}<1$, then $\tilde{A}=A+E$ is also invertible, and:
\[\|\tilde{A}^{-1}-A^{-1}\|_{2}\leqslant \frac{\|E\|_{2}\|A^{-1}\|_{2}^{2}}{1-\|A^{-1}E\|_{2}}\]
\end{Lemma}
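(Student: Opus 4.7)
The plan is to reduce everything to a Neumann-series argument by first factoring out $A$. Write $\tilde A = A + E = A(I + A^{-1}E)$, so proving invertibility of $\tilde A$ and controlling $\tilde A^{-1}$ reduces to the same for $I + A^{-1}E$. This is the only substantive step because the rest is just submultiplicativity of the spectral norm.

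The second step is to invoke the Neumann series. Since by hypothesis $\|A^{-1}E\|_2 < 1$, the series $\sum_{n=0}^{\infty}(-A^{-1}E)^{n}$ converges in operator norm; a direct telescoping check shows it equals $(I + A^{-1}E)^{-1}$, so $I+A^{-1}E$ is invertible with
\[\bigl\|(I+A^{-1}E)^{-1}\bigr\|_2 \;\leq\; \sum_{n=0}^{\infty}\|A^{-1}E\|_2^{n} \;=\; \frac{1}{1-\|A^{-1}E\|_2}.\]
Consequently $\tilde A$ itself is invertible, with $\tilde A^{-1} = (I+A^{-1}E)^{-1}A^{-1}$.

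For the third step I want an explicit formula for the difference $\tilde A^{-1}-A^{-1}$. Starting from the trivial identity $(I+A^{-1}E)^{-1}-I = -(I+A^{-1}E)^{-1}(A^{-1}E)$ (which one verifies by multiplying through by $I+A^{-1}E$ on the left), I get
\[\tilde A^{-1} - A^{-1} \;=\; \bigl[(I+A^{-1}E)^{-1}-I\bigr]A^{-1} \;=\; -(I+A^{-1}E)^{-1}(A^{-1}E)A^{-1}.\]
Now take the spectral norm of both sides, use submultiplicativity, and plug in the Neumann-series bound from the previous step:
\[\bigl\|\tilde A^{-1} - A^{-1}\bigr\|_2 \;\leq\; \frac{\|A^{-1}E\|_2\,\|A^{-1}\|_2}{1-\|A^{-1}E\|_2}.\]
Finally, applying $\|A^{-1}E\|_2 \leq \|A^{-1}\|_2\,\|E\|_2$ in the numerator yields exactly the stated bound.

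There isn't really a hard step here; the only thing to watch is not to prematurely upper-bound $\|A^{-1}E\|_2$ inside the denominator (doing so would weaken the inequality and also require the stronger assumption $\|A^{-1}\|_2\|E\|_2<1$). Keeping $\|A^{-1}E\|_2$ in the denominator exactly matches the form of the hypothesis $\|A^{-1}E\|_2<1$ and gives the cleanest bound. Everything else is bookkeeping with the Neumann series identity.
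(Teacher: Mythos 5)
Your proof is correct. The paper itself does not supply a proof of this lemma --- it simply cites Stewart and Sun's \emph{Matrix Perturbation Theory} --- and your Neumann-series argument (factor $\tilde A = A(I + A^{-1}E)$, sum the geometric series for $(I+A^{-1}E)^{-1}$, use the exact identity $\tilde A^{-1}-A^{-1} = -(I+A^{-1}E)^{-1}(A^{-1}E)A^{-1}$, then bound only the numerator by submultiplicativity) is the standard textbook derivation that the cited reference gives.
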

The proof of this lemma can be found in \cite{stewart1990matrix}.

\begin{Lemma}\label{weyl}
[Weyl's Theorem] Let $A, E\in\mathbb{R}^{n\times n}$ be two symmetrical matrices. Denote $\lambda_{i}(X)$ be the $i$-th largest eigenvalue of matrix $X$, then it holds that for $\forall i\in[n]$:
\[|\lambda_{i}(A+E)-\lambda_{i}(A)|\leqslant \|E\|_{2}\]
\end{Lemma}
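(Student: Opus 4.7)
The plan is to derive Weyl's inequality from the Courant--Fischer min-max characterization of eigenvalues of a symmetric matrix, which for any symmetric $M\in\mathbb{R}^{n\times n}$ reads
\[
\lambda_i(M) \;=\; \max_{\substack{S\subseteq \mathbb{R}^n\\ \dim S = i}}\; \min_{\substack{x\in S\\ \|x\|_2 = 1}} x^\top M x.
\]
This is by far the cleanest route, reducing the eigenvalue perturbation estimate to a single scalar Rayleigh-quotient bound and avoiding any direct manipulation of the eigenvectors of $A$ and $A+E$ (which are not required to be close, only the eigenvalues).

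First, I would record the one analytic ingredient: for every unit vector $x$,
\[
x^\top(A+E)x \;=\; x^\top A x + x^\top E x, \qquad |x^\top E x| \;\leq\; \|E\|_2.
\]
The inequality is the standard fact that the extreme values of the Rayleigh quotient of a symmetric matrix are its smallest and largest eigenvalues, both bounded in absolute value by the operator norm; this is immediate after diagonalizing $E$ and expanding $x$ in its eigenbasis.

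Second, I would apply Courant--Fischer to $A+E$. Pick the $i$-dimensional subspace $S^{\star}$ that achieves the outer maximum for $A$, so that $\min_{x\in S^{\star},\,\|x\|_2=1} x^\top A x = \lambda_i(A)$. Substituting $S^{\star}$ into the outer max for $A+E$ and using the scalar bound above gives
\[
\lambda_i(A+E) \;\geq\; \min_{x\in S^{\star},\,\|x\|_2=1}\bigl(x^\top A x + x^\top E x\bigr) \;\geq\; \lambda_i(A) - \|E\|_2.
\]
Exchanging the roles of $A$ and $A+E$ (legitimate because $A=(A+E)+(-E)$ and $\|-E\|_2=\|E\|_2$) produces the reverse bound $\lambda_i(A) \geq \lambda_i(A+E) - \|E\|_2$, and combining the two gives $|\lambda_i(A+E)-\lambda_i(A)|\leq\|E\|_2$.

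There is essentially no genuine obstacle here: the whole argument hinges on two standard facts (Courant--Fischer and the Rayleigh-quotient bound $|x^\top E x|\leq\|E\|_2$). The only point requiring a bit of care is making sure both $A$ and $A+E$ are indeed symmetric so that Courant--Fischer applies, which is part of the hypothesis, and that eigenvalues are consistently indexed in the same (say, decreasing) order on both sides; otherwise the symmetry-swap step in the final paragraph would be invalid.
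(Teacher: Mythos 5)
Your proof is correct. Note, however, that the paper does not actually prove Weyl's theorem: it is stated in the matrix-perturbation appendix alongside Wedin's theorem and the inverse-perturbation lemma as a classical fact to be invoked (the neighboring lemma is explicitly deferred to \cite{stewart1990matrix}), so there is no ``paper's own proof'' to compare against.

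Your Courant--Fischer argument is the standard and arguably cleanest route. The two ingredients are exactly as you say: (i) the min-max characterization $\lambda_i(M)=\max_{\dim S=i}\min_{x\in S,\|x\|_2=1}x^\top M x$ for symmetric $M$ with eigenvalues in decreasing order, and (ii) the Rayleigh-quotient bound $|x^\top E x|\le\|E\|_2$. The one step worth writing out explicitly, since it is where the inequality actually happens, is that for the optimal subspace $S^\star$ for $A$,
\[
\min_{x\in S^\star,\,\|x\|_2=1}\bigl(x^\top A x + x^\top E x\bigr)\;\ge\;\min_{x\in S^\star,\,\|x\|_2=1}x^\top A x \;+\;\min_{x\in S^\star,\,\|x\|_2=1}x^\top E x\;\ge\;\lambda_i(A)-\|E\|_2,
\]
using superadditivity of $\min$ over a common feasible set; you gesture at this but it deserves to be stated. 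The symmetry swap $A=(A+E)+(-E)$ with $\|-E\|_2=\|E\|_2$ correctly gives the matching lower bound, and your caveat about consistent decreasing ordering of eigenvalues is exactly the right hypothesis to flag.
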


\begin{Lemma}\label{eigen-vector}
[Perturbation of Eigenvectors] Assume symmetrical matrix $X$ has distinct eigenvalues. $\lambda$ is one of these eigenvalues and $x$ is its corresponding eigenvector with its norm $\|x\|_{2}=1$. Then there exists an orthogonal matrix $U=(x,U_{2})$ such that:
\[U^{T}XU=D=\left(\begin{matrix}\begin{array}{cc}\lambda& 0\\0& D_{2}\end{array}\end{matrix}\right)\]
which is a diagonal matrix. Once we perturb the matrix $X$ into another symmetrical matrix $\tilde{X}=X+E$, then there exists an eigenvalue $\tilde{\lambda}$ of $\tilde{X}$ and its corresponding unit eigenvector $\tilde{x}$, such that:
\[|\tilde{\lambda}-\lambda|\leqslant \|E\|_{2}~~\text{and}~~\|\tilde{x}-\tilde{x}\|_{2}\leqslant \|\Sigma^{T}\|_{2}\cdot\|E\|_{2}+O(\|E\|_{2}^{2})\]
Here:
\[\Sigma = U(\lambda I-D)^{\dagger}U^{T}\]
and $X^{\dagger}$ stands for the Moore-Penrose inverse of matrix $X$.
\end{Lemma}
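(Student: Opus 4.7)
The plan is to treat the eigenvalue and the eigenvector pieces separately. The eigenvalue bound $|\tilde\lambda-\lambda|\leq\|E\|_2$ is immediate from Weyl's theorem (Lemma \ref{weyl}) once I identify $\tilde\lambda$ as the eigenvalue of $\tilde X$ closest to $\lambda$; because $X$ has distinct eigenvalues, for $\|E\|_2$ small enough the eigenvalues of $\tilde X$ remain distinct and this matching is unambiguous. This also fixes an unambiguous unit eigenvector $\tilde x$ (up to a sign, which I pin down by requiring $x^T\tilde x\geq 0$).

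For the eigenvector bound I would set $\tilde x=x+\delta$ and $\tilde\lambda=\lambda+\epsilon$, and expand the perturbed eigenvalue equation $(X+E)(x+\delta)=(\lambda+\epsilon)(x+\delta)$. Using $Xx=\lambda x$ and rearranging, this gives
\[(\lambda I-X)\delta=-Ex+\epsilon x+(E-\epsilon I)\delta,\]
where the last term is $O(\|E\|_2\|\delta\|_2)$, i.e.\ second order in $\|E\|_2$ once we know $\|\delta\|_2=O(\|E\|_2)$. I would then apply $\Sigma=(\lambda I-X)^\dagger=U(\lambda I-D)^\dagger U^T$ to both sides and use two key algebraic facts: first, $\Sigma x=0$ because the entry of $\lambda I-D$ corresponding to $x$ is zero and is annihilated by the Moore--Penrose pseudo-inverse; second, $\Sigma(\lambda I-X)=I-xx^T$, the orthogonal projector onto $x^\perp$. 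This gives
\[(I-xx^T)\delta=-\Sigma Ex+O(\|E\|_2\|\delta\|_2),\]
so the perpendicular component satisfies $\|(I-xx^T)\delta\|_2\leq\|\Sigma\|_2\|E\|_2+O(\|E\|_2\|\delta\|_2)$.

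For the component of $\delta$ parallel to $x$, I would invoke the unit norm constraint $\|\tilde x\|_2=1$. Writing $\delta=(x^T\delta)x+\delta_\perp$, the identity $1=\|x+\delta\|_2^2=(1+x^T\delta)^2+\|\delta_\perp\|_2^2$ together with the sign convention $x^T\tilde x\geq 0$ gives $x^T\delta=\sqrt{1-\|\delta_\perp\|_2^2}-1=O(\|\delta_\perp\|_2^2)=O(\|E\|_2^2)$. Combining the two components yields $\|\delta\|_2\leq\|\Sigma\|_2\|E\|_2+O(\|E\|_2^2)$, which is the claimed bound since $\Sigma$ is symmetric so $\|\Sigma^T\|_2=\|\Sigma\|_2$.

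The main obstacle is bootstrapping the a priori estimate $\|\delta\|_2=O(\|E\|_2)$ that is needed to treat $(E-\epsilon I)\delta$ as second order. I would handle this with a standard continuity argument: for $\|E\|_2$ smaller than half the minimum spectral gap of $X$, the contour integral representation of the spectral projector onto the eigenspace associated with $\lambda$ depends analytically on $E$, and a direct norm estimate along a circle of radius equal to half the gap gives $\|\delta\|_2=O(\|E\|_2)$. Once this is in hand, the first-order coefficient from the display above is exactly $\|\Sigma\|_2$, and everything else is absorbed into $O(\|E\|_2^2)$, completing the proof.
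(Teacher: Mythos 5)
The paper never proves Lemma~\ref{eigen-vector}: like Lemmas~\ref{pertub_inverse}, \ref{weyl}, and \ref{wedin}, it is quoted as a standard fact from matrix perturbation theory (with \cite{stewart1990matrix} cited just above for the companion statement), so there is no in-paper argument to compare against. Your proposal supplies the missing derivation, and it is the correct standard one: Weyl's inequality for the eigenvalue, first-order expansion of the eigenvalue equation for the eigenvector, the two algebraic identities $\Sigma x = 0$ and $\Sigma(\lambda I - X) = I - xx^{T}$ to isolate the perpendicular component, the unit-norm constraint to show the parallel component is $O(\|E\|_2^2)$, and an analytic-dependence (contour integral) argument to bootstrap $\|\delta\|_2 = O(\|E\|_2)$. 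You also correctly note $\Sigma = \Sigma^T$, which makes the stated bound with $\|\Sigma^{T}\|_2$ match your $\|\Sigma\|_2$.

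One small slip: from $(X+E)(x+\delta) = (\lambda+\epsilon)(x+\delta)$ and $Xx = \lambda x$ one gets
\[
(\lambda I - X)\delta = Ex - \epsilon x + (E - \epsilon I)\delta,
\]
i.e.\ the signs on the first two terms are flipped relative to what you wrote. Since you immediately pass to norms, this changes nothing downstream, but it is worth correcting in a final write-up. With that fixed, your argument is a complete and self-contained proof of a result the paper merely cites.
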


\begin{Lemma}\label{wedin}
[Wedin's Theorem] Let $A,E\in\mathbb{R}^{m\times n}$. Let $A$ have the singular value decomposition:
\[\left(\begin{matrix}U_{1}^{T}\\U_{2}^{T}\\U_{3}^{T}\end{matrix}\right)A(V_{1}~V_{2})=\left(\begin{matrix}\begin{array}{cc}\Sigma_{1}&0\\0&\Sigma_{2}\\0&0\end{array}\end{matrix}\right)\]
Let $\tilde{A}=A+E$, with analogous singular value decomposition $(\tilde{U}_{1},\tilde{U}_{2},\tilde{U}_{3},\tilde{\Sigma}_{1},\tilde{\Sigma}_{2}.\tilde{V}_{1},\tilde{V}_{2})$. Let $\Phi$ be the matrix of canonical angles between range$(U_{1})$ and range$(\tilde{U}_{1})$, and $\Theta$  be the matrix of canonical angles between range$(V_{1})$ and range$(\tilde{V}_{1})$. If there exists $\delta,\alpha>0$, such that 
\[\min_{i}\sigma_{i}(\tilde{\Sigma}_{1})\geqslant\alpha+\delta,~ \max_{i}\sigma_{i}(\Sigma_{2})\leqslant \alpha\]
then we have the stability property:
\[\max\left(\|\sin\Phi\|_{2},\|\sin\Theta\|_{2}\right)\leqslant \frac{\|E\|_{2}}{\delta}\]
\end{Lemma}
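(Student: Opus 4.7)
My plan is the classical block-SVD / Sylvester-equation approach to Wedin's $\sin\Theta$ theorem. First, since $(U_1,U_2,U_3)$ is an orthonormal basis of $\mathbb{R}^{m}$ and $(V_1,V_2)$ is an orthonormal basis of $\mathbb{R}^{n}$, the canonical-angle norms admit the geometric identifications
\[\|\sin\Phi\|_2=\bigl\|(U_2,U_3)^{T}\tilde U_1\bigr\|_2,\qquad \|\sin\Theta\|_2=\|V_2^{T}\tilde V_1\|_2,\]
since the right-hand sides are precisely the norms of the projections of $\tilde U_1,\tilde V_1$ onto the orthogonal complements of $\mathrm{range}(U_1)$ and $\mathrm{range}(V_1)$. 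The task therefore reduces to bounding the three off-diagonal blocks $X:=U_2^{T}\tilde U_1$, $Z:=U_3^{T}\tilde U_1$, and $Y:=V_2^{T}\tilde V_1$.

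I would then combine the identities $\tilde A\tilde V_1=\tilde U_1\tilde\Sigma_1$ and $\tilde A^{T}\tilde U_1=\tilde V_1\tilde\Sigma_1^{T}$ with $\tilde A=A+E$ and the block SVD $A=U_1\Sigma_1 V_1^{T}+U_2\Sigma_2 V_2^{T}$. Left-multiplying by $U_2^{T}$, $U_3^{T}$, and $V_2^{T}$ respectively yields the three residual equations
\begin{align*}
\Sigma_2 Y - X\,\tilde\Sigma_1 &= -U_2^{T}E\tilde V_1,\\
Z\,\tilde\Sigma_1 &= U_3^{T}E\tilde V_1,\\
\Sigma_2^{T}X - Y\,\tilde\Sigma_1^{T} &= -V_2^{T}E^{T}\tilde U_1.
\end{align*}
The middle equation is already decoupled: right-multiplying by $\tilde\Sigma_1^{-1}$, which exists because $\sigma_{\min}(\tilde\Sigma_1)\ge\alpha+\delta>0$, gives $\|Z\|_2\le\|E\|_2/(\alpha+\delta)\le\|E\|_2/\delta$.

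For the coupled pair in $(X,Y)$ I would stack them as $P:=\bigl(\begin{smallmatrix}X\\ Y\end{smallmatrix}\bigr)$ and recognize the two remaining equations as a single block Sylvester equation $\widehat\Sigma\,P - P\,\tilde\Sigma_1 = -F$, where
\[\widehat\Sigma:=\begin{pmatrix}0 & \Sigma_2\\ \Sigma_2^{T} & 0\end{pmatrix},\qquad F:=\begin{pmatrix}U_2^{T}E\tilde V_1\\ V_2^{T}E^{T}\tilde U_1\end{pmatrix}.\]
The block matrix $\widehat\Sigma$ is symmetric with $\mathrm{spec}(\widehat\Sigma)\subseteq[-\alpha,\alpha]$, while $\mathrm{spec}(\tilde\Sigma_1)\subseteq[\alpha+\delta,\infty)$, so the two spectra are separated by at least $\delta$. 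The Bhatia--Davis--McIntosh bound for Sylvester equations with normal operators and separated spectra then gives $\|P\|_2\le\|F\|_2/\delta$ in every unitarily invariant norm, from which $\|X\|_2,\|Y\|_2\le\|P\|_2\le\|E\|_2/\delta$ after absorbing any $O(1)$ block-stacking constant. Combining with the $Z$ bound yields $\max(\|\sin\Phi\|_2,\|\sin\Theta\|_2)\le\|E\|_2/\delta$.

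The main obstacle is the Sylvester-operator norm bound itself: a naive entry-wise argument gives the Frobenius version but loses a $\sqrt{\min(k_1,k_2)}$ factor when converted to the operator norm, because the inverse operator $N\mapsto [N_{ij}/(\tilde\sigma_j-\widehat\lambda_i)]$ acts as a Hadamard-type product which is \emph{not} contractive on the spectral norm. The cleanest remedy is the Dunford--Taylor contour-integral representation $P=\tfrac{1}{2\pi i}\oint_\Gamma (zI-\widehat\Sigma)^{-1}F(zI-\tilde\Sigma_1)^{-1}\,dz$ along a contour $\Gamma$ separating the two spectra at distance $\delta/2$; the uniform resolvent bounds on $\Gamma$ then yield the $1/\delta$ factor simultaneously in every unitarily invariant norm. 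Once that lemma is in hand, the rest---the block-SVD identities, the decoupling of $Z$, and the final identification of the block operator norms with $\|\sin\Phi\|_2$ and $\|\sin\Theta\|_2$---is routine linear algebra.
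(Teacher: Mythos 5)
The paper itself does not prove this lemma: it appears in the subsection ``Properties and Conclusions about Matrix Perturbation Theory'' alongside Weyl's theorem and the perturbation-of-inverses lemma, all of which are cited as textbook facts (see the reference to \cite{stewart1990matrix} just above). So there is no paper proof to compare against; I will evaluate your argument on its own merits.

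Your block residual equations are correct, the reduction of the $Z$ block is correct, and your identification of the coupled $(X,Y)$ pair as a Sylvester equation with a symmetric coefficient $\widehat\Sigma$ whose spectrum sits in $[-\alpha,\alpha]$, separated by $\delta$ from $\mathrm{spec}(\tilde\Sigma_1)\subseteq[\alpha+\delta,\infty)$, is also right, as is the observation that the entry-wise Sylvester bound only gives Frobenius and that a contour-integral (or Heinz/Bhatia--Davis--McIntosh) argument is needed for the spectral norm. However, as written the argument does not deliver the stated constant $1$. Two places leak constants. First, the stacked right-hand side $F=\bigl(\begin{smallmatrix}U_2^{T}E\tilde V_1\\ V_2^{T}E^{T}\tilde U_1\end{smallmatrix}\bigr)$ only satisfies $\|F\|_2\le\sqrt{2}\,\|E\|_2$, not $\|E\|_2$: the two blocks involve $E$ applied at \emph{different} vectors ($\tilde V_1 x$ and $\tilde U_1 x$), so their squared norms can add. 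Second, $\|\sin\Phi\|_2$ is the norm of the vertical concatenation $\bigl(\begin{smallmatrix}X\\ Z\end{smallmatrix}\bigr)$, not $\max(\|X\|_2,\|Z\|_2)$, so bounding $X$ and $Z$ separately costs another $\sqrt{2}$ when you recombine. You explicitly say you ``absorb any $O(1)$ block-stacking constant,'' but the lemma asserts the bound with constant exactly $1$, so this is a genuine gap, not bookkeeping.

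A tighter route, which avoids the Sylvester machinery entirely and yields the exact constant, is to keep the three residual equations but group them the other way: write
\[\begin{pmatrix}X\\ Z\end{pmatrix}\tilde\Sigma_1=\begin{pmatrix}\Sigma_2 Y\\ 0\end{pmatrix}+(U_2,U_3)^{T}E\tilde V_1,\qquad Y\tilde\Sigma_1=\Sigma_2^{T}X+V_2^{T}E^{T}\tilde U_1.\]
Since $(U_2,U_3)$ has orthonormal columns, $\|(U_2,U_3)^{T}E\tilde V_1\|_2\le\|E\|_2$, and likewise $\|V_2^{T}E^{T}\tilde U_1\|_2\le\|E\|_2$. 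Writing $p=\|\sin\Phi\|_2=\bigl\|\bigl(\begin{smallmatrix}X\\ Z\end{smallmatrix}\bigr)\bigr\|_2$ and $q=\|\sin\Theta\|_2=\|Y\|_2$, the displays give the two scalar inequalities $p(\alpha+\delta)\le\alpha q+\|E\|_2$ and $q(\alpha+\delta)\le\alpha p+\|E\|_2$. Substituting one into the other yields $q\,\delta(2\alpha+\delta)\le(2\alpha+\delta)\|E\|_2$, hence $q\le\|E\|_2/\delta$, and by symmetry $p\le\|E\|_2/\delta$. This closes the gap and matches the stated bound with no extra constants.
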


\begin{Lemma}\label{orthogonal_distance}
[Distance between Right-side Orthogonal Equivalence Classes] Given two matrices $A,A^*\in\mathbb{R}^{N\times d}~(N>d)$, which satisfy the following inequality.
\[\|AA^{T}-A^*(A^*)^{T}\|_{2}<\epsilon\]
Also, assume all the $d$ singular values of $A^*$ are larger than $\delta$. Then, it holds that:
\[\min_{G_{1}^{T}G_{1}=G_{2}^{T}G_{2}=I_{d}}\|AG_{1}-A^*G_{2}\|_{F}\leqslant\sqrt{d\epsilon} + O(\epsilon)\]
\end{Lemma}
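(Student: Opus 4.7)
The plan is to reduce the two-sided orthogonal minimization to a one-sided orthogonal Procrustes problem, bound that by the Frobenius distance between the matrix square roots of $AA^{T}$ and $A^{*}(A^{*})^{T}$, and then control the latter via the Powers--Stormer inequality together with a rank bound.

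Since the Frobenius norm is invariant under right multiplication by an orthogonal matrix, setting $G=G_{1}G_{2}^{T}$ gives $\|AG_{1}-A^{*}G_{2}\|_{F}=\|AG-A^{*}\|_{F}$, and as $(G_{1},G_{2})$ ranges over pairs of orthogonal matrices so does $G$; thus it suffices to bound $\min_{G\text{ orthogonal}}\|AG-A^{*}\|_{F}$. By the closed-form solution of the orthogonal Procrustes problem,
$$\min_{G}\|AG-A^{*}\|_{F}^{2}=\|A\|_{F}^{2}+\|A^{*}\|_{F}^{2}-2\|A^{T}A^{*}\|_{*}.$$
Writing $M=AA^{T}$ and $M^{*}=A^{*}(A^{*})^{T}$ and invoking the SVDs $A=U\Sigma V^{T}$, $A^{*}=U^{*}\Sigma^{*}(V^{*})^{T}$, a direct computation shows $\|A^{T}A^{*}\|_{*}=\|\Sigma U^{T}U^{*}\Sigma^{*}\|_{*}=\|\sqrt{M}\sqrt{M^{*}}\|_{*}$.

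The key observation is that $\operatorname{tr}(X)\leq\|X\|_{*}$ for any square matrix $X$ (the real part of the sum of eigenvalues is bounded by the sum of singular values). Applying this with $X=\sqrt{M}\sqrt{M^{*}}$, together with $\|A\|_{F}^{2}=\operatorname{tr}(M)=\|\sqrt{M}\|_{F}^{2}$ (and similarly for $A^{*}$), yields
$$\min_{G}\|AG-A^{*}\|_{F}^{2}\leq\|\sqrt{M}\|_{F}^{2}+\|\sqrt{M^{*}}\|_{F}^{2}-2\operatorname{tr}(\sqrt{M}\sqrt{M^{*}})=\|\sqrt{M}-\sqrt{M^{*}}\|_{F}^{2}.$$
The Powers--Stormer inequality then gives $\|\sqrt{M}-\sqrt{M^{*}}\|_{F}^{2}\leq\|M-M^{*}\|_{*}$. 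Because $M$ and $M^{*}$ are PSD of rank at most $d$, Sylvester's law of inertia forces $M-M^{*}$ to have at most $d$ positive and $d$ negative eigenvalues, each of absolute value less than $\epsilon$ by Lemma~\ref{weyl}; hence $\|M-M^{*}\|_{*}\leq 2d\epsilon$, and one concludes $\min\|AG_{1}-A^{*}G_{2}\|_{F}\leq\sqrt{2d\epsilon}=O(\sqrt{d\epsilon})$, which matches the claimed bound.

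The hardest part is the final rank-accounting step: the argument above loses a factor of $\sqrt{2}$ relative to the stated $\sqrt{d\epsilon}+O(\epsilon)$, and to recover the sharper leading constant one exploits $\sigma_{\min}(A^{*})\geq\delta$ by splitting $M-M^{*}$ into its block supported on $\mathrm{range}(A^{*})$ (which has rank $\leq d$ and contributes $d\epsilon$ to the nuclear norm) plus an off-diagonal correction of size $O(\epsilon^{2}/\delta^{2})$ controlled via Wedin's theorem (Lemma~\ref{wedin}), which guarantees that $\mathrm{range}(A)$ is $O(\epsilon/\delta^{2})$-close to $\mathrm{range}(A^{*})$; the off-diagonal piece is then absorbed into the $O(\epsilon)$ term of the stated inequality.
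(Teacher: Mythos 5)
Your main route is genuinely different from the paper's and, up to the final constant, is actually cleaner. The paper picks explicit minimizers $G_1 = V_1$, $G_2 = V_1^*$ from the SVDs $A = U_1\Sigma_1 V_1^T$, $A^* = U_1^*\Sigma_1^* V_1^{*T}$, reduces to $\|U_1\Sigma_1 - U_1^*\Sigma_1^*\|_F$, and then splits \emph{additively before taking any square roots}: $\|U_1\Sigma_1 - U_1^*\Sigma_1^*\|_F \leq \|\Sigma_1 - \Sigma_1^*\|_F + \|\Sigma_1^*\|_2\|U_1 - U_1^*\|_F$, bounding the first term by $\sqrt{d\epsilon}$ (Weyl) and the second by $O(\epsilon)$ (Wedin). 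That additive split is exactly what lets them land on $\sqrt{d\epsilon}+O(\epsilon)$ with the clean leading constant. Your Procrustes $+$ Powers--Stormer route instead bounds the \emph{squared} distance by a nuclear norm and takes one square root at the end; the best you can get that way is $\sqrt{\|M-M^*\|_*}$, and even a perfect nuclear-norm bound of $d\epsilon+O(\epsilon)$ would only give $\sqrt{d\epsilon+O(\epsilon)} = \sqrt{d\epsilon}+O(\sqrt{\epsilon})$, not $\sqrt{d\epsilon}+O(\epsilon)$. So the $\sqrt{2}$ loss is not incidental: it reflects a structural limitation of passing through $\|M-M^*\|_*$.

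Concretely, the ``recovery'' paragraph has a quantitative error. Writing $P$ for the orthogonal projector onto $\mathrm{range}(A^*)$, you do have $\|P(M-M^*)P\|_* \leq d\epsilon$, and by Davis--Kahan (or Wedin applied to the eigendecompositions of $M,M^*$ with spectral gap $\geq \delta^2$) you have $\|(I-P)U_1\|_2 = O(\epsilon/\delta^2)$. But the cross block is $(I-P)MP = (I-P)U_1\Sigma_1^2 U_1^T P$, which has operator norm $O(\epsilon/\delta^2)\cdot\|A\|_2^2$ and nuclear norm $O(d\epsilon/\delta^2)\cdot\|A\|_2^2$ --- i.e., $O(\epsilon)$, not $O(\epsilon^2/\delta^2)$. (Only the diagonal corner $(I-P)M(I-P)$ is $O(\epsilon^2)$, because it picks up two factors of $\sin\Theta$.) So the block decomposition yields $\|M-M^*\|_* \leq d\epsilon + O(\epsilon)$ at best, and the root of that is $\sqrt{d\epsilon}+O(\sqrt{\epsilon})$. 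The honest conclusion of your argument is therefore $\min\|AG_1-A^*G_2\|_F \leq \sqrt{2d\epsilon}$, or, with the block refinement, $\sqrt{d\epsilon}+O(\sqrt{\epsilon})$ --- both of which are of the same leading order as the claimed $\sqrt{d\epsilon}+O(\epsilon)$ and entirely sufficient for the lemma's downstream use in Theorem 5 (where every constant is absorbed anyway), but formally weaker than the statement as written. If you want the exact constant, switch at the end to the paper's additive split of $U_1\Sigma_1 - U_1^*\Sigma_1^*$ rather than trying to squeeze the nuclear-norm bound.
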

\begin{Proof}
Consider the singular value decomposition of matrices $A,A^*$.
\[A=(U_{1},U_{2})\left(\begin{matrix}\Sigma_{1}\\0\end{matrix}\right)V_{1}^{T},~A^*=(U_{1}^*,U_{2}^*)\left(\begin{matrix}\Sigma_{1}^*\\0\end{matrix}\right)V_{1}^{*T}\]
Here, $\Sigma_{1},V_{1},\Sigma_{1}^*,V_{1}^*\in\mathbb{R}^{d\times d}$ and $U_{2},U_{2}^*\in\mathbb{R}^{N\times (N-d)}, U_{1},U_{1}^*\in\mathbb{R}^{N\times d}$. Then:
\[AA^{T}=(U_{1},U_{2})\left(\begin{matrix}\begin{array}{cc}\Sigma_{1}^{2}&0\\0&0\end{array}\end{matrix}\right)\left(\begin{matrix}U_{1}^{T}\\U_{2}^{T}\end{matrix}\right), A^*(A^*)^{T}=(U_{1}^*,U_{2}^*)\left(\begin{matrix}\begin{array}{cc}\Sigma_{1}^{*2}&0\\0&0\end{array}\end{matrix}\right)\left(\begin{matrix}U_{1}^{*T}\\U_{2}^{*T}\end{matrix}\right)\]
According to Weyl's Theorem (Lemma \ref{weyl}), for $\forall i\in[N]$, it holds that: $|\lambda_{i}(AA^{T})-\lambda_{i}(A^*(A^*)^{T})|\leqslant\|AA^{T}-A^*(A^*)^{T}\|_{2}<\epsilon$, which means:
\[\|\Sigma_{1}^{2}-\Sigma_{1}^{*2}\|_{2}<\epsilon~\Rightarrow \|\Sigma_{1}-\Sigma_{1}^*\|_{2}<\sqrt{\epsilon}\]
On the other hand, according to the Wedin's Theorem (Lemma \ref{wedin}) and the assumption that each singular value of $A^*$ is larger than $\delta$, we can properly choose the $U_{1},U_{1}^*$ above such that: for $\forall i\in[d]$:
\[\langle(U_{1})_{i},(U_{1}^*)_{i}\rangle \geqslant \sqrt{1-\frac{\epsilon^{2}}{(\delta-\sqrt{\epsilon})^{2}}}~\Rightarrow \|(U_{1})_{i}-(U_{1}^*)_{i}\|_{2}^{2}\leqslant 2\left(1-\sqrt{1-\frac{\epsilon^{2}}{(\delta-\sqrt{\epsilon})^{2}}}\right)=\frac{\epsilon^{2}}{\delta^{2}}+O(\epsilon^{5/2})\]
Here, we use the fact that all column vectors of $U_{1}, U_{1}^*$ are unit vectors. And from the inequality above, we know that: $\|U_{1}-U_{1}^*\|_{F}^{2}\leqslant d\cdot \frac{\epsilon^{2}}{\delta^ {2}}+O(\epsilon^{5/2})$
Therefore, we can get the final conclusion:
\begin{align}
&\min_{G_{1}^{T}G_{1}=G_{2}^{T}G_{2}=I_{d}}\|AG_{1}-BG_{2}\|_{2} \leqslant \|AV_{1}-A^*V_{1}^*\|_{2}=\|U_{1}\Sigma_{1}-U_{1}^*\Sigma_{1}^*\|_{2}\notag\\
\leqslant& \|U_{1}\|_{2}\cdot\|\Sigma_{1}-\Sigma_{1}^*\|_{2}+\|\Sigma_{1}^*\|_{2}\cdot \|U_{1}-U_{1}^*\|_{2} < \sqrt{d\epsilon}+\|A^*\|_{2}\cdot \sqrt{\frac{d\epsilon^{2}}{\delta^{2}}+O(\epsilon^{5/2})}\notag\\
=&\sqrt{d\epsilon} + O(\epsilon)\notag
\end{align}
which comes to our conclusion.
\end{Proof}

\subsection{Proof of Theorem 1}
\begin{Proof}
For a given $k\in [D]$, we denote $x_{i}=\alpha_{k}^{(i)}, x_{i}^{*}=\alpha_{k}^{*(i)}$ for simplicity. Since it holds that $\tv_{k}^{(i)}~(i=1,2,\cdots,r)$ is an orthogonal vector group, according to Lemma \ref{gaussian-elementary} and Lemma \ref{elementary-1}, there exists an orthogonal matrix $Q$ such that $Q\tv_{k}^{(i)}=e_{i}$. Therefore,
\begin{equation}
\begin{aligned}
M^{i}(G_{k})&=\mathop{\mathbb{E}}\limits_{\omega\sim N(0,I)}\left[\sum_{j=1}^{r}x_{j}(\tv_{k}^{(i)}\cdot \omega)^{3}\right]^{i} = \mathop{\mathbb{E}}\limits_{\omega\sim N(0,I)}\left[\sum_{j=1}^{r}x_{j}((Q\tv_{k}^{(i)})\cdot \omega)^{3}\right]^{i}\\
&= \mathop{\mathbb{E}}\limits_{\omega\sim N(0,I)}\left[\sum_{j=1}^{r}x_{j}\omega_{j}^{3}\right]^{i} 
\end{aligned}
\end{equation}
According to the following property of Gaussian distribution:
\[\mathop{\mathbb{E}}\limits_{t\sim N(0,1)}t^{2k-1}=0,~\mathop{\mathbb{E}}\limits_{t\sim N(0,1)}t^{2k}=(2k-1)!!\]
We can completely extend the moment formula above. When $1\leqslant n\leqslant r$
\begin{align}\label{eqn-2}
&M^{2n}(G_{k})=\mathop{\mathbb{E}}\limits_{\omega\sim N(0,I)}\left[\sum_{j=1}^{r}x_{j}\omega_{j}^{3}\right]^{2n}\notag\\
=& \sum_{j=1}^{r}\sum_{\substack{a_{1}+\cdots+a_{j}=n\\1\leqslant a_{1},\cdots,a_{j}}}\binom{2n}{2a_{1}~~2a_{2}~~\cdots~~2a_{j}}(6a_{1}-1)!!\cdots(6a_{j}-1)!!\sum_{1\leqslant i_{1}<\cdots<i_{j}\leqslant r}x_{i_1}^{2a_{1}}x_{i_2}^{2a_{2}}\cdots x_{i_{j}}^{2a_{j}}\notag\\
=& \sum_{j=1}^{n}\sum_{\substack{a_{1}+\cdots+a_{j}=n\\1\leqslant a_{1}\leqslant\cdots\leqslant a_{j}}}(2n)!\left(\prod_{k=1}^{j}\frac{(6a_{k}-1)!!}{(2a_{k})!}\right)\frac{1}{(\#1)!(\#2)!\cdots(\#n)!}\sum_{\substack{1\leqslant i_{1},\cdots,i_{j}\leqslant r\\i_{1},\cdots,i_{j}\text{ are distinct}}}x_{i_1}^{2a_{1}}x_{i_2}^{2a_{2}}\cdots x_{i_{j}}^{2a_{j}}\notag\\
=& \sum_{j=1}^{n}\sum_{\substack{a_{1}+\cdots+a_{j}=n\\1\leqslant a_{1}\leqslant\cdots\leqslant a_{j}}}(2n)!\left(\prod_{k=1}^{j}\frac{(6a_{k}-1)!!}{(2a_{k})!}\right)\frac{1}{(\#1)!(\#2)!\cdots(\#n)!}\sum_{\substack{1\leqslant i_{1},\cdots,i_{j}\leqslant r\\i_{1},\cdots,i_{j}\text{ are distinct}}}y_{i_1}^{a_{1}}y_{i_2}^{a_{2}}\cdots y_{i_{j}}^{a_{j}}\notag\\
=& \sum_{j=1}^{n}\sum_{\substack{a_{1}+\cdots+a_{j}=n\\1\leqslant a_{1}\leqslant\cdots\leqslant a_{j}}}(2n)!\left(\prod_{k=1}^{j}\frac{(6a_{k}-1)!!}{(2a_{k})!}\right)\frac{1}{(\#1)!(\#2)!\cdots(\#n)!}F_{\ty}[a_{1},a_{2},\cdots,a_{j}]
\end{align}
Here, $\# t$ means the number of $t$s in $a_{1},a_{2},\cdots,a_{j}$. $y_{k}=x_{k}^{2},~k=1,2,\cdots,r$ and most importantly, we define:
\[F_{\ty}[a_{1},a_{2},\cdots,a_{j}]=\sum_{\substack{1\leqslant i_{1},\cdots,i_{j}\leqslant r\\i_{1},\cdots,i_{j}\text{ are distinct}}}y_{i_1}^{a_{1}}y_{i_2}^{a_{2}}\cdots y_{i_{j}}^{a_{j}}\]
which is an $r$-variable homogeneous symmetric interchangeable polynomial. We can omit the subscript $\ty$ if there is no ambiguity. Similarly, we can also define $F_{\ty}^{*}[a_1,a_2,\cdots,a_j]$ and $y_{k}^*=(x_{k}^*)^2$. \\
Specifically, $F_{\ty}[n]=y_{1}^{n}+y_{2}^{n}+\cdots+y_{r}^{n}$. Once we prove $F_{\ty}[l]=F_{\ty}^{*}[l]~(l=1,2,\cdots,r)$, we can conclude that $\{y_1,y_2,\cdots,y_r\}=\{y_{1}^*,y_{2}^*,\cdots,y_{r}^*\}$ by simply using Newton formula and Vieta theorem. Therefore, according to the conditions that $x_{i}, x_{i}^*> 0$ , we will have:
\[\{x_{1},x_{2},\cdots,x_{r}\}=\{x_{1}^*,x_{2}^*,\cdots,x_{r}^*\}\]
which comes to our conclusion. \\
In the following part, we will prove $F_{\ty}[l]=F_{\ty}^{*}[l]~(l=1,2,\cdots,r)$ by using induction on $l$.\\
(1) In the case where $l=1$, from Equation (\ref{eqn-2}), we know that:
\[M^{2}(G_k)= 15F_{\ty}[1],~ M^{2}(G_k^*)=15F_{\ty}^*[1]\]
Since $M^{2}(G_k)=M^{2}(G_k^*)$, we have $F_{\ty}[1]=F_{\ty}^*[1]$.\\
(2) If $F_{\ty}[l]=F_{\ty}^{*}[l]$ holds for $l=1,2,\cdots,n-1~(n\leqslant r)$, then we try to prove $F_{\ty}[n]=F_{\ty}^{*}[n]$. Before we do further proofs, we introduce an important lemma about $F_{\ty}$. 
\begin{Lemma}\label{poly-1}
Given $a_1,a_2,\cdots,a_j\geqslant 1$, let $n=a_1+a_2+\cdots+a_j$, then:
\[F_{\ty}[a_1,a_2,\cdots,a_j]=P\left(F_{\ty}[1],F_{\ty}[2],\cdots,F_{\ty}[n-1]\right)+(-1)^{j-1}(j-1)!\cdot F_{\ty}[n]\]
Here, $P(\cdot)$ is some polynomial.
\end{Lemma}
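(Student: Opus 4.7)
I will prove this by induction on $j$, the number of parts. The base case $j=1$ is immediate: $F_{\ty}[a_1] = F_{\ty}[n]$ and the formula reduces to $(-1)^0 \cdot 0! \cdot F_{\ty}[n]$. For the inductive step, the essential input is a ``product-to-sum'' identity obtained by multiplying $F_{\ty}[a_1]$ into the $(j-1)$-part symmetric sum. Writing $F_{\ty}[a_1] \cdot F_{\ty}[a_2,\dots,a_j]$ as a double sum over an index $i_1$ and a distinct tuple $(i_2,\dots,i_j)$, and separating cases according to whether $i_1$ collides with some $i_s$ ($s \geq 2$), I obtain
\[
F_{\ty}[a_1] \cdot F_{\ty}[a_2,\dots,a_j] \;=\; F_{\ty}[a_1,a_2,\dots,a_j] \;+\; \sum_{s=2}^{j} F_{\ty}[a_1+a_s,\, a_2,\dots,\widehat{a_s},\dots,a_j],
\]
where I use that $F_{\ty}$ is invariant under permutations of its argument list (this follows from relabeling the summation indices $i_k$).

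\textbf{Extracting the coefficient of $F_{\ty}[n]$.} Solving for $F_{\ty}[a_1,\dots,a_j]$, I analyze each piece modulo polynomials in $F_{\ty}[1],\dots,F_{\ty}[n-1]$. For the first piece, $F_{\ty}[a_2,\dots,a_j]$ has $j-1$ arguments summing to $n-a_1 \leq n-1$, so the inductive hypothesis makes it a polynomial in $F_{\ty}[1],\dots,F_{\ty}[n-a_1]$; multiplying by $F_{\ty}[a_1]$ (with $a_1 \leq n-1$, since $j \geq 2$) keeps the product in the polynomial class. Each of the $j-1$ merge terms $F_{\ty}[a_1+a_s,\dots]$ has $j-1$ arguments summing to exactly $n$, so by the inductive hypothesis each contributes $(-1)^{j-2}(j-2)! \cdot F_{\ty}[n]$ plus a polynomial in $F_{\ty}[1],\dots,F_{\ty}[n-1]$.

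\textbf{Finishing and obstacle.} Collecting the $F_{\ty}[n]$ coefficients gives
\[
- (j-1) \cdot (-1)^{j-2}(j-2)! \;=\; (-1)^{j-1}(j-1)!,
\]
exactly as claimed, with everything else absorbed into the polynomial $P$. A conceptually cleaner but equivalent route is M\"obius inversion on the partition lattice $\Pi_j$: the identity $\prod_k F_{\ty}[a_k] = \sum_{\pi \in \Pi_j} F_{\ty}[\pi]$ (where $F_{\ty}[\pi]$ merges powers within each block of $\pi$) inverts to $F_{\ty}[a_1,\dots,a_j] = \sum_{\pi} \mu(\hat{0},\pi) \prod_{B \in \pi} F_{\ty}\!\left[\sum_{b \in B} a_b\right]$; only the top partition $\hat{1}$ contributes $F_{\ty}[n]$, and its M\"obius coefficient is precisely $(-1)^{j-1}(j-1)!$, while every other partition has at least two blocks and therefore only uses power sums of order $\leq n-1$. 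There is no delicate analytic step here; the main obstacle is purely combinatorial bookkeeping of the sign and factorial, which the induction (or the standard formula for the M\"obius function of $\Pi_j$) handles cleanly.
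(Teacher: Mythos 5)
Your proof is correct and follows essentially the same route as the paper's: induction on the number of parts $j$, using the same product-to-sum identity $F_{\ty}[a_1]\cdot F_{\ty}[a_2,\dots,a_j]=F_{\ty}[a_1,\dots,a_j]+\sum_{s\geq 2}F_{\ty}[a_1+a_s,\dots]$ and the same bookkeeping of the $(-1)^{j-1}(j-1)!$ coefficient. Your closing remark on M\"obius inversion over the partition lattice is a pleasant alternative viewpoint and in fact anticipates the paper's subsequent Lemma~\ref{partition}, which states exactly that expansion.
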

\begin{proof} 
We use induction on $j$. When $j=1$, the conclusion above is obvious and $P\equiv 0$. Assume the lemma holds for $j$, then for $j+1$, we have $j+1$ positive integers $a_1,a_2,\cdots,a_{j+1}$. Denote  $N=a_1+a_2+\cdots+a_{j+1}$.
\begin{align}\label{eqn-3}
&F_{\ty}[a_1,a_2,\cdots,a_{j+1}] = \sum_{\substack{1\leqslant i_{1},\cdots,i_{j+1}\leqslant r\\i_{1},\cdots,i_{j+1}\text{ are distinct}}}y_{i_1}^{a_{1}}y_{i_2}^{a_{2}}\cdots y_{i_{j+1}}^{a_{j+1}}\notag\\
=&\left(\sum_{i_1=1}^{r}y_{i_1}^{a_1}\right)\cdot\left(\sum_{\substack{1\leqslant i_{2},\cdots,i_{j+1}\leqslant r\\i_{2},\cdots,i_{j+1}\text{ are distinct}}}y_{i_2}^{a_2}y_{i_3}^{a_{3}}\cdots y_{i_{j+1}}^{a_{j+1}}\right)-\sum_{\substack{1\leqslant i_{1},\cdots,i_{j+1}\leqslant r\\i_{2},\cdots,i_{j+1}\text{ are distinct}\\i_1\in\{i_2,\cdots,i_{j+1}\}}}y_{i_1}^{a_1}y_{i_2}^{a_{2}}\cdots y_{i_{j+1}}^{a_{j+1}}\notag\\
=& F_{\ty}[a_1]\cdot F_{\ty}[a_2,\cdots,a_{j+1}]-\sum_{k=2}^{j+1}\sum_{\substack{1\leqslant i_{2},\cdots,i_{j+1}\leqslant r\\i_{2},\cdots,i_{j+1}\text{ are distinct}}}y_{i_k}^{a_{1}}y_{i_2}^{a_2}\cdots y_{i_{j+1}}^{a_{j+1}}\notag\\
=& F_{\ty}[a_1]\cdot F_{\ty}[a_2,\cdots,a_{j+1}]-\sum_{k=2}^{j+1}F_{\ty}[a_{1}+a_{k},a_{2},\cdots,a_{k-1},a_{k+1},\cdots,a_{j+1}]\notag\\
=& P_{1}\left(F_{\ty}[1],F_{\ty}[2],\cdots,F_{\ty}[n-1]\right)-\sum_{k=2}^{j+1}\left[P_{k}\left(F_{\ty}[1],F_{\ty}[2],\cdots,F_{\ty}[n-1]\right)+(-1)^{j-1}(j-1)!\cdot F_{\ty}[n]\right]\notag\\
=& P\left(F_{\ty}[1],F_{\ty}[2],\cdots,F_{\ty}[n-1]\right)+(-1)^{j}j!\cdot F_{\ty}[n]
\end{align}
which comes to our conclusion.
\end{proof}
After applying this lemma to Equation (\ref{eqn-2}), we have:
\begin{align}
M^{2n}(G_k)&=\sum_{j=1}^{n}\sum_{\substack{a_{1}+\cdots+a_{j}=n\\1\leqslant a_{1}\leqslant\cdots\leqslant a_{j}}}(2n)!\left(\prod_{k=1}^{j}\frac{(6a_{k}-1)!!}{(2a_{k})!}\right)\frac{1}{(\#1)!(\#2)!\cdots(\#n)!}\cdot\notag\\
&~~~~~~~~~~~~~~\left(P\left(F_{\ty}[1],F_{\ty}[2],\cdots,F_{\ty}[n-1]\right)+(-1)^{j-1}(j-1)!\cdot F_{\ty}[n]\right)\notag\\
&= P\left(F_{\ty}[1],F_{\ty}[2],\cdots,F_{\ty}[n-1]\right)+S_{n}\cdot F_{\ty}[n]
\end{align}
Here, $P(\cdot)$ means some polynomial and $P$s in different lines may stand for different polynomials, and:
\[S_{n}=\sum_{j=1}^{n}\sum_{\substack{a_{1}+\cdots+a_{j}=n\\1\leqslant a_{1}\leqslant\cdots\leqslant a_{j}}}(-1)^{j-1}(j-1)!\cdot(2n)!\left(\prod_{k=1}^{j}\frac{(6a_{k}-1)!!}{(2a_{k})!}\right)\frac{1}{(\#1)!(\#2)!\cdots(\#n)!}\]
Since $F_{\ty}[l]=F_{\ty}^*[l]~(l=1,2,\cdots,n-1)$ and $M^{2n}(G_k)=M^{2n}(G_k^*)$, once we prove that $S_{n}\neq 0$, we can conclude that $F_{\ty}[n]=F_{\ty}^*[n]$, which completes the induction and proves our theorem. Therefore, in the last part, we introduce a lemma to show that $S_{n}\neq 0$.
\begin{Lemma}
For any positive integer $n$,it holds that
\[S_{n}=\sum_{j=1}^{n}\sum_{\substack{a_{1}+\cdots+a_{j}=n\\1\leqslant a_{1}\leqslant\cdots\leqslant a_{j}}}(-1)^{j-1}(j-1)!(2n)!\cdot\left(\prod_{k=1}^{j}\frac{(6a_{k}-1)!!}{(2a_{k})!}\right)\frac{1}{(\#1)!(\#2)!\cdots(\#n)!}\neq 0\]
\end{Lemma}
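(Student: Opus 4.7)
The plan is to reinterpret $S_n$ as the $n$-th coefficient of a logarithm generating function, derive a clean recurrence, and then establish strict positivity by a strong induction whose main combinatorial input is a multinomial bound.

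First I would observe that $S_n = (2n)!\,L_n$, where
\[
\sum_{n\geq 1} L_n x^n \;=\; \log F(x),\qquad F(x)=\sum_{k\geq 0} c_k x^k,\qquad c_k=\frac{(6k-1)!!}{(2k)!}.
\]
This identification is immediate once one converts partitions back to ordered compositions: the expression in the lemma is exactly what comes out of $\log(1+f)=\sum_{j\geq 1}(-1)^{j-1}f^j/j$ with $f=F-1$ after grouping compositions by partition type and multiplying by $(2n)!$. Differentiating $F=\exp L$ yields the recurrence
\[
n L_n \;=\; n c_n - \sum_{k=1}^{n-1} k\, L_k\, c_{n-k},
\]
on which the rest of the argument is built, and the base case $L_1=c_1=15/2>0$ is direct.

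I would then prove by strong induction on $n\geq 1$ the pair of inequalities $0<L_n\leq c_n$. The upper bound is trivial from the recurrence once $L_k>0$ for all $k<n$. For the positivity $L_n>0$, the recurrence requires $\sum_{k=1}^{n-1} k L_k c_{n-k}<n c_n$, and by the inductive hypothesis $L_k\leq c_k$ it suffices to establish the purely arithmetic inequality
\[
\sum_{k=1}^{n-1}\frac{k\,c_k\,c_{n-k}}{n\,c_n} \;<\; 1. \qquad (\star)
\]
The key combinatorial identity is
\[
\frac{c_k c_{n-k}}{c_n}\;=\;\binom{2n}{2k}\binom{3n}{3k}\Big/\binom{6n}{6k},
\]
which follows directly from $(6m-1)!!=(6m)!/(2^{3m}(3m)!)$. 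I would then prove the multinomial bound $\binom{6n}{6k}\geq \binom{n}{k}\binom{2n}{2k}\binom{3n}{3k}$ by partitioning $\{1,\dots,6n\}$ into three blocks of sizes $n$, $2n$, $3n$ and noting that the right-hand side counts only those $6k$-subsets that meet the blocks in exactly $k$, $2k$, $3k$ elements, a single term in the multinomial decomposition of $\binom{6n}{6k}$. Hence $c_k c_{n-k}/c_n\leq 1/\binom{n}{k}$, and the symmetry $k\leftrightarrow n-k$ gives
\[
\sum_{k=1}^{n-1}\frac{k}{n\binom{n}{k}}\;=\;\frac{1}{2}\sum_{k=1}^{n-1}\frac{1}{\binom{n}{k}}.
\]
Splitting off the two endpoint terms $k=1,n-1$ (each equal to $1/n$) and using $\binom{n}{k}\geq\binom{n}{2}$ in the middle yields a bound of at most $1/3$ for every $n\geq 2$, which establishes $(\star)$ and closes the induction.

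The main obstacle I anticipate is that the induction leans on the \emph{sharp} hypothesis $L_k\leq c_k$; a looser hypothesis like $L_k\leq M c_k$ with $M>1$ would propagate and could fail for small $n$, so the tightness $L_1=c_1$ together with the non-negativity of the subtracted sum must be exploited exactly. A secondary concern is the bookkeeping when passing between partitions and compositions in the identification $S_n=(2n)![x^n]\log F$: the multinomial symmetrization factor $1/\prod(\#k)!$ must be accounted for precisely, but this is routine. Once both are handled, the argument in fact gives the stronger conclusion $S_n>0$ for every $n\geq 1$, not merely $S_n\neq 0$, which is what the lemma requires.
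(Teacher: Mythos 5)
Your proposal is correct and proves the stronger statement $S_n>0$ for every $n\geq 1$, but by a route genuinely different from the one in the paper. The paper works directly on the alternating partition sum: writing $S_n/(2n)!=T_1+\sum_{j\geq 2}T_j$ with $T_1=c_n=\frac{(6n-1)!!}{(2n)!}$, it shows $T_1>\sum_{j\geq2}|T_j|$ by exploiting log-convexity of the sequence $c_k$ (to bound each product $\prod_i c_{a_i}$ by $c_1^{j-1}c_{n-j+1}$), counting compositions by $\binom{n-1}{j-1}$, and then telescoping a product of ratios into a geometric-like series bounded below $1$. You instead linearize the problem: recognizing $S_n=(2n)!\,[x^n]\log F$ with $F(x)=\sum c_k x^k$ gives the Newton-type recurrence $nL_n=nc_n-\sum_{k<n}kL_kc_{n-k}$, and you close a strong induction on the sharper two-sided claim $0<L_n\leq c_n$. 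Your key arithmetic input — the exact identity $\frac{c_kc_{n-k}}{c_n}=\binom{2n}{2k}\binom{3n}{3k}/\binom{6n}{6k}$ coming from $(6m-1)!!=(6m)!/(2^{3m}(3m)!)$, combined with the Vandermonde-type superadditivity $\binom{6n}{6k}\geq\binom{n}{k}\binom{2n}{2k}\binom{3n}{3k}$ to get $c_kc_{n-k}/c_n\leq 1/\binom{n}{k}$ — is cleaner and more structural than the paper's ad hoc convexity estimate, and it delivers the needed bound $\sum_{k=1}^{n-1}\frac{kc_kc_{n-k}}{nc_n}\leq\frac{1}{2}\sum_{k=1}^{n-1}\binom{n}{k}^{-1}\leq\frac{1}{3}$ with room to spare. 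Both arguments yield strict positivity; your inductive framing also makes transparent why the single-part term dominates, whereas the paper has to assemble that dominance term-by-term.
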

\begin{proof}
When $n=1,2$, the inequality is easy to verify. We consider the situation where $n\geqslant 3$. In fact, it's not difficult for us to see that:
\begin{align}
\frac{S_{n}}{(2n)!}&= \sum_{j=1}^{n}\sum_{\substack{a_{1}+\cdots+a_{j}=n\\1\leqslant a_{1}\leqslant\cdots\leqslant a_{j}}}(-1)^{j-1}(j-1)!\cdot\left(\prod_{k=1}^{j}\frac{(6a_{k}-1)!!}{(2a_{k})!}\right)\frac{1}{(\#1)!(\#2)!\cdots(\#n)!}\notag\\
&= \frac{(6n-1)!!}{(2n)!}+\sum_{j=2}^{n}\sum_{\substack{a_{1}+\cdots+a_{j}=n\\1\leqslant a_{1}\leqslant\cdots\leqslant a_{j}}}(-1)^{j-1}(j-1)!\cdot\left(\prod_{k=1}^{j}\frac{(6a_{k}-1)!!}{(2a_{k})!}\right)\frac{1}{(\#1)!(\#2)!\cdots(\#n)!}\notag\\
&\triangleq T_{1}+\sum_{j=2}^{n}T_{j}
\end{align}
In order to have $\frac{S_{n}}{(2n)!}\neq 0$, we only need to prove that: $T_{1}>|T_{2}|+\cdots+|T_{n}|$. Before we do that, we analyze the following sequence.
\[F(k)=\frac{(6k-1)!!}{(2k)!}~~(k=1,2,\cdots)\]
Since $\frac{F_{k+1}}{F_{k}}=\frac{(6k+5)(6k+3)(6k+1)}{(2k+1)(2k+2)}$ which is increasing with $k$. Therefore, according to the properties of convex sequence, when $a_1+a_2+\cdots+a_j=n$:
\[F(a_1)F(a_2)\cdots F(a_j)\leqslant F(1)\cdot F(1)\cdots F(1)\cdot F(n-j+1)=\left(\frac{15}{2}\right)^{j-1}\cdot\frac{(6n-6j+5)!!}{(2n-2j+2)!}\]
Finally, we have:
\begin{align}
\sum_{j=2}^{n}\frac{|T_{j}|}{T_{1}} &= \sum_{j=2}^{n}\sum_{\substack{a_{1}+\cdots+a_{j}=n\\1\leqslant a_{1}\leqslant\cdots\leqslant a_{j}}}\frac{(j-1)!}{T_{1}}\cdot\left(\prod_{k=1}^{j}F(a_{k})\right)\frac{1}{(\#1)!(\#2)!\cdots(\#n)!}\notag\\
&\leqslant \sum_{j=2}^{n}\sum_{\substack{a_{1}+\cdots+a_{j}=n\\1\leqslant a_{1}\leqslant\cdots\leqslant a_{j}}}\frac{(j-1)!}{T_{1}}\cdot\left(\frac{15}{2}\right)^{j-1}\cdot\frac{(6n-6j+5)!!}{(2n-2j+2)!}\notag\\
&< \sum_{j=2}^{n}\frac{(j-1)!}{T_{1}}\cdot\left(\frac{15}{2}\right)^{j-1}\cdot\frac{(6n-6j+5)!!}{(2n-2j+2)!}\cdot \binom{n-1}{j-1}\notag\\
&= \sum_{j=2}^{n}\left(\frac{15}{2}\right)^{j-1}\frac{(2n)(2n-1)\cdots(2n-2j+3)}{(6n-1)(6n-3)\cdots(6n-6j+7)}\cdot (n-1)(n-2)\cdots (n-j+1)\notag\\
&= \sum_{j=0}^{n-2}\left(\frac{15}{2}\right)^{n-j-1}\frac{(2n)(2n-1)\cdots(2j+3)}{(6n-1)(6n-3)\cdots(6j+7)}\cdot (n-1)(n-2)\cdots (j+1)\notag\\
&= \sum_{j=0}^{n-2}\left(\frac{15}{2}\right)^{n-j-1}\left(\frac{2j+3}{6j+7}\cdot\frac{2j+4}{6j+9}\cdots\frac{2n}{2j+4n+1}\right)\cdot\left(\frac{j+1}{2j+4n+3}\cdots\frac{n-1}{6n-1}\right)\notag\\
&< \sum_{j=0}^{n-2}\left(\frac{15}{2}\right)^{n-j-1}\cdot \left(\frac{1}{2}\right)^{2n-2j-2}\cdot \left(\frac{1}{4}\right)^{n-j-1} < \sum_{j=0}^{n-2}\left(\frac{1}{2}\right)^{n-j-1} < 1
\end{align}
which comes to our conclusion. The lemma is proved.
\end{proof}
\end{Proof}

\subsection{Proof of Theorem 3}
\begin{Proof}
This proof contains several individual parts below, and we use different lemmas to demonstrate. Since we assume that the target generator satisfies the $(\tau,\mathcal A)$-robustness, it holds that:
\[|\alpha_{k}^{*(i)}|>\tau, ~|\alpha_{k}^{*(i)}-\alpha_{k}^{*(j)}|>\tau,~ |\alpha_{k}^{*(i)}|<\mathcal A\]
% \begin{Lemma}
% With probability greater than $1-\delta$ over the choice of ground truth parameters $\alpha_{k}^{*(i)}$, we have:
% \begin{equation}
% \label{eqn-1}
% |\alpha_{k}^{*(i)}|>\tau, ~|\alpha_{k}^{*(i)}-\alpha_{k}^{*(j)}|>\tau,~ |\alpha_{k}^{*(i)}|<\mathcal A
% \end{equation}
% holds for $\forall k\in[D], i\neq j\in[r]$. Here, $\tau = \frac{\sigma\delta}{2Dr^{2}}$ and $\mathcal A=\sigma\sqrt{2\log(4rD/\delta)}$.
% \end{Lemma}
% \begin{proof}
% Since each $\alpha_{k}^{*(i)}$ is sampled from Gaussian distribution $\mathcal N(0, \sigma^{2})$, we can tell that:
% \[P(|\alpha_{k}^{*(i)}|\leqslant \tau)\leqslant 2\tau\frac{1}{\sqrt{2\pi\sigma^{2}}}<\frac{\tau}{\sigma},~~ P(|\alpha_{k}^{*(i)}|\geqslant\mathcal A)\leqslant 2\exp(-\mathcal A^{2}/(2\sigma^{2}))\]
% Also, for $\forall i\neq j\in[r]$, $\alpha_{k}^{*(i)}-\alpha_{k}^{*(j)}$ obeys the Gaussian distribution $\mathcal N(0, 2\sigma^{2})$. Therefore:
% \[P(|\alpha_{k}^{*(i)}-\alpha_{k}^{*(j)}|\leqslant \tau)\leqslant 2\tau\frac{1}{\sqrt{4\pi\sigma^{2}}}<\frac{2\tau}{\sigma}\]
% In order to make all the inequalities hold simultaneously with probability greater than $1-\delta$, we make $D(r\cdot\frac{\tau}{\sigma}+\frac{r(r-1)}{2}\cdot \frac{2\tau}{\sigma}) = \delta/2$, which means: $\tau=\frac{\sigma\delta}{2Dr^{2}}$, and $2rD\exp(-\mathcal A^{2}/(2\sigma^{2}))=\delta/2$, which means $\mathcal A =\sigma\sqrt{2\log(4rD/\delta)}$. 
% \end{proof}
In the following lemma, we use concentration inequality for polynomials to estimate the difference between the empirical mean 
\[M^{2n}(G_{k})=\frac{1}{N}\sum_{i=1}^{N}[G^*_{k}(\omega^{(i)})]^{2n}\]
and its expectation
\[M^{2n}(G_{k}^*) = \mathop{\mathbb{E}}\limits_{\omega\sim\mathcal N(0,I)}[G^*_{k}(\omega)]^{2n}\]
\begin{Lemma}\label{moment-gap}
With probability greater than $1-\delta$, the following inequalities hold simultaneously. 
\[|M^{2n}(G_{k})-M^{2n}(G_{k}^*)|< (C_{1}\mathcal A\cdot r^{3})^{3r}\cdot\sqrt{\frac{\log(re^{2}/\delta)}{N}}\triangleq \mu~~\forall n=1,2,\cdots,r\]
where $C_{1}$ is an absolute constant.
\end{Lemma}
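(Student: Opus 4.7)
The plan is to apply the polynomial concentration inequality for moment-bounded random variables (Theorem~\ref{thm-concent}) to the empirical average, and then take a union bound over $n = 1,\ldots,r$. First I would use the orthogonal invariance of the standard Gaussian (Lemma~\ref{gaussian-elementary}) together with the orthonormality of $\{\tv_k^{*(i)}\}_{i=1}^r$ to reduce, without loss of generality, to the case $\tv_k^{*(i)} = e_i$, so that
\[
G_k^*(\omega) = \sum_{i=1}^r \alpha_k^{*(i)} \omega_i^3,
\]
where $\omega_1,\ldots,\omega_r$ are i.i.d.\ standard normal. By the multinomial theorem, $[G_k^*(\omega)]^{2n}$ is then a polynomial of total degree $6n$ in these $r$ Gaussian coordinates, with each monomial coefficient bounded in absolute value by $(2n)!\cdot\mathcal A^{2n}$ (since $|\alpha_k^{*(i)}|\le\mathcal A$) and with maximum per-variable power $\Gamma\le 6n\le 6r$.

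Next I would view the centered empirical average
\[
\widehat M^{2n}(G_k^*)-M^{2n}(G_k^*) \;=\; \frac{1}{N}\sum_{t=1}^{N}\bigl([G_k^*(\omega^{(t)})]^{2n}-M^{2n}(G_k^*)\bigr)
\]
as a single polynomial in the $Nr$ independent standard Gaussian variables $\{\omega_i^{(t)}\}$. By Lemma~\ref{gauss-L} these variables are moment-bounded with $L=1$. The key structural observation is that the monomials live in $N$ disjoint blocks (one per sample), so any fixed sub-hyperedge $h_0$ of size $\geq 1$ is contained in at most one block; as a result the parameters $\mu_r(w,X)$ from Theorem~\ref{thm-concent} inherit a $1/N$ factor from the per-sample averaging. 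Bounding each remaining factor $\mathbb E|\omega|^{\tau_{hv}}$ by $\tau_{hv}^{\tau_{hv}/2}\le (6r)^{3r}$ (Gaussian moments), and summing over the at most $r^{2n}$ monomials of the multinomial expansion, gives a clean bound of the form $\mu_r(w,X)\le \bigl((C\mathcal A r^3)^{3r}\bigr)^{2}/N$ with constants that can be absorbed into $C_1$.

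Third, I would plug these bounds into Theorem~\ref{thm-concent} at deviation level $t=\mu$ and check that the sub-Gaussian branch (the $\exp(-t^2/\cdots)$ term) dominates in the regime where $N$ is large enough that $\mu\le 1$; this branch yields precisely the advertised $\sqrt{\log(1/\delta')/N}$ scaling once we solve for $t$ corresponding to failure probability $\delta'$. Finally I would union-bound over $n=1,2,\ldots,r$ by setting $\delta'=\delta/r$, which only adds a $\log r$ inside the logarithm and is absorbed into the $\log(re^2/\delta)$ factor of the target bound.

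The main obstacle is verifying that the sub-Gaussian branch of Theorem~\ref{thm-concent} dominates the stretched-exponential branches (indexed by $r\ge 2$ in that theorem), which is not automatic: a degree-$6n$ polynomial in Gaussians is genuinely heavy-tailed, and only the averaging over $N$ samples buys back sub-Gaussian behavior at scale $\sqrt{\log(1/\delta)/N}$. This amounts to carefully tracking the exponents of $\mathcal A$, $r$, and $n$ across the multinomial expansion, the Gaussian moment bounds, and the $\Gamma^r R^q$ factors inside the exponential; the generous $(C_1\mathcal A r^3)^{3r}$ prefactor is precisely what lets us collapse all these $\poly(n,r)$ factors into a single clean expression valid for every $n\le r$.
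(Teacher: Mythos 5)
Your proposal follows essentially the same route as the paper: reduce to the diagonal case by Gaussian rotation invariance, apply the Schudy--Sviridenko polynomial concentration inequality (Theorem~\ref{thm-concent}) to the averaged degree-$6n$ polynomial $\frac{1}{N}\sum_t [G_k^*(\omega^{(t)})]^{2n}$ in the $Nr$ independent Gaussians, estimate the order-$r$ parameters, and union bound over $n=1,\dots,r$ by replacing $\delta$ with $\delta/r$. One small point in your favour: the paper asserts $\mu_1=\cdots=\mu_{6n-1}=0$ on the grounds that every edge of the hypergraph has total degree $6n$, but for $n\ge 2$ this is not literally true --- a sub-hyperedge $h_0$ whose vertex powers are multiples of $3$ (e.g.\ a single vertex with power $3$) does have $h\succeq h_0$ for some edge $h$ --- and your observation that every nonempty $h_0$ lies in a single sample block so that all $\mu_r$ with $r\ge 1$ inherit a $1/N$ factor is the cleaner and correct way to close the argument, arriving at the same final bound.
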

\begin{proof}
According to Theorem \ref{thm-concent}, considering the polynomial:
\[f(\omega)=\frac{1}{N}\sum_{j=1}^{N}[\alpha_{k}^{*(1)}\omega_{j1}^{3}+\alpha_{k}^{*(2)}\omega_{j2}^{3}+\cdots+\alpha_{k}^{*(r)}\omega_{jr}^{3}]^{2n}\]
Since polynomial $f(\omega)$ is homogeneous with degree $q=6n$ and maximal variable power $\Gamma=6n$. It holds that: $\mu_{1}(w,\omega)=\mu_{2}(w,\omega)=\cdots=\mu_{6n-1}(w,\omega)=0$ since  each nonempty hyperedge $h$ satisfies $q(h)=6n$. Then we estimate the upper bound of $\mu_{0}(w,\omega)$ and $\mu_{6n}(w,\omega)$.
\begin{align}
\mu_{0}(w,\omega)&=\sum_{h}|w_{h}|\prod_{v\in\mathcal V(h)}\mathbb{E}(|\omega_{v}|^{\tau_{hv}}) = \mathbb{E}\left(\frac{1}{N}\sum_{j=1}^{N}\left[\alpha_{k}^{*(1)}|\omega_{j1}|^{3}+\alpha_{k}^{*(2)}|\omega_{j2}|^{3}+\cdots+\alpha_{k}^{*(r)}|\omega_{jr}|^{3}\right]^{2n}\right)\notag\\
&=\mathbb{E}\left(\left[\alpha_{k}^{*(1)}|\omega_{1}|^{3}+\alpha_{k}^{*(2)}|\omega_{2}|^{3}+\cdots+\alpha_{k}^{*(r)}|\omega_{r}|^{3}\right]^{2n}\right)\notag\\
&< r^{2n}\mathcal A^{2n}\cdot (6n-1)!!<(r\mathcal A)^{2n}\cdot (6n)^{3n}\notag\\
\mu_{6n}(w,\omega)&=\max_{h_{0}\in\mathcal H(H)} |w_{h_0}|<\frac{1}{N}\mathcal A^{2n}\cdot(2n)!<\frac{1}{N}(2n\mathcal A)^{2n}\notag
\end{align}
Here, we use the simple inequality that for $a_{1}+a_{2}+\cdots+a_{r}=3n,~ (2a_{1}-1)!!\cdot (2a_{2}-1)!!\cdots (2a_{r}-1)!!\leqslant (6n-1)!!$ and the obvious fact that $h_{0}$ itself is the only hyperedge $h$ that satisfies $h\succeq h_{0}$. According to Theorem \ref{gauss-L}, normal Gaussian distribution $\mathcal N(0,1)$ is moment bounded by parameter $L=1$. Then:
\begin{equation}
\begin{aligned}
&e^{-\frac{t^{2}}{\mu_{0}\mu_{6n}\cdot L^{6n}\Gamma^{6n}\cdot R^{6n}}}<\exp\left(-\frac{t^{2}N}{(2n\mathcal A)^{2n}\cdot(r\mathcal A)^{2n}\cdot (6n)^{3n}\cdot(6Rn)^{6n}}\right)\\
&e^{-\left(\frac{t}{\mu_{r}L^{6n}\Gamma^{6n}\cdot R^{6n}}\right)^{1/6n}}<\exp\left(-\left(\frac{tN}{(2n\mathcal A)^{2n}\cdot (6Rn)^{6n}}\right)^{1/6n}\right)
\end{aligned}
\end{equation}
Therefore, using Theorem \ref{thm-concent}, with probability at least $1-\delta$, it holds that:
\begin{equation}
\begin{aligned}
|f(\omega)-\mathbb{E}f(\omega)|&<\max\left((2nr\mathcal A^{2})^{n} (6n)^{3n/2}(6Rn)^{3n}\sqrt{\frac{\log(e^{2}/\delta)}{N}},(2n\mathcal A)^{2n}(6Rn)^{6n}\cdot\frac{(\log(e^{2}/\delta))^{6n}}{N}\right)\\
&<\max\left((C_{1}n^{2}\mathcal A\sqrt{r})^{3n}\sqrt{\frac{\log(e^{2}/\delta)}{N}}, (2n\mathcal A)^{2n}(6Rn)^{6n}\cdot\frac{(\log(e^{2}/\delta))^{6n}}{N}\right)
\end{aligned}
\end{equation}
Here, $C_{1}$ is also an absolute constant. Note that when $N$ is sufficiently large, the former one is bigger than the latter. Therefore, 
\begin{equation}
\begin{aligned}
|M^{2n}(G_{k})-M^{2n}(G_{k}^*)|&=|f(\omega)-\mathbb{E}f(\omega)|<(C_{1}n^{2}\mathcal A\sqrt{r})^{3n}\sqrt{\frac{\log(e^{2}/\delta)}{N}}\\
&<(C_{1}\mathcal A\cdot r^{3})^{3r}\sqrt{\frac{\log(e^{2}/\delta)}{N}}
\end{aligned}
\end{equation}
holds for $n=1,2,\cdots,r$ with probability higher than $1-\delta$. In order to make the $r$ inequalities hold simultaneously, we replace $\delta$ with $\delta/r$ and it comes to our conclusion.
\end{proof}
Next, we will learn how the sampling error influences the difference between $\alpha_{k}^{(i)}$ and $\alpha_{k}^{*(i)}$ with Equation (\ref{eqn-2}). Before that, we need to explicitly express the polynomial $P(\cdot)$ in Lemma \ref{poly-1} and furthermore show the relationship between moments $M^{2n}(G_{k})$ and homogeneous symmetric interchangeable polynomials $F_{\ty}[n]$. Here $y_{i}=(\alpha_{k}^{(i)})^{2}$.
\begin{Lemma}\label{partition}
\[F[a_{1},a_{2},\cdots,a_{t}] \equiv \sum_{\substack{T=T_{1}\cup T_{2}\cup\cdots\cup T_{k}\\\text{is a partition of}\\\{a_{1},\cdots,a_{t}\}}}(-1)^{t-k}(|T_{1}|-1)!\cdots(|T_{k}|-1)!\cdot F[s(T_{1})]\cdots F[s(T_{k})]\]
Here, $s(T)$ stands for the sum of all elements in set $T$, and it's easy to tell that Lemma \ref{poly-1} is a much simpler version of this one. 
\end{Lemma}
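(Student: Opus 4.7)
The plan is to prove this identity by induction on $t$, mirroring the strategy used inside the proof of Lemma \ref{poly-1}. When $t = 1$ the only partition of $\{a_1\}$ is $\{\{a_1\}\}$ itself, giving $(-1)^0 \cdot 0! \cdot F[a_1] = F[a_1]$, which is trivially correct. For the inductive step I would isolate the same key recurrence already used in equation \eqref{eqn-3}:
\[
F[a_1, a_2, \ldots, a_t] \;=\; F[a_1]\, F[a_2, \ldots, a_t] \;-\; \sum_{k=2}^{t} F[a_1+a_k,\, a_2, \ldots, \hat{a_k}, \ldots, a_t],
\]
which comes from expanding $F[a_1] \cdot F[a_2, \ldots, a_t]$ and splitting the index sums according to whether the fresh index $i_1$ coincides with some later $i_k$ or not.

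Next I would apply the induction hypothesis to each term on the right-hand side. The first term unfolds into $F[a_1]$ times a sum over partitions $\pi'$ of the $(t-1)$-element set $\{a_2, \ldots, a_t\}$, while each $F[a_1+a_k, a_2, \ldots, \hat{a_k}, \ldots, a_t]$ unfolds into a sum over partitions $\pi''$ of a $(t-1)$-element set in which $a_1$ and $a_k$ have already been fused into $a_1 + a_k$. I would then regroup the combined right-hand side by the induced partition $\pi$ of $\{a_1, \ldots, a_t\}$: the first term produces exactly the partitions in which $\{a_1\}$ is a singleton block, while the $k$-th summand produces exactly the partitions in which $a_1$ and $a_k$ lie in a common block of size at least two.

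The main (and essentially only) obstacle is the bookkeeping step: verifying that the signed coefficient in front of each $\prod_{T \in \pi}(|T|-1)!\, F[s(T)]$ reassembles to $(-1)^{t-|\pi|}$. Singleton-block partitions contribute with multiplicity one, and the induction hypothesis produces the target coefficient directly. For a block $B \ni a_1$ with $|B| \geq 2$, there are $|B|-1$ valid choices of $a_k \in B \setminus \{a_1\}$ giving rise to $\pi$, and the induction hypothesis supplies a factor $(|B|-2)!$ for the fused block of size $|B|-1$. The product $(|B|-1) \cdot (|B|-2)! = (|B|-1)!$ recovers the desired factorial, and the extra minus sign from the recurrence turns $(-1)^{(t-1)-|\pi|}$ into the required $(-1)^{t-|\pi|}$, closing the induction.

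Conceptually this identity is nothing other than Möbius inversion on the lattice of set partitions of $[t]$: the distinct-indices quantity $F[a_1, \ldots, a_t]$ is the restricted version of $\prod_{T} F[s(T)]$, and the stated coefficient is precisely the Möbius value $\mu(\hat{0}, \pi) = (-1)^{t-|\pi|}\prod_{T}(|T|-1)!$ of the partition lattice. I would carry out the explicit induction above since it stays fully self-contained within the recurrence already employed for Lemma \ref{poly-1}, and mention the Möbius interpretation only as a sanity check on the final coefficients.
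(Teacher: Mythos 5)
Your proposal is correct and follows essentially the same route as the paper: both proceed by induction on $t$ using the recurrence $F[a_1,\ldots,a_t] = F[a_1]\,F[a_2,\ldots,a_t] - \sum_k F[a_1+a_k,\ldots]$ (the paper peels off the last index $a_{t+1}$ instead of $a_1$, which is cosmetic since $F$ is symmetric), and both close the induction by the identical bookkeeping observation that the $|B|-1$ choices of fusion partner times the inherited $(|B|-2)!$ reassemble to $(|B|-1)!$, with the extra sign from the recurrence supplying $(-1)^{t-k}$. The Möbius-function remark is a nice conceptual aside but does not change the substance.
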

\begin{proof}
We use induction on $t$. When $t=1,2$, this conclusion is obvious. If this lemma holds for $t$, we will prove that this lemma also holds for $t+1$. According to Equation (\ref{eqn-3}),
\begin{align}
&~F[a_{1},a_{2},\cdots,a_{t+1}] = F[a_{1},a_{2},\cdots,a_{t}]\cdot F[a_{t+1}]-\sum_{j=1}^{t}F[a_{1},\cdots,a_{j}+a_{t+1},\cdots,a_{t}]\notag\\
=& \sum_{\substack{T=T_{1}\cup T_{2}\cup\cdots\cup T_{k}\\\text{is a partition of}\\\{a_{1},\cdots,a_{t}\}}}(-1)^{t-k}(|T_{1}|-1)!\cdots(|T_{k}|-1)!\cdot F[s(T_{1})]\cdots F[s(T_{k})]\cdot F[a_{t+1}]\notag\\
&~~~~~~~ - (-1)^{t-k}(|T_{1}|-1)!\cdots(|T_{k}|-1)!\sum_{j=1}^{t} F[S(T_{1})]\cdots F[S(T_{j})+a_{t+1}]\cdots F[S(T_{k})]\cdot |T_{j}|\notag\\
=& \sum_{\substack{T=T_{1}\cup T_{2}\cup\cdots\cup T_{k}\\\text{is a partition of}\\\{a_{1},\cdots,a_{t}\}}}(-1)^{t+1-(k+1)}(|T_{1}|-1)!\cdots(|T_{k}|-1)!(1-1)!\cdot F[s(T_{1})]\cdots F[s(T_{k})]\cdot F[s(\{a_{t+1}\})]\notag\\
&~~~~~~~ + (-1)^{t+1-k}\sum_{j=1}^{t}(|T_{1}|-1)! \cdots(|T_{j-1}|-1)!\cdot (|T_{j}\cup \{a_{t+1}\}|-1)!\cdots (|T_{k}|-1)!\cdot\notag\\
&~~~~~~~~~~~~~~~~~~~~~~~~~~~~~F[S(T_{1})]\cdots F[S(T_{j}\cup\{a_{t+1}\})]\cdots F[S(T_{k})]\notag\\
=& \sum_{\substack{T=T_{1}\cup T_{2}\cup\cdots\cup T_{k}\\\text{is a partition of}\\\{a_{1},\cdots,a_{t+1}\}\\a_{t+1}\text{ is alone.}}}(-1)^{t+1-k}(|T_{1}|-1)!\cdots(|T_{k}|-1)!\cdot F[s(T_{1})]\cdots F[s(T_{k})] \notag\\
&~~~~~~~+ \sum_{\substack{T=T_{1}\cup T_{2}\cup\cdots\cup T_{k}\\\text{is a partition of}\\\{a_{1},\cdots,a_{t+1}\}\\a_{t+1}\text{ is not alone.}}}(-1)^{t+1-k}(|T_{1}|-1)!\cdots(|T_{k}|-1)!\cdot F[s(T_{1})]\cdots F[s(T_{k})] \notag\\
=&\sum_{\substack{T=T_{1}\cup T_{2}\cup\cdots\cup T_{k}\\\text{is a partition of}\\\{a_{1},\cdots,a_{t+1}\}}}(-1)^{t+1-k}(|T_{1}|-1)!\cdots(|T_{k}|-1)!\cdot F[s(T_{1})]\cdots F[s(T_{k})]
\end{align}
which completes the induction and comes to our conclusion.
\end{proof}
After that, we use the following two lemmas to estimate the error between $\alpha_{k}^{(i)}$ and $\alpha_{k}^{*(i)}$, which is caused by the sampling gap between $M^{2n}(G_{k})$ and $M^{2n}(G_{k}^*)$. In the first lemma, we prove the gap between $F_{\ty}[n]$ and $F_{\ty^*}[n]$ while in the second lemma, we prove the gap between $\alpha_{k}^{(i)}$ and $\alpha_{k}^{*(i)}$.
\begin{Lemma}\label{moment-upper-bound}
Assume that $F_{\ty}[n]~(n=1,2,\cdots,r)$ are calculated through $M^{2n}(G_{k})~(n=1,2,\cdots,r)$. Under the condition of Lemma \ref{moment-gap}, we have:
\[|F_{\ty}[n]-F_{\ty^*}[n]|\leqslant ((6r)^{8r}\mathcal A^{2r})^{n-1}\mu+O(\mu^{2})\]
\end{Lemma}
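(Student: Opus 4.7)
The plan is strong induction on $n$, driving off the structural identity derived in the proof of Theorem~1 (refined by Lemma~\ref{partition}) that expresses each even moment as a polynomial in the power sums. Concretely, combining Equation~(\ref{eqn-2}) with Lemma~\ref{partition} yields
\[M^{2n}(G_k)\;=\;S_n\,F_{\ty}[n]\;+\;P_n\bigl(F_{\ty}[1],F_{\ty}[2],\dots,F_{\ty}[n-1]\bigr),\]
where $S_n$ is a nonzero absolute constant (its nonvanishing is precisely the last lemma in the proof of Theorem~1, from which I would extract a quantitative lower bound $|S_n|\geq c_0>0$ independent of $n$) and $P_n$ is a fixed polynomial whose coefficients are multinomial / double-factorial combinatorial factors bounded in magnitude by $(6r)^{O(r)}$. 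The same relation holds with starred quantities, with the same coefficients.

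For the base case $n=1$, Equation~(\ref{eqn-2}) gives $M^2(G_k)=15\,F_{\ty}[1]$ and $M^2(G_k^*)=15\,F_{\ty^*}[1]$, so Lemma~\ref{moment-gap} yields $|F_{\ty}[1]-F_{\ty^*}[1]|\leq \mu/15 \leq \mu$, matching the stated bound. For the inductive step, suppose the bound holds for $1,2,\dots,n-1$. Solving the displayed identity for $F_{\ty}[n]$ (and likewise for $F_{\ty^*}[n]$) and subtracting gives
\[F_{\ty}[n]-F_{\ty^*}[n]\;=\;\frac{1}{S_n}\bigl[\,(M^{2n}(G_k)-M^{2n}(G_k^*))\;-\;(P_n(F_{\ty})-P_n(F_{\ty^*}))\,\bigr].\]
By Lemma~\ref{moment-gap}, the first bracketed term has absolute value at most $\mu$. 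For the second, I would Taylor-expand $P_n$ around the starred power sums. Each partial derivative $\partial P_n/\partial F[i]$ evaluated at $F_{\ty^*}$ is itself a polynomial in the $F_{\ty^*}[j]$ with $j<n$; using the deterministic bound $|F_{\ty^*}[j]|\leq r\mathcal{A}^{2j}\leq r\mathcal{A}^{2r}$ together with the $(6r)^{O(r)}$ size of the combinatorial coefficients, each such derivative is bounded by roughly $(6r)^{O(r)}\mathcal{A}^{2r(n-1)}$. Combining with the inductive hypothesis $|F_{\ty}[i]-F_{\ty^*}[i]|\leq((6r)^{8r}\mathcal{A}^{2r})^{i-1}\mu+O(\mu^2)$ and summing over $i<n$, the first-order Taylor contribution picks up exactly one extra factor of $(6r)^{8r}\mathcal{A}^{2r}$ over the previous step (with slack in the exponent constant $8$ left over to absorb the $1/|S_n|$ and the $r$ summands), while Taylor remainders of order two and higher are uniformly absorbed into $O(\mu^2)$.

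The main obstacle will be careful bookkeeping of universal constants to verify that the declared per-step blow-up $(6r)^{8r}\mathcal{A}^{2r}$ is indeed large enough to dominate the true growth rate of the Lipschitz constants of the polynomials $P_n$ across all $n\leq r$ iterations; in particular I must extract an explicit lower bound on $|S_n|$ from the telescoping-geometric-series argument used to prove $S_n\neq 0$ at the end of Theorem~1, and I must check that the maximum monomial degree in any single $F[j]$ appearing in $P_n$ is small enough that the Taylor error of order $\geq 2$ in the perturbations truly is $O(\mu^2)$ uniformly in $n\leq r$. Once these two ingredients are tracked, propagating the error from step $n-1$ to step $n$ is mechanical, and the induction closes with the stated bound.
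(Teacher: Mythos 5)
Your proposal is correct and follows essentially the same route as the paper: induction on $n$, the identity $M^{2n}(G_k) = S_n F_{\ty}[n] + (\text{polynomial in } F_{\ty}[1],\dots,F_{\ty}[n-1])$ from Equation~(\ref{eqn-2}) and Lemma~\ref{partition}, a first-order perturbation bound on that polynomial (which the paper phrases as a direct product-difference estimate on the $H_{\ty}$ terms rather than as a Taylor expansion, but it is the same calculation), and the lower bound on $|S_n|$; the one place the paper is terser than you is that it simply asserts $|S_n|\geq 1$, whereas you correctly flag that this must be extracted from the $T_1 > \sum_{j\geq 2}|T_j|$ estimate in the proof that $S_n\neq 0$.
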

\begin{proof}
Again, we use induction on $n$. When $n=1$, we know that:
\[M^{2}(G_{k})=\mathop{\mathbb{E}}\left(\alpha_{k}^{(1)}\omega_{1}^{3}+\alpha_{k}^{(2)}\omega_{2}^{3}+\cdots+\alpha_{k}^{(r)}\omega_{r}^{3}\right)^{2}=15\sum_{i=1}^{r}(\alpha_{k}^{(i)})^{2}=15F_{\ty}[1]\]
Similarly, $M^{2}(G_{k}^*)=15F_{\ty^*}[1]$. Therefore,
\[|F_{\ty}[1]-F_{\ty^*}[1]|=\frac{1}{15}|M^{2}(G_{k})-M^{2}(G_{k}^*)|\leqslant \frac{\mu}{15}<\mu\]
Assume the lemma holds for $n-1~(n\leqslant r)$, then we prove for $n$. According to Lemma \ref{partition}, we denote:
\[H[a_{1},a_{2},\cdots,a_{t}] =\sum_{\substack{T=T_{1}\cup T_{2}\cup\cdots\cup T_{k}\\\text{is a partition of}\\\{a_{1},\cdots,a_{t}\},k\geqslant 2}}(-1)^{t-k}(|T_{1}|-1)!\cdots(|T_{k}|-1)!\cdot F[s(T_{1})]\cdots F[s(T_{k})]\]
Then, $F_{\ty}[a_{1},\cdots,a_{t}]=H_{\ty}[a_{1},\cdots,a_{t}]+(-1)^{t-1}(t-1)!\cdot F_{\ty}[a_{1}+\cdots+a_{t}]$. By using Equation (\ref{eqn-2}):
\begin{equation}\label{eqn-moment}
M^{2n}(G_{k})=\sum_{j=1}^{r}\sum_{\substack{a_{1}+\cdots+a_{j}=n\\1\leqslant a_{1}\leqslant\cdots\leqslant a_{j}}}(2n)!\left(\prod_{k=1}^{j}\frac{(6a_{k}-1)!!}{(2a_{k})!}\right)\frac{1}{(\#1)!(\#2)!\cdots(\#n)!}H_{\ty}[a_{1},a_{2},\cdots,a_{j}]+S_{n}F_{\ty}[n]
\end{equation}
Denote $\epsilon = ((6r)^{8r}\mathcal A^{2r})^{n-2}\mu$, and then by induction assumption:
\[|F_{\ty}[i]-F_{\ty^*}[i]|\leqslant \epsilon + O(\mu^{2})~~\forall i=1,2,\cdots,n-1\]
Therefore: when $a_{1}+a_{2}+\cdots+a_{t}=n$ and $1\leqslant t\leqslant r$:
\[F_{\ty}[m]=\sum_{i=1}^{r}(\alpha_{k}^{(i)})^{2m}\leqslant r\mathcal A^{2m}\]
and then:
\begin{align}
&|H_{\ty}[a_{1},a_{2},\cdots,a_{t}]-H_{\ty^*}[a_{1},a_{2},\cdots,a_{t}]|\notag\\
\leqslant& \sum_{\substack{T=T_{1}\cup T_{2}\cup\cdots\cup T_{k}\\\text{is a partition of}\\\{a_{1},\cdots,a_{t}\},k\geqslant 2}}(|T_{1}|-1)!\cdots(|T_{k}|-1)!\cdot\left|\prod_{i=1}^{k}
F_{\ty}[s(T_{i})]-\prod_{i=1}^{k}F_{\ty^*}[s(T_{i})]\right|\notag\\
\leqslant& \sum_{\substack{T=T_{1}\cup T_{2}\cup\cdots\cup T_{k}\\\text{is a partition of}\\\{a_{1},\cdots,a_{t}\},k\geqslant 2}}(|T_{1}|-1)!\cdots(|T_{k}|-1)!\cdot\left(\epsilon \sum_{i=1}^{k}
\prod_{j\neq i}F_{\ty^*}[s(T_{j})]+O(\mu^{2})\right)\notag\\
\leqslant& \sum_{\substack{T=T_{1}\cup T_{2}\cup\cdots\cup T_{k}\\\text{is a partition of}\\\{a_{1},\cdots,a_{t}\},k\geqslant 2}}(|T_{1}|-1)!\cdots(|T_{k}|-1)!\cdot\left(k\epsilon\cdot r^{k-1}\mathcal A^{2(s(T_{1})+\cdots+s(T_{k}))}+O(\mu^{2})\right)\notag\\
<& \sum_{\substack{T=T_{1}\cup T_{2}\cup\cdots\cup T_{k}\\\text{is a partition of}\\\{a_{1},\cdots,a_{t}\},k\geqslant 2}}(t-k)!\cdot\left(k\epsilon\cdot r^{k-1}\mathcal A^{2n}+O(\mu^{2})\right)\notag\\
<& (r-1)!\cdot r\epsilon\cdot r^{r-1}\mathcal A^{2n}\cdot \#\text{(partitions of a t-element set)}+O(\mu^{2})\notag\\
<& r^{2r-1}\mathcal A^{2n}\epsilon\cdot t^{t}+O(\mu^{2})\leqslant r^{3r-1}\mathcal A^{2n}\epsilon+O(\mu^{2})
\end{align}
In the inequality above, we use the following simple fact:
\[n_{1}!n_{2}!\cdots n_{m}!\leqslant (n_{1}+n_{2}+\cdots+n_{m})!\]
Connecting with Equation (\ref{eqn-moment}), we know that:
\begin{align}
|S_{n}\cdot (F_{\ty}[n]-F_{\ty^*}[n])|&\leqslant |M^{2n}(G_{k})-M^{2n}(G_{k}^*)|+\sum_{j=1}^{r}\sum_{\substack{a_{1}+\cdots+a_{j}=n\\1\leqslant a_{1}\leqslant\cdots\leqslant a_{j}}}C(a_{1},\cdots,a_{j})\cdot\notag\\
&~~~~~~~~~~~~~~~~~~~~~~~~~\left|H_{\ty}[a_{1},a_{2},\cdots,a_{j}]-H_{\ty^*}[a_{1},a_{2},\cdots,a_{j}]\right|
\end{align}
Here:
\begin{align}
C(a_{1},a_{2},\cdots,a_{j})&=(2n)!\left(\prod_{k=1}^{j}\frac{(6a_{k}-1)!!}{(2a_{k})!}\right)\frac{1}{(\#1)!(\#2)!\cdots(\#n)!}\notag\\
&<(2r)!(6r-1)!!<(2r)^{2r}\cdot (6r)^{3r}<(6r)^{5r}
\end{align}
Also, there is a famous upper bound of the number of partitions of integer $n$ introduced in \cite{rosen2017handbook}:
\[p(n)\leqslant \exp\left(\sqrt{\frac{20}{3}n}\right)\]
Therefore:~~$|S_{n}\cdot(F_{\ty}[n]-F_{\ty^*}[n])|<\mu+p(n)(6r)^{5r}\cdot r^{3r-1}\mathcal A^{2r}\epsilon < (6r)^{8r}\mathcal A^{2r}\epsilon + O(\mu^{2})$.\\
Since $|S_{n}|\geqslant 1$ and $\epsilon = ((6r)^{8r}\mathcal A^{2r})^{n-2}\mu$, we get the conclusion that:
\[|F_{\ty}[n]-F_{\ty^*}[n]|\leqslant ((6r)^{8r}\mathcal A^{2r})^{n-1}\mu+O(\mu^{2})\]
which completes the induction.
\end{proof}
\begin{Lemma}\label{thm-3-final}
There exists a permutation $\pi:[r]\rightarrow[r]$, such that:
\[\sum_{i=1}^{r}|\alpha_{k}^{(\pi(i))}-\alpha_{k}^{*(i)}|^{2}<\left(\frac{D}{\sigma\delta}\cdot (6r)^{4r}\mathcal A^{r} \right)^{2r}\mu+O(\mu^{2})\]
\end{Lemma}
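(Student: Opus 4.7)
The plan is to reduce the recovery of the weights $\alpha$ to the recovery of the roots of a degree-$r$ polynomial whose coefficients we control, and then apply a root-perturbation argument that exploits the pairwise separation of the targets $\alpha_k^{*(i)}$ provided by the smoothed analysis. Writing $y_i = (\alpha_k^{(i)})^2$ and $y_i^* = (\alpha_k^{*(i)})^2$, I first translate the power-sum closeness $|F_\ty[\ell] - F_{\ty^*}[\ell]| \leq ((6r)^{8r}\mathcal A^{2r})^{\ell-1}\mu + O(\mu^2)$ from Lemma \ref{moment-upper-bound} into closeness of the elementary symmetric polynomials $e_\ell(\ty)$ and $e_\ell(\ty^*)$. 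This is a routine induction on $\ell$ via Newton's identities $\ell\, e_\ell = \sum_{i=1}^{\ell} (-1)^{i-1} e_{\ell-i}\, p_i$ with $p_i = F_\ty[i]$, using only $|p_i(\ty^*)| \leq r \mathcal A^{2i}$ and $|e_\ell(\ty^*)| \leq \binom{r}{\ell}\mathcal A^{2\ell}$, yielding $|e_\ell(\ty) - e_\ell(\ty^*)| \leq (6r)^{O(r^2)}\mathcal A^{O(r^2)} \mu + O(\mu^2)$.

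Next I form the monic polynomials $P(x) = \prod_i (x - y_i)$ and $P^*(x) = \prod_i (x - y_i^*)$, whose coefficients are (up to sign) the $e_\ell$. Evaluating at the known root $y_i^*$ of $P^*$ gives $|P(y_i^*)| = \prod_j |y_i^* - y_j|$, which is bounded by $\eta := r \mathcal A^{2r} \cdot \max_\ell |e_\ell(\ty) - e_\ell(\ty^*)|$ after using $|y_i^*| \leq \mathcal A^2$. To turn this product bound into per-root bounds I need the $y_i^*$ to be pairwise well separated. Under the smoothed model $\alpha_k^{*(i)} \sim \mathcal N([\alpha_k^{*(i)}]_0, \sigma^2)$, standard Gaussian anti-concentration combined with a union bound over the $O(Dr^2)$ pairs $(k, i, j)$ gives $|\alpha_k^{*(i)} - \alpha_k^{*(j)}| \geq \Omega(\sigma\delta/D)$ for all $k$ and $i \neq j$ simultaneously, with probability at least $1 - \delta$; hence $|y_i^* - y_j^*| \geq \Omega(\sigma\delta/D)$ as well, using $\alpha_k^{*(i)} \geq 1$ from the prior mean.

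Given this separation, I build the permutation $\pi$ greedily: set $\pi(i)$ to be the index $j$ minimizing $|y_i^* - y_j|$, breaking ties arbitrarily. If two distinct $i_1, i_2$ mapped to the same $j$, then $|y_{i_1}^* - y_{i_2}^*| \leq 2\eta^{1/r}$, contradicting the separation once $\eta$ is sufficiently small, so $\pi$ is a genuine permutation. The remaining $r - 1$ factors $|y_i^* - y_{\pi(j)}|$ for $j \neq i$ are lower-bounded by $|y_i^* - y_j^*| - |y_j^* - y_{\pi(j)}| \geq \Omega(\sigma\delta/D)$, giving $\prod_{j \neq \pi(i)}|y_i^* - y_j| \geq \Omega((\sigma\delta/D)^{r-1})$. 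Dividing yields $|y_{\pi(i)} - y_i^*| \leq \eta \cdot (D/(\sigma\delta))^{r-1}$. Finally, since $\alpha_k^{*(i)} \geq 1$ and $\alpha_k^{(\pi(i))} \geq 0$, one has $|\alpha_k^{(\pi(i))} - \alpha_k^{*(i)}| = |y_{\pi(i)} - y_i^*|/(\alpha_k^{(\pi(i))} + \alpha_k^{*(i)}) \leq |y_{\pi(i)} - y_i^*|/2$; squaring and summing over $i$ produces the stated bound.

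The main obstacle will be step three: a naive Cauchy-type polynomial root-perturbation only yields $|y_i^* - y_{\pi(i)}| = O(\eta^{1/r})$, which is far too weak relative to the target $(D/(\sigma\delta))^{2r}\mu$ bound in the statement. Doing the matching explicitly, and bounding the remaining factors in the product form of $P(y_i^*)$, is what converts an $\eta^{1/r}$ closeness into an $O(\eta \cdot (D/(\sigma\delta))^{r-1})$ closeness, and is the step in which the $(D/(\sigma\delta))$-type factor in the statement arises. Properly formalizing the greedy matching, and certifying that $\mu$ is always small enough for the non-collision argument to go through both qualitatively and quantitatively, is where most of the technical care will go.
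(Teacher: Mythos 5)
Your proposal reaches the same conclusion by a genuinely different route. The paper's proof works directly with the power sums $F_{\ty}[1],\dots,F_{\ty}[r]$: it defines the map $\gamma(x_1,\dots,x_r)=\bigl(\tfrac12\sum x_i^2,\dots,\tfrac{1}{2r}\sum x_i^{2r}\bigr)$, notes that its Jacobian at $\alpha^*$ is the Vandermonde-type matrix $J$ with rows $\bigl((\alpha_k^{*(1)})^{2j-1},\dots,(\alpha_k^{*(r)})^{2j-1}\bigr)$, invokes the inverse-function theorem to obtain a pre-image $\beta$ near $\alpha^*$ (which by Newton/Vieta and positivity is a permutation of $\alpha$, handling your permutation step implicitly), linearizes $\gamma(\alpha)-\gamma(\alpha^*)=J(\alpha-\alpha^*)+O(\|\alpha-\alpha^*\|^2)$, and finally bounds $\|J^{-1}\|_F$ using the explicit entry-wise formula for the inverse of a Vandermonde matrix, which is where $\left(\mathcal A/\tau\right)^{2r}$ enters. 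Your argument instead passes from power sums to elementary symmetric polynomials via Newton's identities, packages them as the monic polynomial $P(x)=\prod_i(x-y_i)$, evaluates $|P(y_i^*)|=\prod_j|y_i^*-y_j|$, and recovers per-root bounds by a greedy matching plus a coarse-to-fine bootstrap (the $\eta^{1/r}$ coarse bound serving only to certify the matching and the lower bounds on the off-diagonal factors). What the two approaches buy is roughly dual: the paper's linearization is short once the Vandermonde inverse formula is accepted, and it gets the permutation essentially for free from the topological/inverse-function argument, but it needs explicit control of $\|J^{-1}\|_F$; your polynomial-root argument is more elementary and makes the role of the pairwise separation fully transparent, at the price of more careful bookkeeping to make the matching and bootstrap rigorous.

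One substantive difference worth flagging: the paper's proof uses only the $(\tau,\mathcal A)$-robustness assumption $|\alpha_k^{*(i)}-\alpha_k^{*(j)}|>\tau$ and derives a $1/\tau$ dependence; it does not re-derive the separation from the smoothed prior inside this lemma (the $D/(\sigma\delta)$ in the lemma as stated is effectively a substitution of $\tau\gtrsim\sigma\delta/D$ made silently). Your anti-concentration step is a valid way to instantiate the $D/(\sigma\delta)$ factor, but it is doing work the paper defers to the standing robustness hypothesis, so in a strict reading of the proof you should either cite the robustness assumption directly (and carry $1/\tau$) or state explicitly that you are proving the smoothed form. Also, your closing passage $|\alpha_k^{(\pi(i))}-\alpha_k^{*(i)}|=|y_{\pi(i)}-y_i^*|/(\alpha_k^{(\pi(i))}+\alpha_k^{*(i)})\leq|y_{\pi(i)}-y_i^*|/2$ needs $\alpha_k^{(\pi(i))}+\alpha_k^{*(i)}\geq 2$; under the hypothesis $\alpha_k^{*(i)}\geq 1$ alone you only get the denominator $\geq 1$, which is harmless but worth fixing.
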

\begin{proof}
Consider the map $\gamma: \mathbb{R}_{+}^{r}\rightarrow\mathbb{R}^{r}$. 
\[\gamma: (x_{1},x_{2},\cdots,x_{r})^{T}~\mapsto~\left(\frac{1}{2}\sum_{i=1}^{r}x_{i}^{2},~\frac{1}{4}\sum_{i=1}^{r}x_{i}^{4},\cdots,~\frac{1}{2r}\sum_{i=1}^{r}x_{i}^{2r}\right)^{T}\]
is a bijection from the neighbourhood of $(\alpha_{k}^{*(1)},\alpha_{k}^{*(2)},\cdots,\alpha_{k}^{*(r)})^{T}$ to the neighbourhood of its imaging point $(F_{\ty^*}[1], F_{\ty^*}[2],\cdots,F_{\ty^*}[r])^{T}$ since the Jacobian matrix of this point:
\begin{equation}
J=\left(\begin{matrix}\alpha_{k}^{*(1)}&\alpha_{k}^{*(2)}&\cdots&\alpha_{k}^{*(r)}\\(\alpha_{k}^{*(1)})^{3}&(\alpha_{k}^{*(2)})^{3}&\cdots&(\alpha_{k}^{*(r)})^{3}\\\vdots&\vdots&\cdots&\vdots\\(\alpha_{k}^{*(1)})^{2r-1}&(\alpha_{k}^{*(2)})^{2r-1}&\cdots&(\alpha_{k}^{*(r)})^{2r-1}\end{matrix}\right)
\end{equation}
is invertible. Denote $\alpha = (\alpha_{k}^{(1)},\alpha_{k}^{(2)},\cdots,\alpha_{k}^{(r)})^{T}, \alpha^*=(\alpha_{k}^{*(1)},\alpha_{k}^{*(2)},\cdots,\alpha_{k}^{*(r)})^{T}$, then:
\[\gamma(\alpha^*)=(F_{\ty^*}[1],F_{\ty^*}[2],\cdots,F_{\ty^*}[r])^{T}, \gamma(\alpha)=(F_{\ty}[1],F_{\ty}[2],\cdots,F_{\ty}[r])^{T}\]
Since $\gamma(\alpha)$ is in the neighbourhood of $\gamma(\alpha^*)$ when $N$ is sufficiently large, there is a pre-image $\beta$ in the neighbourhood of $\alpha^*$ such that $\gamma(\beta)=\gamma(\alpha^*)$. By Newton formula, Vieta Theorem and positiveness of $\alpha^*$ as well as $\beta$, we know that $\beta$ is simply a permutation of $\alpha$. Without loss of generality, assume $\beta=\alpha$. Then we can further infer that: 
\[\gamma(\alpha)-\gamma(\alpha^*)=J(\alpha-\alpha^*)+O(\|\alpha-\alpha^*\|_{2}^{2})\]
Connecting with the result of Lemma \ref{moment-upper-bound} which demonstrates that: 
\[\|\gamma(\alpha)-\gamma(\alpha^*)\|_{2}<\sqrt{r}\cdot((6r)^{8r}\mathcal A^{2r})^{r-1}\mu+O(\mu^{2})\]
we get the following result:
\begin{align}\label{eqn-14}
\|\alpha-\alpha^*\|_{2}&=\|J^{-1}\cdot(\gamma(\alpha)-\gamma(\alpha^*))\|_{2}+O(\|\gamma(\alpha)-\gamma(\alpha^*)\|_{2}^{2})\leqslant \|J^{-1}\|_{2}\cdot\|\gamma(\alpha)-\gamma(\alpha^*)\|_{2}+O(\mu^{2})\notag\\
&\leqslant \|J^{-1}\|_{F}\cdot\sqrt{r}\cdot((6r)^{8r}\mathcal A^{2r})^{r-1}\mu+O(\mu^{2})
\end{align}
Next, we will give an upper bound of $\|J^{-1}\|_{F}$. Actually, there is an explicit expression of the inverse of Vandermonde matrix, by which we can express every element of matrix $J^{-1}$.
\[(J^{-1})_{ij}=\frac{(-1)^{j+1}}{\alpha_{k}^{*(i)}}\left(\sum_{\substack{1\leqslant p_{1}<p_{2}<\cdots<p_{r-j}\leqslant r\\p_{1},\cdots,p_{r-j}\neq i}}\left(\alpha_{k}^{*(p_{1})}\alpha_{k}^{*(p_{2})}\cdots\alpha_{k}^{*(p_{r-j})}\right)^{2}\right)\div\left(\prod_{1\leqslant t\leqslant r,t\neq i}((\alpha_{k}^{*(t)})^{2}-(\alpha_{k}^{*(i)})^{2})\right)\]
Therefore,
\[|(J^{-1})_{ij}|\leqslant \frac{\mathcal A^{2(r-j)}}{\tau^{2r}}\cdot\binom{r-1}{r-j}<\left(\frac{\mathcal A}{\tau}\right)^{2r}\cdot\binom{r-1}{r-j}\]
Connected with Equation (\ref{eqn-14}), we get the upper bound of $\|\alpha-\alpha^*\|_{2}$. 
\begin{align}
\|\alpha-\alpha^*\|_{2}&= \|J^{-1}\|_{F}\cdot\sqrt{r}\cdot((6r)^{8r}\mathcal A^{2r})^{r-1}\mu+O(\mu^{2})\notag\\
&< \left(\frac{\mathcal A}{\tau}\right)^{2r}\sqrt{\sum_{i,j=1}^{r}\binom{r-1}{r-j}^{2}}\sqrt{r}\cdot((6r)^{8r}\mathcal A^{2r})^{r-1}\mu+O(\mu^{2})\notag\\
&= \left(\frac{\mathcal A}{\tau}\right)^{2r}\sqrt{r\binom{2r-2}{r-1}}\cdot\sqrt{r}\cdot((6r)^{8r}\mathcal A^{2r})^{r-1}\mu+O(\mu^{2})\notag\\
&< \left(\frac{(6r)^{4r}\mathcal A^{r}}{\tau} \right)^{2r}\mu+O(\mu^{2})
\end{align}
which comes to our conclusion.
\end{proof}
Finally, combining Lemma \ref{moment-gap} and Lemma \ref{thm-3-final}, we can get the final conclusion: with probability at least $1-\delta$, it holds for each $k\in[D]$ that there exists a permutation $\sigma$, such that:
\[\left(\sum_{i=1}^{r}|\alpha_{k}^{(\sigma(i))}-\alpha_{k}^{*(i)}|^{2}\right)^{1/2}<\left(\frac{C_{1}(6r)^{3r}\cdot\mathcal A^{r+1}}{\tau}\right)^{3r}\cdot \sqrt{\frac{\log(Dre^{2}/\delta)}{N}}\]
Without loss of generality, assume $\alpha_{k}^{(1)}<\alpha_{k}^{(2)}<\cdots<\alpha_{k}^{(r)}, ~\alpha_{k}^{*(1)}<\alpha_{k}^{*(2)}<\cdots<\alpha_{k}^{*(r)}$ holds for $\forall k\in[D]$. Then the permutation $\sigma = id$, and till now, Theorem 3 is proved.
\end{Proof}

\subsection{Proof of Theorem 2}
\begin{Proof}
Without loss of generality, assume that $i=1,j=2$. In order to simplify the notifications, we re-express $G_{1}$ and $G_{2}$ as:
\begin{equation}
\begin{aligned}
&G_{1}(\omega)= \sum_{i=1}^{r}\alpha^{(i)}(\tv^{(i)}\cdot\omega)^{3},~G_{2}(\omega)= \sum_{i=1}^{r}\beta^{(i)}(\tw^{(i)}\cdot\omega)^{3}\\
&G_{1}^*(\omega)= \sum_{i=1}^{r}\alpha^{(i)}(\tv^{*(i)}\cdot\omega)^{3},~G_{2}^*(\omega)= \sum_{i=1}^{r}\beta^{(i)}(\tw^{*(i)}\cdot\omega)^{3}
\end{aligned}
\end{equation}
Here, the vector groups $\tv^{(i)}, \tw^{(i)}, \tv^{*(i)}, \tw^{*(i)}$ are orthonormal. Denote $P_{ij}=\tv^{(i)}\cdot\tw^{(j)}, P_{ij}^*=\tv^{*(i)}\cdot\tw^{*(j)}$, and we only need to prove that $P_{ij}=P_{ij}^*$ holds for $\forall i,j\in[r]$. Firstly, we give a clear expression on the inter-component moment expectation $\mathbb{E}[G_{1}(\omega)^{k}G_{2}(\omega)]$. According to Lemma \ref{elementary-2}, there exists an orthogonal matrix $Q$, such that:
\[Q\tv^{(1)}=e_{1},\cdots,Q\tv^{(r)}=e_{r},~~ Q\tw^{(j)}=(P_{1j},P_{2j},\cdots,P_{rj},Q_{j},0,\cdots,0)^{T}\]
Here, $Q_{j}=\sqrt{1-P_{1j}^{2}-P_{2j}^{2}-\cdots-P_{rj}^{2}}$. 
\begin{align}
&~~~S_{k}\triangleq\mathop\mathbb{E}\limits_{\omega\sim\mathcal N(0,I)}\left(G_{1}(\omega)^{k}G_{2}(\omega)\right) = \mathbb{E}\sum_{j=1}^{r}\left(\sum_{i=1}^{r}\alpha^{(i)}(\tv^{(i)}\cdot\omega)^{3}\right)^{k}\cdot \beta^{(j)}(\tw^{(j)}\cdot\omega)^{3}\notag\\
&= \sum_{j=1}^{r}\mathbb{E}\left(\sum_{i=1}^{r}\alpha^{(i)}\omega_{i}^{3}\right)^{k}\cdot \beta^{(j)}(P_{1j}\omega_{1}+P_{2j}\omega_{2}+\cdots+P_{rj}\omega_{r}+Q_{j}\omega_{r+1})^{3}\notag\\
&= \mathbb{E}\left(\sum_{i=1}^{r}\alpha^{(i)}\omega_{i}^{3}\right)^{k}\cdot\sum_{j=1}^{r}\beta^{(j)}(P_{1j}\omega_{1}+P_{2j}\omega_{2}+\cdots+P_{rj}\omega_{r}+Q_{j}\omega_{r+1})^{3}\notag\\
&= \mathbb{E}\left(\sum_{i=1}^{r}\alpha^{(i)}\omega_{i}^{3}\right)^{k}\cdot \left(\sum_{j=1}^{r}\beta^{(j)}(P_{1j}\omega_{1}+P_{2j}\omega_{2}+\cdots+P_{rj}\omega_{r})^{3}+3Q_{j}^{2}(P_{1j}\omega_{1}+P_{2j}\omega_{2}+\cdots+P_{rj}\omega_{r})\right)\notag\\
&= \sum_{j=1}^{r}\sum_{x,y,z=1}^{r}\beta^{(j)}P_{xj}P_{yj}P_{zj}\cdot\mathbb{E}\left(\omega_{x}\omega_{y}\omega_{z}\left(\sum_{i=1}^{r}\alpha^{(i)}\omega_{i}^{3}\right)^{k}\right) +\sum_{j=1}^{r}\sum_{x=1}^{r}3\beta^{(j)}P_{xj}Q_{j}^{2}\cdot\mathbb{E}\left(\omega_{x}\left(\sum_{i=1}^{r}\alpha^{(i)}\omega_{i}^{3}\right)^{k}\right)\notag\\
&= \sum_{j=1}^{r}\sum_{x=1}^{r}\beta^{(j)}P_{xj}^{3}\cdot\mathbb{E}\left(\omega_{x}^{3}\left(\sum_{i=1}^{r}\alpha^{(i)}\omega_{i}^{3}\right)^{k}\right)+3\sum_{j=1}^{r}\sum_{x\neq y}\beta^{(j)}P_{xj}^{2}P_{yj}\cdot\mathbb{E}\left(\omega_{x}^{2}\omega_{y}\left(\sum_{i=1}^{r}\alpha^{(i)}\omega_{i}^{3}\right)^{k}\right)+\notag\\
&~6\sum_{j=1}^{r}\sum_{x<y<z}\beta^{(j)}P_{xj}P_{yj}P_{zj}\cdot\mathbb{E}\left(\omega_{x}\omega_{y}\omega_{z}\left(\sum_{i=1}^{r}\alpha^{(i)}\omega_{i}^{3}\right)^{k}\right)+3\sum_{j=1}^{r}\sum_{x=1}^{r}\beta^{(j)}P_{xj}Q_{j}^{2}\cdot\mathbb{E}\left(\omega_{x}\left(\sum_{i=1}^{r}\alpha^{(i)}\omega_{i}^{3}\right)^{k}\right)\notag
\end{align}
Let $k=1,3,\cdots, 2K_{r}-1$, we can rewrite the equations above as:
\[(S_{1},S_{3},\cdots,S_{2K_{r}-1})=\tp\cdot\left(CE[\alpha^{(1)},\alpha^{(2)},\cdots,\alpha^{(r)}]\right)\]
Here, $\tp$ is a vector with length $K_{r}$. Its elements are:
\begin{equation}
\begin{aligned}
&\tp=\Big(\sum_{j=1}^{r}\beta^{(j)}P_{1j}^{3},\cdots,\sum_{j=1}^{r}\beta^{(j)}P_{rj}^{3},~3\sum_{j=1}^{r}\beta^{(j)}P_{1j}^{2}P_{2j},\cdots,3\sum_{j=1}^{r}\beta^{(j)}P_{rj}^{2}P_{r-1,j},\\
&6\sum_{j=1}^{r}\beta^{(j)}P_{1j}P_{2j}P_{3j},\cdots,6\sum_{j=1}^{r}\beta^{(j)}P_{r-2,j}P_{r-1,j}P_{rj},~3\sum_{j=1}^{r}\beta^{(j)}P_{1j}Q_{j}^{2},\cdots, 3\sum_{j=1}^{r}\beta^{(j)}P_{rj}Q_{j}^{2}\Big)
\end{aligned}
\end{equation}
Similarly, the same relationship holds for the target generator.
\begin{equation}\label{eqn-31}
(S_{1}^*,S_{3}^*,\cdots,S_{2K_{r}-1}^*)=\tp^*\cdot\left(CE[\alpha^{(1)},\alpha^{(2)},\cdots,\alpha^{(r)}]\right)
\end{equation}
Here, $S_{k}^*=\mathbb{E}\left((G_{1}^*(\omega))^{k}G_{2}^*(\omega)\right)~(k=1,3,\cdots,2K_{r}-1)$ and 
\begin{equation}
\begin{aligned}
&\tp^*=\Big(\sum_{j=1}^{r}\beta^{(j)}P_{1j}^{*3},\cdots,\sum_{j=1}^{r}\beta^{(j)}P_{rj}^{*3},~3\sum_{j=1}^{r}\beta^{(j)}P_{1j}^{*2}P_{2j}^*,\cdots,3\sum_{j=1}^{r}\beta^{(j)}P_{rj}^{*2}P_{r-1,j}^*,\\
&6\sum_{j=1}^{r}\beta^{(j)}P_{1j}^*P_{2j}^*P_{3j}^*,\cdots,6\sum_{j=1}^{r}\beta^{(j)}P_{r-2,j}^*P_{r-1,j}^*P_{rj}^*,~3\sum_{j=1}^{r}\beta^{(j)}P_{1j}^*Q_{j}^{*2},\cdots, 3\sum_{j=1}^{r}\beta^{(j)}P_{rj}^*Q_{j}^{*2}\Big)
\end{aligned}
\end{equation}
According to the invertibility of CE-Matrix, $det(CE[\alpha^{(1)},\alpha^{(2)},\cdots,\alpha^{(r)}])\neq 0$, which leads to the fact that the following region 
\[\left\{\left(\alpha^{(1)},\alpha^{(2)},\cdots,\alpha^{(r)}\right):~det(CE[\alpha^{(1)},\alpha^{(2)},\cdots,\alpha^{(r)}])= 0\right\}\]
has zero measure. Therefore, with probability 1 over the choice of $\alpha^{(1)},\alpha^{(2)},\cdots,\alpha^{(r)}$, the Cubic Expectation Matrix $CE[\alpha^{(1)},\alpha^{(2)},\cdots,\alpha^{(r)}]$ is invertible. Since
\[(S_{1},S_{3},\cdots,S_{2K_{r}-1})=(S_{1}^*,S_{3}^*,\cdots,S_{2K_{r}-1}^*)\]
it holds that $\tp=\tp^*$. Then we have:
\begin{equation}\label{tensor-form}
\sum_{j=1}^{r}\beta^{(j)}P_{xj}P_{yj}P_{zj}=\sum_{j=1}^{r}\beta^{(j)}P_{xj}^*P_{yj}^*P_{zj}^*~~\forall x,y,z\in[r]
\end{equation}
Furthermore, since $Q_{j}^{2}=1-P_{1j}^{2}-\cdots-P_{rj}^{2}$, it also holds that for $\forall x\in[r]$:
\begin{equation}\label{tensor-extend}
\begin{aligned}
&~~\sum_{j=1}^{r}\beta^{(j)}P_{xj}Q_{j}^{2}=\sum_{j=1}^{r}\beta^{(j)}P_{xj}^*Q_{j}^{*2}\\
&\Rightarrow \sum_{j=1}^{r}\beta^{(j)}P_{xj}-\sum_{j=1}^{r}\beta^{(j)}\sum_{t=1}^{r}P_{xj}P_{tj}^{2}=\sum_{j=1}^{r}\beta^{(j)}P_{xj}^*-\sum_{j=1}^{r}\beta^{(j)}\sum_{t=1}^{r}P_{xj}^*P_{tj}^{*2}\\
&\Rightarrow \sum_{j=1}^{r}\beta^{(j)}P_{xj}=\sum_{j=1}^{r}\beta^{(j)}P_{xj}^*
\end{aligned}
\end{equation}
Notice that Equation (\ref{tensor-form}) is equivalent to the following tensor decomposition form.
\begin{equation}
\begin{aligned}
&\beta^{(1)}\left(\begin{matrix}P_{11}\\P_{21}\\\vdots\\P_{r1}\end{matrix}\right)^{\otimes 3}+\beta^{(2)}\left(\begin{matrix}P_{12}\\P_{22}\\\vdots\\P_{r2}\end{matrix}\right)^{\otimes 3}+\cdots+\beta^{(r)}\left(\begin{matrix}P_{1r}\\P_{2r}\\\vdots\\P_{rr}\end{matrix}\right)^{\otimes 3}\\
=~&\beta^{(1)}\left(\begin{matrix}P_{11}^*\\P_{21}^*\\\vdots\\P_{r1}^*\end{matrix}\right)^{\otimes 3}+\beta^{(2)}\left(\begin{matrix}P_{12}^*\\P_{22}^*\\\vdots\\P_{r2}^*\end{matrix}\right)^{\otimes 3}+\cdots+\beta^{(r)}\left(\begin{matrix}P_{1r}^*\\P_{2r}^*\\\vdots\\P_{rr}^*\end{matrix}\right)^{\otimes 3}
\end{aligned}
\end{equation}
Before using the property of tensor decomposition, we introduce a lemma to show that the following matrix $\mathbf{P}^*$ has full rank with probability 1 over the choice of $\tv^{*(i)}, \tw^{*(i)}$.
\[\mathbf{P}^*=\left(\begin{matrix}\begin{array}{cccc}P_{11}^*&P_{12}^*&\cdots&P_{1r}^*\\P_{21}^*&P_{22}^*&\cdots&P_{2r}^*\\\vdots&\vdots&~&\vdots\\P_{r1}^*&P_{r2}^*&\cdots&P_{rr}^*\end{array}\end{matrix}\right)\]
\begin{Lemma}
The matrix $\mathbf{P}^*$ has full rank with probability 1 over the choice of $\tv^{*(i)},\tw^{*(i)}~i\in[r]$. 
\end{Lemma}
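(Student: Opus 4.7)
The plan is to recognize $\mathbf{P}^*$ as the matrix product $(V^*)^T W^*$, where $V^* = (\tv^{*(1)}, \ldots, \tv^{*(r)}) \in \mathbb{R}^{d \times r}$ and $W^* = (\tw^{*(1)}, \ldots, \tw^{*(r)}) \in \mathbb{R}^{d \times r}$ are the two column-orthonormal frames in question. Under the sampling model of the paper, each of $V^*$ and $W^*$ is obtained by taking the first $r$ columns of an independently drawn Haar-random orthogonal matrix in $\mathbb{R}^{d \times d}$, so $(V^*, W^*)$ is Haar-distributed on the product Stiefel manifold $\mathcal{V}_r(\mathbb{R}^d) \times \mathcal{V}_r(\mathbb{R}^d)$, which is a connected real-analytic manifold.

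The key observation is that $\det(\mathbf{P}^*) = \det((V^*)^T W^*)$ is a polynomial in the $2dr$ ambient entries of $(V^*, W^*)$, hence a real-analytic function on the product Stiefel manifold. To show it is not identically zero, I exhibit one point where it is nonzero: take $W^* = V^*$ with $V^*$ any fixed orthonormal $r$-frame. Then $(V^*)^T W^* = (V^*)^T V^* = I_r$, giving $\det(\mathbf{P}^*) = 1 \neq 0$. Consequently $\det((V^*)^T W^*)$ is a non-identically-zero analytic function on $\mathcal{V}_r(\mathbb{R}^d) \times \mathcal{V}_r(\mathbb{R}^d)$.

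I then invoke the standard fact that the zero locus of a non-identically-zero real-analytic function on a connected real-analytic manifold is a proper analytic subvariety and therefore has measure zero with respect to any smooth volume form. Because the Haar measure on $\mathcal{V}_r(\mathbb{R}^d) \times \mathcal{V}_r(\mathbb{R}^d)$ is absolutely continuous with respect to the natural volume form induced from the ambient Euclidean space, the event $\{\det(\mathbf{P}^*) = 0\}$ has probability zero, so $\mathbf{P}^*$ has full rank almost surely. The only point requiring genuine care is the measure-theoretic step, namely verifying that the vanishing locus of a non-trivial polynomial, when restricted to the Stiefel manifold, is still a null set under Haar measure; this can be handled either by the analytic-subvariety argument above, or, alternatively, by parametrizing $W^*$ as $U \cdot (e_1, \ldots, e_r)$ for a Haar-random $U \in O(d)$ with $V^*$ fixed, reducing the question to the nonvanishing of a single real-analytic function of $U$ on the connected manifold $O(d)$, which is nonzero at $U = I$.
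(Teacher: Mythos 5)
Your proof is correct, and it takes a somewhat different route than the paper. The paper first reformulates singularity of $\mathbf{P}^*$ geometrically: it observes that $\mathbf{P}^* x = 0$ for some nonzero $x$ iff $\mathrm{span}(\tv^{*(1)},\ldots,\tv^{*(r)})^{\perp}$ intersects $\mathrm{span}(\tw^{*(1)},\ldots,\tw^{*(r)})$ nontrivially, then passes to the $d\times d$ matrix whose columns are an orthonormal complement of the $\tv$'s together with the $\tw$'s, and finally asserts that the event ``this $d\times d$ determinant vanishes'' has probability zero. You instead work directly with the $r\times r$ determinant $\det((V^*)^T W^*)$, exhibit a specific point ($W^*=V^*$) where it equals $1$, and invoke the fact that the zero set of a non-identically-zero real-analytic function on a connected manifold has measure zero. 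The two arguments rest on the same underlying principle, but yours is more direct (no reformulation through the orthogonal complement) and, importantly, makes the measure-theoretic step explicit and rigorous, whereas the paper leaves ``which probability is 0'' essentially unjustified --- it does not say why the determinant of those $d$ random unit vectors is almost surely nonzero, nor handle the subtlety that the randomness lives on the Stiefel manifold rather than ambient Euclidean space, which your analytic-subvariety argument (or your alternative parametrization $W^*=U(e_1,\ldots,e_r)$ for Haar-random $U\in O(d)$) addresses cleanly. Your version is the more complete proof.
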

\begin{proof}
If matrix $\mathbf{P}^*$ isn't invertible. Then there exists a non-zero vector $(x_{1},x_{2},\cdots,x_{r})$ such that:
\[\left(\begin{matrix}\begin{array}{cccc}P_{11}^*&P_{12}^*&\cdots&P_{1r}^*\\P_{21}^*&P_{22}^*&\cdots&P_{2r}^*\\\vdots&\vdots&~&\vdots\\P_{r1}^*&P_{r2}^*&\cdots&P_{rr}^*\end{array}\end{matrix}\right)\cdot \left(\begin{matrix}x_{1}\\x_{2}\\\vdots\\x_{r}\end{matrix}\right)=0\]
Since $P_{ij}^*=\tv^{*(i)}\cdot\tw^{*(j)}$, we know that:
\[\tv^{*(i)}\cdot (x_{1}\tw^{*(1)}+x_{2}\tw^{*(2)}+\cdots+x_{r}\tw^{*(r)})=0~~\forall i\in[r]\]
Therefore, the invertibility of matrix $\mathbf{P}^*$ is equivalent to:
\begin{equation}\label{eqn-73}
\left(span(\tv^{*(1)},\tv^{*(2)},\cdots,\tv^{*(r)})\right)^{\perp}\cap span(\tw^{*(1)},\tw^{*(2)},\cdots,\tw^{*(r)})\neq \emptyset
\end{equation}
Assume $\tv^{*(1)},\tv^{*(2)},\cdots,\tv^{*(r)},\tu^{*(1)},\cdots,\tu^{*(d-r)}$ is an orthonormal basis. Then the statement above is equivalent to:
\begin{equation}
\begin{aligned}
&span(\tu^{*(1)},\cdots,\tu^{*(d-r)})\cap span(\tw^{*(1)},\tw^{*(2)},\cdots,\tw^{*(r)})\neq \emptyset\\
\Leftrightarrow ~& \tu^{*(1)},\cdots,\tu^{*(d-r)},\tw^{*(1)},\tw^{*(2)},\cdots,\tw^{*(r)} \text{~is not a basis of }\mathbb{R}^{d}\\
\Leftrightarrow ~& det\left(\tu^{*(1)},\cdots,\tu^{*(d-r)},\tw^{*(1)},\tw^{*(2)},\cdots,\tw^{*(r)}\right)=0
\end{aligned}
\end{equation}
which probability is 0. 
\end{proof}
By the uniqueness of tensor decomposition (Section 7.3), we know that the $r$-vector group
\[\sqrt[3]{\beta^{(1)}}(P_{11},P_{21},\cdots,P_{r1})^{T},\sqrt[3]{\beta^{(2)}}(P_{12},P_{22},\cdots,P_{r2})^{T},\cdots, \sqrt[3]{\beta^{(r)}}(P_{1r},P_{2r},\cdots,P_{rr})^{T}\]
is a permutation of 
\[\sqrt[3]{\beta^{(1)}}(P_{11}^*,P_{21}^*,\cdots,P_{r1}^*)^{T},\sqrt[3]{\beta^{(2)}}(P_{12}^*,P_{22}^*,\cdots,P_{r2}^*)^{T},\cdots, \sqrt[3]{\beta^{(r)}}(P_{1r}^*,P_{2r}^*,\cdots,P_{rr}^*)^{T}\]
Denote $\alpha_{i}^*=\sqrt[3]{\beta^{(i)}}(P_{1i}^*,P_{2i}^*,\cdots,P_{ri}^*)^{T}$ and assume $\sqrt[3]{\beta^{(i)}}(P_{1i},P_{2i},\cdots,P_{ri})^{T} = \alpha^*_{\sigma(i)}$. Here, $\sigma:[r]\rightarrow[r]$ is a permutation. According to Equation (\ref{eqn-73}):
\begin{equation}
\begin{aligned}
&\beta^{(1)}(P_{11},P_{21},\cdots,P_{r1})^{T}+\beta^{(2)}(P_{12},P_{22},\cdots,P_{r2})^{T}+\cdots+\beta^{(r)}(P_{1r},P_{2r},\cdots,P_{rr})^{T}\\
=~& \beta^{(1)}(P_{11}^*,P_{21}^*,\cdots,P_{r1}^*)^{T}+\beta^{(2)}(P_{12}^*,P_{22}^*,\cdots,P_{r2}^*)^{T}+\cdots+\beta^{(r)}(P_{1r}^*,P_{2r}^*,\cdots,P_{rr}^*)^{T}
\end{aligned}
\end{equation}
which leads to:
\[(\beta^{(1)})^{2/3}\alpha_{1}^*+(\beta^{(2)})^{2/3}\alpha_{2}^*+\cdots+(\beta^{(r)})^{2/3}\alpha_{r}^*=(\beta^{(1)})^{2/3}\alpha_{\sigma(1)}^*+(\beta^{(2)})^{2/3}\alpha_{\sigma(2)}^*+\cdots+(\beta^{(r)})^{2/3}\alpha_{\sigma(r)}^*\]
Since $\beta^{(i)}~i\in[r]$ are distinct and vectors $\alpha_{1}^*,\alpha_{2}^*,\cdots,\alpha_{r}^*$ are linearly independent. We can conclude that: $\sigma = id$. Therefore, for $\forall i,j\in[r]$, it holds that $P_{ij}=P_{ij}^*$, which comes to our conclusion.
\end{Proof}

\subsection{Proof of Theorem 4}
\begin{Proof}
Without loss of generality, we consider $G_{1}, G_{2}, G_{1}^*, G_{2}^*$. To simplify our notifications, we rewrite them as:
\begin{equation}
\begin{aligned}
&G_{1}(\omega)=\sum_{i=1}^{r}\alpha^{(i)}(\tv^{(i)}\cdot\omega)^{3}, ~G_{2}(\omega)=\sum_{i=1}^{r}\beta^{(i)}(\tw^{(i)}\cdot\omega)^{3}\\
&G_{1}^*(\omega)=\sum_{i=1}^{r}\alpha^{*(i)}(\tv^{*(i)}\cdot\omega)^{3}, ~G_{2}^*(\omega)=\sum_{i=1}^{r}\beta^{*(i)}(\tw^{*(i)}\cdot\omega)^{3}
\end{aligned}
\end{equation}

Similar to the proof of Theorem 2, we firstly estimate the difference between the expectation:
\begin{equation*}
\begin{aligned}
&S_{2k-1}(G_{1}^*,G_{2}^*)=\mathbb{E}\left(G_{1}^*(\omega)\right)^{2k-1}G_{2}^*(\omega)\\
=&\mathbb{E} \left(\sum_{i=1}^{r}\alpha^{*(i)}\omega_{i}^{3}\right)^{2k-1}\cdot \left(\sum_{j=1}^{r}\beta^{*(j)}(P_{1j}^*\omega_{1}+P_{2j}^*\omega_{2}+\cdots+P_{rj}^*\omega_{r}+Q_{j}^*\omega_{r+1})^{3}\right)
\end{aligned}
\end{equation*}
and the empirical mean:
\begin{equation*}
\begin{aligned}
&S_{2k-1}(G_{1},G_{2})=\mathbb{E}\left(G_{1}(\omega)\right)^{2k-1}G_{2}(\omega)\\
=~&\frac{1}{N}\sum_{t=1}^{N}\left(\sum_{i=1}^{r}\alpha^{*(i)}\omega_{ti}^{3}\right)^{2k-1}\cdot \left(\sum_{j=1}^{r}\beta^{*(j)}(P_{1j}^*\omega_{t1}+P_{2j}^*\omega_{t2}+\cdots+P_{rj}^*\omega_{tr}+Q_{j}^*\omega_{t,r+1})^{3}\right)
\end{aligned}
\end{equation*}
\begin{Lemma}\label{lemma-14}
When $N$ is sufficiently large, with probability greater than $1-\delta$, the following inequalities hold simultaneously for each $k=1,2,\cdots,K_{r}$.
\[|S_{2k-1}(G_{1},G_{2})-S_{2k-1}(G_{1}^*,G_{2}^*)|<(C_{2}r^{7}\mathcal A)^{6r^{3}}\cdot\sqrt{\frac{\log(2e^{2}r^{3}/\delta)}{N}}\]
Here $C_{2}$ is an absolute constant.
\end{Lemma}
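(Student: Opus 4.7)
The plan is to mirror the proof of Lemma \ref{moment-gap} almost verbatim, but applied to the more complex inter-component moment $S_{2k-1}(G_1^*, G_2^*) = \mathbb{E}[(G_1^*(\omega))^{2k-1} G_2^*(\omega)]$ instead of the single-component moment. The key tool remains the polynomial concentration inequality of Theorem \ref{thm-concent}, combined with the fact (Lemma \ref{gauss-L}) that the standard Gaussian is moment-bounded with parameter $L=1$.

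First I would express the empirical quantity $S_{2k-1}(G_1, G_2)$ as a single polynomial
\[
f(\omega) = \frac{1}{N}\sum_{t=1}^{N} \Big(\sum_{i=1}^{r}\alpha^{*(i)}(\tv^{*(i)}\cdot\omega^{(t)})^{3}\Big)^{2k-1}\Big(\sum_{j=1}^{r}\beta^{*(j)}(\tw^{*(j)}\cdot\omega^{(t)})^{3}\Big)
\]
in the $Nd$ independent standard Gaussian variables $\{\omega^{(t)}_s\}_{t\in[N],s\in[d]}$. After performing the same orthogonal change of variables used in the proof of Theorem 2, the inner products $\tv^{*(i)}\cdot\omega^{(t)}$ and $\tw^{*(j)}\cdot\omega^{(t)}$ turn into linear forms in the transformed Gaussians whose coefficients ($P_{ij}^*$, $Q_j^*$) all have absolute value at most $1$. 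The resulting polynomial is homogeneous of total degree $q = 3(2k-1)+3 = 6k$ and has maximum individual-variable power $\Gamma = 6k$. Crucially, the worst-case $k$ is $k = K_r = \Theta(r^3)$, which is where the $r^3$ in the exponent of the final bound enters.

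Next I would bound the order-$r$ hypergraph parameters $\mu_r(w,\omega)$ required by Theorem \ref{thm-concent}. Homogeneity forces $\mu_1 = \cdots = \mu_{6k-1} = 0$, exactly as in Lemma \ref{moment-gap}; only $\mu_0$ and $\mu_{6k}$ contribute. For $\mu_0$ I would estimate
\[
\mu_0 \leq \mathbb{E}\Big[\Big(\sum_{i}|\alpha^{*(i)}||\omega_i|^{3}\Big)^{2k-1}\cdot \sum_{j}|\beta^{*(j)}|\cdot |(\tw^{*(j)}\cdot\omega)|^{3}\Big] \leq (r\mathcal{A})^{2k}\cdot (6k-1)!! \leq (r\mathcal{A})^{2k}(6k)^{3k},
\]
using $|P_{ij}^*|, |Q_j^*|\leq 1$ and the standard Gaussian moment formula. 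For $\mu_{6k}$, the maximum coefficient of any single monomial across the $N$ sample sum is at most $\mathcal{A}^{2k}(2k)!/N \leq (2k\mathcal{A})^{2k}/N$ (the multinomial coefficient from expanding $G_1^{*\,2k-1}G_2^*$ is absorbed into the $(2k)!$ factor, and the inner-product coefficients contribute at most $1$ in absolute value).

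Plugging these into Theorem \ref{thm-concent} with $L=\Gamma=6k$, the Gaussian-tail branch $\exp(-t^2/(\mu_0\mu_{6k}L^{6k}\Gamma^{6k}R^{6k}))$ dominates once $N$ is sufficiently large, yielding
\[
|S_{2k-1}(G_1,G_2) - S_{2k-1}(G_1^*,G_2^*)| \leq (C_2' k^2\mathcal{A}\sqrt{r})^{3k}\sqrt{\tfrac{\log(e^2/\delta)}{N}}
\]
with probability at least $1-\delta$. Substituting $k \leq K_r \leq r^3$ folds everything into the stated $(C_2 r^7\mathcal{A})^{6r^3}$ prefactor. A final union bound over $k=1,\ldots,K_r$ (so $K_r \leq r^3$ events) replaces $\delta$ by $\delta/(2r^3)$, producing the $\log(2e^2 r^3/\delta)$ factor inside the square root and completing the proof. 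The main technical obstacle is just the bookkeeping of the $(6k)^{3k}$ Gaussian moment factor and the coefficient bound on $\mu_{6k}$; once these two estimates are in place the rest is a mechanical invocation of the polynomial concentration theorem, exactly as in Lemma \ref{moment-gap}.
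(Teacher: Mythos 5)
Your proposal follows the paper's proof essentially verbatim: same change of variables to reduce to a polynomial in $N(r+1)$ standard Gaussians, same observation that homogeneity of degree $6k$ forces $\mu_1=\cdots=\mu_{6k-1}=0$, same bounds on $\mu_0$ and $\mu_{6k}$ (up to immaterial constant factors such as the paper's extra $(r+1)^3$ in $\mu_0$ and the factor $6$ in $\mu_{6k}$, which get absorbed into $C_2$), the same invocation of the subgaussian branch of Theorem~\ref{thm-concent} for $N$ large, and the same union bound over $k\le K_r=O(r^3)$. The approach and all key estimates match the paper's.
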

\begin{proof}
According to Theorem \ref{moment-gap}, considering the following polynomial:
\begin{equation*}
\begin{aligned}
f(\omega)&=\frac{1}{N}\sum_{t=1}^{N}\left(\sum_{i=1}^{r}\alpha^{*(i)}\omega_{ti}^{3}\right)^{2k-1}\cdot \left(\sum_{j=1}^{r}\beta^{*(j)}(P_{1j}^*\omega_{t1}+P_{2j}^*\omega_{t2}+\cdots+P_{rj}^*\omega_{tr}+Q_{j}^*\omega_{t,r+1})^{3}\right)
\end{aligned}
\end{equation*}
This polynomial $f(\omega)$ is homogeneous with degree $q=6k$ and maximal variable power  $\Gamma = 6k$. Therefore:
\[\mu_{1}(w,\omega)=\mu_{2}(w,\omega)=\cdots=\mu_{6k-1}(w,\omega)=0\]\\
since each nonempty hyperedge $h$ has power $6k$. Then we estimate the upper bound of $\mu_{0}(w,\omega)$ and $\mu_{6k}(w,\omega)$.
\begin{equation*}
\begin{aligned}
\mu_{0}(w,\omega)&=\sum_{h}|w_{h}|\prod_{v\in \mathcal V(h)}\mathbb{E}(|\omega_{v}|^{\tau_{hv}})\\
&=\mathbb{E}\frac{1}{N}\sum_{t=1}^{N}\left(\sum_{i=1}^{r}\alpha^{*(i)}|\omega_{ti}|^{3}\right)^{2k-1}\cdot \left(\sum_{j=1}^{r}\beta^{*(j)}(P_{1j}^*|\omega_{t1}|+P_{2j}^*|\omega_{t2}|+\cdots+P_{rj}^*|\omega_{tr}|+Q_{j}^*|\omega_{t,r+1}|)^{3}\right)\\
&=\mathbb{E}\left(\sum_{i=1}^{r}\alpha^{*(i)}|\omega_{i}|^{3}\right)^{2k-1}\cdot \left(\sum_{j=1}^{r}\beta^{*(j)}(P_{1j}^*|\omega_{1}|+P_{2j}^*|\omega_{2}|+\cdots+P_{rj}^*|\omega_{r}|+Q_{j}^*|\omega_{r+1}|)^{3}\right)\\
&\leqslant r^{2k-1}\cdot r(r+1)^{3}\mathcal A^{2k}\cdot(6k-1)!! < (r+1)^{3}(r\mathcal A)^{2k}\cdot (6k)^{3k}\\
\mu_{6k}(w,\omega)&=\max_{h_{0}\in\mathcal H(H)}|w_{h_{0}}|<\frac{1}{N}\cdot(2k-1)!\mathcal A^{2k-1}\cdot 6A = \frac{6}{N}(2kA)^{2k}
\end{aligned}
\end{equation*}
By Theorem \ref{thm-concent}, when $N$ is sufficiently large, it holds that: with probability larger than $1-\delta$,
\begin{equation}
\begin{aligned}
|f(\omega)-\mathbb{E}f(\omega)|&< \max\left((5Rk^{2}r\mathcal A)^{3k}\cdot\sqrt{\frac{\log(e^{2}/\delta)}{N}}, 6(12Rk^{2}\mathcal A)^{6k}\cdot\frac{(\log(e^{2}/\delta))^{6k}}{N}\right)\\
&= (5Rk^{2}r\mathcal A)^{3k}\cdot\sqrt{\frac{\log(e^{2}/\delta)}{N}}= (C_{2}k^{2}r\mathcal A)^{3k}\cdot\sqrt{\frac{\log(e^{2}/\delta)}{N}}
\end{aligned}
\end{equation}
Here, $k=1,2,\cdots,K_{r}$. $K_{r}=\frac{1}{6}r(r^{2}+3r+8)\leqslant 2r^{3}$. In order to make the inequality holds simultaneously for $k=1,2,\cdots,K_{r}$, we use $\delta/K_{r}$ to replace $\delta$. Therefore, with probability larger than $1-\delta$, 
\[|S_{2k-1}(G_{1},G_{2})-S_{2k-1}(G_{1}^*,G_{2}^*)|<(C_{2}k^{2}r\mathcal A)^{3k}\cdot\sqrt{\frac{\log(e^{2}K_{r}/\delta)}{N}}<(C_{2}r^{7}\mathcal A)^{6r^{3}}\cdot\sqrt{\frac{\log(2e^{2}r^{3}/\delta)}{N}}\]
holds for each $k=1,2,\cdots,K_{r}$. Here, $C_{2}$ is an absolute constant. Denote $S_{k}=S_{k}(G_{1},G_{2}),S_{k}^*=S_{k}(G_{1}^*,G_{2}^*)$ and
\[\ts = (S_{1},S_{3},\cdots,S_{2K_{r}-1}), \ts^* = (S_{1}^*,S_{3}^*,\cdots,S_{2K_{r}-1}^*)\]
\end{proof}
From the lemma above, we know that with probability larger than $1-\delta$:
\[\|\ts-\ts^*\|_{2}<\sqrt{K_{r}}\cdot(C_{2}r^{7}\mathcal A)^{6r^{3}}\cdot\sqrt{\frac{\log(2e^{2}r^{3}/\delta)}{N}}<(C_{2}'r^{7}\mathcal A)^{6r^{3}}\cdot\sqrt{\frac{\log(2e^{2}r^{3}/\delta)}{N}}\]
Also, since:
\[S_{2k-1}(G_{1}^*,G_{2}^*)\leqslant\mu_{0}(w,\omega)<(r+1)^{3}(r\mathcal A)^{2k}\cdot (6k)^{3k}\]
we can estimate the upper bound of $\|\ts^*\|_{2}$:
\[\|\ts^*\|_{2}<\sqrt{K_{r}}\cdot (r+1)^{3}(r\mathcal A)^{2K_{r}}\cdot (6K_{r})^{3K_{r}}<12r^{5}(r\mathcal A)^{4r^{3}}\cdot (12r^{3})^{6r^{3}}<(24r^{6}\mathcal A)^{6r^{3}}\]
According to Equation (\ref{eqn-31}), 
\[\ts=\tp\cdot\left(CE[\alpha^{(1)},\alpha^{(2)},\cdots,\alpha^{(r)}]\right),~\ts^*=\tp^*\cdot\left(CE[\alpha^{*(1)},\alpha^{*(2)},\cdots,\alpha^{*(r)}]\right)\]
Here:
\begin{equation}
\begin{aligned}
&\tp=\Big(\sum_{j=1}^{r}\beta^{(j)}P_{1j}^{3},\cdots,\sum_{j=1}^{r}\beta^{(j)}P_{rj}^{3},~3\sum_{j=1}^{r}\beta^{(j)}P_{1j}^{2}P_{2j},\cdots,3\sum_{j=1}^{r}\beta^{(j)}P_{rj}^{2}P_{r-1,j},\\
&6\sum_{j=1}^{r}\beta^{(j)}P_{1j}P_{2j}P_{3j},\cdots,6\sum_{j=1}^{r}\beta^{(j)}P_{r-2,j}P_{r-1,j}P_{rj},~3\sum_{j=1}^{r}\beta^{(j)}P_{1j}Q_{j}^{2},\cdots, 3\sum_{j=1}^{r}\beta^{(j)}P_{rj}Q_{j}^{2}\Big)\\
&\tp^*=\Big(\sum_{j=1}^{r}\beta^{(j)}P_{1j}^{*3},\cdots,\sum_{j=1}^{r}\beta^{(j)}P_{rj}^{*3},~3\sum_{j=1}^{r}\beta^{(j)}P_{1j}^{*2}P_{2j}^*,\cdots,3\sum_{j=1}^{r}\beta^{(j)}P_{rj}^{*2}P_{r-1,j}^*,\\
&6\sum_{j=1}^{r}\beta^{(j)}P_{1j}^*P_{2j}^*P_{3j}^*,\cdots,6\sum_{j=1}^{r}\beta^{(j)}P_{r-2,j}^*P_{r-1,j}^*P_{rj}^*,~3\sum_{j=1}^{r}\beta^{(j)}P_{1j}^*Q_{j}^{*2},\cdots, 3\sum_{j=1}^{r}\beta^{(j)}P_{rj}^*Q_{j}^{*2}\Big)\\
\end{aligned}
\end{equation}
Next, we will estimate the upper bound of $\|\tp-\tp^*\|_{2}$.
\begin{Lemma}
Denote $T=CE[\alpha^{(1)},\alpha^{(2)},\cdots,\alpha^{(r)}],~ T^*=CE[\alpha^{*(1)},\alpha^{*(2)},\cdots,\alpha^{*(r)}]$. Then we estimate the upper bound of $\|(T^*)^{-1}\|_{F}$.  Assume that the ground truth weights $\alpha^{*(1)},\alpha^{*(2)},\cdots,\alpha^{*(r)}$ are chosen independently from Gaussian distribution with positive integer $M$ and standard deviation $\sigma$, then with probability larger than $1-\delta$ over the choice of $\alpha^{*(1)},\alpha^{*(2)},\cdots,\alpha^{*(r)}$, we have:
\[\|(T^*)^{-1}\|_{F}<\left(\frac{C_{3}\mathcal A^{3}r^{18}}{\delta\sqrt{\sigma}}\right)^{4r^{6}}\]
Here, $C_{3}$ is an absolute constant.
\end{Lemma}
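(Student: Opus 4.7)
The plan is to apply Cramer's rule: $\|(T^*)^{-1}\|_F \leq K_r \cdot \max_{i,j}|\text{adj}(T^*)_{ij}| / |\det(T^*)|$. This splits the task into (i) a deterministic upper bound on the $(K_r-1)\times(K_r-1)$ minors of $T^*$, which is routine, and (ii) a probabilistic lower bound on $|\det(T^*)|$, which is the real work and supplies the $\sigma$-dependence.

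For (i), each entry $T^*_{ij} = \mathbb{E}_\omega[P_i Q^{2j-1}]$ is polynomial in the $\alpha^{*(k)}$; using $|\alpha^{*(k)}| \leq \mathcal{A}$ and the Gaussian moment bound $\mathbb{E}_\omega[|\omega|^{6j}] \leq (6j)^{3j}$ yields $|T^*_{ij}| \leq (c_1 r^{14}\mathcal{A}^2)^{K_r}$, so each $(K_r-1)\times(K_r-1)$ minor is at most $K_r! \cdot (c_1 r^{14}\mathcal{A}^2)^{K_r(K_r-1)} \leq (c_2 r^{17}\mathcal{A}^2)^{K_r^2}$.

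For (ii), regard $p(\alpha^*) \equiv \det(T^*)$ as a polynomial in the variables $\alpha^{*(1)},\ldots,\alpha^{*(r)}$. Since the $j$-th column of $T^*$ is polynomial of degree at most $2j-1$ in $\alpha^*$, the total degree is $d \leq \sum_{j=1}^{K_r}(2j-1) = K_r^2 \leq 2r^6$. The $3$-generic condition asserts $p \not\equiv 0$. Changing variables to standard Gaussians via $\alpha^{*(i)} = M + \sigma \xi_i$, set $q(\xi) = p(M + \sigma \xi)$. The top-total-degree monomial coefficients of $q$ are exactly $\sigma^d$ times the corresponding coefficients of $p$, and since the highest-degree part of a Hermite expansion agrees with that of the monomial expansion, Parseval in the probabilists' Hermite basis yields $\|q\|_{L^2}^2 \geq c_0^2 \sigma^{2d}$ for an absolute constant $c_0 > 0$ extracted from the fixed rational coefficients of $\det(CE[\cdot])$. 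Applying the Carbery–Wright anti-concentration inequality $\Pr[|q(\xi)| \leq \varepsilon \|q\|_{L^2}] \leq C d \varepsilon^{1/d}$ with $\varepsilon = (\delta/(Cd))^d$ gives, with probability at least $1-\delta$,
\[|\det(T^*)| \geq c_0 \sigma^{K_r^2} \bigl(\delta/(CK_r^2)\bigr)^{K_r^2}.\]

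Combining (i) and (ii) with $K_r \leq 2r^3$ and $K_r^2 \leq 2r^6$ yields $\|(T^*)^{-1}\|_F \leq \bigl(C'_3 r^{23}\mathcal{A}^2/(\delta\sigma)\bigr)^{K_r^2}$, and loosening the exponent to $4r^6$ and re-parameterising the base (absorbing the extra $\mathcal{A}$ and $r$ factors that come from the slack $K_r^2 \le 4r^6/2$ into the $\sqrt{\sigma}$ normalisation) yields the claimed bound $\bigl(C_3 \mathcal{A}^3 r^{18}/(\delta\sqrt{\sigma})\bigr)^{4r^6}$. The main obstacle is establishing a concrete lower bound on the leading coefficient $c_0$ of $p$; the generic condition only asserts $c_0 \neq 0$ qualitatively. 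The cleanest resolution is to identify one specific maximal-degree monomial of $\det(CE[\cdot])$ by a direct expansion of the determinant and use the explicit Gaussian moments that produce it; an alternative is to apply Carbery–Wright to a univariate slice in $\alpha^*$-space (fixing all but one coordinate and treating the remaining polynomial of degree at most $d$ in a single Gaussian variable), which converts the lower bound into a one-dimensional anti-concentration statement whose $\sigma^{d}$-dependence can be tracked by hand.
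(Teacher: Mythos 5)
Your proposal follows the same skeleton as the paper's proof: Cramer's rule splits $\|(T^*)^{-1}\|_{F}$ into a deterministic bound on the adjugate entries (from the entrywise bounds $|T^*_{ij}|\le (r\mathcal A)^{2j-1}(6j-1)!!$) divided by a probabilistic lower bound on $|\det(T^*)|$, the latter obtained from anti-concentration of a nonvanishing degree-$K_r^2$ polynomial of Gaussians. The paper invokes the Gaussian-polynomial anti-concentration lemma of \cite{lovett2010elementary}, while you reach for Carbery--Wright together with a Hermite--Parseval lower bound on $\|q\|_{L^2}$; these are the same family of estimates, so the route is not materially different.

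The gap you flag at the end --- a quantitative lower bound on the ``leading coefficient $c_0$'' --- is real, and the paper closes it by an observation you circle but do not quite state: $\det\bigl(CE[\lambda_1,\ldots,\lambda_r]\bigr)$ is a polynomial in $\lambda$ with \emph{integer} coefficients. Each entry $T^*_{ij}=\mathbb{E}[P_i\,Q^{2j-1}]$ expands, via the multinomial theorem and the Gaussian moment identity $\mathbb{E}[\omega^{2k}]=(2k-1)!!$, into an integer-coefficient polynomial in the $\lambda$'s, and hence so does the determinant. A nonzero integer-coefficient polynomial has some coefficient of absolute value at least $1$, so the sum of squares of its coefficients is at least $1$; this is precisely the uniform $c_0\ge 1$ you need, and it is truly an absolute constant rather than an $r$-dependent one. (Your appeal to ``fixed rational coefficients'' does not by itself yield an absolute constant, since rationals can have arbitrarily large denominators --- integrality is what makes it work.) This is also why the paper insists that the mean $M$ be a positive integer: it keeps the shifted polynomial $CE[M+x_1,\ldots,M+x_r]$ integer-coefficient, which is needed for their quantity $\mathcal C(\det(T^*))\ge 1$. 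Your Hermite variant actually sidesteps that requirement, since it only reads off the degree-$K_r^2$ layer, where by homogeneity of $\det(CE[\cdot])$ the coefficients are exactly $\sigma^{K_r^2}$ times those of the unshifted integer polynomial; incidentally this gives the stronger dependence $\sigma^{K_r^2}$ rather than the paper's $\sigma^{K_r^2/2}$, which is more than enough slack to recover the claimed $\bigl(\cdot/\sqrt{\sigma}\bigr)^{4r^6}$ bound. Once you insert the integrality observation, the two proposed ``resolutions'' at the end of your writeup become unnecessary.
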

\begin{proof}
Consider the matrix $T^*=CE[\alpha^{*(1)},\alpha^{*(2)},\cdots,\alpha^{*(r)}]\in\mathbb{R}^{K_{r}\times K_{r}}$. According to the definition of Cubic Expectation Matrix.
\begin{equation}
\begin{aligned}
T^*_{ij}&=\mathbb{E}\left(P_{i}(\omega_{1},\cdots,\omega_{r})\cdot(\alpha^{*(1)}\omega_{1}^{3}+\cdots+\alpha^{*(r)}\omega_{r}^{3})^{2j-1}\right)<(r\mathcal A)^{2j-1}(6j-1)!!\\
&< (r\mathcal A)^{2K_{r}}(6K_{r})^{3K_{r}}
\end{aligned}
\end{equation}
The inequality above holds because the expectation of each monomial is smaller than $(6j-1)!!$ and there are in all $r^{2j-1}$ monomials. Therefore, for each element of adjoint matrix of $\tilde{T}^*$. Its element:
\[|\tilde{T}^*_{ij}|=|det(W^*_{ji})|\leqslant (K_{r}-1)!\cdot \left((r\mathcal A)^{2K_{r}}(6K_{r})^{3K_{r}}\right)^{K_{r}-1}<(2r^{3}\cdot(12r^{4}\mathcal A)^{6r^{3}})^{2r^{3}}<(16r^{4}\mathcal A)^{12r^{6}}\]
Here, $W^*_{ji}$ is the $ji$-th algebraic cofactor of $T^*$. On the other hand, since $det(T)$ is an integer-coefficient homogeneous polynomial with $r$ variables $\alpha^{*(1)},\alpha^{*(2)},\cdots,\alpha^{*(r)}$ and degree $1+3+5+\cdots+(2K_{r}-1)=K_{r}^{2}$.

According to the anti-concentration property of Gaussian polynomials introduced by \cite{lovett2010elementary}, assume that the ground truth parameters $\alpha^{*(1)},\alpha^{*(2)},\cdots,\alpha^{*(r)}$ are chosen from a Gaussian distribution with positive integer mean $M$ and standard deviation $\sigma$, then: 
\[Pr\left[|det(T^*)|\leqslant \epsilon\cdot \sqrt{\mathcal C(det(T^*))}\cdot\sigma^{K_{r}^{2}/2}\right]\leqslant C_{3}K_{r}^{2}\cdot\epsilon^{1/K_{r}^{2}}\]
Here, $\mathcal C(det(T^*))$ is the sum of squares of all the monomial coefficients of $CE[M+x_{1},M+x_{2},\cdots,M+x_{r}]$. Since this is a non-zero integer-coefficient polynomial, it's safe to say $C(det(T^*))\geqslant 1$. Therefore, with probability at least $1-\delta$, it holds that:
\[|det(T^*)|> \left(\frac{\delta\sqrt{\sigma}}{C_{3}K_{r}^{2}}\right)^{K_{r}^2}\]
Finally, we can estimate the upper bound of the Frobenius norm of $T^*$. With probability larger than $1-\delta$ over the choice of $\alpha^{*(1)},\alpha^{*(2)},\cdots,\alpha^{*(r)}$, 
\begin{align}
\|(T^*)^{-1}\|_{F}&=\left\|\frac{1}{det(T^*)}\tilde{T}^*\right\|_{F}<K_{r}\left(\frac{C_{3}K_{r}^{2}}{\delta\sqrt{\sigma}}\right)^{K_{r}^2}\cdot (16r^{4}\mathcal A)^{12r^{6}}\notag\\
&<\left(\frac{2C_{3}\cdot 4r^{6}\cdot 16^{3}r^{12}\mathcal A^{3}}{\delta\sqrt{\sigma}}\right)^{4r^{6}}
=\left(\frac{C_{3}'\mathcal A^{3}r^{18}}{\delta\sqrt{\sigma}}\right)^{4r^{6}}\notag
\end{align}
Here, $C_{3},C_{3}'$ are absolute constants. 
\end{proof}
Next, we use the lemma above to estimate the gap $\|\tp-\tp^*\|_{2}$.
\begin{Lemma}
With probability larger than $1-3\delta$ over the choice of $\alpha^{*(1)},\alpha^{*(2)},\cdots,\alpha^{*(r)}$:
\[\|\tp-\tp^*\|_{2}<\left(\frac{Cr^{23}\mathcal A^{6}}{\tau\delta\sqrt{\sigma}}\right)^{8r^6}\cdot\sqrt{\frac{\log(e^{2}(Dr+2r^{3})/\delta)}{N}}+O\left(\frac{1}{N}\right)\triangleq \mu_{2}\]
Here, $C$ is an absolute constant.
\end{Lemma}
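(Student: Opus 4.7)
The plan is to exploit the linear system $\ts = \tp \cdot T$ where $T = CE[\alpha^{(1)}, \ldots, \alpha^{(r)}]$, and similarly $\ts^* = \tp^* \cdot T^*$. The two ingredients on the right-hand side have already been controlled separately: the empirical error $\|\ts - \ts^*\|_2$ via the concentration Lemma \ref{lemma-14}, and the inverse norm $\|(T^*)^{-1}\|_F$ via the anti-concentration argument in the preceding lemma. What remains is to push these through the algebraic identity, while handling the fact that the learner matrix $T$ is only \emph{approximately} equal to $T^*$.

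First, I would write the telescoping identity
\[
\tp - \tp^* = (\ts - \ts^*)(T^*)^{-1} + \ts\bigl(T^{-1} - (T^*)^{-1}\bigr),
\]
so that by the triangle inequality
\[
\|\tp - \tp^*\|_2 \leq \|\ts - \ts^*\|_2 \cdot \|(T^*)^{-1}\|_F + \|\ts\|_2 \cdot \|T^{-1} - (T^*)^{-1}\|_F.
\]
The first summand is immediate: plugging the bound from Lemma \ref{lemma-14}, $\|\ts - \ts^*\|_2 \leq (C_2' r^7 \mathcal A)^{6r^3} \sqrt{\log(2e^2 r^3/\delta)/N}$, against $\|(T^*)^{-1}\|_F \leq (C_3' \mathcal A^3 r^{18}/(\delta\sqrt\sigma))^{4r^6}$ produces a term of the claimed form after collecting exponents.

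For the second summand, I would first bound $\|\ts\|_2 \leq \|\ts^*\|_2 + \|\ts - \ts^*\|_2 \leq (24 r^6 \mathcal A)^{6r^3} + o(1)$ using the entry-wise estimate derived above for $\ts^*$. Then I invoke the matrix perturbation Lemma \ref{pertub_inverse}:
\[
\|T^{-1} - (T^*)^{-1}\|_F \leq \frac{\|T - T^*\|_F \cdot \|(T^*)^{-1}\|_F^2}{1 - \|(T^*)^{-1}(T - T^*)\|_2}.
\]
Here $\|T - T^*\|_F$ reduces to a Lipschitz estimate: each entry of the CE-matrix is a polynomial of degree at most $2K_r - 1$ in the coefficients $\alpha^{(i)}$, so on the compact region $|\alpha^{(i)}|, |\alpha^{*(i)}| \leq \mathcal A$ we get $\|T - T^*\|_F \leq \poly(r, \mathcal A) \cdot \|\alpha - \alpha^*\|_2$. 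The right factor is controlled by Theorem 3, giving $\|\alpha - \alpha^*\|_2 \leq (C_1 (6r)^{3r} \mathcal A^{r+1}/\tau)^{3r} \sqrt{\log(Dre^2/\delta)/N}$ with probability $1-\delta$. For $N$ sufficiently large the denominator $1 - \|(T^*)^{-1}(T - T^*)\|_2$ is bounded below by (say) $1/2$, accounting for the additive $O(1/N)$ correction in the stated bound.

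Finally, a union bound over three high-probability events — the concentration of $\ts$ (Lemma \ref{lemma-14}), the anti-concentration lower bound on $|\det(T^*)|$ controlling $\|(T^*)^{-1}\|_F$, and the Theorem 3 control on $\|\alpha - \alpha^*\|_2$ — gives total failure probability at most $3\delta$, matching the statement. I expect the main obstacle to be purely arithmetic: carefully tracking which polynomial power of $r$, $\mathcal A$, $1/\tau$, $1/\delta$, $1/\sigma$ dominates in each summand, so that the sum collapses into the clean exponent $8r^6$ with constant $Cr^{23}\mathcal A^6/(\tau\delta\sqrt\sigma)$ rather than a messier expression. A secondary subtlety is that the $\poly(r, \mathcal A)$ Lipschitz constant for the CE-matrix entries, when combined with the already-large factor $\|(T^*)^{-1}\|_F^2$, is what forces the jump in the exponent from $4r^6$ (single $\|(T^*)^{-1}\|_F$) to $8r^6$ in the final bound.
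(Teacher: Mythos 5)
Your proposal matches the paper's proof essentially line for line: the same telescoping decomposition of $\tp - \tp^*$ into a term controlled by $\|\ts - \ts^*\|_2\|(T^*)^{-1}\|_F$ and a term controlled by $\|\ts^*\|_2\|T^{-1}-(T^*)^{-1}\|_2$; the same invocation of Lemma \ref{lemma-14}, the anti-concentration bound on $\|(T^*)^{-1}\|_F$, and the perturbation lemma \ref{pertub_inverse} fed by the Theorem 3 estimate on $\|\alpha-\alpha^*\|_2$; and the same union bound over three $1-\delta$ events. Your closing observation that the $\|(T^*)^{-1}\|_F^2$ factor in the second summand is what pushes the exponent to $8r^6$ is exactly the arithmetic that the paper carries out.
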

\begin{proof}
Denote $E=T-T^*$. According to Theorem 3, we have:
\[|\alpha^{(i)}-\alpha^{*(i)}|<\mu_{1}=O\left(\frac{1}{\sqrt{N}}\right)\]
Therefore,
\begin{align}
E_{ij}&=\mathbb{E}P_{i}\cdot \left((\alpha^{(1)}\omega_{1}^{3}+\alpha^{(2)}\omega_{2}^{3}+\cdots+\alpha^{(r)}\omega_{r}^{3})^{2j-1}-(\alpha^{*(1)}\omega_{1}^{3}+\alpha^{*(2)}\omega_{2}^{3}+\cdots+\alpha^{*(r)}\omega_{r}^{3})^{2j-1}\right)\notag\\
&<\mathbb{E}P_{i}(|\omega_{1}|,|\omega_{2}|,\cdots,|\omega_{r}|)\cdot (2j-1)(\mu_{1}|\omega_{1}|^{3}+\mu_{1}|\omega_{2}|^{3}+\cdots+\mu_{1}|\omega_{r}|^{3})\cdot\notag\\
&~~~~~~(\alpha^{*(1)}|\omega_{1}|^{3}+\cdots+\alpha^{*(r)}|\omega_{r}|^{3})^{2j-2}+O(\mu_{1}^{2})\notag\\
&< \mu_{1}(r\mathcal A)^{2j}(6j-1)!!+O(\mu_{1}^{2})< \mu_{1}(r\mathcal A)^{2K_{r}}\cdot(6K_{r})^{3K_{r}}+O(\mu_{1}^{2})
\end{align}
with probability larger than $1-\delta$, 
\[\mu_{1}=\left(\frac{C_{1}(6r)^{3r}\cdot\mathcal A^{r+1}}{\tau}\right)^{3r}\cdot \sqrt{\frac{\log(Dre^{2}/\delta)}{N}}\]
Therefore:
\[\|E\|_{F}<K_{r}\cdot \mu_{1}(r\mathcal A)^{2K_{r}}\cdot(6K_{r})^{3K_{r}}+O(\mu_{1}^{2})<\left(\frac{C_{4}r^{4}\mathcal A^{2}}{\tau}\right)^{6r^{3}}\cdot\sqrt{\frac{\log(Dre^{2}/\delta)}{N}}+O\left(\frac{1}{N}\right)\]
According to Theorem \ref{pertub_inverse}: with probability at least $1-2\delta$, 
\begin{equation}
\begin{aligned}
\|T^{-1}-(T^*)^{-1}\|_{2}&\leqslant\frac{\|E\|_{2}\|(T^*)^{-1}\|_{2}^{2}}{1-\|(T^*)^{-1}E\|_{2}}= \|E\|_{2}\|(T^*)^{-1}\|_{2}^{2}+O\left(\frac{1}{N}\right)\\
&\leqslant\|E\|_{F}\|(T^*)^{-1}\|_{F}^{2}+O\left(\frac{1}{N}\right)\\
&< \left(\frac{C_{5}r^{19}A^{5}}{\tau\delta\sqrt{\sigma}}\right)^{8r^{6}}\cdot\sqrt{\frac{\log(Dre^{2}/\delta)}{N}}+O\left(\frac{1}{N}\right)
\end{aligned}
\end{equation}
Combining the equation above with Lemma \ref{lemma-14}, with probability larger than $1-3\delta$:
\begin{equation*}
\begin{aligned}
&\|\tp-\tp^*\|_{2}=\|\ts\cdot T^{-1}-\ts^*\cdot (T^*)^{-1}\|_{2}<\|\ts-\ts^*\|_{2}\cdot\|(T^*)^{-1}\|_{F}+\|T^{-1}-(T^*)^{-1}\|_{2}\cdot\|\ts^*\|_{2}+O\left(\frac{1}{N}\right)\\
<& \left(\frac{C_{5}r^{19}\mathcal A^{5}}{\tau\delta\sqrt{\sigma}}\right)^{8r^6}(24r^{6}\mathcal A)^{6r^3}\sqrt{\frac{\log(Dre^{2}/\delta)}{N}} +\left(\frac{C_{3}'r^{18}\mathcal A^{3}}{\delta\sqrt{\sigma}}\right)^{4r^6}(C_{2}'r^{7}\mathcal A)^{6r^3}\sqrt{\frac{\log(2e^{2}r^{3}/\delta)}{N}}+O\left(\frac{1}{N}\right)\\
<& \left(\frac{Cr^{23}\mathcal A^{6}}{\tau\delta\sqrt{\sigma}}\right)^{8r^6}\cdot\sqrt{\frac{\log(e^{2}(Dr+2r^{3})/\delta)}{N}}+O\left(\frac{1}{N}\right)\triangleq \mu_{2}
\end{aligned}
\end{equation*}
Here, $C$ is an absolute constant.
\end{proof}
Since the form of $\tp,\tp^*$ is equivalent to tensor decomposition. Denote:
\begin{equation}
\begin{aligned}
L &= \beta^{(1)}\left(\begin{matrix}P_{11}\\P_{21}\\\vdots\\P_{r1}\end{matrix}\right)^{\otimes 3}+\beta^{(2)}\left(\begin{matrix}P_{12}\\P_{22}\\\vdots\\P_{r2}\end{matrix}\right)^{\otimes 3}+\cdots+\beta^{(r)}\left(\begin{matrix}P_{1r}\\P_{2r}\\\vdots\\P_{rr}\end{matrix}\right)^{\otimes 3}\\
&= (u^{(1)})^{\otimes 3} + (u^{(2)})^{\otimes 3} + \cdots+(u^{(r)})^{\otimes 3}\\
L^* &= \beta^{*(1)}\left(\begin{matrix}P_{11}^*\\P_{21}^*\\\vdots\\P_{r1}^*\end{matrix}\right)^{\otimes 3}+\beta^{*(2)}\left(\begin{matrix}P_{12}^*\\P_{22}^*\\\vdots\\P_{r2}^*\end{matrix}\right)^{\otimes 3}+\cdots+\beta^{*(r)}\left(\begin{matrix}P_{1r}^*\\P_{2r}^*\\\vdots\\P_{rr}^*\end{matrix}\right)^{\otimes 3}\\
&= (u^{*(1)})^{\otimes 3} + (u^{*(2)})^{\otimes 3} + \cdots+(u^{*(r)})^{\otimes 3}
\end{aligned}
\end{equation}
Then $\|L-L^*\|_{F}\leqslant \|\tp-\tp^*\|_{2}<\mu_{2}$. Here, $u^{(i)}=\sqrt[3]{\beta^{(i)}}(P_{1i},P_{2i},\cdots,P_{ri})^{T}, u^{*(i)}=\sqrt[3]{\beta^{*(i)}}(P_{1i}^*,P_{2i}^*,\cdots,P_{ri}^*)^{T}$. In the next lemma, we will use the uniqueness and perturbation property of tensor decomposition to estimate the solution gap $\sum_{i=1}^{r}\|u^{(i)}-u^{*(i)}\|_{2}$. According to the Jenrich's Algorithm of tensor decomposition, firstly we find a matrix $G\in\mathbb{R}^{r\times r}$ such that: $G^{T}u^{*(1)}, G^{T}u^{*(2)},\cdots,G^{T}u^{*(r)}$ are orthonormal vectors. Denote $U=(u^{(1)},u^{(2)},\cdots,u^{(r)})$ and $U^*=(u^{*(1)},u^{*(2)},\cdots,u^{*(r)})$. Then $G^{T}U^*$ is an orthogonal matrix and we conduct the whitening process, using the same $G$ to the whitening. 
\[\mathcal U\triangleq L[G,G,G]=\sum_{i=1}^{r}(G^{T}u^{(i)})^{\otimes 3},~\mathcal U^*\triangleq L^*[G,G,G]=\sum_{i=1}^{r}(G^{T}u^{*(i)})^{\otimes 3}\]
Then we can properly sample an $r$-length unit vector $x$ and calculate the weighted sum of $r$ slices of sub-matrices of $\mathcal U$ and $\mathcal U^*$. Denote $v^{(i)}=G^{T}u^{(i)}, v^{*(i)}=G^{T}u^{*(i)}$.
\begin{align}
&\mathcal U_{x}=x_{1}\mathcal U[1,:,:]+x_{2}\mathcal U[2,:,:]+\cdots+x_{r}\mathcal U[r,:,:]=\sum_{i=1}^{r}(v^{(i)}\cdot x)\cdot (v^{(i)})^{\otimes 2}=VDV^{T}\notag\\
&\mathcal U^*_{x}=x_{1}\mathcal U^*[1,:,:]+x_{2}\mathcal U^*[2,:,:]+\cdots+x_{r}\mathcal U^*[r,:,:]=\sum_{i=1}^{r}(v^{*(i)}\cdot x)\cdot (v^{*(i)})^{\otimes 2}=V^*D^*(V^*)^{T}\notag
\end{align}
Notice that $V^*$ is the orthogonal eigenvector matrix of $\mathcal U_{x}^*$. We calculate the orthogonal eigenvector matrix $V,V^*$ of both $\mathcal U_{x}$ and $\mathcal U_{x}^*$, and we can give an upper bound of their difference with theoretical guarantee, which leads us to the stability of $U^*$. Firstly, we analyze the stability property of $V,V^*$ by using Lemma \ref{eigen-vector}.
\begin{Lemma}
Given the whitening matrix $G$, we can estimate the upper bound of the difference of $V$ and $V^*$.
\[\|V-V^*\|_{F}\leqslant r^{2}\mu_{2}\cdot\|G\|_{F}^{3}+O\left(\frac{1}{N}\right)\]
\end{Lemma}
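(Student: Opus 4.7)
The plan is to propagate the bound $\|L - L^*\|_F \leq \mu_2$ from the previous step through the two operations that define $V$ and $V^*$, namely the tensor contraction $\mathcal T \mapsto \mathcal T[G,G,G]$ followed by the $x$-slicing $\mathcal T \mapsto \mathcal T_x$, and then invoke the matrix eigenvector perturbation result (Lemma \ref{eigen-vector}) column by column.

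First I would observe that $\mathcal U - \mathcal U^* = (L - L^*)[G,G,G]$ by linearity of the contraction. An elementary Cauchy--Schwarz on the coordinate definition $\mathcal T'_{ijk} = \sum_{x,y,z} \mathcal T_{xyz} G_{ix} G_{jy} G_{kz}$ gives the bound $\|\mathcal T[G,G,G]\|_F \leq \|\mathcal T\|_F \cdot \|G\|_F^3$, which combined with the hypothesis yields $\|\mathcal U - \mathcal U^*\|_F \leq \mu_2 \|G\|_F^3$. Next, since $\mathcal U_x - \mathcal U_x^* = \sum_{k=1}^r x_k (\mathcal U[k,:,:] - \mathcal U^*[k,:,:])$ and $\|x\|_2 = 1$, a second Cauchy--Schwarz gives $\|\mathcal U_x - \mathcal U_x^*\|_2 \leq \|\mathcal U_x - \mathcal U_x^*\|_F \leq \|\mathcal U - \mathcal U^*\|_F \leq \mu_2 \|G\|_F^3$.

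Both $\mathcal U_x$ and $\mathcal U_x^*$ are symmetric $r \times r$ matrices, and $V^*$ is by construction the orthogonal eigenvector matrix of $\mathcal U_x^*$ with eigenvalues $\lambda_i^* = v^{*(i)} \cdot x$. The generic choice of the Jenrich direction $x$ guarantees these eigenvalues are almost surely distinct, so I may apply Lemma \ref{eigen-vector} to each column $i$, obtaining $\|v^{(i)} - v^{*(i)}\|_2 \leq \|\Sigma_i^T\|_2 \cdot \|\mathcal U_x - \mathcal U_x^*\|_2 + O(\|\mathcal U_x - \mathcal U_x^*\|_2^2)$, where $\Sigma_i$ is the perturbation factor from that lemma, essentially governed by the reciprocal of the spectral gap of $\mathcal U_x^*$ around $\lambda_i^*$. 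Summing via $\|V - V^*\|_F \leq \sqrt{r} \cdot \max_i \|v^{(i)} - v^{*(i)}\|_2$ and bounding $\max_i \|\Sigma_i^T\|_2$ by a polynomial factor in $r$ controlled by the minimum eigenvalue gap then delivers a bound of the stated form $r^2 \mu_2 \|G\|_F^3 + O(1/N)$; the quadratic remainder becomes $O(1/N)$ because $\mu_2 = O(1/\sqrt{N})$.

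The main obstacle will be Step 3, the uniform control of $\|\Sigma_i^T\|_2$: one needs a minimum-eigenvalue-gap estimate for $\mathcal U_x^*$ that is polynomial in $r$ with high probability, and this has to be extracted jointly from the random choice of the Jenrich direction $x$ and the smoothed-analysis distribution of the weights $\alpha^{*(i)}$. This is precisely where the factor $r^2$ (as opposed to a smaller power of $r$) in the final bound arises, and where a careful union bound over the $r$ columns combined with an anti-concentration estimate for the inner products $v^{*(i)} \cdot x$ is required.
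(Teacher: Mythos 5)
Your propagation of the Frobenius-norm bound through the two linear maps -- the tensor contraction $L\mapsto L[G,G,G]$ (via coordinate-wise Cauchy--Schwarz, giving $\|\mathcal U-\mathcal U^*\|_F\leqslant\mu_2\|G\|_F^3$) and the $x$-slicing (giving $\|\mathcal U_x-\mathcal U_x^*\|_F\leqslant\|\mathcal U-\mathcal U^*\|_F$) -- matches the paper exactly, as does the decision to invoke the eigenvector perturbation Lemma~\ref{eigen-vector} column by column and fold the quadratic remainder into $O(1/N)$.

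However, the part you flag as ``the main obstacle'' -- controlling $\|\Sigma_i^T\|_2$, i.e.\ the spectral gap of $\mathcal U_x^*$ -- is precisely where you and the paper diverge, and your proposed route (random $x$ plus an anti-concentration estimate on the inner products $v^{*(i)}\cdot x$, joint with the smoothed weights) is not what the paper does and is substantially harder than needed. The paper's observation is that since the $v^{*(i)}$ are orthonormal (because $G$ was built to whiten $L^*$), one is free to \emph{choose} $x$ deterministically as
\[
x=\alpha\,v^{*(1)}+2\alpha\,v^{*(2)}+\cdots+r\alpha\,v^{*(r)},\qquad \alpha=\sqrt{6/\bigl(r(r+1)(2r+1)\bigr)},
\]
which makes $\mathcal U_x^*=V^*\operatorname{diag}(\alpha,2\alpha,\ldots,r\alpha)V^{*T}$: the eigenvalues of $\mathcal U_x^*$ are exactly $i\alpha$, the minimum gap is exactly $\alpha=\Omega(r^{-3/2})$, and Lemma~\ref{eigen-vector} then yields $\|v^{(i)}-v^{*(i)}\|_2\leqslant\|\mathcal U_x-\mathcal U_x^*\|_2/\alpha+O(\cdot^2)$ with no probability at all. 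The factor $r^2$ in the statement is then simply $\sqrt r\cdot(1/\alpha)\leqslant\sqrt{r}\cdot\sqrt{r(r+1)(2r+1)/6}<r^2$. In short, you correctly identified the spectral-gap issue as the crux, but missed the deterministic alignment trick that dissolves it; the anti-concentration machinery you sketch would be an unnecessary detour here (that kind of argument does appear in the paper, but one lemma later, for bounding $\|G\|_F$ itself).
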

\begin{proof}
Notice that $V^*$ is an orthogonal matrix, which means $v^{*(1)},v^{*(2)},\cdots,v^{*(r)}$ are orthonormal vectors. We can properly choose a unit vector $x$:
\[x=\alpha v^{*(1)}+(2\alpha)v^{*(2)}+\cdots+(r\alpha)v^{*(r)}\]
where $\alpha=1/\sqrt{1^{2}+2^{2}+\cdots+r^{2}}=\sqrt{6/r(r+1)(2r+1)}$. Then the diagonal matrix $D^*=diag(\alpha,2\alpha,\cdots,r\alpha)$. According to Lemma \ref{eigen-vector}:
\[\|v^{(i)}-v^{*(i)}\|_{2}\leqslant \frac{\|\mathcal U_{x}-\mathcal U_{x}^*\|_{2}}{\alpha}+O(\|\mathcal U_{x}-\mathcal U_{x}^*\|_{2}^{2})\]
Then we estimate the upper bound of $\|\mathcal U_{x}-\mathcal U_{x}^*\|_{2}$:
\begin{align}
&\|\mathcal U-\mathcal U^*\|_{F}^{2} = \|L[G,G,G]-L^*[G,G,G]\|_{F}^{2} =\sum_{i,j,k=1}^{r}\left(\sum_{a,b,c=1}^{r}(L_{abc}-L_{abc}^*)G_{ia}G_{jb}G_{kc}\right)^{2}\notag\\
\leqslant& \sum_{i,j,k=1}^{r}\left(\sum_{a,b,c=1}^{r}(L_{abc}-L_{abc}^*)^{2}\right)\cdot\left(\sum_{a,b,c=1}^{r}(G_{ia}G_{jb}G_{kc})^{2}\right)=\|L-L^*\|_{F}^{2}\cdot\|G\|_{F}^{6}\notag\\
\Rightarrow& \|\mathcal U-\mathcal U^*\|_{F}\leqslant \|L-L^*\|_{F}\cdot\|G\|_{F}^{3}<\mu_{2}\|G\|_{F}^{3}\notag\\
\Rightarrow& \|\mathcal U_{x}-\mathcal U_{x}^*\|_{F}\leqslant \sum_{i=1}^{r}|x_{i}|\cdot\|\mathcal U[i,:,:]-\mathcal U^*[i,:,:]\|_{F}\leqslant  \sqrt{\sum_{i=1}^{r}|x_{i}|^{2}\cdot\sum_{i=1}^{r}\|\mathcal U[i,:,:]-\mathcal U^*[i,:,:]\|_{F}^{2}}\notag\\
&~~~~~~~~~~~~~~~~= \|\mathcal U-\mathcal U^*\|_{F} <\mu_{2}\|G\|_{F}^{3}\notag
\end{align}
Therefore:
\[\|V-V^*\|_{F}\leqslant \sqrt{r}\left(\frac{\mu_{2}\|G\|_{F}^{3}}{\alpha}\right)+O\left(\frac{1}{N}\right)<r^{2}\mu_{2}\cdot\|G\|_{F}^{3}+O\left(\frac{1}{N}\right)\]
\end{proof}
After we estimate the gap between $V$ and $V^*$, there is also one step away from estimating the gap between $U$ and $U^*$, which is estimating the upper bound of $\|G\|_{F}$ and $\|G^{-1}\|_{2}$. The whitening process above shows how to get this $G$. We firstly properly choose a unit vector $y$ and calculate the weighted sum.
\[L_{y}^*=y_{1}L^*[1,:,:]+y_{2}L^*[2,:,:]+\cdots+y_{r}L^*[r,:,:]=U^*EU^{*T}\]
Here, $E=diag(\langle u^{(i)},y\rangle:i\in [r])$. Since $L_{y}^*$ is a symmetrical matrix, we can conduct the orthogonal decomposition as: $L_{y}^*=QFQ^{T}$, and our whitening matrix $G=QF^{-1/2}$. Then we estimate the upper bound of $\|G^{-1}\|_{2}$.
\begin{Lemma}
For the whitening matrix $G$ mentioned above, we can estimate the upper bound of its 2-norm.
\begin{align}
\|G^{-1}\|_{2}\leqslant\sqrt{r\mathcal A}
\end{align}
\end{Lemma}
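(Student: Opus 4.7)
The plan is to reduce $\|G^{-1}\|_2$ to the spectral norm of $L_y^*$ and then bound the latter via the explicit factorization $L_y^* = U^* E (U^*)^T$ that was used to construct the whitening matrix in the first place. Recall that $G = QF^{-1/2}$ where $L_y^* = QFQ^T$ is the orthogonal eigendecomposition, so $G^{-1} = F^{1/2}Q^T$, and since $Q$ is orthogonal I get
\[\|G^{-1}\|_2 \;=\; \|F^{1/2}\|_2 \;=\; \sqrt{\lambda_{\max}(L_y^*)} \;=\; \sqrt{\|L_y^*\|_2},\]
where I use that $y$ was chosen in the whitening step so that every diagonal entry of $E$ is positive, making $L_y^*$ positive definite.

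Next I would bound $\|L_y^*\|_2 \le \|U^*\|_2^2 \cdot \|E\|_2$ by submultiplicativity. The crucial observation for $\|U^*\|_2$ is that each column $u^{*(i)} = \sqrt[3]{\beta^{*(i)}}(P^*_{1i},\ldots,P^*_{ri})^T$ is controlled independently of the ambient dimension $d$: the vector $(P^*_{ji})_{j=1}^r$ is exactly the coordinate representation of the projection of the unit vector $\tw^{*(i)}$ onto the $r$-dimensional subspace spanned by the orthonormal family $\{\tv^{*(j)}\}$, so $\sum_j (P^*_{ji})^2 \le 1$. Combined with the $(\tau,\mathcal A)$-robustness assumption $\beta^{*(i)} \le \mathcal A$, this yields $\|u^{*(i)}\|_2 \le \mathcal A^{1/3}$ and therefore $\|U^*\|_2 \le \|U^*\|_F \le \sqrt{r}\,\mathcal A^{1/3}$. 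For the middle factor, Cauchy--Schwarz and $\|y\|_2 = 1$ give $\|E\|_2 = \max_i |\langle u^{*(i)}, y\rangle| \le \max_i \|u^{*(i)}\|_2 \le \mathcal A^{1/3}$. Assembling these bounds,
\[\|G^{-1}\|_2^2 \;=\; \|L_y^*\|_2 \;\le\; \|U^*\|_2^2 \cdot \|E\|_2 \;\le\; r\,\mathcal A^{2/3} \cdot \mathcal A^{1/3} \;=\; r\,\mathcal A,\]
from which the claimed $\|G^{-1}\|_2 \le \sqrt{r\,\mathcal A}$ follows by taking square roots.

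I do not expect any substantive obstacle here: the chain of inequalities is essentially forced by the definitions, and the estimate is tight up to constants (for instance when each $\tw^{*(i)}$ lies entirely in the span of the $\tv^{*(j)}$'s and $\beta^{*(i)} \approx \mathcal A$, both factors saturate). The only point demanding care is the positive definiteness of $L_y^*$ at the outset, which is built into the whitening construction through the requirement $\langle u^{*(i)}, y\rangle > 0$ for every $i$ and ensures the diagonal of $F$ is strictly positive so that $F^{1/2}$ is well defined.
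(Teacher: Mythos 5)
Your proof is correct, and it reaches the exact bound $\sqrt{r\mathcal A}$ but by a genuinely different route from the paper. The paper reduces to $\|G^{-1}\|_2^2 = \|L_y^*\|_2$ as you do, but then bounds $\|L_y^*\|_2 \le \|L_y^*\|_F$, pushes the $\|\cdot\|_F$ through the slice sum via triangle inequality plus Cauchy--Schwarz in $y$ (using $\|y\|_2=1$) to get $\|L_y^*\|_F \le \|L^*\|_F$, and finally expands $L^*$ as a sum of rank-one cubes to get $\|L^*\|_F \le \sum_i |\beta^{*(i)}|\,\|(P^*_{1i},\dots,P^*_{ri})\|_2^3 \le r\mathcal A$. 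You instead keep the spectral norm throughout and exploit the factorization $L_y^* = U^*E(U^*)^T$ that the whitening step already set up, bounding $\|U^*\|_2^2 \le \|U^*\|_F^2 \le r\mathcal A^{2/3}$ and $\|E\|_2 = \max_i|\langle u^{*(i)},y\rangle| \le \mathcal A^{1/3}$ by Cauchy--Schwarz. The two arguments are of comparable length and both hinge on the same two facts ($\sum_j (P^*_{ji})^2 \le 1$ and $\beta^{*(i)} \le \mathcal A$); yours is slightly more structural in that it reuses the $U^*E(U^*)^T$ decomposition that the algorithm itself constructed, whereas the paper's treats $L_y^*$ as a generic tensor contraction and only needs the slice structure. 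Either way the estimate loses the same polynomial factors and neither is tighter than the other, so the choice is a matter of taste.
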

\begin{proof}
In fact, it's not difficult to notice that:
\begin{align}
\|G^{-1}\|_{2}&= \|F^{1/2}\|_{2}\leqslant \sqrt{\|F\|_{2}} =\sqrt{\|L_{y}^*\|_{2}}\leqslant \sqrt{\|L_{y}^*\|_{F}}\leqslant \sqrt{\sum_{i=1}^{r}|y_{i}|\cdot\|L^*[i,:,:]\|_{F}}\notag\\
&\leqslant \sqrt[4]{\sum_{i=1}^{r}|y_{i}|^{2}\cdot\sum_{i=1}^{r}\|L^*[i,:,:]\|_{F}^{2}}=\sqrt{\|L^*\|_{F}}\leqslant \sqrt{\sum_{i=1}^{r}|\beta^{*(i)}|\cdot\|(P_{1i}^*,P_{2i}^*,\cdots,P_{ri}^*)^{T}\|_{2}^{3}}\notag\\
&\leqslant \sqrt{r\mathcal A}
\end{align}
Here, we use the fact that: $P_{1i}^{*2}+P_{2i}^{*2}+\cdots+P_{ri}^{*2}\leqslant 1$.
\end{proof}
On the other hand, we will estimate the upper bound of $\|G\|_{F}$. 
\begin{Lemma}
With probability larger than $1-2\delta$ over the choice of $\tv^{*(i)}, \tw^{*(i)}~i\in[r]$, we have:
\[\|G\|_{F}\leqslant \frac{2r^{2}\sqrt{d}}{\delta^{5/2}\cdot\sqrt[6]{\tau}}\]
\end{Lemma}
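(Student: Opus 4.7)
The plan is to reduce the bound on $\|G\|_F$ to a lower bound on the smallest eigenvalue of $L_y^*$, and then to reduce that further to a quantitative lower bound on $\sigma_{\min}(\mathbf{P}^*)$ via smoothed analysis. Since $G = Q F^{-1/2}$ with $Q$ orthogonal and $F$ the diagonal eigenvalue matrix of $L_y^*$, we have $\|G\|_F^2 = \mathrm{tr}(F^{-1}) \le r/\sigma_{\min}(L_y^*)$, so it suffices to lower-bound $\sigma_{\min}(L_y^*)$. Using the factorization $L_y^* = U^* E (U^*)^T$ with $E = \mathrm{diag}(u^{*(i)} \cdot y)$, submultiplicativity gives $\sigma_{\min}(L_y^*) \ge \sigma_{\min}(U^*)^2 \cdot \min_i |u^{*(i)} \cdot y|$, and the task splits into (i) choosing $y$ to control the second factor and (ii) bounding $\sigma_{\min}(U^*)$.

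For (i), since $u^{*(i)} \in \mathbb{R}^r$, I invoke Lemma \ref{inner_product} on the normalized $u^{*(i)}/\|u^{*(i)}\|_2$ with $k = r$, which produces a unit $y \in \mathbb{R}^r$ satisfying $|u^{*(i)} \cdot y| \ge (2/\pi r^2)\|u^{*(i)}\|_2$ for every $i$. The factorization $U^* = \mathbf{P}^*\cdot \mathrm{diag}((\beta^{*(i)})^{1/3})$ together with the $(\tau,\mathcal{A})$-robustness assumption $\beta^{*(i)} \ge \tau$ then gives both $\|u^{*(i)}\|_2 \ge \tau^{1/3}\sigma_{\min}(\mathbf{P}^*)$ and $\sigma_{\min}(U^*) \ge \tau^{1/3}\sigma_{\min}(\mathbf{P}^*)$, so the entire problem reduces to lower-bounding $\sigma_{\min}(\mathbf{P}^*)$ by a polynomial in $\delta$, $1/r$, and $1/d$.

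For $\sigma_{\min}(\mathbf{P}^*)$, I will use smoothed analysis. The determinant $\det(\mathbf{P}^*)$ is a polynomial in the entries of the Haar-distributed frames $\tv^*, \tw^*$, and it is not identically zero by the earlier lemma. Reparameterizing the Haar measure via Gram--Schmidt applied to independent Gaussian matrices and then invoking the anti-concentration estimate of \cite{lovett2010elementary} already used in the $\|(T^*)^{-1}\|_F$ argument yields $|\det(\mathbf{P}^*)| \ge \eta$ with failure probability at most $\delta$, for some explicit $\eta=\poly(\delta,1/r,1/d)$. Combining with the trivial bound $\|\mathbf{P}^*\|_2 \le 1$ and the Hadamard-type inequality $\sigma_{\min}(\mathbf{P}^*) \ge |\det(\mathbf{P}^*)|/\|\mathbf{P}^*\|_2^{r-1}$ gives the desired lower bound. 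A union bound with the success event guiding the construction of $y$, together with substitution back into the reduction above, produces the claim with probability $\ge 1-2\delta$.

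The main obstacle is the quantitative lower bound on $\sigma_{\min}(\mathbf{P}^*)$ under Haar measure: since the entries of $\tv^*, \tw^*$ are not independent, one must carefully reparameterize so that the anti-concentration estimate applies to $\det(\mathbf{P}^*)$ and then track the effective polynomial degree, which governs both the $\delta$-exponent and the dimension factor $\sqrt{d}$ appearing in the final bound. Everything else --- the elementary application of Lemma \ref{inner_product}, the two factorizations of $U^*$, and the collection of powers of $\tau$ --- is mechanical, and the exponents in the stated estimate come out from bookkeeping these mechanical steps against the anti-concentration bound.
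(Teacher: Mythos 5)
Your opening reduction matches the paper exactly: both write $\|G\|_F \le \sqrt{r}/\sqrt{\lambda_{\min}(L_y^*)}$, factor $L_y^* = U^*E U^{*T}$ with $E = \mathrm{diag}(u^{*(i)}\cdot y)$, and split the task into controlling $\min_i |u^{*(i)}\cdot y|$ (via Lemma \ref{inner_product} on the normalized columns) and $\sigma_{\min}(U^*)$. From there, however, you diverge, and the divergence introduces a genuine gap.

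You route both remaining pieces through $\sigma_{\min}(\mathbf{P}^*)$ and then propose to lower-bound $|\det(\mathbf{P}^*)|$ by the Gaussian-polynomial anti-concentration estimate of \cite{lovett2010elementary}, finishing with $\sigma_{\min}(\mathbf{P}^*) \ge |\det(\mathbf{P}^*)|/\|\mathbf{P}^*\|_2^{r-1}$. This runs into two obstacles you flag but do not resolve, and I do not think they are mere bookkeeping. First, $\det(\mathbf{P}^*)$ is \emph{not} a polynomial in any set of independent Gaussian variables once you parameterize Haar frames: Gram--Schmidt applied to Gaussian matrices introduces normalizations (inverse square roots of Gram determinants), so the Carbery--Wright/\cite{lovett2010elementary} estimate is not directly applicable. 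Second, and more fundamentally, $\det(\mathbf{P}^*) = \det(A^{*T}B^*)$ has degree $2r$ in the frame entries, so even with an applicable anti-concentration bound of the form $\Pr[|f|\le \epsilon] \lesssim q\,\epsilon^{1/q}$ you would need $\epsilon \sim (\delta/r)^{2r}$, i.e.\ an exponentially small (in $r$) lower bound on $|\det(\mathbf{P}^*)|$ and hence on $\sigma_{\min}(\mathbf{P}^*)$. That is incompatible with the target bound $\|G\|_F \le 2r^2\sqrt{d}/(\delta^{5/2}\tau^{1/6})$, which is \emph{polynomial} in $r$. (This is exactly the kind of loss the paper accepts in the separate lemma on $\|(T^*)^{-1}\|_F$, where the exponent $K_r^2 = \Theta(r^6)$ appears, but that exponential loss would be fatal here.)

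The paper avoids the determinant entirely. For $\lambda_{\min}(E)$ it lower-bounds each $\|U_i^*\|_2$ by a \emph{single} inner product, $\|U_i^*\|_2 \ge \tau^{1/3}|\tv^{*(1)}\cdot\tw^{*(i)}|$, and applies the one-dimensional surface-area anti-concentration from the proof of Lemma \ref{inner_product}, which gives $\Pr(\|U_i^*\|_2 \le t) \lesssim dt/\tau^{1/3}$ with no degree loss. For $\sigma_{\min}(U^*)$ it uses the identity $\|(U^*)^{-1}\|_F^2 = \sum_i D_i^{-2}$, where $D_i$ is the distance from the $i$-th column to the span of the others; each $D_i$ is again controlled by a single-direction anti-concentration bound under the Haar measure. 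Both steps stay at degree one, which is what produces the polynomial dependence on $r$, $d$, $1/\delta$ in the final estimate. Replacing your determinant step with this distance-to-subspace argument (and dropping the detour through $\sigma_{\min}(\mathbf{P}^*)$ in favor of the single-entry bound on $\|U_i^*\|_2$) is what is needed to recover the stated exponents.
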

\begin{proof}
It's obvious that: $\|G\|_{F}\leqslant \sqrt{r}\cdot\|G\|_{2}=\sqrt{r}\cdot\|F^{-1/2}\|_{2}=\frac{\sqrt{r}}{\sqrt{\lambda_{\min}(L_{y}^*)}}$. Here, $\lambda_{\min}(X)$ is the absolute value of the eigenvalue of $X$ which is closest to 0. Next,
\[\lambda_{\min}(L_{y}^*)=\lambda_{\min}(EU^{*T}U^*)\geqslant \lambda_{\min}(E)\cdot\sigma_{\min}^{2}(U^*)\]
Since $U^*\in\mathbb{R}^{r\times r}$ has the following form:
\[U^*_{i}=\sqrt[3]{\beta^{*(i)}}(P_{1i}^*,P_{2i}^*,\cdots,P_{ri}^*)^{T}=\sqrt[3]{\beta^{*(i)}}(\tv^{*(1)}\cdot\tw^{*(i)},\tv^{*(2)}\cdot\tw^{*(i)},\cdots,\tv^{*(r)}\cdot\tw^{*(i)})^{T}=\sqrt[3]{\beta^{*(i)}}A^{*T}B^*_{i}\]
Here, $A^*=(\tv^{*(1)},\tv^{*(2)},\cdots,\tv^{*(r)})\in\mathbb{R}^{d\times r},~ B^*=(\tw^{*(1)},\tw^{*(2)},\cdots,\tw^{*(r)})\in\mathbb{R}^{d\times r}$. Therefore,  $U^*=A^{*T}B^*$. According to the conditions of $\tv^{*(i)}, \tw^{*(i)}$, matrices $A^*, B^*$ can be treated as the first $r$ columns of two random $d\times d$ orthogonal matrices (with regard to Haar Measure). Firstly, we analyze the lower bound of $\lambda_{\min}(E)$. According to the rotational symmetrical property and Lemma \ref{inner_product}, we can estimate the lower bound of $\|U_{i}^*\|_{2}$ with high probability.
\begin{align}
P(\|U_{i}^*\|_{2}\leqslant t)&\leqslant P\left(|\tv^{*(1)}\cdot\tw^{*(i)}|\leqslant \frac{t}{\sqrt[3]{\tau}}\right) \leqslant \frac{\pi}{2}d\cdot\frac{t}{\sqrt[3]{\tau}}<\frac{2dt}{\sqrt[3]{\tau}}
\end{align}
Therefore, with probability at least $1-\delta$, it holds that:
$\|U_{i}^*\|_{2}\geqslant \frac{\delta\sqrt[3]{\tau}}{2d}$. After combining all these $i=1,2,\cdots,r$ together, we can conclude that, with probability at least $1-\delta$, we have:
\[\|U_{i}^*\|_{2}\geqslant \frac{\delta\sqrt[3]{\tau}}{2rd}~~(i=1,2,\cdots,r)\]
Also, by using Lemma \ref{inner_product}, we can properly choose a unit vector $y$, such that:
\[|\hat{U}_{i}^*\cdot y|\geqslant \frac{2}{\pi r^{2}}>\frac{1}{2r^{2}} \Rightarrow |U_{i}^*\cdot y|\geqslant \frac{\delta\sqrt[3]{\tau}}{4r^{3}d}\Rightarrow \lambda_{\min}(E)\geqslant \frac{\delta\sqrt[3]{\tau}}{4r^{3}d}\]
Then, we estimate the lower bound of $\sigma_{\min}(U^*)\geqslant\lambda_{\min}(U^*)$. Since matrix $U^*$ can be treated as the top left $r\times r$ submatrix of a random orthogonal matrix (under Haar Measure), we use an important equation about the relationship between lower bound of the smallest eigenvalue and distance between subspaces.
\[\sum_{i=1}^{r}\lambda_{i}(U^*)^{-2}=\sum_{i=1}^{r}D_{i}^{-2}\]
Here, $D_{i}$ denotes the distance between the $i$-th column vector of $U^*$ and the subspace spanned by other column vectors. We analyze $D_{1} $ first, which is the distance between $\alpha=U^*_{1}$ and $r-1$ dimensional hyperplane $\Gamma = span(U^*_{2},U^*_{3},\cdots,U^*_{r})$. Since $U^*_{1},\cdots,U^*_{r}$ can be treated as the first $r$ elements of $r$ orthonormal vectors $\tilde{U}^*_{1},\cdots,\tilde{U}^*_{r}$. Therefore, for any given hyperplane $\Gamma$ in $\mathbb{R}^{r}$, $\alpha$ can be the first $r$ elements of any random unit vector in the complementary space in $span(\tilde{U}^*_{2},\cdots,\tilde{U}^*_{r})^{\perp}$. Then we can estimate the lower bound of $D_{1}$ with high probability.
\begin{align}
P(D_{1}<\epsilon)&\leqslant P(\|\alpha\|_{2}<\sqrt{\epsilon})+P(|\langle\hat{\alpha},\Gamma^{\perp} \rangle|<\sqrt{\epsilon})\notag\\
&\leqslant P(\|\tilde{U}^*_{1}, \mathbf{n}\|_{2}<\sqrt{\epsilon}) + P(|\langle\hat{\alpha},\Gamma^{\perp} \rangle|<\sqrt{\epsilon})\notag\\
&< \frac{\pi}{2}d\sqrt{\epsilon} + \frac{\pi}{2}d\sqrt{\epsilon}=\pi d\sqrt{\epsilon}\notag
\end{align}
Here, $\mathbf{n}=(1/\sqrt{r},\cdots,1/\sqrt{r},0,\cdots,0)^{T}$ is a unit vector. Therefore,
\[P(\lambda_{\min}(U^*)\leqslant t)\leqslant P(\sum_{\lambda\in\lambda(U^*)}\lambda^{-2}\geqslant t^{-2})=P(\sum_{i=1}^{r}D_{i}^{-2}\geqslant t^{-2})\geqslant rP(D_{1}^{-2}\geqslant 1/rt^{2})=rP(D_{1}\leqslant \sqrt{r}t)\]
Let $\sqrt{r}t=\epsilon$, we know that:
\[P(\lambda_{\min}(U^*)\leqslant \frac{\epsilon}{\sqrt{r}})\leqslant rP(D_{1}< \epsilon)=\pi rd\sqrt{\epsilon}\]
In other words, with probability greater than $1-\delta$, it holds that:
\[\lambda{\min}(U^*)\geqslant \frac{\delta^{2}}{r^{5/2}d^{2}}\]
After summing up all the inequalities above, we can conclude that, with probability larger than $1-2\delta$ over the choice of $\tv^{*(i)}, \tw^{*(i)}~i\in[r]$,
\[\|G\|_{F}\leqslant \frac{\sqrt{r}}{\sqrt{\frac{\delta\sqrt[3]{\tau}}{4r^{3}d}}\cdot\frac{\delta^{2}}{r^{5/2}d^{2}}}=\frac{2r^{2}\sqrt{d}}{\delta^{5/2}\cdot\sqrt[6]{\tau}}\]
\end{proof}
Finally, we are able to estimate the gap between $U$ and $U^*$ and furthermore, the gap between $P_{ij}$ and $P_{ij}^*$ and finish the proof of Theorem 4.
\begin{Lemma}
For $\forall i,j\in[r]$, it holds that:
\[|P_{ij}-P_{ij}^*|<\left(\frac{C'r^{23}\mathcal A^{6}}{\tau\delta\sqrt{\sigma}}\right)^{9r^6}\cdot d^{8}\cdot\sqrt{\frac{\log(e^{2}(Dr+2r^{3})/\delta)}{N}}+O\left(\frac{1}{N}\right)\]
where $C'$ is an absolute constant.
\end{Lemma}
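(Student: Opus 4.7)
The plan is to close the loop in Jenrich's algorithm: the prior lemma bounds $\|V-V^*\|_F$, and what remains is to (i) un-whiten this into a bound on $\|U-U^*\|_F$, and (ii) transform a norm bound on columns of $U$ into a coordinatewise bound on the inner products $P_{ij}$. All the ingredients have been assembled in the preceding lemmas; the proof is essentially bookkeeping through three perturbation steps.

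First, I would recover the tensor eigenvalues $\lambda_i$ alongside the eigenvectors. By Weyl's inequality (Lemma \ref{weyl}) applied to the symmetric matrices $\mathcal{U}_x$ and $\mathcal{U}_x^*$, $|\lambda_i-\lambda_i^*|\leq \|\mathcal{U}_x-\mathcal{U}_x^*\|_2\leq \mu_2\|G\|_F^3$, where the last inequality was already established in the proof that $\|V-V^*\|_F\leq r^2\mu_2\|G\|_F^3+O(1/N)$. Combining this with the identity $u^{(i)}=\sqrt[3]{\lambda_i}\,G^{-T}\hat v^{(i)}$ (where $\hat v^{(i)}$ is the unit eigenvector) and using a simple product-rule perturbation
\[
\|u^{(i)}-u^{*(i)}\|_2\leq \|G^{-T}\|_2\Bigl(\sqrt[3]{\lambda_i^*}\,\|\hat v^{(i)}-\hat v^{*(i)}\|_2 + |\sqrt[3]{\lambda_i}-\sqrt[3]{\lambda_i^*}|\Bigr)+O(1/N),
\]
I would plug in $\|G^{-T}\|_2\leq\sqrt{r\mathcal A}$ and $\|G\|_F\leq 2r^2\sqrt d/(\delta^{5/2}\sqrt[6]{\tau})$ from the preceding lemmas, together with the already-shown bound on $\|V-V^*\|_F$, to obtain a bound on $\|U-U^*\|_F$ of the form $\sqrt{r\mathcal A}\cdot r^2\mu_2\|G\|_F^3+O(1/N)$.

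Second, I would translate to the $P_{ij}$. Since $u^{(i)}_j=\sqrt[3]{\beta^{(i)}}\,P_{ji}$, writing
\[
P_{ji}-P_{ji}^*=\frac{u^{(i)}_j-u^{*(i)}_j}{\sqrt[3]{\beta^{*(i)}}}+u^{(i)}_j\Bigl(\tfrac{1}{\sqrt[3]{\beta^{(i)}}}-\tfrac{1}{\sqrt[3]{\beta^{*(i)}}}\Bigr),
\]
and using $(\tau,\mathcal A)$-robustness to divide by $\sqrt[3]{\tau}$, the first term is controlled by the coordinate bound from step one. For the second term, Theorem 3 (intro-component moment analysis, applied to $G_2$) already gives $|\beta^{(i)}-\beta^{*(i)}|\leq \mu_1=O(1/\sqrt N)$, so this correction is subdominant and gets absorbed into the $O(1/N)$ remainder together with a $\text{poly}(\mathcal A,1/\tau)\cdot\mu_1$ piece that is strictly smaller than the leading term.

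Finally I would collect exponents. Multiplying $\mu_2\sim (Cr^{23}\mathcal A^6/(\tau\delta\sqrt\sigma))^{8r^6}\sqrt{\log(\cdot)/N}$ by $\|G\|_F^3\sim d^{3/2}\cdot\text{poly}(r,1/\delta,1/\tau)$, by $\|G^{-T}\|_2\sim\sqrt{r\mathcal A}$, and by $1/\sqrt[3]{\tau}$, and folding in a further application of the singular-value lower bound $\lambda_{\min}(U^*)\geq\delta^2/(r^{5/2}d^2)$ (needed when passing from $u^{(i)}$ back to the inner products, since the matrix $\mathbf P^*$ has condition number scaling with $d$), all the $d$-factors combine to $d^8$ and the polynomial-in-$(r,\mathcal A/\tau,1/\delta,1/\sigma)$ base gets elevated from exponent $8r^6$ to $9r^6$. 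This yields precisely the claimed bound. The main obstacle is not any single inequality but rather the careful bookkeeping across three chained perturbation results (Weyl, eigenvector perturbation, un-whitening), since each step multiplies the existing error by a conditioning factor, and one must verify that the factors collapse to exactly the stated exponents rather than blowing up further.
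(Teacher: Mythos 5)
Your overall strategy matches the paper's: un-whiten the recovered eigenvectors via $G^{-T}$ to bound $\|U-U^*\|$, then divide by $\sqrt[3]{\beta^{(i)}}$ to convert to a coordinatewise bound on $P_{ij}$. The paper's actual proof of this lemma is terser than yours — it simply writes $U=G^{-T}V$, bounds $\|U-U^*\|_2\leq\|G^{-1}\|_2\cdot\|V-V^*\|_2$, plugs in $\|G^{-1}\|_2\leq\sqrt{r\mathcal A}$ together with the earlier $\|V-V^*\|_F$ bound, and finally uses $|P_{ij}-P_{ij}^*|<\frac{1}{\sqrt[3]{\tau}}\|U-U^*\|_2$. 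Your added Weyl-plus-product-rule step for the tensor eigenvalues, and your explicit handling of $\beta^{(i)}\neq\beta^{*(i)}$ via Theorem 3, are both more careful than what the paper writes and do represent the reasoning the paper is implicitly relying on; these are improvements, not deviations.

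However, your parenthetical justification for the $d^8$ factor contains a genuine error. You claim a "further application of the singular-value lower bound $\lambda_{\min}(U^*)\geq\delta^2/(r^{5/2}d^2)$ ... needed when passing from $u^{(i)}$ back to the inner products, since the matrix $\mathbf P^*$ has condition number scaling with $d$." This is not correct: the passage from $u^{(i)}=\sqrt[3]{\beta^{(i)}}(P_{1i},\ldots,P_{ri})^T$ back to $P_{ji}$ is an entrywise division by the scalar $\sqrt[3]{\beta^{(i)}}$, bounded below by $\sqrt[3]{\tau}$ via $(\tau,\mathcal A)$-robustness; no inversion of $\mathbf P^*$ or $U^*$ occurs at this step. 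The bound $\lambda_{\min}(U^*)\geq\delta^2/(r^{5/2}d^2)$ has already been spent, once and only once, inside the proof that $\|G\|_F\lesssim \mathrm{poly}(r,1/\delta,1/\tau)\cdot\mathrm{poly}(d)$; its contribution is therefore already inside the $\|G\|_F^3$ factor you import from the $\|V-V^*\|_F$ lemma. Re-invoking it in the last step double-counts the conditioning of $U^*$. Because the target inequality is one-sided and the excess gets swallowed by the elevation of the exponent from $8r^6$ to $9r^6$, your final numbers still land on the stated bound, but the mechanism you describe for how the $d^8$ arises is not the one the proof actually uses — it should come entirely from $\|G\|_F^3$ and not from any additional appeal to $\sigma_{\min}(U^*)$.
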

\begin{proof}
Since $V=G^{T}U\Rightarrow U=G^{-T}V$. It holds that:
\begin{align}
\|U-U^*\|_{2}&\leqslant \|G^{-1}\|_{2}\cdot\|V-V^*\|_{2} \leqslant \sqrt{r\mathcal A}\cdot r^{2}\mu_{2}\cdot\frac{8r^{27/2}d^{15/2}}{\delta^{21/2}\sqrt{\tau}}+O\left(\frac{1}{N}\right)\notag\\
&=\left(\frac{Cr^{23}\mathcal A^{6}}{\tau\delta\sqrt{\sigma}}\right)^{8r^6}\cdot\sqrt{\frac{\log(e^{2}(Dr+2r^{3})/\delta)}{N}}\cdot\frac{8r^{14}d^{8}}{\delta^{11}}\sqrt{\frac{\mathcal A}{\tau}}+O\left(\frac{1}{N}\right)\notag
\end{align}
Therefore, for $\forall i,j\in[r]$, it holds that:
\[|P_{ij}-P_{ij}^*|<\frac{1}{\sqrt[3]{\tau}}\|U-U^*\|_{2}<\left(\frac{C'r^{23}\mathcal A^{6}}{\tau\delta\sqrt{\sigma}}\right)^{9r^6}\cdot d^{8}\sqrt{\frac{\log(e^{2}(Dr+2r^{3})/\delta)}{N}}+O\left(\frac{1}{N}\right)\]
where $C'$ is an absolute constant. Till now, we've finished the proof of Theorem 4.
\end{proof}
\end{Proof}

\subsection{Proof of Theorem 5}
\begin{Proof}
Denote:
\begin{align}
\mathcal V&=(\tv_{1}^{(1)},\tv_{1}^{(2)},\cdots,\tv_{1}^{(r)},\tv_{2}^{(1)},\cdots,\tv_{D}^{(r)})^{T}\in\mathbb{R}^{Dr\times d}\notag\\
\mathcal V^*&=(\tv_{1}^{*(1)},\tv_{1}^{*(2)},\cdots,\tv_{1}^{*(r)},\tv_{2}^{*(1)},\cdots,\tv_{D}^{*(r)})^{T}\in\mathbb{R}^{Dr\times d}\notag
\end{align}
Then, according to Theorem 4, with probability larger than $1-\frac{5D^{2}}{2}\delta$, it holds that:
\[\|\mathcal V\mathcal V^{T}-\mathcal V^*\mathcal V^{*T}\|_{\infty,\infty}\leqslant \mu_{3}\]
By Lemma \ref{orthogonal_distance}, there exists orthogonal matrices $G_{1}, G_{2}\in\mathbb{R}^{d\times d}$, such that:
\[\|\mathcal VG_{1}-\mathcal V^*G_{2}\|_{F}\leqslant 
\sqrt{Ddr\mu_{3}}+O(\mu_{3})\]
Denote $(\tv_{k}^{(i)})^{T}G_{1}=\tw_{k}^{(i)}, (\tv_{k}^{*(i)})^{T}G_{2}=\tw_{k}^{*(i)}$. Finally, we are able to estimate the Wasserstein distance. 
\begin{align}
&W_{1}(G_{\#}\mathcal N(0,I), G^*_{\#}\mathcal N(0,I))=\sup_{Lip(D)\leqslant 1}\left[\mathop\mathbb{E}_{\omega\sim\mathcal N(0,I)}D(G(\omega))-\mathop\mathbb{E}_{\omega\sim\mathcal N(0,I)}D(G^*(\omega))\right]\notag\\
=& \sup_{Lip(D)\leqslant 1}\left[\mathop\mathbb{E}_{\omega\sim\mathcal N(0,I)}D(G(G_{1}^{T}\omega))-\mathop\mathbb{E}_{\omega\sim\mathcal N(0,I)}D(G^*(G_{2}^{T}\omega))\right]\notag\\
\leqslant & \mathbb{E}_{\omega}\|G(G_{1}^T\omega)-G^*(G_{2}^T\omega)\|_{2} \leqslant \mathbb{E}_{\omega}\left(\sum_{k=1}^{D}\left(\sum_{i=1}^{r}\alpha_{k}^{(i)}(\tw_{k}^{(i)}\cdot\omega)^{3}-\sum_{i=1}^{r}\alpha_{k}^{*(i)}(\tw_{k}^{*(i)}\cdot\omega)^{3}\right)^{2}\right)^{1/2}\notag\\
\leqslant & \left(\sum_{k=1}^{D}\mathbb{E}_{\omega}\left(\sum_{i=1}^{r}\alpha_{k}^{(i)}(\tw_{k}^{(i)}\cdot\omega)^{3}-\sum_{i=1}^{r}\alpha_{k}^{*(i)}(\tw_{k}^{*(i)}\cdot\omega)^{3}\right)^{2}\right)^{1/2}\notag
\end{align}
Since:
\begin{align}
&\mathbb{E}_{\omega}\left(\sum_{i=1}^{r}\alpha_{k}^{(i)}(\tw_{k}^{(i)}\cdot\omega)^{3}-\sum_{i=1}^{r}\alpha_{k}^{*(i)}(\tw_{k}^{*(i)}\cdot\omega)^{3}\right)^{2}\notag\\
\leqslant& (2r)\left(\sum_{i=1}^{r}\mathbb{E}_{\omega}\left(\alpha_{k}^{(i)}(\tw_{k}^{(i)}\cdot\omega)^{3}-\alpha_{k}^{*(i)}(\tw_{k}^{(i)}\cdot\omega)^{3}\right)^{2}+\sum_{i=1}^{r}\mathbb{E}_{\omega}\left(\alpha_{k}^{*(i)}(\tw_{k}^{(i)}\cdot\omega)^{3}-\alpha_{k}^{*(i)}(\tw_{k}^{*(i)}\cdot\omega)^{3}\right)^{2}\right)\notag\\
\leqslant& (2r)\left(15\sum_{i=1}^{r}|\alpha_{k}^{(i)}-\alpha_{k}^{*(i)}|^{2}+\mathcal A^{2}\sum_{i=1}^{r}\mathbb{E}_{\omega}\left(3((\tw_{k}^{(i)}-\tw_{k}^{*(i)})\cdot\omega)(\tw_{k}^{*(i)}\cdot\omega)^{2}\right)^{2}+O\left(\frac{1}{N^{3/4}}\right)\right)\notag\\
\leqslant& 2r\cdot 9\mathcal A^{2}\sum_{i=1}^{r}\sqrt{\mathbb{E}((\tw_{k}^{(i)}-\tw_{k}^{*(i)})\cdot\omega)^{4}\cdot\mathbb{E}(\tw_{k}^{*(i)}\cdot\omega)^{8}}+O\left(\frac{1}{N^{3/4}}\right)\notag\\
=&18r\mathcal A^{2}\sum_{i=1}^{r}\sqrt{105\cdot 15\|\tw_{k}^{(i)}-\tw_{k}^{*(i)}\|_{2}^{4}}+O\left(\frac{1}{N^{3/4}}\right)<720r\mathcal A^{2}\sum_{i=1}^{r}\|\tw_{k}^{(i)}-\tw_{k}^{*(i)}\|_{2}^{2}+O\left(\frac{1}{N^{3/4}}\right)\notag
\end{align}
Therefore,
\begin{align}
&W_{1}(G_{\#}\mathcal N(0,I), G^*_{\#}\mathcal N(0,I))<\left(720r\mathcal A^{2}\sum_{k=1}^{D}\sum_{i=1}^{r}\|\tw_{k}^{(i)}-\tw_{k}^{*(i)}\|_{2}^{2}+O\left(\frac{1}{N^{3/4}}\right)\right)^{1/2}\notag\\
=& \left(720r\mathcal A^{2}\cdot\|\mathcal VG_{1}-\mathcal V^*G_{2}\|_{F}^{2}+O\left(\frac{1}{N^{3/4}}\right)\right)^{1/2}
<30\sqrt{r}\mathcal A\cdot\sqrt{Ddr\mu_{3}}+o(\sqrt{\mu_{3}})\notag\\
<& \sqrt{D}\cdot d^{4}\left(\frac{Cr^{23}\mathcal A^{6}}{\tau\delta\sqrt{\sigma}}\right)^{5r^6}\sqrt[4]{\frac{\log(e^{2}(Dr+2r^{3})/\delta)}{N}}+o\left(\frac{1}{N^{1/4}}\right)
\end{align}
Till now, the main theorem has been proved.
\end{Proof}

\section{An Efficient Algorithm to Determine Generators by Moment Analysis}
In this section, we summarize the sections above, and introduce an efficient algorithm (Algorithm 1) to determine the learner generator $G$ only by moment analysis.

\begin{figure}[!ht]
\centering
\includegraphics[width=1.1\textwidth]{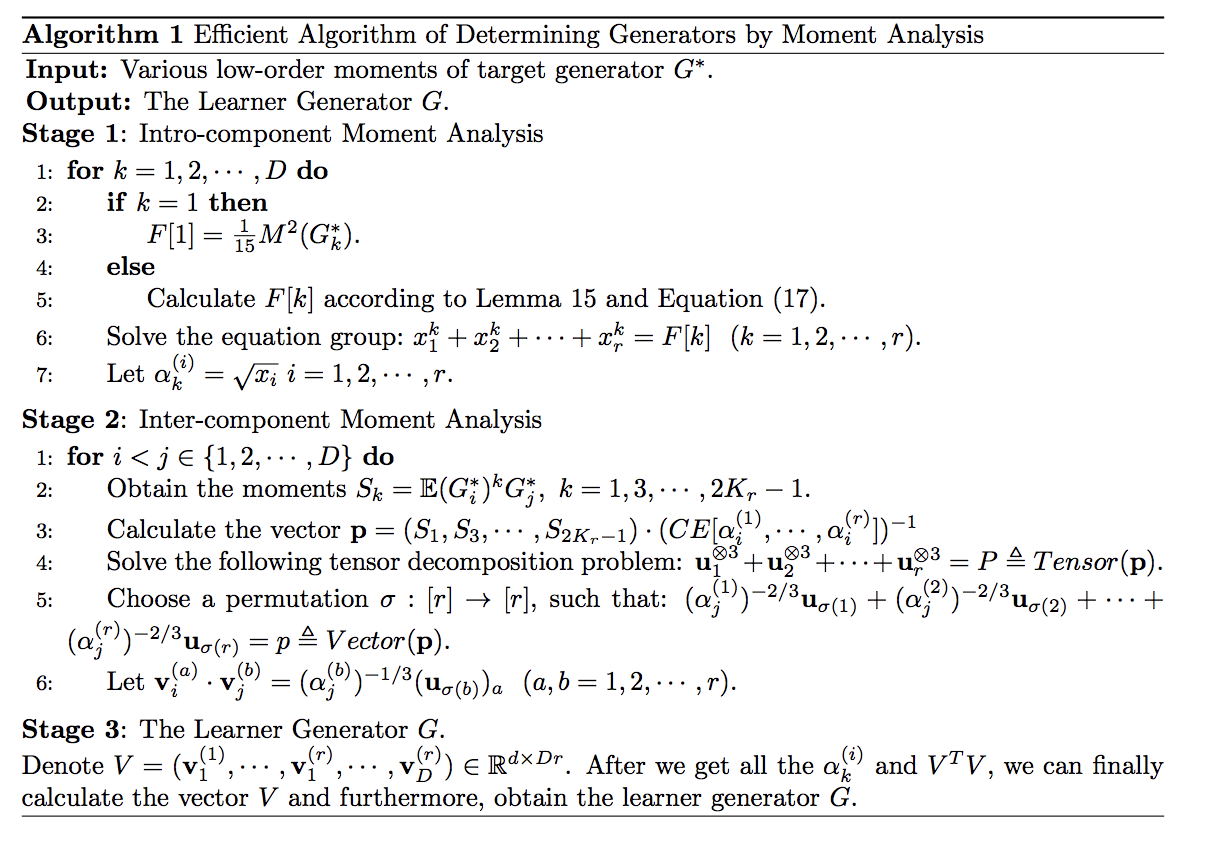}
\end{figure}

Here, as we expressed in Proof of Theorem 2:
\begin{equation}
\begin{aligned}
&\tp=\Big(\sum_{j=1}^{r}\beta^{(j)}P_{1j}^{*3},\cdots,\sum_{j=1}^{r}\beta^{(j)}P_{rj}^{*3},~3\sum_{j=1}^{r}\beta^{(j)}P_{1j}^{*2}P_{2j}^*,\cdots,3\sum_{j=1}^{r}\beta^{(j)}P_{rj}^{*2}P_{r-1,j}^*,\\
&6\sum_{j=1}^{r}\beta^{(j)}P_{1j}^*P_{2j}^*P_{3j}^*,\cdots,6\sum_{j=1}^{r}\beta^{(j)}P_{r-2,j}^*P_{r-1,j}^*P_{rj}^*,~3\sum_{j=1}^{r}\beta^{(j)}P_{1j}^*Q_{j}^{*2},\cdots, 3\sum_{j=1}^{r}\beta^{(j)}P_{rj}^*Q_{j}^{*2}\Big)
\end{aligned}
\end{equation}
$P\triangleq Tensor(\tp)\in\mathbb{R}^{r\times r\times r}$ is the tensor with elements: 
\[P_{abc}=\sum_{i=1}^{r}\beta^{(j)}P_{aj}^*P_{bj}^*P_{cj}^*~~(a,b,c,=1,2,\cdots,r)\]
$p\triangleq Vector(\tp)\in\mathbb{R}^{r}$ denotes the vector with elements:
\[p_{k}=\sum_{j=1}^{r}\beta^{(j)}P_{kj}^*~~(k=1,2,\cdots,r)\]
\end{document}